\title{Distributionally Robust Model-based Reinforcement Learning with Large State Spaces }
\author{%
Shyam~Sundhar~Ramesh \\ 
  University College London\\
  \texttt{shyam.ramesh.22@ucl.ac.uk} \\
   \And
   Pier~Giuseppe~Sessa  \\
   ETH Zurich \\
   \texttt{sessap@ethz.ch} \\
    \And
   Yifan~Hu  \\
   EPFL \\
   \texttt{ yifan.hu@epfl.ch} \\
   \AND
   Andreas~Krause \\
   ETH Zurich \\
   \texttt{krausea@ethz.ch} \\
   \And
   Ilija~Bogunovic \\
   University College London \\
   \texttt{i.bogunovic@ucl.ac.uk} \\
}
\DeclareMathOperator*{\argmax}{arg\,max}
\DeclareMathOperator*{\argmin}{arg\,min}
\newcommand{\E}{\mathbb{E}}
\newcommand{\pr}{\mathbb{P}}
\newcommand{\A}{\mathcal{A}}
\newcommand{\rp}{\mathbb{R}}
\newcommand{\mcX}{\mathcal{X}}
\newcommand{\mcXbar}{\overline{\mathcal{X}}}
\newcommand{\V}{\mathcal{V}}
\newcommand{\cS}{\mathcal{S}}
\newcommand{\psa}{P_{f}(s,a)}
\newcommand{\pnsa}{P_{\hat{f}_{n}}(s,a)}
\newcommand{\hpis}{\hat{\pi}_{n}}
\newcommand{\algus}{\textrm{Maximum Variance Reduction (\textsc{MVR}) for learning model dynamics}}
\newcommand{\D}{\mathcal{D}}
\newtheorem{theorem}{Theorem}
\newtheorem{lemma}{Lemma}
\newtheorem{assumption}{Assumption}
\newtheorem{proposition}{Proposition}
\begin{document}
\maketitle
\begin{abstract}

Three major challenges in reinforcement learning are the complex dynamical systems with large state spaces, the costly data acquisition processes, and the deviation of real-world dynamics from the training environment deployment. To overcome these issues, we study distributionally robust Markov decision processes with continuous state spaces under the widely used Kullback–Leibler, chi-square, and total variation uncertainty sets. We propose a model-based approach that utilizes Gaussian Processes and the maximum variance reduction algorithm to efficiently learn multi-output nominal transition dynamics, leveraging access to a generative model (i.e., simulator). We further demonstrate the statistical sample complexity of the proposed method for different uncertainty sets. These complexity bounds are independent of the number of states and extend beyond linear dynamics, ensuring the effectiveness of our approach in identifying near-optimal distributionally-robust policies. 
The proposed method can be further combined with other model-free distributionally robust reinforcement learning methods to obtain a near-optimal robust policy.  Experimental results demonstrate the robustness of our algorithm to distributional shifts and its superior performance in terms of the number of samples needed.\looseness=-1

\end{abstract}

\keywords{Reinforcement Learning, Distributional-robustness, Generative Model, Sample Complexity}

\section{Introduction}

The use of reinforcement learning (RL) algorithms is gaining momentum in various complex domains, including robotics, nuclear fusion, and molecular discovery. Data acquisition in such environments can be a challenging and resource-intensive process. Safety considerations may also limit the amount of data that can be collected through interactions with the environment. To address this issue, a commonly adopted approach is to train RL policies using a simulator (generative model) enabling RL agents to learn from a simulated environment.

Dealing with complex applications that involve large state spaces requires data-efficient learning, even when a simulator is available. However, achieving optimal policies using existing approaches often requires a significant amount of training data, making data-efficient learning an ongoing challenge. Additionally, when deploying a policy to a real-world system, it is crucial to ensure its performance remains reliable despite mismatches between the simulator and the real-world system. Such mismatches can arise from approximation errors, time-varying system parameters, or even due to adversarial influence. For example, in self-driving, it is infeasible to precisely model all possible variables, such as road conditions, brightness, and tire pressure, which can all vary over time. The resulting mismatch, known as the 'sim-to-real gap', can diminish the performance or impact the reliability of RL algorithms trained on a simulator model.

In this work, we examine the use of a generative model in \emph{distributionally-robust model-based reinforcement learning}. Our aim is to find a distributionally-robust policy that is near-optimal by actively querying the simulator with a state-action pair selected by the learning algorithm. To achieve this, we introduce the kernelized Maximum Variance Reduction (MVR) algorithm, which identifies a state-action pair with the highest uncertainty according to the model to learn the nominal model dynamics. The algorithm produces a nominal dynamics estimate that is utilized within the robust Markov Decision Process (MDP) framework, where an uncertainty set that includes all models close (according to, e.g., Kullback–Leibler divergence) to the learned one is considered. We provide a thorough characterization of statistical sample complexity rates by utilizing the learned model to generate a near-optimal robust policy.

\subsection{Related Work}
Reinforcement learning with a generative model is introduced in \citet{kearns2002sparse} wherein one assumes access to a simulator that outputs the next state given any state-action pair. \citet{kakade2003sample} elucidate various uses for this generative setting and analyze it in further detail. For the finite MDP case, such a generative setting has been subsequently studied in various works such as \citep{kakade2003sample,gheshlaghi2013minimax,li2020breaking} and, recently, by~\citet{agarwal2020model} who provide minimax optimality guarantees for the naive plug-in estimator based algorithm. For large state spaces, generative RL is typically combined with function approximation as studied, e.g., by~\citep{abbasi2019politex,shariff2020efficient,lattimore2020learning,li2022near}. Recently, \citet{mehta2021experimental} consider generative RL in continuous state-action spaces from an experimental perspective and showcase the relevance of this setting to the nuclear fusion dynamics research. In addition, \citet{li2022near} present an active exploration strategy that utilizes the least-squares value iteration. Their approach aims to identify a near-optimal policy across the entire state space, providing polynomial sample complexity guarantees that remain unaffected by the number of states. In contrast to these works, we use generative RL to discover \emph{distributionally robust} policies through the modeling of unknown transition dynamics.

In model-based reinforcement learning, the model learned from a simulator encounters two issues well discussed in the literature, namely, the model-bias (\citep{deisenroth2019pilco,clavera2018model}) and the simulation to reality (sim2real) gap (\citep{andrychowicz2020learning,peng2018sim,mankowitz2019robust, christiano2016transfer, rastogi2018sample, wulfmeier2017mutual}). To address this from the perspective of distributional robustness, previous works (\citep{zhou2021finite,panaganti2022sample,yang2021towards}) have considered distributional robustness aspects in the context of finite Markov decision processes (MDPs) using the robust MDP framework from \citet{iyengar2005robust,nilim2005robust}. Various other works utilize this robust MDP framework such as~\citep{xu2010distributionally,wiesemann2013robust,yu2015distributionally,mannor2016robust,badrinath2021robust,petrik2019beyond} for the planning problem, and provide asymptotic guarantees for tabular and linear function approximators~\citep{lim2013reinforcement,tamar2014scaling,roy2017reinforcement,wang2021online}.  Our work is closely related to the recent works on distributionally robust RL  \citep{zhou2021finite,panaganti2022sample,yang2021towards}. However, unlike ours, the sample complexity bounds established in these works rely on the number of states and actions, making them impractical for large or infinite state spaces.  In the model-free setting, distributionally robust RL with large state space (though, still assumed to be finite) was considered by~\citet{panaganti2022robust} in a function approximation setup.  They assume access to offline data from the nominal transition dynamics and provide computational sample complexity bounds in terms of the size of the hypothesis space that is used to represent the set of state-action value functions (Q-function). Other works such as \citep{pinto2017robust,derman2020bayesian,mankowitz2019robust,zhang2020robust} consider robustness aspects in deep reinforcement learning, but these approaches lack theoretical guarantees.
To the best of our knowledge, our work is the first one to address the distributionally robust RL problem in the generative model setting with a \emph{model-based} approach and \emph{large} state spaces. Moreover, we are the first to consider general \emph{non-linear} transition dynamics and derive provable sample complexity guarantees for such a setting.
\looseness=-1
 
Similar to previous works, we utilize the kernelized MDP framework from \citet{chowdhury2019online} to model transition dynamics with continuous states and actions by assuming that the transition function belongs to an associated Reproducing Kernel Hilbert Space (RKHS). Such continuous MDP formulations also appear in \citep{curi2020efficient,curi2021combining}, however, these works consider finite horizon MDPs while in our work we consider infinite horizon discounted MDPs. In particular, \citet{curi2021combining} propose an adversarially robust upper-confidence algorithm to optimize performance in the worst case. However, their algorithm provides robustness guarantees against adversarial perturbations to the transition dynamics. Our work differs from this perspective as we consider robustness w.r.t.~distributional shifts of the transition dynamics.

Finally, in the related kernelized \emph{bandit} setting, model-based distributionally robust algorithms are proposed in \cite{kirschner2020distributionally, bogunovic2018adversarially, nguyen2020distributionally}.  
\looseness=-1

\subsection{Main Contributions} 
We formalize a distributionally robust reinforcement learning setting with continuous state spaces and non-linear transition dynamics in \Cref{sec: Problem Formulation}. In the generative model setting, we propose (in \Cref{sec: prelim}) a model-based approach that utilizes Gaussian Process models and the Maximum Variance Reduction (\textsc{MVR}) principle to efficiently learn transition dynamics. We provide novel statistical sample complexity guarantees in \Cref{sec:sample_complexity} for the proposed method and widely used uncertainty sets. Our sample complexity bounds are independent of the number of states, ensuring the effectiveness of our approach in identifying near-optimal distributionally-robust policies for large state spaces. In \Cref{sec: experiments}, our experimental findings showcase the sample efficiency and robustness of our algorithm in the face of distributional shifts within popular RL-testing environments.\looseness=-1

\section{Problem Setting}
\label{sec: Problem Formulation}

A discounted Markov Decision Process (MDP) is a tuple $(\cS,\A, P, r,\gamma)$, with $\cS$ denoting the state space, the action space $\A $, and the probabilistic transition dynamics $P:\cS\times \A\to \Delta(\cS)$. Here, $\Delta(\cS)$ denotes the set of all probability distributions over $\cS$. 
The reward function $r:\cS \times \A \to [0,1]$ characterizes the reward $r(s,a)$ the learner receives upon playing $a\in\A$ in $s\in\cS$, and $ \gamma\in [0,1]$ denotes the discount factor. 
The learner uses a policy $\pi: \cS\to \Delta(\A)$  to select $a\in\A$ upon observing the state $s\in\cS$.

We define the cumulative discounted reward as $\sum_{t=0}^{\infty}\gamma^{t}r(s_{t},a_{t})$ for known initial state $s_{0}$ and $s_{t}\sim P(s_{t-1},a_{t-1})$ for $t>0$ and $a_{t}\sim \pi(s_{t})$.
The value function $V_{\pi}$ and the state-action value function $ Q_{\pi}$ are given as follows:
\begin{equation}
    V_{\pi}(s)=\E_{P,\pi}\Big[\sum_{t=0}^{\infty}\gamma^{t}r(s_{t},a_{t})\Big\vert s_{0}=s\Big], \quad
    Q_{\pi}(s,a)=r(s,a)+\E_{P,\pi}\Big[\sum_{t=1}^{\infty}\gamma^{t}r(s_{t},a_{t})\Big], \nonumber
\end{equation}
where $a_{t}\sim \pi(s_{t})$ and $s_{t+1}\sim P(s_{t},a_{t})$. 
Finally, we define the optimal policy $\pi^{*}$ corresponding to dynamics $P$ which yields the optimal value function, i.e., 
$V_{\pi^{*}}(s) =\max_{\pi} V_{\pi}(s)$ for all $s\in \cS$.

We assume the standard generative (or random) access model, in which the learner can query transition data arbitrarily from a simulator, i.e., each query to the simulator $(s_t,a_t)$ outputs a sample  $s_{t+1} \in \mathbb{R}^d$ where $s_{t+1}\sim P(s_t,a_t)$. 
In particular, we consider the following frequently used transition dynamics model:
\begin{equation}\label{eq: transition dynamics}
    s_{t+1}=f(s_{t},a_{t})+\omega_{t},
\end{equation}
where $\omega_{t}\in \mathbb{R}^{d}$ represents independent additive transition noise and follows a Gaussian distribution with zero mean and covariance $\sigma^{2}I$. 

\textbf{Regularity assumptions:} We assume that $f$ is \emph{unknown} and continuous for tractability reasons which is a common assumption when dealing with continuous state spaces (e.g., \citep{chowdhury2019online,curi2020efficient, kakade2020information}). Further on, we assume that $f$ resides in the Reproducing Kernel Hilbert Space (RKHS). Considering the multi-output definition of $f$ and in line with the previous work (e.g., \cite{chowdhury2019online, curi2020efficient}), we define the modified state-action space $\overline{\mathcal{X}}$ (over which the RKHS is defined) as $\overline{\mathcal{X}}:=\cS\times\A \times [d]$, where the last dimension $i\in  \{1,2,\dots,d\}$ incorporates the index of the output state vector, i.e., $f(\cdot,\cdot)=(\tilde{f}(\cdot,\cdot,1),\dots, \tilde{f}(\cdot,\cdot,d))$ where $\tilde{f}:\overline{\mathcal{X}}\to \mathbb{R}$. In particular, we assume that $\tilde{f}$ belongs to a space of well-behaved functions, denoted by $\mathcal{H}$, induced by some continuous, positive definite kernel function $k:\mcXbar\times\mcXbar\to\mathbb{R}$ and equipped with an inner product $\langle \cdot,\cdot \rangle_{k}$. All functions belonging to an RKHS $\mathcal{H}$ satisfy the reproducing property defined w.r.t.~the inner product $\langle \cdot,\cdot \rangle_{k}: \langle \tilde{f},k(x,\cdot)\rangle=\tilde{f}(x)$ for $\tilde{f}\in \mathcal{H}$. We also make the following common assumptions:  (i) the kernel function $k$ is bounded $k(x,x')\leq 1$ for all $x,x'\in \mcXbar$ and $\mcXbar$ is a compact set ($\mcX\subset\mathbb{R}^{p}$), and (ii) every function $\tilde{f}\in \mathcal{H} $ has a bounded RKHS norm (induced by the inner product) $\|\tilde{f}\|_{k}\leq B$.\looseness=-1

We refer to the simulator environment determined by $f$ as the \emph{nominal model} $P_{f}$, while the true environment encountered by the agent in the real world might not be the same (e.g., due to a sim-to-real gap). Consequently, we utilize the robust MDP framework to tackle this by considering an uncertainty set comprising of all models close to the nominal one. 

\textbf{Robust Markov Decision Process (RMDP)}:  We consider the robust MDP setting that addresses the uncertainty in transition dynamics and considers a set of transition models called the \emph{uncertainty set}. We use $\mathcal{P}^f$ to denote the uncertainty set that satisfies the $(s,a)$--rectangularity condition \cite{iyengar2005robust} (as defined in \Cref{eq: uncertainty set}), an assumption that is commonly used in the related literature (e.g.,~\citep{panaganti2022sample,panaganti2022robust,zhou2021finite}).
Similar to MDPs, we specify RMDP by a tuple $(\cS,\A, \mathcal{P}^f, r,\gamma)$ where the uncertainty set $\mathcal{P}^{f}$ consists of all models close to a nominal model $P_{f}$ in terms of a distance measure $D$: 
\begin{equation}\label{eq: uncertainty set}
        \mathcal{P}^{f}_{s,a}=\{p\in \Delta(\cS):D(p||P_{f}(s,a))\leq \rho\}, \quad \text{and} \quad  \mathcal{P}^{f}=\bigotimes\limits_{(s,a)\in \cS\times\A}\mathcal{P}^{f}_{s,a}. 
\end{equation}

Here, $D$ denotes some distance measure between probability distributions, and $\rho>0$ defines the radius of the uncertainty set. In this work, we consider three probability distance measures, including 
Kullback–Leibler (KL) divergence such that
$
    \mathrm{KL}(P||Q)=\int \log(\frac{dP}{dQ})dP 
$, Chi-Square ($\chi^2$) distance such that
$
\mathrm{\chi^2}(P||Q) = \int (\frac{dP}{dQ}-1)^2dP 
$  for $P$ being absolutely continuous with respect to $Q$, and Total Variation (TV) distance such that
$
\mathrm{TV}(P||Q) = \tfrac{1}{2}\|P-Q\|_{1}
$.\looseness=-1

In the RMDP setting, the goal is to discover a policy that maximizes the cumulative discounted reward for the worst-case transition model within the \emph{given} uncertainty set. Concretely, the robust value function $V_{\pi,f}^{R}$ corresponding to a policy $\pi$ and the optimal  robust value are given as follows:
\begin{equation}\label{eq: robust objective}
V^{\text{R}}_{\pi,f}(s)=\inf_{P\in \mathcal{P}^{f}}\E_{P,\pi}\Big[\sum_{t=1}^{\infty}\gamma^{t}r(s_{t},a_{t})\Big\vert s_{0}=s\Big], \quad 
V^{\text{R}}_{\pi^{*},f}(s) = \max_{\pi} V^{\text{R}}_{\pi,f}(s) \quad \forall s \in \cS.
\end{equation}
In fact, \citet{iyengar2005robust} shows that for any $f$, there exists a deterministic robust policy $\pi^{*}$.
Using the definition of the robust value function, we also define the robust Bellman operator (\cite{iyengar2005robust}) in terms of the robust state-action value function $Q^{R}_{\pi,f}$ as follows: 
\begin{equation}\label{eq: robust bellmann q}
Q^{\text{R}}_{\pi,f}(s,a)=r(s,a)+\gamma\inf_{D(p||P_{f}(s,a))\leq \rho}\E_{s'\sim p}\Big[V^{\text{R}}_{\pi,f}(s')\Big].
\end{equation}

The goal of the learner is to discover a near-optimal robust policy while minimizing the total number of samples $N$, i.e., queries to the nominal model (simulator). Concretely, for a fixed precision $\epsilon > 0$, the goal is to output a policy $\hat{\pi}_{N}$ after collecting $N$ samples, such that $\|V^{\text{R}}_{\hat{\pi}_{N},f}-V^{\text{R}}_{\pi^{*},f}\|_{\infty}\leq \epsilon$.\looseness=-1

\section{Sampling Algorithm}\label{sec: prelim}

In this section, we outline our methodology for addressing the problem described in \Cref{sec: Problem Formulation}. We begin by discussing the Gaussian process (GP) model often used in algorithms to learn RKHS functions \cite{rasmussen2006gaussian, kanagawa2018gaussian}.\looseness=-1

\textbf{Multi-output Gaussian process:} 
Under the assumptions of~\Cref{sec: Problem Formulation}, modeling uncertainty and learning the transition model $f$ can be performed via the Gaussian process framework.
A Gaussian process $GP(\mu(\cdot),k(\cdot,\cdot))$ over the input domain $\overline{\mathcal{X}}$, is a collection of random variables $(\tilde{f}(x))_{x\in \mcXbar}$ whose every finite subset $(\tilde{f}(x_i))_{i=1}^{n}, n\in \mathbb{N}$, follows multivariate Gaussian distribution with mean $\E[\tilde{f}(x_i)]=\mu(x_i)$ and covariance $\E[(\tilde{f}(x_i)-\mu(x_i))(\tilde{f}(x_j)-\mu(x_j))]=k(x_i,x_j)$ for every $1\leq i,j \leq n$. Standard algorithms implicitly use a zero-mean $GP(0,k(\cdot,\cdot))$ as the prior distribution over $\tilde{f}$, i.e, $\tilde{f}\sim GP(0,k(\cdot,\cdot))$, and assume that the noise variables are drawn independently across $t$ from $\mathcal{N}(0,\lambda)$ with $\lambda > 0$. Considering the multi-output definition of $f(\cdot,\cdot)=(\tilde{f}(\cdot,\cdot,1),\dots, \tilde{f}(\cdot,\cdot,d))$,
we build $d$ copies of the dataset such that $\mathcal{D}_{1:n,l}=\{(s_{i},a_{i},l),s_{i+1,l}\}_{i=1}^{n}$ each with $n$ transitions from a particular state-action pair $(s,a)$ to component $l$ of next state. For $x_i=(s_i,a_i)$ and $y_{i,l}=s_{i+1,l}$,  the posterior mean, covariance and variance for $\tilde{f}(x,l)$ are given by:
\begin{align}\label{eq: posterior mean-multi}
    \mu_{nd}(x,l)&=k_{nd}(x,l)(K_{nd}+I_{nd}\lambda)^{-1}y_{nd},\\
    k_{nd}((x,l),(x',l))&=k((x,l),(x',l))-k_{nd}(x,l)(K_{nd}+I_{nd}\lambda)^{-1}k_{nd}^{T}(x',l'),\nonumber\\ \label{eq: posterior variance-multi}
    \sigma_{nd}^{2}(x,l)&=k_{nd}((x,l),(x,l)).
\end{align}
Here $K_{nd}$ denotes the covariance matrix of dimensions $nd\times nd$ whose entries are $k((x_i,l),(x_j,l'))$ with $1\leq i,j \leq n$ and $1\leq l,l'\leq d$. $k_{nd}(x,l)=[k((x,l),(x_i,l'))]_{1\leq i\leq n,1\leq l'\leq  d}$ denotes the covariance vector and $y_{nd}=[y_{i,l}]_{1\leq i\leq n,1\leq l\leq  d}$ denotes the output vector. 
 Correspondingly, the posterior mean and variance for $f$ would be 
\begin{align}
    \mu_n(s,a)&=(\mu_{nd}(s,a,1),\cdots,\mu_{nd}(s,a,d)),\\
     \sigma_n(s,a)&=(\sigma_{nd}(s,a,1),\cdots,\sigma_{nd}(s,a,d)).
\end{align}

\textbf{Maximum variance reduction:} With certain assumptions on the loss function (squared loss) and noise distribution, the function estimation in RKHS is analogous to the Bayesian Gaussian process framework \cite{rasmussen2006gaussian}. When used with the same kernel function, this allows the construction of mean and variance estimates of $\tilde{f}\in\mathcal{H}$ using Gaussian processes (\cref{eq: posterior mean-multi} and \cref{eq: posterior variance-multi}). Based on these, one can construct shrinking statistical confidence bounds (used in our analysis in~\Cref{sec:mo_conf}) that hold with probability at least $1-\delta$, i.e., the following holds $|\tilde{f}(x)-\mu_{n-1}(x)|\leq \beta_n(\delta)\sigma_{n-1}(x)$ for every $n\geq 1$ and $x\in \mcXbar$. Here $\{\beta_i\}_{i=1}^n$ stands for the sequence of parameters that are suitably set (see \Cref{lemma: uniform vakili bound}) to ensure the validity of the confidence bounds.\looseness=-1

We use the maximum variance reduction (\textsc{MVR}) algorithm (\Cref{alg: mgpbo}) to learn about the nominal model $f$. MVR works on the principle of reducing the maximum uncertainty measured by the posterior standard deviation of a GP model calculated by using previously collected data. At each iteration, MVR queries a state-action pair that has the highest uncertainty according to the model and uses the obtained observation to update the GP posterior. The algorithm outputs nominal dynamics estimate $\hat{f}_n$ corresponding to the final GP posterior mean $\mu_n$.

\begin{algorithm}[t!]
    \caption{$\algus$}
    \begin{algorithmic}[1]
        \STATE \textbf{Require:} Simulator $f$, kernel $k$, domain $\cS\times\A$.
        \STATE  Set $\mu_0(s,a) = 0$, $\sigma_0(s,a) = 1$ for all $(s,a) \in \cS\times\A$
        \FOR{$ i=1,\dots, n$}
            \STATE  $ (s_{i},a_{i})=\argmax_{(s,a)\in \cS\times \A}\|\sigma_{i-1}(s,a)\|_{2}$
            \STATE Observe $s_{i+1}=f(s_{i},a_{i})+\omega_{i}$ \hspace{2mm} (i.e., sample $s_{i+1}$ from nominal $P_f(s_i, a_i)$)
            \STATE Update to $\mu_{i}$ and $\sigma_{i}$ by using $(s_i, a_i,s_{i+1})$ according to \cref{eq: posterior mean-multi} and \cref{eq: posterior variance-multi}
        \ENDFOR
         \RETURN Output the dynamics estimate $\hat{f}_n(\cdot,\cdot) = \mu_{n}(\cdot, \cdot)$ 
    \end{algorithmic}
    \label{alg: mgpbo}
    
\end{algorithm}

To characterize the precision of the learned model, we use the maximum information gain (\cite{srinivas2009gaussian}) 
\begin{equation}\label{eq: max info for gaus}
    \Gamma_n(\mcXbar) = \max_{x_1,\dots,x_n \in \mcXbar}0.5 \log \mathrm{det} (I_n + \lambda^{-1}K_{n}),
\end{equation}
a kernel-dependent quantity that is frequently used in GP optimization.    
For many commonly used kernels, $\Gamma_n$ is sublinear in $n$, which implies that the predictive uncertainties are shrinking sufficiently fast, and thus $\hat{f}_n$ is capable of generalizing well across the entire domain. This is formalized in the following lemma.\looseness=-1

\begin{restatable}{lemma}{modelerror}
    \label{lemma:contribution} 
        For $\beta_{n}(\delta)$ set as in \Cref{lemma:confidence_single_output_discretized} and $\mathcal{I}_{d}$ denoting $\{1,2,\cdots,d\}$, the MVR algorithm (\Cref{alg: mgpbo}) outputs the dynamics estimate $\hat{f}_n(\cdot,\cdot) = \mu_{n}(\cdot, \cdot)$ such that the following holds uniformly for all $(s,a)\in \cS\times \A$ with probability at least $1-\delta$, \looseness=-1
        \begin{equation}
             \|\mu_{n}(s,a)-f(s,a)\|_{2}\leq \mathcal{O}\Big(\frac{\beta_{n}(\delta)2ed}{\sqrt{n}}\sqrt{ \Gamma_{nd}(\cS\times \A\times \mathcal{I}_{d})}\Big).
        \end{equation} 
 \end{restatable}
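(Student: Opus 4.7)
The plan is to chain together four standard ingredients: the per-coordinate RKHS confidence bound, a coordinate-sum reduction that produces the factor $d$, a max-to-average argument that exploits MVR's greedy rule, and a final information-gain bound on the resulting sum of posterior variances. I would structure the proof to mirror a kernelized bandit regret analysis, but adapted to the multi-output model-estimation (not regret-minimization) objective and to the $\ell_2$ prediction error on the full next-state vector.

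First, I would apply the uniform RKHS confidence bound referenced in the paper (valid with probability at least $1-\delta$ for the stated choice of $\beta_n(\delta)$): for every $n\geq 1$, every $(s,a)\in\mathcal{S}\times\mathcal{A}$, and every output index $l\in[d]$,
\[
|\tilde{f}(s,a,l)-\mu_{nd}(s,a,l)|\leq \beta_n(\delta)\,\sigma_{nd}(s,a,l).
\]
Since $\|v\|_2\leq \|v\|_1\leq d\,\|v\|_\infty$ and $\|v\|_\infty\leq \|v\|_2$ for $v\in\mathbb{R}^d$, summing over $l$ yields a clean bound of the form $\|\mu_n(s,a)-f(s,a)\|_2\leq \beta_n(\delta)\,d\,\|\sigma_n(s,a)\|_2$, which produces the factor of $d$ appearing in the statement.

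Second, I would exploit MVR's greedy rule to pass from the worst-case variance norm to an empirical average. The posterior standard deviation is monotonically non-increasing in the number of observations, and at iteration $i$ the algorithm picks $(s_i,a_i)\in\arg\max\|\sigma_{i-1}(s,a)\|_2$. Hence for every $i\leq n$ and every $(s,a)$,
\[
\|\sigma_n(s,a)\|_2\leq \|\sigma_{i-1}(s,a)\|_2\leq \|\sigma_{i-1}(s_i,a_i)\|_2.
\]
Squaring and averaging over $i=1,\ldots,n$ gives $\max_{s,a}\|\sigma_n(s,a)\|_2^2\leq \tfrac{1}{n}\sum_{i=1}^n\|\sigma_{i-1}(s_i,a_i)\|_2^2$, so $\max_{s,a}\|\sigma_n(s,a)\|_2\leq \sqrt{\tfrac{1}{n}\sum_{i=1}^n\|\sigma_{i-1}(s_i,a_i)\|_2^2}$.

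Third, I would bound the right-hand sum using the multi-output maximum information gain. Expanding $\|\sigma_{i-1}(s_i,a_i)\|_2^2=\sum_{l=1}^d \sigma_{(i-1)d}^2(s_i,a_i,l)$ and treating the $nd$ augmented queries $(s_i,a_i,l)$ as a single sequence in $\overline{\mathcal{X}}$, the classical Srinivas--Kakade--Seeger sum-of-variances inequality gives $\sum_{i,l}\sigma_{(i-1)d+l-1}^2(s_i,a_i,l)=\mathcal{O}(\Gamma_{nd}(\mathcal{S}\times\mathcal{A}\times\mathcal{I}_d))$. Combining this with the previous two steps produces the announced rate $\mathcal{O}(\beta_n(\delta)\,d\sqrt{\Gamma_{nd}/n})$ up to the constant $e$.

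The main obstacle is precisely that constant-$e$ gap in the third step: our sum involves $\sigma_{(i-1)d}^2(s_i,a_i,l)$, which is the variance \emph{before} any of iteration $i$'s $d$ outputs have been incorporated, whereas the information-gain bound naturally delivers $\sigma_{(i-1)d+l-1}^2(s_i,a_i,l)$, which is the variance in the middle of the within-iteration batch. Reconciling the two requires a Desautels-style ``batch delay'' argument, showing that the variance ratio $\sigma_{(i-1)d}^2/\sigma_{(i-1)d+l-1}^2$ is bounded by a universal constant (one can derive a factor of at most $e$ because the conditional information gained from the $l{-}1\leq d-1$ intra-batch observations of $\tilde{f}(s_i,a_i,\cdot)$ is uniformly bounded by a constant). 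This batch-to-sequential inflation is the only non-routine component; once it is in place, the remaining manipulations are bookkeeping with the uniform confidence bound.
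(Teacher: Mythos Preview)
Your proposal is correct and follows the same four-step route as the paper (per-coordinate confidence bound, MVR greedy rule plus variance monotonicity to pass from the worst-case posterior variance to an empirical average, Cauchy--Schwarz, and a multi-output information-gain bound on the resulting sum of variances), with the only cosmetic difference that you spell out the batch-delay argument where the paper simply invokes existing multi-output information-gain lemmas. One minor technicality you elide: the confidence bound of \Cref{lemma:confidence_single_output_discretized} is guaranteed only on the discretization $\D_n$, not on all of $\cS\times\A$, and the paper bridges this via \Cref{lemma: uniform vakili bound}, picking up an additive $2d/\sqrt{n}$ term (and a $\sqrt{d}$ rather than $d$ factor in front of $\|\sigma_n\|_2$) that is then harmlessly absorbed into the final $\mathcal{O}(\cdot)$.
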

The preceding lemma asserts that we can effectively estimate the unknown dynamics by utilizing the pure exploration procedure and that the error in the model reduces as we increase the number of samples. In the subsequent section, we leverage this finding to establish the minimum number of samples needed to obtain a distributionally robust policy that is close to optimal.

\section{Sample Complexity}
\label{sec:sample_complexity}
This section discusses the statistical sample complexity of the proposed \textsc{MVR} algorithm in distributionally robust MDPs. Specifically, given the optimal robust policies $\hat{\pi}_{N}$ 
and $\pi^{*}$ corresponding to the learned nominal dynamics $\hat{f}_{N}$ by the \textsc{MVR} algorithm with $N$ iterations and the true nominal dynamics $f$, respectively, we show the number of samples needed by the \textsc{MVR} algorithm to ensure that the following holds:
\begin{equation}\label{eq: Bound Objective}
    |V^{\text{R}}_{\hat{\pi}_N,f}(s)-V^{\text{R}}_{\pi^{*},f}(s)| \leq \epsilon, \forall~s\in\mathcal{S}.
\end{equation}
Note that several model-free methods \citep{panaganti2022robust,derman2018soft,mankowitz2019robust} have studied how to learn an optimal robust policy under KL, TV, and $\chi^2$ uncertainty set given trajectory samples generated from a transition dynamics. Thus, one can easily incorporate the \textsc{MVR} algorithm with these model-free algorithms to find an optimal $\hat \pi_N$ using samples generated by $\hat{f}_N$. One major benefit of our approach is that we do not need access to samples from the more costly simulator $f$ when training the robust policy. Consequently, we focus on the statistical sample complexity of the \textsc{MVR} algorithm rather than designing algorithms to find $\hat \pi_N$.\looseness=-1

\begin{theorem}(Sample Complexity of \textsc{MVR} under KL uncertainty set)\label{thm: kl sample} Consider a robust MDP with nominal transition dynamics $f$ satisfying the regularity assumptions from \Cref{sec: Problem Formulation} and with uncertainty set defined as in \Cref{eq: uncertainty set} w.r.t.~KL divergence. For $\pi^{*}$ denoting the robust optimal policy w.r.t.~nominal transition dynamics $f$ and  $\hat{\pi}_{N}$ denoting the robust optimal policy w.r.t.~learned nominal transition dynamics $\hat{f}_N$ via \textsc{MVR} (\Cref{alg: mgpbo}), and $\delta\in(0, 1)$, $\epsilon\in (0,\frac{1}{1-\gamma})$, it holds that $\max_{s}|V^{\text{R}}_{\hat{\pi}_N,f}(s)-V^{\text{R}}_{\pi^{*},f}(s)|\leq \epsilon$ with probability
at least $1 - \delta$ for any $N$ such that 
\begin{equation}
  N = \mathcal{O}\Big( e^{\frac{2-\gamma}{(1-\gamma)\alpha_\mathrm{kl}}}\frac{ \gamma^{2}\beta_N^{2}(\delta)d^{2}\Gamma_{Nd}}{(1-\gamma)^{4}\rho^{2}\epsilon^{2}}\Big).\\
\end{equation} 
\end{theorem}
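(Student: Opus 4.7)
The plan is to reduce the policy suboptimality gap to a uniform (over policies) stability bound on the robust value function under perturbations of the nominal dynamics, then apply a KL-duality argument that converts such perturbations into Gaussian KL divergences (which admit a closed form since the noise is additive Gaussian with fixed covariance $\sigma^{2}I$), and finally invoke \Cref{lemma:contribution} to translate the model error into the sample count $N$.

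\textbf{Step 1: error decomposition via optimality.} Since $\hat\pi_{N}$ is robust-optimal under $\hat f_{N}$, we have $V^{\text{R}}_{\pi^{*},\hat f_{N}}(s)\leq V^{\text{R}}_{\hat\pi_{N},\hat f_{N}}(s)$, and therefore
\[
V^{\text{R}}_{\pi^{*},f}(s)-V^{\text{R}}_{\hat\pi_{N},f}(s)\;\leq\;\bigl(V^{\text{R}}_{\pi^{*},f}(s)-V^{\text{R}}_{\pi^{*},\hat f_{N}}(s)\bigr)+\bigl(V^{\text{R}}_{\hat\pi_{N},\hat f_{N}}(s)-V^{\text{R}}_{\hat\pi_{N},f}(s)\bigr).
\]
This reduces the task to bounding $\sup_{\pi}\|V^{\text{R}}_{\pi,f}-V^{\text{R}}_{\pi,\hat f_{N}}\|_{\infty}$ by $\epsilon/2$.

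\textbf{Step 2: stability of the robust value under KL duality.} For a fixed $\pi$, iterating the robust Bellman operator $T^{\text{R}}$ from \eqref{eq: robust bellmann q} gives
\[
\|V^{\text{R}}_{\pi,f}-V^{\text{R}}_{\pi,\hat f_{N}}\|_{\infty}\;\leq\;\tfrac{\gamma}{1-\gamma}\sup_{s,a}\bigl|\,T^{\text{R}}_{f}V^{\text{R}}_{\pi,f}(s,a)-T^{\text{R}}_{\hat f_{N}}V^{\text{R}}_{\pi,f}(s,a)\bigr|.
\]
To control the inner supremum, I would use the strong dual of the KL-constrained inner minimization,
\[
\inf_{D_{\mathrm{KL}}(p\|Q)\leq\rho}\E_{s'\sim p}[V(s')]\;=\;\sup_{\alpha\geq 0}\Bigl\{-\alpha\log\E_{s'\sim Q}\!\bigl[e^{-V(s')/\alpha}\bigr]-\alpha\rho\Bigr\},
\]
and evaluate both sides at their (common) maximizer to bound the difference by
\[
\alpha\,\bigl|\log\E_{P_{f}(s,a)}[e^{-V/\alpha}]-\log\E_{P_{\hat f_{N}}(s,a)}[e^{-V/\alpha}]\bigr|.
\]
Since $V\in[0,\tfrac{1}{1-\gamma}]$, the integrand lies in $[e^{-1/((1-\gamma)\alpha)},1]$, so a change-of-measure argument gives a control by $e^{1/((1-\gamma)\alpha)}\,\mathrm{TV}(P_{f}(s,a),P_{\hat f_{N}}(s,a))$. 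Pinsker's inequality combined with the closed-form Gaussian KL
\[
D_{\mathrm{KL}}\bigl(P_{\hat f_{N}}(s,a)\,\|\,P_{f}(s,a)\bigr)\;=\;\frac{1}{2\sigma^{2}}\,\|f(s,a)-\hat f_{N}(s,a)\|_{2}^{2}
\]
then turns the stability bound into something proportional to $\|f(s,a)-\hat f_{N}(s,a)\|_{2}$. Lower bounding the dual optimizer $\alpha$ by $\alpha_{\mathrm{kl}}>0$ (which is finite since $V$ is bounded and $\rho>0$), and combining with the $\gamma/(1-\gamma)$ factor from iterating $T^{\text{R}}$, produces the exponential prefactor $e^{(2-\gamma)/((1-\gamma)\alpha_{\mathrm{kl}})}$ in the final bound.

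\textbf{Step 3: instantiate \Cref{lemma:contribution} and invert.} Applying the uniform model-error bound $\|f(s,a)-\hat f_{N}(s,a)\|_{2}=\mathcal{O}(\beta_{N}(\delta)\,d\,\sqrt{\Gamma_{Nd}/N})$ (holding with probability $\geq 1-\delta$), we obtain
\[
\max_{s}|V^{\text{R}}_{\hat\pi_{N},f}(s)-V^{\text{R}}_{\pi^{*},f}(s)|\;\leq\;\mathcal{O}\!\left(\frac{\gamma}{(1-\gamma)^{2}\rho}\,e^{\tfrac{2-\gamma}{2(1-\gamma)\alpha_{\mathrm{kl}}}}\,\frac{\beta_{N}(\delta)\,d\,\sqrt{\Gamma_{Nd}}}{\sqrt{N}}\right).
\]
Setting the right-hand side equal to $\epsilon$ and solving for $N$ yields the claim.

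\textbf{Main obstacle.} The delicate point is handling the KL dual: a naive bound blows up as $\alpha\downarrow 0$, and the optimal $\alpha^{*}$ depends on the (unknown) value function and on $\rho$. Establishing a deterministic lower bound $\alpha_{\mathrm{kl}}$ on $\alpha^{*}$ that is uniform over all candidate value functions and all $(s,a)$—and then carrying that bound through the change-of-measure step without paying a dependence on $|\cS|$—is the key technical maneuver. The rest, including the Gaussian KL identity and the Lemma 1 invocation, is essentially a careful accounting of constants.
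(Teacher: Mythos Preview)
Your plan is essentially the paper's proof: the same decomposition via the robust Bellman equation, the same KL dual reformulation (\Cref{lemma:kl reform}), the same change-of-measure bound via $e^{-V/\alpha}\in[e^{-M/\alpha},1]$ followed by Pinsker and the closed-form Gaussian KL, and the same final invocation of \Cref{lemma:contribution}. Your identification of the lower bound on the dual variable as the crux is exactly right; the paper handles it in its Lemma~9 by a case split ($\alpha^{*}=0$ versus $\alpha^{*}>0$) and by showing that for $n$ large enough \emph{both} maximizers $\alpha^{*}$ and $\hat\alpha_{n}^{*}$ lie in a common interval $[\underline{\alpha},M/\rho]$.

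Two small corrections. First, ``evaluate both sides at their (common) maximizer'' is a slip: the optimizers for $P_{f}$ and $P_{\hat f_{N}}$ differ, and the actual step is $|\sup_{\alpha}g_{1}-\sup_{\alpha}g_{2}|\leq\sup_{\alpha\in[\underline{\alpha},\overline{\alpha}]}|g_{1}-g_{2}|$, valid only after you have confined both optimizers to that interval---this is precisely the obstacle you flag. Second, the exponent $e^{(2-\gamma)/((1-\gamma)\alpha_{\mathrm{kl}})}$ does not come from ``combining with the $\gamma/(1-\gamma)$ factor''; in the paper it arises because a $\zeta$-cover over value functions (with $\zeta=1$) is inserted, contributing an extra $e^{1/\alpha_{\mathrm{kl}}}$ on top of $e^{M/\alpha_{\mathrm{kl}}}$ with $M=1/(1-\gamma)$. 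Your route, which bounds $e^{-V/\alpha}\leq 1$ directly, would in fact avoid this cover and give the slightly sharper exponent $e^{1/((1-\gamma)\alpha_{\mathrm{kl}})}$.
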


\Cref{thm: kl sample} shows the number of samples required from the nominal transition dynamics $f$ (simulator) to construct a robust optimal policy reliably with high probability. 
The complexity bounds depend on the  maximum information gain $\Gamma_{Nd}$ (\Cref{eq: max info for gaus}), which is sublinear in $N$ for many commonly used kernels (\citet{srinivas2009gaussian}). Furthermore, in our analysis, we use the confidence bounds from \cite{vakili2021optimal} with $\beta_N^{2}(\delta)$ which only exhibits a logarithmic dependence on $N$. An additional $d$ factor that denotes the dimension of the state space $\cS$ in the obtained bound comes from utilizing the multi-output (of dimension $d$) GP framework to model the transition dynamics, which also appears in the regret bounds of similar works (\citep{chowdhury2019online,curi2020efficient,curi2021combining}). Finally, the term $\alpha_\mathrm{kl} \in (0,\tfrac{1}{2(1-\gamma)\rho})$ is 
a problem-dependent parameter that is independent of $N$, which similarly appears in the guarantees of~\citep{panaganti2022sample}.
\looseness=-1

We can compare our guarantees with the existing sample-complexity results in model-based distributionally robust RL which, however, only consider finite state-action spaces~\citep{panaganti2022sample,zhou2021finite, yang2021towards}. In particular, when considering KL uncertainty sets, \citet{panaganti2022sample} obtain sample complexity of order $\mathcal{O}\Big( e^{\frac{\alpha_\mathrm{kl}+2}{\alpha_\mathrm{kl}(1-\gamma)}}\frac{ \gamma^{2} |\cS|^2 |\A|}{(1-\gamma)^{4}\rho^{2}\epsilon^{2}}\Big)$ up to logarithmic factors. Notably, the latter complexity bound explicitly depends on the cardinality of the state and action spaces $|\cS|$ and $|\A|$, thus scaling badly when $\cS$ and $\A$ are large or continuous.
Instead, the guarantee of \Cref{thm: kl sample} depends on the state-action space \emph{only} through $\Gamma_{Nd}$ which remains bounded even when these are continuous. This allows us to successfully extend the distributionally robust framework to continuous state spaces. Other terms in the bound of \Cref{thm: kl sample} such as $\gamma$ (the discount factor), $\rho$ (radius of the uncertainty set) have similar dependencies. 
Crucially, the dependence on the precision parameter $\epsilon$ remains the same when compared to the guarantees provided for finite state-action setting.\looseness=-1

We relegate the proof of~\Cref{thm: kl sample} to Appendix~\ref{sec: KL-proof} but outline its main steps below:

\textbf{Step (i):} The first step is to bound the approximation error of policy $\hat{\pi}_{n}$ (i.e., the left-hand side of \Cref{eq: Bound Objective}) by the sum of two error terms: $|V^{\text{R}}_{\hat{\pi}_N,f}(s)-V^{\text{R}}_{\hat{\pi}_N,\hat{f}_{N}}(s)|$ and $|V^{\text{R}}_{\hat{\pi}_N,\hat{f}_{N}}(s)-V^{\text{R}}_{\pi^{*},f}|$. Utilizing the robust Bellman \Cref{eq: robust bellmann q}, bounding such errors boils down to bounding differences of the form: 
\begin{equation}
\label{eq:bounding_P_f_and_P_f_n}
    \max_s\Big|\inf\limits_{\mathrm{KL}(p||P_{f}(s,\hat{\pi}_N(s)))\leq \rho}\E_{s'\sim p}\Big[V^{\text{R}}_{\hat{\pi}_N,f}(s')\Big]-\inf\limits_{\mathrm{KL}(p||P_{\hat{f}_{n}}(s,\hat{\pi}_N(s)))\leq \rho}\E_{s'\sim p}\Big[V^{\text{R}}_{\hat{\pi}_N,f}(s')\Big]\Big|.    
\end{equation}
where $P_{f}(s,a)$ denotes the Gaussian transition distribution with mean $f(s,a)$ and covariance $\sigma^2 I$.

\textbf{Step (ii):}  The major challenge of bounding \Cref{eq:bounding_P_f_and_P_f_n} lies in the inner infinite-dimensional minimization problems over distributions. To overcome this, we can reformulate such problems into single-dimensional ones using duality~\citep{hu2013kullback,zhou2021finite,panaganti2022sample} according to the following lemma.
\begin{lemma}\label{lemma:kl reform}(Variant of \citet{hu2013kullback})
    For random variable $X$ and function $V$ satisfying that $V(X)$ has a finite Moment Generating function, it holds for all $\rho>0$:
\begin{equation}\label{eq: transformation orig}
    \inf_{P:\mathrm{KL}(P||P_{0})\leq \rho}\E_{X\sim P}[V(X)]=\sup_{\alpha\geq 0}\{-\alpha \log(\E_{X\sim P_{0}}[e^{\frac{-V(X)}{\alpha}}])-\alpha \rho\}.
\end{equation}
\end{lemma}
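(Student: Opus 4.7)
The plan is to prove the identity via Lagrangian duality, since the left-hand side is a convex optimization problem over probability measures (linear objective, convex KL constraint) and the right-hand side is precisely its Lagrange dual. I would begin by reparameterizing the primal in terms of the Radon--Nikodym derivative $L = dP/dP_{0}$: the constraint $P \ll P_{0}$ becomes $L \geq 0$ with $\E_{P_{0}}[L]=1$, the KL constraint becomes $\E_{P_{0}}[L\log L]\leq \rho$, and the objective becomes $\E_{P_{0}}[LV]$. This is convex in $L$ (linear objective, a convex inequality constraint since $x\log x$ is convex, and an affine equality constraint), so Lagrangian duality is the natural tool.

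Next I would form the Lagrangian
$$\mathcal{L}(L,\alpha,\nu) = \E_{P_{0}}[LV] + \alpha\bigl(\E_{P_{0}}[L\log L]-\rho\bigr) + \nu\bigl(1-\E_{P_{0}}[L]\bigr),$$
with multipliers $\alpha\geq 0$ and $\nu\in\mathbb{R}$. For fixed $\alpha>0$, the pointwise stationarity condition $V + \alpha(1+\log L) - \nu = 0$ yields the candidate minimizer $L^{*} = \exp\bigl((\nu-V)/\alpha - 1\bigr)$. Imposing $\E_{P_{0}}[L^{*}]=1$ pins down $\nu = \alpha - \alpha\log\E_{P_{0}}[e^{-V/\alpha}]$; substituting back and simplifying (using that at the optimum $\alpha\E_{P_{0}}[L^{*}\log L^{*}] + \E_{P_{0}}[L^{*}V] = \nu - \alpha$) collapses the inner infimum to $-\alpha\log\E_{P_{0}}[e^{-V/\alpha}]-\alpha\rho$. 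The finite-MGF hypothesis is precisely what guarantees $\E_{P_{0}}[e^{-V/\alpha}]<\infty$ for every $\alpha>0$, so this step is well-defined, and $L^{*}$ is an admissible Radon--Nikodym derivative. Taking the supremum over $\alpha\geq 0$ then produces the right-hand side of \Cref{eq: transformation orig}; the boundary case $\alpha=0$ corresponds to $\mathrm{ess\,inf}\,V$, which cannot exceed the supremum over $\alpha>0$ and hence is harmless.

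Finally, I would upgrade weak duality (which is automatic) to strong duality by verifying Slater's condition: the nominal measure $P_{0}$ itself is strictly feasible since $\mathrm{KL}(P_{0}\|P_{0})=0<\rho$. Combined with the convexity established above, Slater's condition plus the constraint qualification from the finite MGF delivers strong duality and hence the claimed equality. An equivalent, more streamlined route is to invoke the Donsker--Varadhan variational formula $\log\E_{P_{0}}[e^{\phi}] = \sup_{P}\{\E_{P}[\phi]-\mathrm{KL}(P\|P_{0})\}$ with $\phi = -V/\alpha$, multiply by $-\alpha$ to flip the supremum to an infimum, and subtract $\alpha\rho$; then $\sup_{\alpha\geq 0}$ recovers the constrained primal via $\sup_{\alpha\geq 0}\inf_{P}\{\E_{P}[V] + \alpha(\mathrm{KL}(P\|P_{0})-\rho)\}$, which equals $\inf_{\mathrm{KL}(P\|P_{0})\leq \rho}\E_{P}[V]$ under Slater.

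The main obstacle is the rigorous justification of strong duality in an infinite-dimensional function space when $V$ is possibly unbounded above: one must ensure that the candidate dual minimizer is integrable and that no duality gap arises. The finite-MGF assumption is exactly the integrability condition needed to make the pointwise minimization step valid, and routing the argument through Donsker--Varadhan (which already bakes in the strong-duality statement) is the cleanest way to avoid dealing with lower-semicontinuity and tightness technicalities directly.
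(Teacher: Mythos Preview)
Your proof is correct and follows the standard Lagrangian/Donsker--Varadhan route that underlies the original result. Note, however, that the paper does not supply its own proof of this lemma: it is stated as a variant of \citet{hu2013kullback} and invoked as a black box (in the proof of \Cref{lemma: diff-opt} the paper recalls \cite[Theorem~1]{hu2013kullback} for the $\sup$ formulation and obtains the $\inf$ version by the substitution $H(X)=-V(X)$). So there is no paper-side argument to compare against beyond that trivial reduction; what you have written is essentially the proof one would find in the cited reference, and both the direct Lagrangian computation and the Donsker--Varadhan shortcut you sketch are valid.
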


Let $H(V,P):=\sup_{\alpha\geq 0}\{-\alpha \log(\E_{X\sim P}[e^{\frac{-V(X)}{\alpha}}])-\alpha \rho\}$. Thus, applying~\Cref{lemma:kl reform}, we rewrite \Cref{eq:bounding_P_f_and_P_f_n} as the difference of two single-dimensional convex optimization problems with expectations over $P_f$ and $P_{\hat f_N}$, respectively:

\begin{align}
&   \max_s \Big|H(V^{\text{R}}_{\hat{\pi}_N,f},P_{f}(s,\hat{\pi}_N(s))) 
    -    H(V^{\text{R}}_{\hat{\pi}_N,f},P_{\hat{f}_{N}}(s,\hat{\pi}_N(s)))      \Big| \nonumber\\
\quad \leq &     \nonumber\max_{V(\cdot)\in\mathcal{V}}\max_{s,a} \Big|H(V,P_{f}(s,a)) -H(V,P_{\hat{f}_{N}}(s,a))   
    \Big| \nonumber\\
\quad \leq   & \max_{V(\cdot)\in\mathcal{V}}\max_{s,a}\max_{\alpha\in[\underline{\alpha},\overline{\alpha}]}c\left|\E_{s'\sim P_{{f}(s,a)}}[e^{\frac{-V(s')}{\alpha}}]-\E_{s'\sim P_{\hat{f}_{N}(s,a)}}[e^{\frac{-V(s')}{\alpha}}]\right|,\label{eq:step_two}
\end{align}
where $c,~\underline{\alpha}, ~\overline{\alpha}>0$ are constants, $\mathcal{V}$ denotes the value functional space, and the last inequality holds due to certain structural properties of the single-dimensional optimization problem in the RHS of~\Cref{eq: transformation orig}.\looseness=-1

\textbf{Step (iii):} Finally, we bound \Cref{eq:step_two} using the difference between the estimated model $\hat f_N$ and the true $f$, which is characterized by Lemma \ref{lemma:contribution}, in \Cref{sec: KL-proof}. Moreover, to address the outer maximum over all value functions, states, and actions, we incorporate a covering number argument.\looseness=-1 

\textbf{Other uncertainty sets:}~~ We further obtain the statistical sample complexities for $\chi^2$ distance and TV distance uncertainty sets. We note that the analysis follows similar steps as the ones of~\Cref{thm: kl sample}. The major difference lies in incorporating and handling the dual forms of $\chi^2$/TV uncertainty sets in our analysis which differ from the one of \Cref{lemma:kl reform}. For $\chi^2$ uncertainty set, we utilize the dual formulation that appears in  ~\citep{duchi2018learning}, while for TV uncertainty sets we follow the approach of ~\citep{yang2021towards}. 
As before, we can upper bound \Cref{eq:bounding_P_f_and_P_f_n} via covering number arguments and the distance between the nominal transition dynamics $f$ and the learned transition dynamics $\hat f_N$ by using \Cref{lemma:contribution}. Below, we outline the statistical sample complexity in the case of $\chi^2$ and TV uncertainty sets in \Cref{thm: chi sample,thm: tv sample}, respectively.\looseness=-1

\begin{proposition}(Sample Complexity of \textsc{MVR} under $\chi^2$ uncertainty set)\label{thm: chi sample} 
Under the setup of \Cref{thm: kl sample} with uncertainty set defined w.r.t.~$\chi^2$ distance, it holds that $\max_{s}|V^{\text{R}}_{\hat{\pi}_N,f}(s)-V^{\text{R}}_{\pi^{*},f}(s)|\leq \epsilon$ with probability
at least $1 - \delta$ for any $N$ such that 

\begin{equation}
  N = \mathcal{O}\Big(\Big(\frac{1+2\rho}{\sqrt{1+2\rho}-1}\Big)^{4}\frac{ \gamma^{4}\beta_N^2(\delta)d^{2}\Gamma_{Nd}}{(1-\gamma)^{8}\epsilon^{4}}\Big).\\
\end{equation} 
\end{proposition}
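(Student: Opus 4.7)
The plan is to follow the same three-step structure used in the proof of \Cref{thm: kl sample}, swapping only the KL dual reformulation for the $\chi^{2}$ one. First, by the robust performance-difference identity together with the $(s,a)$-rectangularity of $\mathcal{P}^{f}$, the approximation error $\max_{s}|V^{\text{R}}_{\hat{\pi}_N,f}(s)-V^{\text{R}}_{\pi^{*},f}(s)|$ is controlled, up to a $\gamma/(1-\gamma)$ factor and a supremum over robust value functions $V:\cS\to[0,1/(1-\gamma)]$, by the worst-case discrepancy
\begin{equation*}
\max_{s,a}\Bigl|\inf_{\chi^{2}(p\|P_{f}(s,a))\leq\rho}\mathbb{E}_{s'\sim p}[V(s')]-\inf_{\chi^{2}(p\|P_{\hat{f}_{N}}(s,a))\leq\rho}\mathbb{E}_{s'\sim p}[V(s')]\Bigr|.
\end{equation*}

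Next, in place of \Cref{lemma:kl reform}, I would invoke the $\chi^{2}$ strong duality from \citet{duchi2018learning}, which rewrites each inner infimum as a one-dimensional maximization
\begin{equation*}
\inf_{\chi^{2}(p\|P_{0})\leq\rho}\mathbb{E}_{p}[V]=\sup_{\eta\in\mathbb{R}}\Bigl\{\eta-\sqrt{(1+2\rho)\,\mathbb{E}_{P_{0}}[(\eta-V(X))_{+}^{2}]}\Bigr\}.
\end{equation*}
Since $V\in[0,1/(1-\gamma)]$, the optimum is attained on a bounded interval $\eta\in[\underline{\eta},\overline{\eta}]\subseteq[0,1/(1-\gamma)]$, reducing the problem to comparing
\begin{equation*}
\Phi_{P}(\eta):=\eta-\sqrt{(1+2\rho)\,\mathbb{E}_{s'\sim P}[(\eta-V(s'))_{+}^{2}]}
\end{equation*}
at $P=P_{f}(s,a)$ and $P=P_{\hat{f}_{N}}(s,a)$. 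A direct computation bounds $|\Phi_{P_f}(\eta)-\Phi_{P_{\hat f_N}}(\eta)|$ by a local Lipschitz factor of order $\sqrt{1+2\rho}/(\sqrt{1+2\rho}-1)$ times the expectation difference $|\mathbb{E}_{P_f}[(\eta-V)_{+}^{2}]-\mathbb{E}_{P_{\hat f_N}}[(\eta-V)_{+}^{2}]|$; this same factor governs the supremum over $\eta$ and produces the $(1+2\rho)/(\sqrt{1+2\rho}-1)$ constant in the final bound.

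Third, because $P_{f}(s,a)$ and $P_{\hat{f}_{N}}(s,a)$ are Gaussians that differ only in their means $f(s,a)$ and $\hat{f}_{N}(s,a)$, the expectation difference above is Lipschitz in $\|f(s,a)-\hat{f}_{N}(s,a)\|_{2}$. Applying \Cref{lemma:contribution}, this is of order $\beta_{N}(\delta)\,d\sqrt{\Gamma_{Nd}}/\sqrt{N}$ uniformly over $(s,a)$ with probability at least $1-\delta$. A covering-number argument over a Lipschitz parametrization of the robust value-function class $\mathcal{V}$, identical to the one used in \Cref{thm: kl sample}, converts the supremum over $V$ into an extra logarithmic factor absorbed into the $\beta_N(\delta)$ term. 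Setting the resulting error bound equal to $\epsilon$ and inverting then yields the stated $\epsilon^{-4}$ sample complexity.

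I expect the main obstacle to be step two. Unlike the KL case, where the logarithm in \Cref{lemma:kl reform} linearises the dual cleanly once one exponentiates the value function, the $\chi^{2}$ dual is a supremum of a square root of a second-moment functional, which is only locally Lipschitz. One must carefully restrict $\eta$ to a bounded interval \emph{and} verify that the quantity under the square root is bounded below, and the effective Hölder-versus-Lipschitz behavior of $\sqrt{\cdot}$ is exactly what produces both the $(1+2\rho)/(\sqrt{1+2\rho}-1)$ constant and the $\epsilon^{-4}$ rate: the $O(N^{-1/2})$ model error from \Cref{lemma:contribution} is passed through a squaring in the integrand before the inversion, so achieving overall error $\epsilon$ requires roughly four times as many samples as in the KL regime.
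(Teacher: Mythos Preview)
Your three-step outline matches the paper's proof almost exactly, including the use of the $\chi^{2}$ dual from \citet{duchi2018learning} and the covering argument. Two points need correction, though. First, the optimal $\eta$ does not lie in $[0,1/(1-\gamma)]$ but in the larger interval $\bigl[0,\tfrac{c_{2}(\rho)}{c_{2}(\rho)-1}\cdot\tfrac{1}{1-\gamma}\bigr]$ with $c_{2}(\rho)=\sqrt{1+2\rho}$; it is precisely this enlarged range (which bounds $(\eta-V)_{+}^{2}$ from above) that supplies the $(\sqrt{1+2\rho}-1)^{-1}$ factor in the constant. Second, and more importantly, the paper does \emph{not} use a local Lipschitz bound on $\sqrt{\cdot}$: the second moment $\E_{P_{0}}[(\eta-V)_{+}^{2}]$ can vanish, so there is no uniform lower bound on the argument of the square root to make a Lipschitz estimate work. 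Instead the paper applies the elementary inequality $|\sqrt{a}-\sqrt{b}|\le\sqrt{|a-b|}$ directly, and it is this H\"older-$\tfrac{1}{2}$ step that converts the $O(N^{-1/2})$ model error from \Cref{lemma:contribution} into an $O(N^{-1/4})$ value-function error and hence produces the $\epsilon^{-4}$ rate. Your closing paragraph gestures at this mechanism, but your step-two description of a ``local Lipschitz factor of order $\sqrt{1+2\rho}/(\sqrt{1+2\rho}-1)$'' contradicts it and would not go through as written.
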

\vspace{0.5em}
\begin{proposition}(Sample Complexity of \textsc{MVR} under TV uncertainty set)\label{thm: tv sample} Under the setup of \Cref{thm: kl sample} with uncertainty set defined w.r.t.~TV distance, it holds that $\max_{s}|V^{\text{R}}_{\hat{\pi}_N,f}(s)-V^{\text{R}}_{\pi^{*},f}(s)|\leq \epsilon$ with probability
at least $1 - \delta$ for any $N$ such that 
\begin{equation}
  N = \mathcal{O}\Big(\frac{(2+\rho)^{2}}{\rho^{2}}\frac{ \gamma^{2}\beta^{2}_N(\delta)d^{2}\Gamma_{Nd}}{(1-\gamma)^{4}\epsilon^{2}}\Big).\\\end{equation} 
\end{proposition}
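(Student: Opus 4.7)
The plan is to mirror the three-step argument from the proof of \Cref{thm: kl sample}, replacing the KL dual in \Cref{lemma:kl reform} by the corresponding TV dual representation from \cite{yang2021towards} and handling the resulting non-smoothness with a covering argument.

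\textbf{Step 1 (error decomposition).} I would start with the split
\[
|V^{\text{R}}_{\hat{\pi}_N,f}(s) - V^{\text{R}}_{\pi^{*},f}(s)| \leq |V^{\text{R}}_{\hat{\pi}_N,f}(s) - V^{\text{R}}_{\hat{\pi}_N,\hat{f}_{N}}(s)| + |V^{\text{R}}_{\hat{\pi}_N,\hat{f}_{N}}(s) - V^{\text{R}}_{\pi^{*},f}(s)|
\]
used in the KL proof, unroll the robust Bellman operator \Cref{eq: robust bellmann q} on both terms, and exploit its $\gamma$-contractivity to absorb the factor $\gamma/(1-\gamma)$ and reduce matters to uniformly controlling
\[
\sup_{s,a} \Big| \inf_{p:\mathrm{TV}(p||P_{f}(s,a))\leq\rho} \E_{s'\sim p}[V(s')] \;-\; \inf_{p:\mathrm{TV}(p||P_{\hat{f}_{N}}(s,a))\leq\rho} \E_{s'\sim p}[V(s')] \Big|
\]
over a suitable cover of the robust value-function class $\V$ whose elements are bounded by $1/(1-\gamma)$, exactly in the spirit of \Cref{eq:bounding_P_f_and_P_f_n}.

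\textbf{Step 2 (TV dual reformulation).} Next I would invoke the TV strong-duality identity used by \cite{yang2021towards}, which rewrites the inner infimum as a one-dimensional concave maximization of a functional $G(V,P_{0},\eta)$ over a scalar dual variable $\eta$ ranging in a compact interval of diameter $\mathcal{O}(1/(1-\gamma))$; the only place where $P_{0}$ enters $G$ is through a clipped expectation of the form $\E_{s'\sim P_{0}}[(\eta - V(s'))_{+}]$, the rest being $\rho$-weighted deterministic terms in $\eta$. Applying this dual on both sides and exchanging the two outer maxima reduces the displayed quantity to
\[
\sup_{V\in\V,\,s,a,\,\eta} \big| \E_{s'\sim P_{f}(s,a)}[(\eta - V(s'))_{+}] - \E_{s'\sim P_{\hat{f}_{N}}(s,a)}[(\eta - V(s'))_{+}] \big|,
\]
multiplied by the Lipschitz constant of $G$ in its expectation argument; a careful look at the TV dual shows this constant together with the localization of the optimal $\eta$ contributes precisely the prefactor $(2+\rho)^{2}/\rho^{2}$.

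\textbf{Step 3 (Gaussian perturbation and covering).} Since $P_{f}(s,a) = \mathcal{N}(f(s,a),\sigma^{2}I)$ and $P_{\hat{f}_{N}}(s,a) = \mathcal{N}(\hat{f}_{N}(s,a),\sigma^{2}I)$ have a common covariance, Pinsker combined with the Gaussian KL yields $\mathrm{TV}(P_{f}(s,a),P_{\hat{f}_{N}}(s,a)) \leq \|f(s,a)-\hat{f}_{N}(s,a)\|_{2}/(2\sigma)$. Because $(\eta-V)_{+}$ is bounded by $1/(1-\gamma)$, the supremum in Step~2 is at most $\|f(s,a)-\hat{f}_{N}(s,a)\|_{2}/((1-\gamma)\sigma)$. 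Plugging in the uniform model-error bound of \Cref{lemma:contribution}, absorbing a standard $\zeta$-net covering of $\V$ into the logarithmic dependence of $\beta_{N}(\delta)$ (as is done for KL), and inverting the resulting inequality in $N$ gives the claimed sample complexity. I expect the main obstacle to be not the algebra but extracting the tight $(2+\rho)^{2}/\rho^{2}$ constant: unlike KL, the TV dual lacks the MGF-based smoothing of \Cref{lemma:kl reform}, so one must carefully localize the dual variable $\eta$ to a $\rho$-dependent interval and track the corresponding Lipschitz constant of the clipped expectation in order to avoid losing extra factors of $1/\rho$.
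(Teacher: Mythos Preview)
Your three-step skeleton matches the paper's proof almost exactly, but two details in your Step~2--3 bookkeeping are off and, if left uncorrected, would not yield the stated constant.

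First, the TV dual used in the paper (derived from \cite{shapiro2017distributionally,yang2021towards}) is
\[
\inf_{\mathrm{TV}(P\|P_0)\le\rho}\E_P[V]=\sup_{\eta}\Big\{-\E_{P_0}[(\eta-V)_+]-\tfrac{\rho}{2}\big(\eta-\mathrm{ESI}_{P_0}(V)\big)_+ +\eta\Big\},
\]
so $P_0$ enters \emph{twice}: through the clipped expectation \emph{and} through the essential infimum $\mathrm{ESI}_{P_0}(V)$. Your sentence ``the rest being $\rho$-weighted deterministic terms in $\eta$'' is only true because $P_f(s,a)$ and $P_{\hat f_N}(s,a)$ are Gaussians with identical support $\mathbb{R}^d$, hence $\mathrm{ESI}_{P_f}(V)=\mathrm{ESI}_{P_{\hat f_N}}(V)=\inf_{s'\in\mathbb{R}^d}V(s')$. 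The paper proves this explicitly (Case~1 of \Cref{lemma: diff-opt}) and then the ESI terms cancel when differencing; you need to say this, otherwise there is an unaddressed $P_0$-dependent term.

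Second, the prefactor $(2+\rho)/\rho$ does \emph{not} come from a Lipschitz constant of $G$ in its expectation argument (that coefficient is exactly $1$). It comes from the localization of the optimal dual variable: the paper shows $\eta^*\in[0,\tfrac{2+\rho}{\rho(1-\gamma)}]$, so the integrand satisfies $(\eta-V)_+\le\tfrac{2+\rho}{\rho(1-\gamma)}$, \emph{not} $1/(1-\gamma)$ as you wrote. It is this bound, multiplied by $\mathrm{TV}(P_f,P_{\hat f_N})\le\|f-\hat f_N\|_2/\sigma$ and the outer $\gamma/(1-\gamma)$ from Step~1, that after squaring produces $(2+\rho)^2/\rho^2$ and $(1-\gamma)^{-4}$. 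With these two fixes your argument is the paper's.
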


We relegate the proofs of~\Cref{thm: chi sample,thm: tv sample} to \Cref{sec: chi-square uncertainty,sec: tv distance proof}. In comparison to the exponential dependence on $\frac{1}{1-\gamma}$ for KL uncertainty set in \Cref{thm: kl sample}, we note that for both $\chi^2$/TV uncertainty sets, we obtain \emph{polynomial} dependence on $\frac{1}{1-\gamma}$. 
In the context of the TV uncertainty set, the dependency on $\epsilon$ in \Cref{thm: tv sample} remains consistent with the finite state case (\citep{panaganti2022sample}). However, in the $\chi^2$ case, the bound presented in \Cref{thm: chi sample} exhibits a worse dependence on $\epsilon$ compared to the result derived in \citep{panaganti2022sample}. This difference arises because we refrain from utilizing the same dual reformulation lemmas from \citep{iyengar2005robust}, as they are applicable exclusively to finite state-action settings. Improving these rates is an interesting direction for future work.

\vspace{1pt}
\section{Experiments}
\vspace{1pt}
\label{sec: experiments}
The aim of our experiments is to show the effectiveness of the proposed distributionally-robust model-based approach. In particular, our goal is to evaluate the robustness of our policies against different perturbations of the environment's parameters, and compare them with existing non-robust methods. Moreover, we compare our approach with model-free methods (robust and non-robust) which typically require a significantly larger number of interactions with the nominal environment.

\paragraph{Environments:} We consider the OpenAI's gym~\cite{openai_gym} environments of swing-up Pendulum, Cartpole and Reacher, respectively. Pendulum has a 2-dimensional state space and scalar actions (\citep{mehta2021experimental}). For Cartpole, we consider a scalar continuous action space as done in~\citet{mehta2021experimental}, while states are 4-dimensional. Reacher, instead, consists of a 2DOF robot arm with 8-dimensional states. For each environment we test our approach against various perturbations as outlined below.

\textbf{Module 1: Learning the model.} 
To learn the nominal environment, we utilize a setup similar to that of \citet{mehta2021experimental}, but instead of considering the $"\textrm{EIG}_{\tau^*}"$ acquisition function which minimizes entropy of the optimal trajectory $\tau^*$ using model-predictive control, we use the proposed Max Variance Reduction (\textsc{MVR}) method (\Cref{alg: mgpbo}). Similar to \citep{mehta2021experimental}, we use a Gaussian process (GP) prior to model the transition dynamics $f(s,a)$ (alternate models such as Neural Ensembles or Bayesian neural networks can be used to model the transition dynamics as done in, e.g., \citet{curi2020efficient,curi2021combining}).  As in continuous control problems the subsequent states are fairly close, we use our multi-output GP to model the difference $f(s_t,a_t)-s_{t+1}$.

\textbf{Module 2: Computing a robust policy.}
Given a learned model $\hat{f}_n$, we compute the associated robust policy $\hat{\pi}_{n}$ using the Robust Fitted Q-Iteration (\textsc{RFQI}) algorithm recently introduced in \cite{panaganti2022robust} (this effectively approximates our robust optimization oracle). \textsc{RFQI} computes a robust policy from offline data by alternated maximization of a dual-variable function and a Q-function. 

We generate such offline data by using a $\epsilon$-greedy non-robust policy (using Soft Actor-Critic~\cite{haarnoja2018soft} or Model Predictive Control~\cite{camacho2013model,chua2018deep}) which we train \emph{on the learned model $\hat{f}_n$} from Module~1 and let interact with it for $10^6$ steps.
Note that this is crucially different from the vanilla \textsc{RFQI}~\cite{panaganti2022robust} where the true nominal environment was used both for training such policy and for generating offline data. Indeed, this would require a significantly larger number of environment interactions.\looseness=-1

\begin{figure}[t]
    \centering
    \begin{subfigure}[b]{0.32\textwidth}
        \includegraphics[width=\textwidth]{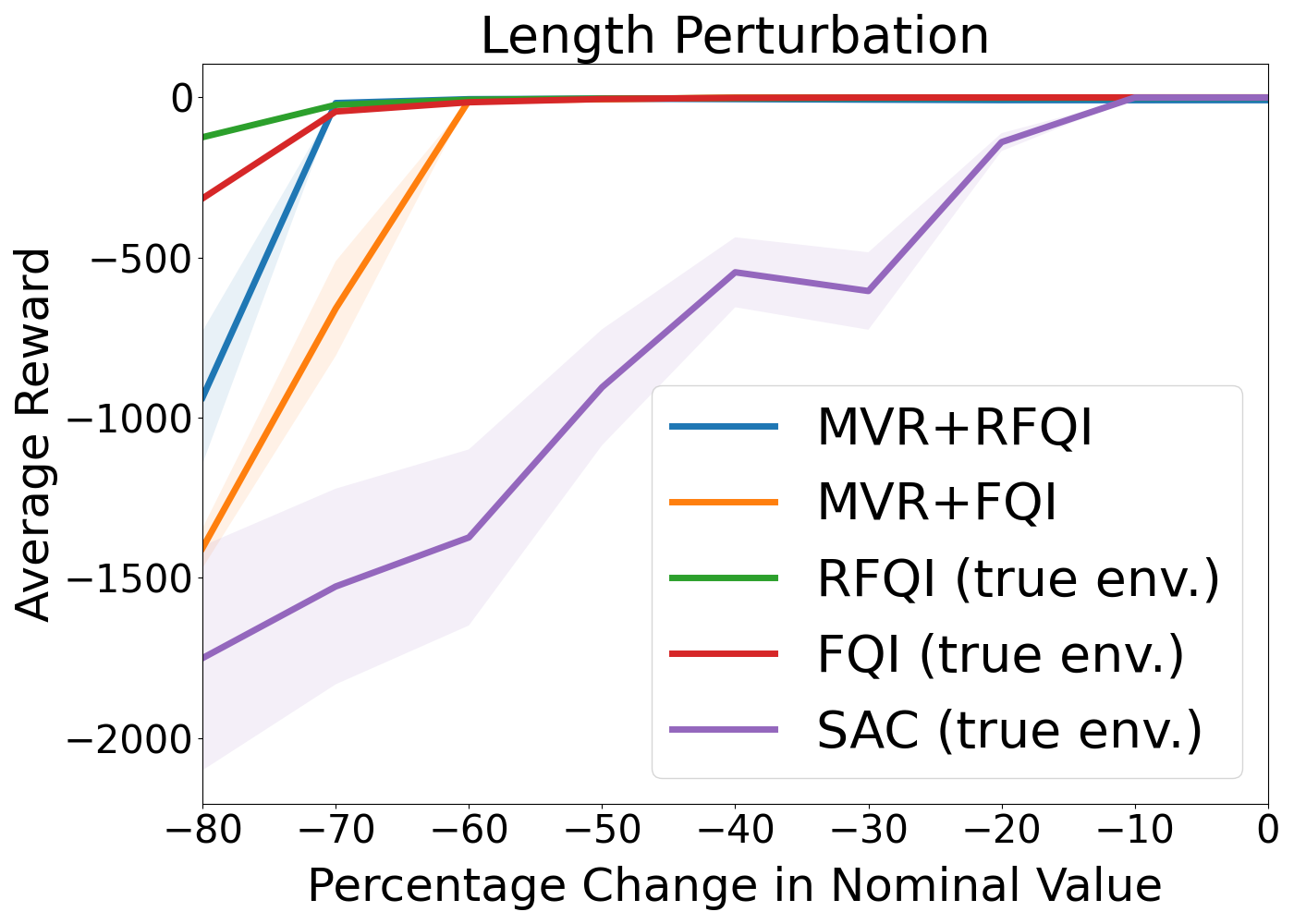}
        \caption{Pendulum}
     \end{subfigure}
     \hspace{1mm}
       \begin{subfigure}[b]{0.32\textwidth}
        \includegraphics[width=\textwidth]{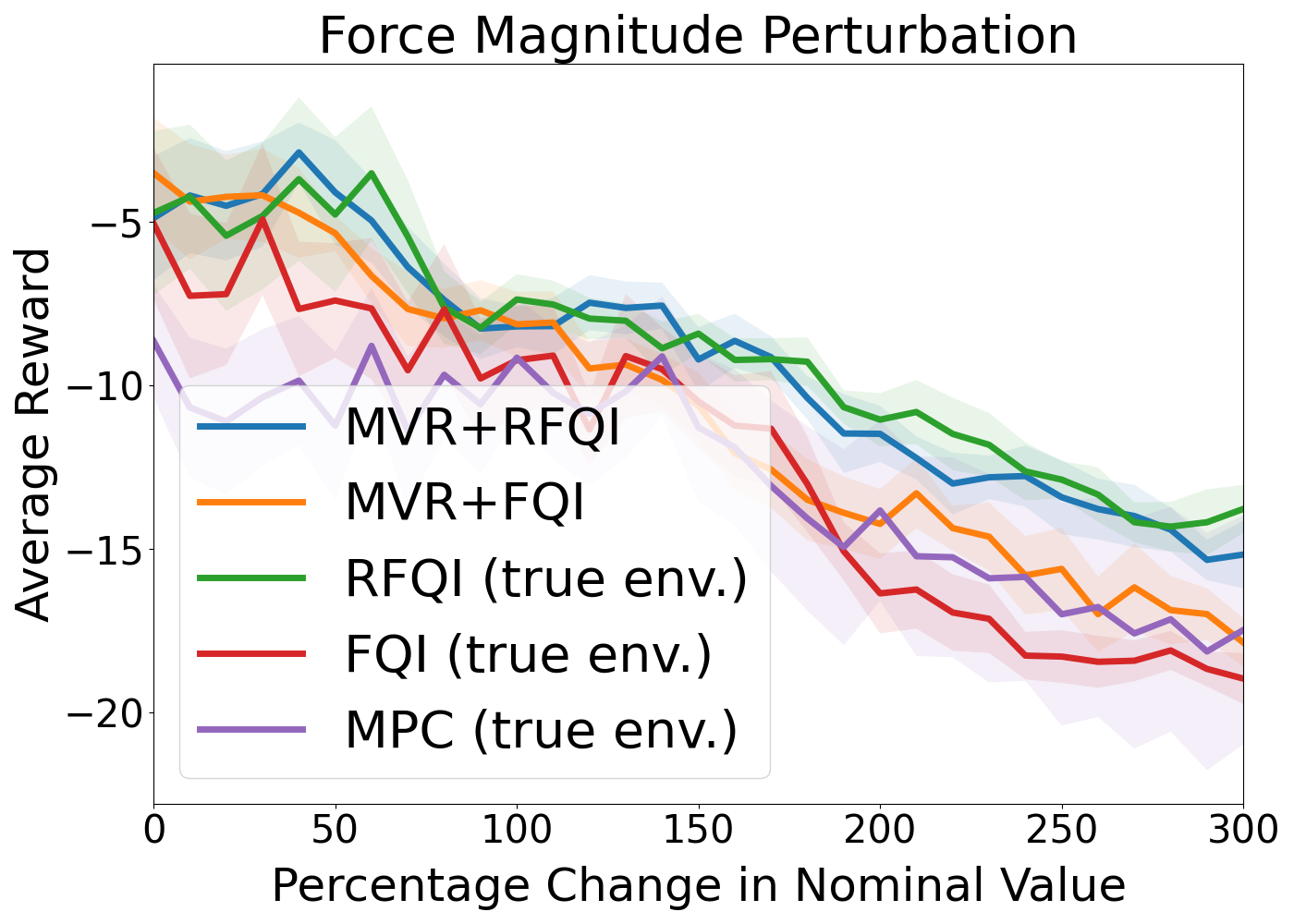}
        \caption{Cartpole}
     \end{subfigure}
     \hspace{1mm}
        \begin{subfigure}[b]{0.32\textwidth}
        \includegraphics[width=\textwidth]{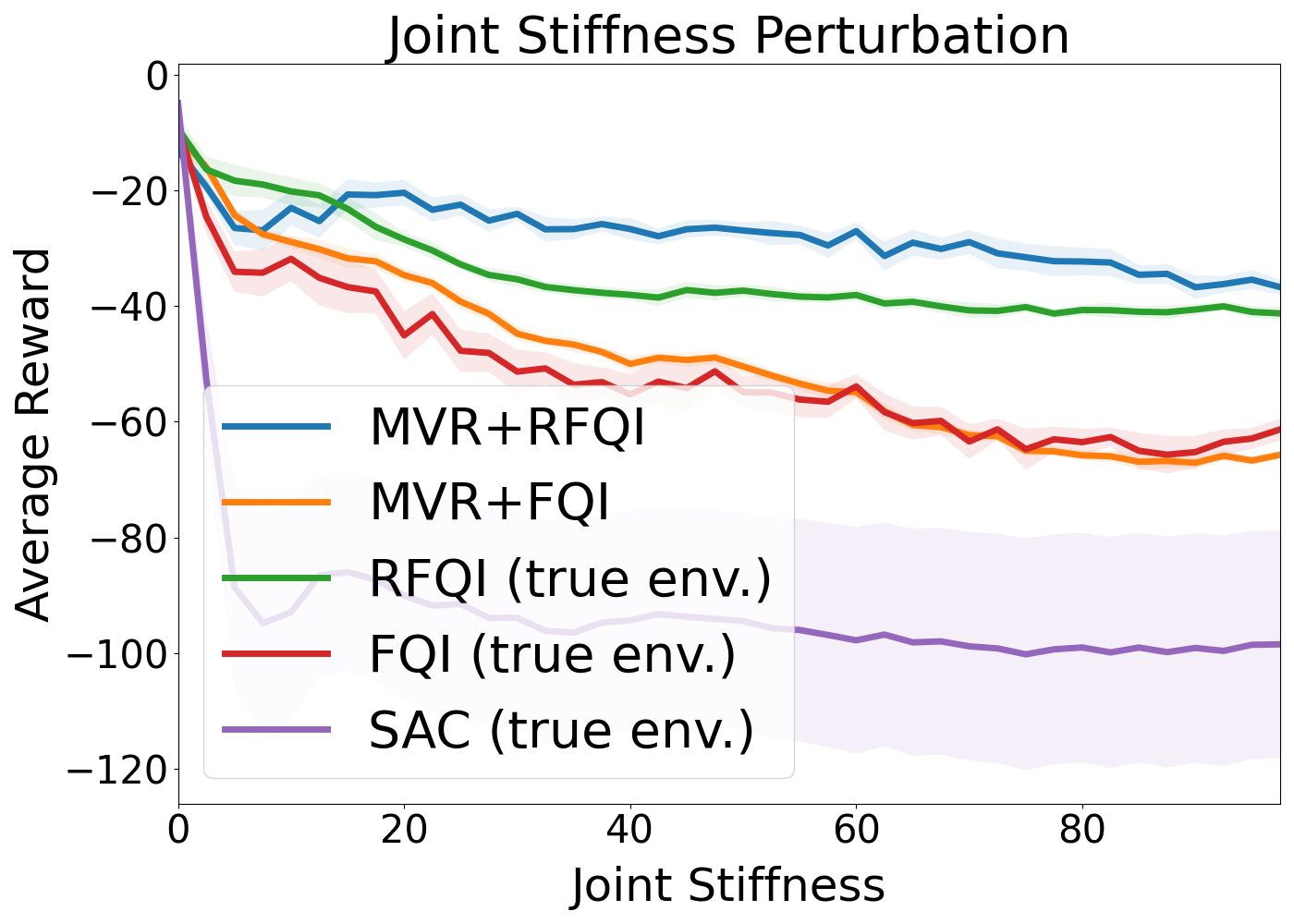}
        \caption{Reacher}
     \end{subfigure}
    \vspace{1mm}
     \caption{Average performance (over 20 episodes) on the considered environments, as a function of different perturbations: length perturbation for Pendulum, force magnitude perturbation for Cartpole, and perturbed joint stiffness for Reacher.
     Unlike our \textsc{MVR+RFQI} and non-robust \textsc{MVR+FQI}, the other baselines are model-free and require access to the true nominal environment for training. 
     The proposed approach \textsc{MVR+RFQI} achieves comparable performance to the model-free \textsc{RFQI} albeit requiring significantly fewer environment interactions (see Table~\ref{tab:num_samples}). Moreover, as the perturbation magnitude increases, \textsc{MVR+RFQI} outperforms the other non-robust baselines.\looseness=-1}
    \label{fig:main_results}
   
\end{figure}

\begin{table}[t]
    \centering
    \begin{tabular}{r c c c c c c}
    \toprule
         &  \textsc{MVR+RFQI} (ours) & \textsc{MVR+FQI} & \textsc{SAC} & \textsc{MPC} & \textsc{RFQI} & \textsc{FQI}   \\
         \midrule
        Pendulum & $60$ & $60$  & $10^4$ & - & $10^6+10^4$ & $10^6+10^4$ \\
         Cartpole & $150$ & $150$ & - & $2250$/step & $10^5\cdot 2250$ & $10^5\cdot 2250 $  \\
        Reacher & $2000$ & $2000$ &   $10^6$  & - & $10^6+10^6$& $10^6+10^6$\\
        \bottomrule\\
    \end{tabular}
    \vspace{1mm}
    \caption{Number of interactions with the nominal environment to obtain the results of Figure~\ref{fig:main_results}. For \textsc{MPC}, a total of $2250$ interactions are required at each step for planning multiple rollouts and selecting the best action. Both \textsc{RFQI} and \textsc{FQI} utilize $10^6$ offline data generated by \textsc{SAC} or \textsc{MPC}.}
    \label{tab:num_samples}
\end{table}

\textbf{Baselines:} 
We compare our approach, which we denote as \textsc{MVR+RFQI}, with the following baselines:\looseness=-1
\begin{itemize}
    \item \textsc{MVR+FQI}: This is a natural non-robust baseline that consists of computing a non-robust policy via the Fitted Q-Iteration (\textsc{FQI}) algorithm~\cite{ernst2005tree} on the same offline data used by \textsc{MVR+RFQI},
    \item Soft Actor-Critic (\textsc{SAC})~\cite{haarnoja2018soft}, or Model Predictive Control (\textsc{MPC})~\cite{camacho2013model,chua2018deep}, as model-free methods which compute non-robust policies \emph{interacting with the nominal environment} (in case of \textsc{MPC}, the latter is used for planning),
    \item \textsc{RFQI}~\cite{panaganti2022robust}, which also requires the nominal environment and uses $10^6$ offline data collected by \textsc{SAC} or \textsc{MPC} to train a robust policy,
    \item \textsc{FQI}~\cite{ernst2005tree}, which trains a non-robust policy from the same data.
\end{itemize}

\paragraph{Training:} Model-free methods are trained directly on the nominal environments. In particular, for Pendulum and Reacher we train \textsc{SAC} until convergence for $10^4$ and $10^6$ steps, respectively. On the continuous actions Cartpole, instead, we run \textsc{MPC} following the implementation of~\cite{pinneri2020sample,mehta2021experimental} which requires a total of $2250$ planning interactions to select the optimal action at each step. Depending on the environment, we utilize \textsc{SAC} or \textsc{MPC} mixed with an $\epsilon$-greedy rule to collect $10^6/10^5$ offline data. These are used to train the offline methods \textsc{RFQI} and \textsc{FQI} as done in~\cite{panaganti2022robust}. 
For the model-based approaches, instead, we first run \textsc{MVR} for a sufficiently informative number of samples ($60$ for Pendulum, $150$ for Cartpole and $2000$ for Reacher) to obtain an estimated model $\hat{f}_n$. Then, we use \textsc{SAC} (trained against model $\hat{f}_n$) or \textsc{MPC} to collect $10^6/10^5$ offline data on such estimated environment. These data are then used to train \textsc{MVR+RFQI} and \textsc{MVR+FQI}.
We provide further implementation details and hyperparameters in~\Cref{app:additional_exps}.

\paragraph{Evaluation:} For each environment, we evaluate the computed policy against different perturbation types and magnitudes. For Cartpole, we perturb the magnitude of the actuation force.
Its nominal value is $10$ and we perturb up to $300\%$. Also, we consider perturbations to gravity in the range of (-100\%,100\%) with the nominal value being $9.82$. For the Pendulum, we consider perturbations to the length of the pendulum and action perturbations (where a random action is chosen with $\epsilon$ probability). Finally, in the case of Reacher we consider perturbations to the joint's stiffness (from 0 to 100) coupled with perturbations of the joint's equilibrium position. Further details on the chosen perturbations and hyperparameters used are provided in~\Cref{app:additional_exps}.

In Figure~\ref{fig:main_results} we plot the average performance (over 20 episodes) of the different baselines subject to different perturbation types and magnitudes for each environment. Results for other perturbations are relegated to~\Cref{app:additional_exps}. Moreover, in Table~\ref{tab:num_samples} we report the total number of interactions with the nominal environment required to compute the evaluated policies. We remark that \textsc{MVR+RFQI} and \textsc{MVR+FQI} interact with the environment only to learn a good Gaussian Process model via the \textsc{MVR} approach. Instead, the other model-free methods utilize the nominal environment throughout the whole training and, in case of \textsc{RFQI} and \textsc{FQI}, even to generate offline data. 
Notably, the policy computed by \textsc{MVR+RFQI} displays comparable performance to its model-free counterpart \textsc{RFQI} which, as shown in Table~\ref{tab:num_samples}, requires a significantly larger number of environment interactions. This illustrates the sample-efficiency of the \textsc{MVR} approach in acquiring informative data and yielding good model estimates. 
Moreover, as the perturbation magnitude increases, \textsc{MVR+RFQI} generally achieves higher performance compared to \textsc{MVR+FQI} and the other non-robust methods, demonstrating the robustness of the computed policies. 
Additionally, as similarly noted e.g. by~\cite{kumar2022should}, we observe the offline methods \textsc{MVR+FQI} and \textsc{FQI} to be generally more robust (although they are not explicitly computing robust policies) than \textsc{SAC} and \textsc{MPC}.

\section{Conclusions}
We investigated distributionally robust reinforcement learning in the context of continuous state spaces and non-linear transition dynamics. Specifically, we proposed a model-based approach within the generative model setting, utilizing maximum variance reduction to learn nominal transition dynamics effectively. Our results include novel statistical sample complexity guarantees for commonly used uncertainty sets, required for identifying near-optimal distributionally robust policies in large state spaces. Through experiments conducted in popular RL-testing environments, we demonstrated the sample efficiency and robustness of our algorithm in the presence of distributional shifts. An important avenue for future research is the extension of our algorithm to the online and offline reinforcement learning settings.\looseness=-1
\section{Acknowledgements}
PGS was gratefully supported by ELSA (European Lighthouse on Secure and Safe AI) funded by the European Union under grant agreement No. 101070617. YH was supported by NCCR Automation from Switzerland. 
The authors would like to thank Viraj Mehta, Zaiyan Xu, Zhengqing Zhou, Zhengyuan Zhou, Wenhao Yang and Liangyu Zhang for the useful discussion.

\bibliographystyle{unsrtnat}
\bibliography{ref}


\newpage
\appendix

\allowdisplaybreaks
\section{Theoretical Guarantees of Maximum Variance Reduction (MVR)}

We formally introduce the Gaussian process model in \Cref{sec:MOGPs}. In \Cref{sec:mo_conf}, we describe the confidence bound results from \cite{vakili2021optimal} and adapt them to the case of multi-output GP models. Finally, in \Cref{sec:scg}, we provide sample complexity guarantees for the MVR algorithm. 

We recall the introduced notation $\mathcal{X}=\cS\times\A$ and remark that we use both $(s_i,a_i)$ and $x_i$ interchangeably in this section.

\subsection{Gaussian Process Model}
\label{sec:MOGPs}
Gaussian process (GP) is a non-parametric model that is often used to express uncertainty over functions on any set (e.g., RKHS). They allow to tractably construct posterior distribution over functions in the set to estimate the unknown non-linear function $\tilde{f}:\mcX\to \mathbb{R}$ given data containing samples from function $\tilde{f}$. It follows the Bayesian methodology of calculating posterior given the prior and assumes that the function values at any finite subset of the domain $\mcX$ follow the multivariate Gaussian distribution. One specifies a GP by a prior mean function and a covariance function usually defined using a kernel $k(x,x')$ where $x,x'\in \mcX$.

Assuming that the samples of $\tilde{f}: \mcX \to \mathbb{R}$ are noisy measurements of the underlying true function $\tilde{f}$ with i.i.d.~Gaussian noise $\mathcal{N}(0,\lambda)$, the posterior mean and covariance function of the posterior distribution can be explicitly calculated.
In essence, for $\{x_{1},\dots,x_{N}\}\in \mcX$ and $y_{n}=\tilde{f}(x_{n})+\omega_{n}$, the posterior mean, covariance and variance are given by:
\begin{align}\label{eq: single-output posterior mean}
    \mu_{n}(x)&=k_{n}(x)(K_{n}+I_n\lambda)^{-1}y_{n},\\
    k_{n}(x,x')&=k(x,x')-k_{n}(x)(K_{n}+I_n\lambda)^{-1}k_{n}^{T}(x'),\nonumber\\ \label{eq: single-output posterior variance}
    \sigma_{n}^{2}(x)&=k_{n}(x,x).
\end{align}
Here $K_{n}$ denotes the covariance matrix whose entries are $[K_{n}]_{i,j}=k(x_{i},x_{j})$ with $x_{i},x_{j}\in \{x_{1},\cdots,x_{N}\}$ and $k_{n}(x)=[k(x,x_{1}),\dots,k(x,x_{N})]$ denotes the covariance vector whose entries are the covariance between $x$ and $x_{j}$ for all $x_{j}\in\{x_{1},\dots,x_{N}\}$. The $n\times n$ identity matrix is denoted as $I_n$.\looseness=-1 

We consider multi-output GPs to model the unknown function $f$ that outputs states in $\mathbb{R}^{d}$. (see \Cref{sec: prelim}). Similar to \Cref{eq: single-output posterior mean} and \Cref{eq: single-output posterior variance}, we get analogous expressions for the multi-output case in \Cref{eq: posterior mean-multi} and \Cref{eq: posterior variance-multi}.

\subsection{Non-adaptive Multi-output Confidence Bounds}
\label{sec:mo_conf}
Our \Cref{alg: mgpbo} uses the maximum variance reduction rule to learn about the transition dynamics. As seen in our analysis (see \Cref{thm: kl sample supp}), we are interested in constructing confidence intervals for $f$ only at the end of $n$ iterations (i.e., after taking $n$ samples), and hence, we do not require anytime confidence bounds (e.g., as in \cite{srinivas2009gaussian}). Moreover, in our algorithm, the current decision $(s_i, a_i)$ does not depend on the previous noise realizations. By focusing on the single-output case first, the following confidence lemma from \cite{vakili2021optimal}, can be used to construct confidence intervals with $\beta(\delta)$ independent of $n$ which holds w.h.p. for a fixed $x\in\mcX$:
\begin{lemma}\label{lemma: vakili beta}
Given $n$ noisy observations of $\tilde{f}:\mcX\to \mathbb{R}$ with $\|\tilde{f}\|_{k}\leq B$ where noise $\{\omega_1,\cdots,\omega_n\}$ is independent of inputs $\{x_1,\cdots x_n\}$, for $\beta(\delta)=B+\frac{\sigma}{\lambda}\sqrt{2\log(2/\delta)}$, and $\mu_n$, $\sigma_n$ as defined in \Cref{eq: single-output posterior mean} and \Cref{eq: single-output posterior variance}, the following holds for a fixed $x\in \mcX$ with probability at least $1-\delta$,
\begin{equation}
    |\tilde{f}(x)-\mu_{n}(x)|\leq \beta(\delta)\sigma_{n}(x). \nonumber
\end{equation}
\end{lemma}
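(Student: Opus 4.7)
The plan is to split $\mu_n(x)-\tilde f(x)$ into a deterministic ``bias'' term that measures how well $\tilde f$ can be approximated by a kernel interpolant, and a stochastic ``noise'' term that comes from the observation noise, and to bound each by a constant multiple of $\sigma_n(x)$. Concretely, set $\alpha := (K_n+\lambda I_n)^{-1}k_n(x)\in\mathbb{R}^n$. Because the sampling locations $\{x_1,\dots,x_n\}$ are independent of the noise $\{\omega_1,\dots,\omega_n\}$ by hypothesis, $\alpha$ is deterministic conditional on the inputs. This is the structural fact that makes Vakili et al.'s non-adaptive argument far cleaner than the self-normalized martingale argument one needs when $x_i$ depends on past noise (as in Srinivas et al.). Writing $y_n=\tilde f_n+\omega$ with $\tilde f_n=(\tilde f(x_i))_{i=1}^n$ gives the decomposition $\mu_n(x)-\tilde f(x) = \bigl(\alpha^{\top}\tilde f_n - \tilde f(x)\bigr) + \alpha^{\top}\omega$.

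For the bias term, I would apply the reproducing property to write $\alpha^{\top}\tilde f_n - \tilde f(x) = \bigl\langle \tilde f,\ \sum_{i=1}^n\alpha_i\, k(x_i,\cdot)-k(x,\cdot)\bigr\rangle_k$ and invoke Cauchy--Schwarz in the RKHS, producing $|\alpha^{\top}\tilde f_n - \tilde f(x)|\le B\bigl\|\sum_i\alpha_i\,k(x_i,\cdot)-k(x,\cdot)\bigr\|_k$. Expanding the squared norm gives $k(x,x)-2\alpha^{\top}k_n(x)+\alpha^{\top}K_n\alpha$, and substituting $K_n=(K_n+\lambda I)-\lambda I$ simplifies this to $\sigma_n^{2}(x)-\lambda\|\alpha\|_2^{2}$. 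Since this is nonnegative, we conclude both $\|\dots\|_k^{2}\le\sigma_n^{2}(x)$ (so $|\text{bias}|\le B\,\sigma_n(x)$) and, simultaneously, $\lambda\|\alpha\|_2^{2}\le\sigma_n^{2}(x)$, the latter of which we reuse for the noise term.

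For the noise term, $\alpha^{\top}\omega$ is a linear combination of independent Gaussians with the deterministic coefficient vector $\alpha$, hence itself a zero-mean Gaussian with variance $\sigma^{2}\|\alpha\|_2^{2}\le (\sigma^{2}/\lambda)\,\sigma_n^{2}(x)$ by the inequality just derived. A one-shot Gaussian tail bound then yields $|\alpha^{\top}\omega|\le (\sigma/\sqrt{\lambda})\,\sigma_n(x)\sqrt{2\log(2/\delta)}$ with probability at least $1-\delta$. Combining with the bias bound via the triangle inequality gives the stated inequality with $\beta(\delta)=B+(\sigma/\sqrt{\lambda})\sqrt{2\log(2/\delta)}$, matching the form in the lemma up to the usual notational identification of the noise variance parameter with the GP regularizer.

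The main obstacle is really just spotting the single algebraic identity $k(x,x)-2\alpha^{\top}k_n(x)+\alpha^{\top}K_n\alpha=\sigma_n^{2}(x)-\lambda\|\alpha\|_2^{2}$, since this is what lets us control both error components by the same quantity $\sigma_n(x)$; everything else is Cauchy--Schwarz plus a Gaussian tail inequality. Crucially, no union bound over $\mcX$ nor any self-normalized concentration inequality is required, precisely because we fix $x$ in advance and exploit non-adaptivity to treat $\alpha$ as deterministic when bounding $\alpha^{\top}\omega$; this is exactly why $\beta(\delta)$ here has no $\sqrt{\Gamma_n}$ factor, in contrast to the classical adaptive confidence bound.
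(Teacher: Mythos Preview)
Your argument is correct and is precisely the proof given in Vakili et al.\ (2021), which is where this lemma originates; the present paper does not prove the lemma but merely cites it. Your decomposition into the RKHS bias term and the Gaussian noise term, the key algebraic identity $k(x,x)-2\alpha^{\top}k_n(x)+\alpha^{\top}K_n\alpha=\sigma_n^{2}(x)-\lambda\|\alpha\|_2^{2}$, and the ensuing Cauchy--Schwarz plus Gaussian tail bound are exactly the standard route. One minor point worth flagging: your derivation yields the constant $\sigma/\sqrt{\lambda}$ in front of $\sqrt{2\log(2/\delta)}$, whereas the lemma as stated in this paper writes $\sigma/\lambda$; this is a typo in the paper (the original Vakili et al.\ result has $\sigma/\sqrt{\lambda}$, consistent with your computation), and your parenthetical remark about ``notational identification'' correctly anticipates this.
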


To extend this result over the entire input set $x\in \mcX$, the authors in \cite{vakili2021optimal} use a discretization assumption which ensures that there exists a discretization $\D_{n}$ such that $\tilde{f}(x)-\tilde{f}([x]_{n})\leq \frac{1}{\sqrt{n}}$, where $[x]_{n}=\argmin_{x'\in \D_n}\|x-x'\|_2$ and $|\D_n|\leq CB^{d}n^{d/2}$ for $C$ being independent of $n$ and $B$ (RKHS norm bound). Consequently, they obtain the following lemma providing uniform confidence bounds:\looseness=-1 
\begin{lemma}(\cite[Theorem-3]{vakili2021optimal})
\label{lemma:confidence_single_output_discretized}
Given $n$ noisy observations of $\tilde{f}:\mcX \to \mathbb{R}$, $\mcX \subset \mathbb{R}^d$ satisfying $\|\tilde{f}\|_{k}\leq B$ where noise $\{\omega_1,\cdots,\omega_n\}$ is independent of inputs $\{x_1,\cdots x_n\}\subset\mcX$ and when there exists discretization $\D_n$ of $\mcX$ with $|\D_n|\leq CB^{d}n^{d/2}$, for $\beta(\delta)=B+\frac{\sigma}{\lambda}\sqrt{2\log(2/\delta)}$ and $\beta_n(\delta)=2B+\beta(\frac{\delta}{3C(B+\sqrt{n}\beta(2\delta/3n))^{d}n^{d/2}})$, $\mu_n$, $\sigma_n$ as defined in \Cref{eq: single-output posterior mean} and \Cref{eq: single-output posterior variance}, the following holds for all $x\in \D_n$ with probability at least $1-\delta$,
\begin{equation}
    |\tilde{f}(x)-\mu_{n}(x)|\leq \beta_n(\delta)\sigma_{n}(x). \nonumber
\end{equation}
\end{lemma}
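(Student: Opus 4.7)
The plan is to lift the pointwise confidence bound of \Cref{lemma: vakili beta} to a uniform statement over the discretization $\D_n$ via a union bound, where the size of $\D_n$ is determined by an auxiliary RKHS-norm bound on $\mu_{n}$ — which is exactly what produces the nested appearance of $\beta$ inside $\beta_n(\delta)$.

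First, I would establish an RKHS-norm bound on $\mu_{n}$. Using the representer expression $\mu_{n}(\cdot) = k_{n}(\cdot)^\top (K_{n} + \lambda I)^{-1} y_{n}$ together with $\|\tilde{f}\|_\infty \leq B$ (which follows from $\|\tilde{f}\|_k \leq B$ and $k(x,x) \leq 1$) and a subgaussian tail on the noise, I can show that $\|\mu_n\|_k \leq B + \sqrt{n}\,\beta(2\delta/3n)$ with probability at least $1-2\delta/(3n)$. This controls the Lipschitz behaviour of $\mu_n$ and determines the covering size needed: an $\varepsilon$-net of $\mcX$ (at resolution $\varepsilon = 1/\sqrt{n}$) on which \emph{both} $\tilde{f}$ and $\mu_n$ vary by at most $1/\sqrt{n}$ between nearest neighbours has cardinality $|\D_n| \leq C(B + \sqrt{n}\,\beta(2\delta/3n))^{d} n^{d/2}$, which is precisely the quantity appearing in the inner denominator of $\beta_n(\delta)$.

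Second, with $\D_n$ so constructed, I apply \Cref{lemma: vakili beta} at each point $x \in \D_n$ with confidence parameter $\delta/(3|\D_n|)$, and take a union bound to obtain, on an event of probability at least $1-\delta/3$,
$$|\tilde{f}(x) - \mu_n(x)| \leq \beta\bigl(\delta/(3|\D_n|)\bigr)\,\sigma_n(x) \quad \text{for all } x \in \D_n.$$
To extend the bound to an arbitrary $x$ (which is how the lemma is typically used downstream via $[x]_n \in \D_n$), I use the triangle decomposition
$$|\tilde{f}(x) - \mu_n(x)| \leq |\tilde{f}(x) - \tilde{f}([x]_n)| + |\tilde{f}([x]_n) - \mu_n([x]_n)| + |\mu_n([x]_n) - \mu_n(x)|,$$
where the first and third terms are each at most $1/\sqrt{n}$ by the construction of $\D_n$. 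Absorbing these two $1/\sqrt{n}$ discretization errors conservatively into an additive $2B$ (using the trivial bound $\|\tilde{f}\|_\infty, \|\mu_n\|_\infty \leq B + O(1)$) produces the additive $2B$ in $\beta_n(\delta)$. Combining the three failure events via a union bound gives the total failure probability $\delta$.

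The main obstacle is the self-referential coupling between the discretization size and the RKHS-norm bound on $\mu_n$: the size $|\D_n|$ needed to set the parameter inside $\beta$ is itself controlled through $\beta$ applied at a different confidence level. Carefully splitting the total failure budget $\delta$ into three pieces — for the RKHS-norm event, for the pointwise union bound on $\D_n$, and for the noise-tail event used to bound $\|y_n\|_\infty$ — while keeping the inner parameter $2\delta/(3n)$ consistent with the chosen split, is the principal bookkeeping challenge.
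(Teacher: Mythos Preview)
The paper does not supply its own proof of this lemma: it is quoted directly as Theorem~3 of \cite{vakili2021optimal} and invoked as a black box (see the surrounding text in \Cref{sec:mo_conf}). There is therefore no in-paper argument to compare your sketch against; the task is really to reconstruct the cited proof.

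Your overall architecture matches that of the cited reference: a high-probability RKHS-norm bound on $\mu_n$ (which is what produces the $(B+\sqrt{n}\,\beta(2\delta/3n))^d$ inside $\beta_n$), followed by a union bound of \Cref{lemma: vakili beta} over the discretization, with the failure budget $\delta$ split three ways. Those are the right ingredients, and your identification of the self-referential coupling between $|\D_n|$ and $\|\mu_n\|_k$ as the main bookkeeping issue is accurate.

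Two points need tightening. First, the lemma \emph{as stated in this paper} is only asserted for $x\in\D_n$; your third step --- the three-term triangle decomposition to reach arbitrary $x\in\mcX$ --- is precisely how the paper proves the \emph{next} result, \Cref{lemma: uniform vakili bound}, not this one. Second, and more substantively, your justification for the additive $2B$ in $\beta_n(\delta)$ does not type-check. You write that the two $1/\sqrt{n}$ discretization errors are ``absorbed conservatively into an additive $2B$'' via the trivial sup-norm bound, but $2/\sqrt{n}$ is an \emph{additive} error in $|\tilde f(x)-\mu_n(x)|$, whereas the $2B$ in $\beta_n(\delta)$ is a \emph{multiplicative} coefficient on $\sigma_n(x)$. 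Converting one into the other would require a lower bound on $\sigma_n(x)$, which you do not have. The actual mechanism in \cite{vakili2021optimal} for producing the $2B$ term is more delicate (it involves controlling $\sigma_n$ across the discretization as well); as written, this step in your sketch is a genuine gap.
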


To extend this result to multiple dimensions as required in our work, we take the same discretization assumption as in \cite{vakili2021optimal}. But considering the multi-output definition of $f$, we define the modified state-action space $\overline{\mathcal{X}}$. 
This is in line with \citet{chowdhury2019online}, which also has a similar multi-output setting. We define the modified state-action space as $\overline{\mathcal{X}}:=\cS\times\A\times \{1,2,\cdots,d\}$ where the last dimension $i\in  \{1,2,\cdots,d\}$ incorporates the index of the output vector, in the sense that $f(\cdot,\cdot)=(\tilde{f}(\cdot,\cdot,1),\cdots,\tilde{f}(\cdot,\cdot,d))$ where $\tilde{f}:\overline{\mathcal{X}}\to \mathbb{R}$. We then detail the discretization assumption as in \cite{vakili2021optimal} w.r.t.~$\tilde{f}$ (see also \Cref{sec: Problem Formulation} for more details).

\begin{assumption}\label{assump: Discretization}
For every $n\in \mathbb{N}$ and $\tilde{f}\in \mathcal{H}_{k}(\cS\times \A\times \mathcal{I})$ there exists a discretization $\D_{n}(\cS\times\A)$ of $\cS\times \A$ such that $\tilde{f}(s,a,i)-\tilde{f}([s,a]_{n},i)\leq \frac{1}{\sqrt{n}}$, where $[s,a]_{n}=\argmin_{(s',a')\in \D_n(\cS\times \A)}\|(s,a)-(s',a')\|_2$, $i\in \mathcal{I}$, and $|\D_n(\cS\times \A)|\leq CB^{p}n^{p/2}$ ( $|\D_n(\cS\times \A \times \mathcal{I})|\leq CB^{p}n^{p/2}d$) for $C$ being independent of $n$ and $B$, and $\cS\times \A\subset \mathbb{R}^{p}$. 
\end{assumption}
\Cref{assump: Discretization} allows us to provide bounds for $\|f(s,a)-\mu_{n}(s,a)\|_{2}$ for all $(s,a)\in \cS$ using \Cref{lemma:confidence_single_output_discretized}. Note that \Cref{assump: Discretization} does not discretize the modified state-action space  ($\overline{\mathcal{X}}=\cS\times\A\times \{1,2,\cdots,d\}$) but instead discretizes $\cS\times \A$ for each $i\in \mathcal{I}$. Hence, $|\D_n(\cS\times \A \times \mathcal{I})|\leq CB^{p}n^{p/2}d$, and $\beta_n(\delta)$ will change accordingly.
We describe the following lemma detailing the same.

\begin{lemma}\label{lemma: uniform vakili bound}Under \Cref{assump: Discretization} with $\beta_{n}(\delta)$ as in \Cref{lemma:confidence_single_output_discretized} and training a Gaussian process model on observations up to iteration $n$ ($\{s_{1},\cdots,s_{n}\}$) and their corresponding inputs  ($\{(s_{0},a_{0}),\cdots,(s_{n-1},a_{n-1})\}$),  it holds with probability at least $1-\delta$,\looseness=-1
\begin{equation}
    \|f(s,a)-\mu_{n}(s,a)\|_{2}\leq \beta_{n}(\delta)\sqrt{d}\|\sigma_{n}([s,a]_{n})\|_{2}+\frac{2d}{\sqrt{n}}, \nonumber
\end{equation}
 uniformly for all $(s,a)\in \cS\times \A$ and $[s,a]_{n}=\argmin_{(s',a')\in \D_n(\cS\times \A)}\|(s,a)-(s',a')\|_2$.
\end{lemma}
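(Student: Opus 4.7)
\textbf{Proof proposal for Lemma \ref{lemma: uniform vakili bound}.} The plan is to reduce the multi-output bound to $d$ single-output confidence statements, each handled by Lemma \ref{lemma:confidence_single_output_discretized}, and then assemble them into the claimed $L_2$ bound via a simple norm comparison.

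First, I would fix an arbitrary $(s,a) \in \cS \times \A$ and, for each output coordinate $i \in \{1,\dots,d\}$, expand the scalar error using the triangle inequality with the nearest discretization point $[s,a]_n$ as an anchor:
\begin{align*}
|\tilde f(s,a,i) - \mu_{nd}(s,a,i)| &\leq |\tilde f(s,a,i) - \tilde f([s,a]_n,i)| \\
&\quad + |\tilde f([s,a]_n,i) - \mu_{nd}([s,a]_n,i)| \\
&\quad + |\mu_{nd}([s,a]_n,i) - \mu_{nd}(s,a,i)|.
\end{align*}
Assumption \ref{assump: Discretization} bounds the first summand by $1/\sqrt{n}$. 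The third summand I would treat analogously, using that $\mu_{nd}(\cdot,i) \in \mathcal{H}_k$ (being a finite linear combination of kernel sections) and so also enjoys a $1/\sqrt{n}$ discretization bound on the same grid $\D_n(\cS\times\A)$, up to enlarging the covering constant $C$. The middle summand is precisely the quantity controlled by Lemma \ref{lemma:confidence_single_output_discretized}, which, applied with confidence parameter $\delta/d$ and then union-bounded across $i\in\{1,\dots,d\}$, yields
$$|\tilde f([s,a]_n,i) - \mu_{nd}([s,a]_n,i)| \leq \beta_n(\delta)\,\sigma_{nd}([s,a]_n,i)$$
uniformly over $[s,a]_n\in \D_n(\cS\times\A)$ and $i$ with probability at least $1-\delta$ (noting that absorbing the $\log d$ into $\beta_n(\delta)$ is harmless).

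Combining the three summands, on the good event of probability $1-\delta$ we have $|\tilde f(s,a,i) - \mu_{nd}(s,a,i)| \leq \beta_n(\delta)\sigma_{nd}([s,a]_n,i) + 2/\sqrt{n}$ for every $i$ and every $(s,a)$. Finally I would pass from coordinatewise to $L_2$ using $\|v\|_2 \leq \|v\|_1$ and Cauchy–Schwarz applied to the $\sigma_{nd}$ terms,
$$\|f(s,a)-\mu_n(s,a)\|_2 \leq \sum_{i=1}^d \beta_n(\delta)\,\sigma_{nd}([s,a]_n,i) + \frac{2d}{\sqrt{n}} \leq \beta_n(\delta)\sqrt{d}\,\|\sigma_n([s,a]_n)\|_2 + \frac{2d}{\sqrt{n}},$$
which is exactly the claimed inequality.

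The main obstacle I anticipate is the discretization control of the third summand, i.e., showing $|\mu_{nd}([s,a]_n,i) - \mu_{nd}(s,a,i)| \leq 1/\sqrt{n}$. This rests on the fact that $\mu_{nd}(\cdot,i)$, while an RKHS function, does not a priori have norm bounded by the same $B$ as $\tilde f$; one needs a bound on $\|\mu_{nd}(\cdot,i)\|_k$ in terms of $B$, $\lambda$, and the noise variance (standard for GP posterior means), and then must verify that the discretization $\D_n$ afforded by Assumption \ref{assump: Discretization} is fine enough to simultaneously cover $\tilde f$ and $\mu_{nd}$ at scale $1/\sqrt{n}$. The remaining ingredients—triangle inequality, union bound over $d$ coordinates, and the $\ell_1 \to \ell_2$ Cauchy–Schwarz step—are routine.
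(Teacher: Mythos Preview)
Your proposal is correct and follows essentially the same route as the paper: a coordinatewise triangle-inequality decomposition through the discretization point $[s,a]_n$, Assumption~\ref{assump: Discretization} for the two endpoint terms, Lemma~\ref{lemma:confidence_single_output_discretized} for the middle term, and then $\|\cdot\|_2\leq\|\cdot\|_1\leq\sqrt{d}\,\|\cdot\|_2$ to assemble the final bound. The paper handles the third summand in exactly the way you anticipate---it simply asserts $\mu_n\in\mathcal{H}_k(\cS\times\A\times\mathcal{I})$ and invokes Assumption~\ref{assump: Discretization} without elaborating on the RKHS-norm bound for the posterior mean, so the obstacle you flag is one the paper leaves implicit rather than resolves.
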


 \begin{proof}
 For any $(s,a)\in \cS\times \A$,
 \begin{align}
     &\|f(s,a)-\mu_{n}(s,a)\|_{2} \nonumber\\&=\sqrt{\sum_{i=1}^{d}(\tilde{f}(s,a,i)-\mu_{n}(s,a,i))^{2}} \\
     &= \sqrt{\sum_{i=1}^{d}|\tilde{f}(s,a,i)-\tilde{f}([s,a]_{n},i)+\tilde{f}([s,a]_{n},i)-\mu_{n}([s,a]_{n},i)+\mu_{n}([s,a]_{n},i)-\mu_{n}(s,a,i)|^{2}} \nonumber\\
     \label{eq: uniform vakili-2}
     &\leq \sum_{i=1}^{d} \Big(|\tilde{f}(s,a,i)-\tilde{f}([s,a]_{n},i)|+|\tilde{f}([s,a]_{n},i)-\mu_{n}([s,a]_{n},i)|+|\mu_{n}([s,a]_{n},i)-\mu_{n}(s,a,i)|\Big) \\
     \label{eq: uniform vakili-3}
     &\leq \Bigg(\sum_{i=1}^{d}(|\tilde{f}([s,a]_{n},i)-\mu_{n}([s,a]_{n},i)|)\Bigg)+\frac{2d}{\sqrt{n}}\\
     \label{eq: uniform vakili-4}
     &\leq \beta_{n}(\delta)\Bigg(\sum_{i=1}^{d}(\sigma_{n}([s,a]_{n},i))\Bigg)+\frac{2d}{\sqrt{n}}\\
     \label{eq: uniform vakili-5}
     &\leq \beta_{n}(\delta)\sqrt{d}\sqrt{\sum_{i=1}^{d}(\sigma_{n}([s,a]_{n},i))^{2}}+\frac{2d}{\sqrt{n}}\\
     \label{eq: uniform vakili-6}
     &\leq \beta_{n}(\delta)\sqrt{d}\|\sigma_{n}([s,a]_{n})\|_{2}+\frac{2d}{\sqrt{n}}.
 \end{align}
 In \Cref{eq: uniform vakili-2}, \Cref{eq: uniform vakili-5} we use $\|x\|_{2}\leq \|x\|_{1}\leq \sqrt{d}\|x\|_{2}$. And \Cref{eq: uniform vakili-3} and \Cref{eq: uniform vakili-4} follow from \Cref{assump: Discretization} (since $\tilde{f},\mu_n\in \mathcal{H}_k(\cS\times\A\times \mathcal{I})$) and \Cref{lemma:confidence_single_output_discretized}, respectively.

 \end{proof}

\subsection{Sample Complexity Guarantees}
\label{sec:scg}

Our objective is to obtain a uniform upper bound on the model precision $\|\mu_{n}(s,a)-f(s,a)\|_{2}$ for all state-action pairs (s,a) while accounting for the errors induced by discretization. Here, $\mu_n(\cdot,\cdot)$ is obtained from \Cref{alg: mgpbo}.  We achieve this by using \Cref{lemma: uniform vakili bound} to obtain a bound in terms of maximum information gain (\Cref{eq: max info for gaus}). 

\modelerror*

\begin{proof}
From \Cref{lemma: uniform vakili bound}, it holds that with probability at least $1-\delta$ uniformly for all $(s,a)\in\cS\times\A$:
\begin{align}
    \|\mu_{n}(s,a)-f(s,a)\|_{2}&\leq\beta_{n}(\delta)\sqrt{d}\|\sigma_{n}([s,a]_{n})\|_{2}+\frac{2d}{\sqrt{n}} \nonumber\\
    &\leq \beta_{n}(\delta)\sqrt{d}\max_{(s,a)\in\cS\times\A}\|\sigma_{n}(s,a)\|_{2}+\frac{2d}{\sqrt{n}} \nonumber\\
    &\leq \beta_{n}(\delta)\sqrt{d}\|\sigma_{n}(s_n,a_n)\|_{2}+\frac{2d}{\sqrt{n}} \nonumber\\
    &\leq \frac{2d}{\sqrt{n}} +  \frac{\beta_{n}(\delta)}{n}\sqrt{d}\sum_{j=1}^{n}\|\sigma_{j}(s_n,a_n)\|_{2} \nonumber\\\label{eq:mvr decision rule}
    &\leq \frac{2d}{\sqrt{n}}  + \frac{\beta_{n}(\delta)}{n}\sqrt{d}\sum_{j=1}^{n}\|\sigma_{j}(s_j,a_j)\|_{2}\\
    &\leq \frac{\beta_{n}(\delta)}{\sqrt{n}}\sqrt{d}\sqrt{\sum_{j=1}^{n}\|\sigma_{j}(s_j,a_j)\|^{2}_{2}}+\frac{2d}{\sqrt{n}} \nonumber\\
    \label{eq: pre learning error bound}
    &\leq \frac{\beta_{n}(\delta)2e d}{\sqrt{n}}\sqrt{ \Gamma_{nd}(\cS\times \A\times \mathcal{I}_{d})}+\frac{2d}{\sqrt{n}}\\
    \label{eq: learning error bound}
    &= \mathcal{O}\Big(\frac{\beta_{n}(\delta)2ed}{\sqrt{n}}\sqrt{ \Gamma_{nd}(\cS\times \A\times \mathcal{I}_{d})}\Big).
\end{align}
Here, \Cref{eq:mvr decision rule} follows from the decision rule in line-4 of \Cref{alg: mgpbo} and \Cref{eq: pre learning error bound} is obtained using standard bound for the sum of variances in the case of multi-output GPs from \citet[Lemma-7]{curi2021combining} and \citet[Lemma-11]{chowdhury2019online}. 
\end{proof}

\newpage
\section{Sample Complexity Bounds for KL Uncertainty Sets}\label{sec: KL-proof}
\begin{theorem}(Sample Complexity of \textsc{MVR} under KL uncertainty set)\label{thm: kl sample supp} Consider a robust MDP with nominal transition dynamics $f$ satisfying the regularity assumptions from \Cref{sec: Problem Formulation} and with uncertainty set defined as in \Cref{eq: uncertainty set} w.r.t.~KL divergence. For $\pi^{*}$ denoting the robust optimal policy w.r.t.~nominal transition dynamics $f$ and  $\hat{\pi}_{N}$ denoting the robust optimal policy w.r.t.~learned nominal transition dynamics $\hat{f}_N$ via \textsc{MVR} (\Cref{alg: mgpbo}), and $\delta\in(0, 1)$, $\epsilon\in (0,\frac{1}{1-\gamma})$, it holds that $\max_{s}|V^{\text{R}}_{\hat{\pi}_N,f}(s)-V^{\text{R}}_{\pi^{*},f}(s)|\leq \epsilon$ with probability
at least $1 - \delta$ for any $N$ such that 
\begin{equation}
  N = \mathcal{O}\Big( e^{\frac{2-\gamma}{(1-\gamma)\alpha_\mathrm{kl}}}\frac{ \gamma^{2}\beta_N^{2}(\delta)d^{2}\Gamma_{Nd}}{(1-\gamma)^{4}\rho^{2}\epsilon^{2}}\Big).\\
\end{equation} 
\end{theorem}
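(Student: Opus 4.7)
The plan is to relate the suboptimality of the learned robust policy $\hat{\pi}_N$ to the estimation error in the transition dynamics, which is controlled by \Cref{lemma:contribution}. I would begin with the standard triangle-inequality decomposition: since $\hat{\pi}_N$ is robust-optimal for $\hat{f}_N$ and $\pi^*$ is robust-optimal for $f$,
\[
0 \leq V^{\text{R}}_{\pi^*,f}(s) - V^{\text{R}}_{\hat{\pi}_N,f}(s) \leq \bigl|V^{\text{R}}_{\pi^*,f}(s) - V^{\text{R}}_{\pi^*,\hat{f}_N}(s)\bigr| + \bigl|V^{\text{R}}_{\hat{\pi}_N,\hat{f}_N}(s) - V^{\text{R}}_{\hat{\pi}_N,f}(s)\bigr|,
\]
because the cross term $V^{\text{R}}_{\pi^*,\hat{f}_N} - V^{\text{R}}_{\hat{\pi}_N,\hat{f}_N} \leq 0$ by robust optimality of $\hat{\pi}_N$ for $\hat{f}_N$. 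Each of the two remaining terms fixes a single policy $\pi$ and compares two models. For such a $\pi$, the $\gamma$-contraction of the robust Bellman operator yields $\|V^{\text{R}}_{\pi,f} - V^{\text{R}}_{\pi,\hat{f}_N}\|_\infty \leq \tfrac{\gamma}{1-\gamma}\,\max_{s,a}\bigl|\Phi_f(s,a) - \Phi_{\hat{f}_N}(s,a)\bigr|$, where $\Phi_g(s,a) := \inf_{p:\,\mathrm{KL}(p\|P_g(s,a))\leq \rho} \E_{s'\sim p}[V(s')]$ for the appropriate test value function $V \in [0, 1/(1-\gamma)]$. This reduces the whole problem to bounding the KL-robust inner minimization evaluated at the two Gaussian transition laws $P_f(s,a)$ and $P_{\hat{f}_N}(s,a)$.

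Next, I would apply \Cref{lemma:kl reform} to rewrite each $\Phi_g$ as a one-dimensional concave maximization in $\alpha \geq 0$. The absolute difference of two suprema is bounded by the supremum over $\alpha$ of the pointwise objective differences, provided $\alpha$ can be restricted to a compact interval $[\underline{\alpha},\overline{\alpha}]$ containing the maximizer for both dual problems. I would derive these bounds from the first-order optimality condition of the dual: combined with $\|V\|_\infty \leq 1/(1-\gamma)$ and the radius $\rho$, this forces $\alpha^* \geq \alpha_{\mathrm{kl}}$ and $\alpha^* \leq \overline{\alpha} = \mathcal{O}(1/((1-\gamma)\rho))$, matching the problem-dependent range stated after \Cref{thm: kl sample}. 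The pointwise difference of dual objectives is then
\[
\alpha\,\bigl|\log\E_{P_f(s,a)}[e^{-V(s')/\alpha}] - \log\E_{P_{\hat{f}_N}(s,a)}[e^{-V(s')/\alpha}]\bigr| \;\leq\; \alpha\, e^{1/((1-\gamma)\alpha)}\,\bigl|\E_{P_f(s,a)}[g_\alpha] - \E_{P_{\hat{f}_N}(s,a)}[g_\alpha]\bigr|,
\]
with $g_\alpha(s') = e^{-V(s')/\alpha}$; the exponential prefactor arises by lower-bounding both expectations by $e^{-1/((1-\gamma)\alpha)}$ and using $|\log x - \log y| \leq |x-y|/\min(x,y)$. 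Evaluating at $\alpha = \underline{\alpha} = \alpha_{\mathrm{kl}}$, and combining with the $\gamma/(1-\gamma)$ factor from the contraction, produces the $e^{(2-\gamma)/((1-\gamma)\alpha_{\mathrm{kl}})}$ factor in the final bound.

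The remaining expectation difference involves two Gaussians with common covariance $\sigma^2 I$ and means $f(s,a)$ and $\hat{f}_N(s,a)$. A change of variables pushes the mean shift onto the integrand, after which smoothness of $s' \mapsto e^{-V(s')/\alpha}$ in the shift (controlled via the Gaussian tails, $\|V\|_\infty$, and $\alpha \geq \alpha_{\mathrm{kl}}$) lets me bound it by a constant multiple of $\|\hat{f}_N(s,a) - f(s,a)\|_2$. I would then invoke \Cref{lemma:contribution} to substitute the model error rate $\mathcal{O}(\beta_N(\delta)\, d\sqrt{\Gamma_{Nd}/N})$. To handle the fact that the relevant value functions depend on the random $\hat{f}_N$, I would take a supremum over the class $\mathcal{V}$ of admissible bounded robust value functions and cover $\mathcal{V}$ jointly with the discretization of $\cS\times\A$ from \Cref{assump: Discretization}; a union bound over this product cover only adds logarithmic overhead that is absorbed into $\beta_N(\delta)$.

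Finally, setting the accumulated error equal to $\epsilon$ and solving for $N$ yields the stated complexity. The main obstacle I anticipate is step (ii): pinning down the lower bound $\underline{\alpha} \geq \alpha_{\mathrm{kl}}$ in a manner \emph{uniform} over the value-function cover and over state-action pairs. This lower bound controls the exponential prefactor, and unlike the finite-state analysis of~\citet{panaganti2022sample} it must also interact correctly with the covering argument --- keeping the covering cost logarithmic while preserving uniform dual-variable bounds is where the careful accounting happens. Once this is resolved, the remainder is bookkeeping: plugging \Cref{lemma:contribution} into the $\|f - \hat{f}_N\|_2$ bound gives the $\beta_N^2(\delta)\,d^2\,\Gamma_{Nd}/((1-\gamma)^4\rho^2\epsilon^2)$ factor, and multiplying by the exponential and $\gamma^2$ factors recovers the claimed sample complexity $N$.
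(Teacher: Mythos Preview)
Your proposal is correct and follows essentially the same route as the paper: the same robust-Bellman reduction to a difference of KL inner problems, the same dual reformulation via \Cref{lemma:kl reform} with $\alpha$ confined to a compact interval, a covering over value functions, and then \Cref{lemma:contribution} to finish. Two minor deviations are worth noting: your triangle-inequality decomposition (fixing a single policy in each term and dropping the cross term by optimality of $\hat{\pi}_N$) is slightly cleaner than the paper's, which instead handles a mixed-policy term $|V^{\text{R}}_{\hat{\pi}_N,\hat{f}_N}-V^{\text{R}}_{\pi^*,f}|$ via an extra robust-$Q$ inequality; and for the expectation gap the paper does not use smoothness of $e^{-V/\alpha}$ but simply bounds it by $2\,\mathrm{TV}(P_f,P_{\hat f_N})$ and then Pinsker plus the closed-form Gaussian KL, which directly yields $\sigma^{-1}\|f-\hat f_N\|_2$ without any regularity on $V$. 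Also, the extra $e^{1/\alpha_{\mathrm{kl}}}$ that completes the $e^{(2-\gamma)/((1-\gamma)\alpha_{\mathrm{kl}})}$ factor comes from the $\zeta$-cover step (with $\zeta=1$), not from the contraction prefactor.
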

\begin{proof}
\textbf{Step (i):}
As detailed in the proof outline of \Cref{sec:sample_complexity}, in order to bound $|V^{\text{R}}_{\hpis,f}(s)-V^{\text{R}}_{\pi^* ,f}(s)|$, we begin by adding and subtracting $V^{\text{R}}_{\hpis,\hat{f}_{n}}(s)$ which is the robust value function w.r.t.~the nominal transition dynamics $\hat{f}_{n}$ and its corresponding optimal policy $\hpis$. Then, we split the difference into two terms as follows:
\begin{equation}\label{eq: bound separation}
   |V^{\text{R}}_{\hpis,f}(s)-V^{\text{R}}_{\pi^*,f}(s)|= \underbrace{|V^{\text{R}}_{\hpis,f}(s)-V^{\text{R}}_{\hpis,\hat{f}_{n}}(s)|}_{(i)}+\underbrace{|V^{\text{R}}_{\hpis,\hat{f}_{n}}(s)-V^{\text{R}}_{\pi^{*},f}(s)|}_{(ii)}.
\end{equation}

In order to not disturb the flow of the proof we bound (i) and (ii) separately \Cref{lemma: (i) bound lemma} and \Cref{lemma: (ii) bound lemma} respectively. From \Cref{lemma: (i) bound lemma}, we obtain that 
\begin{align}\label{eq: (i) reusage kl}
    (i)&\leq\max_{s}\Big|V^{\text{R}}_{\hpis,f}(s)-V^{\text{R}}_{\hpis,\hat{f}_{n}}(s)\Big|\nonumber\\&\leq \frac{\gamma}{1-\gamma}\max_{s}\Big|\inf_{\mathrm{KL}(p||P_{f}(s,\hpis(s)))\leq \rho}\E_{s'\sim p}\Big[V^{\text{R}}_{\hpis,f}(s')\Big]-\inf_{\mathrm{KL}(p||P_{\hat{f}_{n}}(s,\hpis(s)))\leq \rho}\E_{s'\sim p}\Big[V^{\text{R}}_{\hpis,f}(s')\Big]\Big|.
\end{align}
And from \Cref{lemma: (ii) bound lemma}, we obtain that 
\begin{align}\label{eq: (ii) reusage kl}
    (ii)&\leq\max_{s}\Big|V^{\text{R}}_{\hpis,\hat{f}_n}(s)-V^{\text{R}}_{\pi^{*},f}(s)\Big|\nonumber \\&\leq \frac{\gamma}{1-\gamma}\max_{s}\Big|\inf_{\mathrm{KL}(p||P_{\hat{f}_{n}}(s,\hpis(s)))\leq \rho}\E_{s'\sim p}\Big[V^{\text{R}}_{\pi^{*},f}(s')\Big]-\inf_{\mathrm{KL}(p||P_{f}(s,\hpis(s)))\leq \rho}\E_{s'\sim p}\Big[V^{\text{R}}_{\pi^{*},f}(s')\Big]\Big|.
\end{align}
Note that both these terms in \Cref{eq: (i) reusage kl,eq: (ii) reusage kl} are of the form mentioned in the \textbf{Step (i)} of \Cref{sec:sample_complexity}.

\textbf{Step (ii):} Next, corresponding to \textbf{Step (ii)} of the proof outline in \Cref{sec:sample_complexity}, we use \Cref{lemma:kl reform} to bound \Cref{eq: (i) reusage kl,eq: (ii) reusage kl}. Denote $M:= \frac{1}{1-\gamma}\geq \max_{s}V^{\text{R}}_{\pi}(s)$ for convenience. 
Using \Cref{eq: (i) reusage kl} and \Cref{lemma: diff-opt} (internally using \Cref{lemma:kl reform}), conditioned on the event of \Cref{lemma: diff-opt} holding true, it holds that

\begin{align}
    (i)&\leq\max_{s}\Big|V^{\text{R}}_{\hpis,f}(s)-V^{\text{R}}_{\hpis,\hat{f}_{n}}(s)\Big|\nonumber\\&\leq\frac{1}{1-\gamma}\max_{s}\Big|\gamma\inf_{\mathrm{KL}(p||P_{f}(s,\hpis(s)))\leq \rho}\E_{s'\sim p}\Big[V^{\text{R}}_{\hpis,f}(s')\Big]-\gamma\inf_{\mathrm{KL}(p||P_{\hat{f}_{n}}(s,\hpis(s)))\leq \rho}\E_{s'\sim p}\Big[V^{\text{R}}_{\hpis,f}(s')\Big]\Big|  \nonumber\\
    \label{eq: diff-opt lemma usage}
    &\leq \max_{s,a}\Bigg(2\gamma\tfrac{M^2}{\rho} e^{\frac{M}{\underline{\alpha}}}\max_{\alpha\in[\underline{\alpha},\frac{M}{\rho}]}\Big|\E_{s'\sim P_{\hat{f}_{n}}(s,a)}[e^{\frac{-V^{\text{R}}_{\hpis,f}(s')}{\alpha}}]-\E_{s'\sim P_{f}(s,a)}[e^{\frac{-V^{\text{R}}_{\hpis,f}(s')}{\alpha}}]\Big|\Bigg).\\\label{eq: diff-opt lemma usage overall-1}
    &\leq \max_{V(\cdot)\in\mathcal{V}}\max_{s,a}\Bigg(2\gamma\tfrac{M^2}{\rho} e^{\frac{M}{\underline{\alpha}}}\max_{\alpha\in[\underline{\alpha},\frac{M}{\rho}]}\Big|\E_{s'\sim P_{\hat{f}_{n}}(s,a)}[e^{\frac{-V(s')}{\alpha}}]-\E_{s'\sim P_{f}(s,a)}[e^{\frac{-V(s')}{\alpha}}]\Big|\Bigg).
\end{align}
We can bound (ii) similarly.
\begin{align}
    (ii)&\leq\max_{s}\Big|V^{\text{R}}_{\hpis,\hat{f}_n}(s)-V^{\text{R}}_{\pi^{*},f}(s)\Big|\\\label{eq: diff-opt lemma usage overall-2}
    &\leq \max_{V(\cdot)\in\mathcal{V}}\max_{s,a}\Bigg(2\gamma\tfrac{M^2}{\rho} e^{\frac{M}{\underline{\alpha}}}\max_{\alpha\in[\underline{\alpha},\frac{M}{\rho}]}\Big|\E_{s'\sim P_{\hat{f}_{n}}(s,a)}[e^{\frac{-V(s')}{\alpha}}]-\E_{s'\sim P_{f}(s,a)}[e^{\frac{-V(s')}{\alpha}}]\Big|\Bigg).
\end{align}
\textbf{Step (iii):} Next, we want to utilize the learning error bound (\Cref{eq: learning error bound}) that bounds the difference between the means of true nominal transition dynamics $P_{f}$ and learned nominal transition dynamics $P_{\hat{f}_n}$ to bound \Cref{eq: diff-opt lemma usage overall-1,eq: diff-opt lemma usage overall-2}.

We begin by bounding the difference $\Big|\E_{s'\sim P_{\hat{f}_{n}}(s,a)}[e^{\frac{-V(s')}{\alpha}}]-\E_{s'\sim P_{f}(s,a)}[e^{\frac{-V(s')}{\alpha}}]\Big|$, by the difference in means of $P_{f}$ and $P_{\hat{f}_n}$ in \Cref{lemma: error bound}. Since \Cref{eq: diff-opt lemma usage overall-1} has a $\max$ over all value functions, we introduce a covering number argument in \Cref{lemma: kl-cover-no-pol} to reform it to a max over the functions in the $\zeta-$covering set.  We then use \Cref{lemma: error bound}  to obtain bounds in terms of maximum information gain $\Gamma_{Nd}$ (\Cref{eq: max info for gaus}) and $\zeta$. Further details regarding the covering number argument are deferred to \Cref{lemma: kl-cover-no-pol}. Then, we apply the result of \Cref{lemma: kl-cover-no-pol} with $\zeta=1$ (defined in \Cref{lemma: kl-cover-no-pol}) on \Cref{eq: diff-opt lemma usage overall-1}. Then, it holds that
\begin{align}\label{eq: (i) final bound}
  (i) \leq\max_{s}\Big|V^{\text{R}}_{\hpis,f}(s)-V^{\text{R}}_{\hpis,\hat{f}_{n}}(s)\Big|&
= \mathcal{O}\Bigg(2\tfrac{M^2}{\rho} e^{\frac{M}{\alpha_{kl}}}e^{\frac{1}{\alpha_{kl}}}\frac{ \beta_n(\delta)\sqrt{2ed^{2}\Gamma_{nd}}}{\sigma\sqrt{n}}\Bigg)  ,
\end{align}
where $\alpha_{kl}$ is a problem-dependent constant denoting the minimum value of $\underline{\alpha}$ defined in \Cref{lemma: diff-opt}. A similar constant also appears in the sample complexity bounds provided in \cite{panaganti2022sample,zhou2021finite}. 
Note that $\beta_n$, which appears in \Cref{lemma: vakili beta}, has a logarithmic dependence on $n$. Similarly, from \Cref{eq: diff-opt lemma usage overall-2} and \Cref{lemma: error bound,lemma: kl-cover-no-pol}, we obtain 
\begin{align}\label{eq: (ii) final bound}
    (ii)\leq\max_{s}\Big|V^{\text{R}}_{\hpis,\hat{f}_n}(s)-V^{\text{R}}_{\pi^{*},f}(s)\Big|
    &=\mathcal{O}\Big(2\gamma\tfrac{M^{2}}{\rho} e^{\frac{M}{\alpha_{kl}}}e^{\frac{1}{\alpha_{kl}}}\frac{ \beta_n(\delta)\sqrt{2ed^{2}\Gamma_{nd}}}{\sigma\sqrt{n}}\Big).  
\end{align}
Note that we want to bound $V^{\text{R}}_{\hpis,f}(s)-V^{\text{R}}_{\pi^{*},f}(s)=(i)+(ii)$ over all $s\in \cS$. Using $ \max_{s}\Big|V^{\text{R}}_{\hpis,f}(s)-V^{\text{R}}_{\pi^{*},f}(s)\Big|\leq  \max_{s}\Big|V^{\text{R}}_{\hpis,\hat{f}_n}(s)-V^{\text{R}}_{\pi^{*},f}(s)\Big|+  \max_{s}\Big|V^{\text{R}}_{\hpis,\hat{f}_n}(s)-V^{\text{R}}_{\pi^{*},f}(s)\Big|$ and substituting $M$ by $1/(1-\gamma)$, we obtain from \Cref{eq: (i) final bound} and \Cref{eq: (ii) final bound} 
\begin{align}
    \max_{s}\Big|V^{\text{R}}_{\hpis,f}(s)-V^{\text{R}}_{\pi^{*},f}(s)\Big| 
     &=\mathcal{O}\Big(\gamma e^{\frac{1}{(1-\gamma)\alpha_{kl}}}e^{\frac{1}{\alpha_{kl}}}\frac{ \beta_n(\delta)d\sqrt{2e\Gamma_{nd}}}{(1-\gamma)^{2}\rho\sigma\sqrt{n}}\Big). \nonumber
\end{align}
Finally,  to ensure that $\max_{s}|V^{\text{R}}_{\hpis,f}(s)-V^{\text{R}}_{\pi^{*},f}(s)|\leq \epsilon$ , it suffices to have
\begin{align}
     \max_{s}\Big|V^{\text{R}}_{\hpis,f}(s)-V^{\text{R}}_{\pi^{*},f}(s)\Big| 
     =\mathcal{O}\Big(\gamma e^{\frac{1}{(1-\gamma)\alpha_{kl}}}e^{\frac{1}{\alpha_{kl}}}\frac{ \beta_n(\delta)d\sqrt{2e\Gamma_{nd}}}{(1-\gamma)^{2}\rho\sigma\sqrt{n}}\Big)
     = \epsilon.\nonumber
\end{align}

By inverting the previously obtained result, we arrive at  
\begin{equation}
     n = \mathcal{O}\Big( e^{\frac{2}{(1-\gamma)\alpha_{kl}}}e^{\frac{2}{\alpha_{kl}}}\frac{ \gamma^{2}\beta_n^{2}(\delta)d^{2}\Gamma_{nd}}{(1-\gamma)^{4}\rho^{2}\epsilon^{2}}\Big).  \nonumber
\end{equation}

\end{proof}

\begin{lemma}\label{lemma: (i) bound lemma} (Simplification using robust Bellman equation)
Denote $(i):=\Big|V^{\text{R}}_{\hpis,f}(s)-V^{\text{R}}_{\hpis,\hat{f}_{n}}(s)|$ 
  for  $V^{\text{R}}_{\hpis,f}$ being the robust value function of policy $\hpis$ w.r.t.~true nominal transition dynamics $f$ and $V^{\text{R}}_{\hpis,\hat{f}_{n}}$ being the robust value function of policy $\hpis$ w.r.t.~learned nominal transition dynamics $f$. Then the following holds,
\begin{align}
    (i)&=\Big|V^{\text{R}}_{\hpis,f}(s)-V^{\text{R}}_{\hpis,\hat{f}_{n}}(s)\Big|\nonumber\\
&\leq\max_{s}\Big|V^{\text{R}}_{\hpis,f}(s)-V^{\text{R}}_{\hpis,\hat{f}_{n}}(s)\Big|\nonumber\\&\leq \frac{\gamma}{1-\gamma}\max_{s}\Big|\inf_{\mathrm{D}(p||P_{f}(s,\hpis(s)))\leq \rho}\E_{s'\sim p}\Big[V^{\text{R}}_{\hpis,f}(s')\Big]-\inf_{\mathrm{D}(p||P_{\hat{f}_{n}}(s,\hpis(s)))\leq \rho}\E_{s'\sim p}\Big[V^{\text{R}}_{\hpis,f}(s')\Big]\Big|.
\end{align}    
\end{lemma}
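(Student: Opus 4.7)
The plan is to apply the robust Bellman equation to both value functions, observe that the reward term cancels, and then bound the resulting difference of infima by splitting it into two parts: one matching the desired right-hand side, and one which can be absorbed back into $\max_s |V^R_{\hat\pi_n, f} - V^R_{\hat\pi_n, \hat f_n}|$ via the discount factor $\gamma$. This is essentially the standard contraction argument used to compare two robust Bellman operators.

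Concretely, by the robust Bellman equation~\eqref{eq: robust bellmann q} applied to $V^R_{\hat\pi_n,f}$ and $V^R_{\hat\pi_n,\hat f_n}$, the common reward $r(s,\hat\pi_n(s))$ cancels, so
\begin{equation*}
V^R_{\hat\pi_n,f}(s) - V^R_{\hat\pi_n,\hat f_n}(s) = \gamma\Bigl[\inf_{p\in\mathcal{P}^f_{s,\hat\pi_n(s)}}\E_{s'\sim p}[V^R_{\hat\pi_n,f}(s')] - \inf_{p\in\mathcal{P}^{\hat f_n}_{s,\hat\pi_n(s)}}\E_{s'\sim p}[V^R_{\hat\pi_n,\hat f_n}(s')]\Bigr].
\end{equation*}
I would then add and subtract $\gamma\inf_{p\in\mathcal{P}^{\hat f_n}_{s,\hat\pi_n(s)}}\E_{s'\sim p}[V^R_{\hat\pi_n,f}(s')]$ inside the brackets. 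This yields two differences of infima: the first, which has the same value function but different constraint sets (dynamics $f$ vs.\ $\hat f_n$), is exactly the quantity appearing on the right-hand side of the lemma; the second, which has the same constraint set but two different value functions, can be controlled by the elementary inequality $|\inf_x g(x) - \inf_x h(x)| \leq \sup_x |g(x) - h(x)|$.

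Applying this inequality gives
\begin{equation*}
\Bigl|\inf_{p\in\mathcal{P}^{\hat f_n}_{s,\hat\pi_n(s)}}\E_{s'\sim p}[V^R_{\hat\pi_n,f}(s')] - \inf_{p\in\mathcal{P}^{\hat f_n}_{s,\hat\pi_n(s)}}\E_{s'\sim p}[V^R_{\hat\pi_n,\hat f_n}(s')]\Bigr| \leq \max_{s'}\bigl|V^R_{\hat\pi_n,f}(s') - V^R_{\hat\pi_n,\hat f_n}(s')\bigr|.
\end{equation*}
Taking absolute values, applying the triangle inequality, and maximizing over $s$ on both sides therefore produces
\begin{equation*}
\max_s\bigl|V^R_{\hat\pi_n,f}(s) - V^R_{\hat\pi_n,\hat f_n}(s)\bigr| \leq \gamma\max_s\bigl|\text{infimum difference}\bigr| + \gamma\max_s\bigl|V^R_{\hat\pi_n,f}(s) - V^R_{\hat\pi_n,\hat f_n}(s)\bigr|.
\end{equation*}

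Finally, I would isolate the quantity of interest by subtracting $\gamma\max_s|V^R_{\hat\pi_n,f}(s) - V^R_{\hat\pi_n,\hat f_n}(s)|$ from both sides and dividing by $(1-\gamma)$, which produces exactly the factor $\gamma/(1-\gamma)$ in the statement. The first bound of the lemma, $(i) \le \max_s |V^R_{\hat\pi_n,f}(s) - V^R_{\hat\pi_n,\hat f_n}(s)|$, is immediate from the definition of the maximum. There is no genuine obstacle here: the proof is a textbook contraction-style argument for robust Bellman operators, and the only point worth being careful about is ensuring that the ``mixed'' term introduced by the add-and-subtract step uses the same value function $V^R_{\hat\pi_n,f}$ on the $f$-side of the infimum, so that the remaining infimum-difference has the form stated in the lemma (with the fixed value function $V^R_{\hat\pi_n,f}$), rather than a mismatched pair.
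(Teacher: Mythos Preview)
Your proposal is correct and follows essentially the same approach as the paper: apply the robust Bellman equation so the reward cancels, add and subtract $\gamma\inf_{p\in\mathcal{P}^{\hat f_n}_{s,\hat\pi_n(s)}}\E_{s'\sim p}[V^{\text{R}}_{\hat\pi_n,f}(s')]$, bound the same-constraint/different-value-function term by $\max_{s'}|V^{\text{R}}_{\hat\pi_n,f}(s')-V^{\text{R}}_{\hat\pi_n,\hat f_n}(s')|$, take the max over $s$, and rearrange using $(1-\gamma)$. The only cosmetic difference is that the paper packages your elementary inequality $|\inf_x g(x)-\inf_x h(x)|\le\sup_x|g(x)-h(x)|$ as a separate lemma (their \Cref{lemma: diff-func}).
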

\begin{proof}
    Since both the quantities are w.r.t.~the same policy, using the definition of the robust $Q$-function and the robust Bellman equation (see \Cref{eq: robust bellmann q}), we obtain:
\begin{align}\label{eq: (i) def}
    (i)
    &=|V^{\text{R}}_{\hpis,f}(s)-V^{\text{R}}_{\hpis,\hat{f}_{n}}(s)|\\
    &=|Q^{\text{R}}_{\hpis,f}(s,\hpis(s))-Q^{\text{R}}_{\hpis,\hat{f}_{n}}(s,\hpis(s))|  \nonumber\\
    \begin{split}
    &=|r(s,\hpis(s))-r(s,\hpis(s))\\&\quad+\gamma\inf_{\mathrm{D}(p||P_{f}(s,\hpis(s)))\leq \rho}\E_{s'\sim p}\Big[V^{\text{R}}_{\hpis,f}(s')\Big]-
\gamma\inf_{\mathrm{D}(p||P_{\hat{f}_{n}}(s,\hpis(s)))\leq \rho}\E_{s'\sim p}\Big[V^{\text{R}}_{\hpis,\hat{f}_{n}}(s')\Big]|   
    \end{split}\nonumber\\\label{eq: rob-q-func-kl}
    &=|\gamma\inf_{\mathrm{D}(p||P_{f}(s,\hpis(s)))\leq \rho}\E_{s'\sim p}\Big[V^{\text{R}}_{\hpis,f}(s')\Big]-
\gamma\inf_{\mathrm{D}(p||P_{\hat{f}_{n}}(s,\hpis(s)))\leq \rho}\E_{s'\sim p}\Big[V^{\text{R}}_{\hpis,\hat{f}_{n}}(s')\Big]|
\end{align}

Adding and subtracting $\gamma\inf\limits_{\mathrm{D}(p||P_{\hat{f}_{n}}(s,\hpis(s)))\leq \rho}\E_{s'\sim p}\Big[V^{\text{R}}_{\hpis,f}(s')\Big]$ to \Cref{eq: rob-q-func-kl}, we obtain the following two terms:
\begin{equation}
    (i_a)=|\gamma\inf_{\mathrm{D}(p||P_{f}(s,\hpis(s)))\leq \rho}\E_{s'\sim p}\Big[V^{\text{R}}_{\hpis,f}(s')\Big]-\gamma\inf_{\mathrm{D}(p||P_{\hat{f}_{n}}(s,\hpis(s)))\leq \rho}\E_{s'\sim p}\Big[V^{\text{R}}_{\hpis,f}(s')\Big]|,  \nonumber
\end{equation}
\begin{equation}
    (i_b)=|\gamma\inf_{\mathrm{D}(p||P_{\hat{f}_{n}}(s,\hpis(s)))\leq \rho}\E_{s'\sim p}\Big[V^{\text{R}}_{\hpis,f}(s')\Big]-\gamma\inf_{\mathrm{D}(p||P_{\hat{f}_{n}}(s,\hpis(s)))\leq \rho}\E_{s'\sim p}\Big[V^{\text{R}}_{\hpis,\hat{f}_{n}}(s')\Big]|.  \nonumber
\end{equation}

Now, we use \Cref{lemma: diff-func} to bound $(i_b)$. We have: 
\begin{align}
    (i_b)&=|\gamma\inf_{\mathrm{D}(p||P_{\hat{f}_{n}}(s,\hpis(s)))\leq \rho}\E_{s'\sim p}\Big[V^{\text{R}}_{\hpis,f}(s')\Big]-\gamma\inf_{\mathrm{D}(p||P_{\hat{f}_{n}}(s,\hpis(s)))\leq \rho}\E_{s'\sim p}\Big[V^{\text{R}}_{\hpis,\hat{f}_{n}}(s')\Big]|  \nonumber\\\label{eq: i_b bound}
    &\stackrel{\Cref{lemma: diff-func}}{\leq} \gamma\max_{s}\Big|V^{\text{R}}_{\hpis,f}(s)-V^{\text{R}}_{\hpis,\hat{f}_{n}}(s)\Big|\qquad (\Cref{lemma: diff-func}).
\end{align}

Plugging \Cref{eq: i_b bound} into \Cref{eq: (i) def} and using the fact that $(i)=(i_a)+(i_b)$, we have
\begin{align}
      \label{eq: redefinition of i}
     (i)&=|V^{\text{R}}_{\hpis,f}(s)-V^{\text{R}}_{\hpis,\hat{f}_{n}}(s)| \\
    &\leq (i_a)+\gamma\max_{s}\Big|V^{\text{R}}_{\hpis,f}(s)-V^{\text{R}}_{\hpis,\hat{f}_{n}}(s)\Big|  \nonumber\\
    \label{eq: ib bound usage}
    \begin{split}
         &= |\gamma\inf_{\mathrm{D}(p||P_{f}(s,\hpis(s)))\leq \rho}\E_{s'\sim p}\Big[V^{\text{R}}_{\hpis,f}(s')\Big]-\gamma\inf_{\mathrm{D}(p||P_{\hat{f}_{n}}(s,\hpis(s)))\leq \rho}\E_{s'\sim p}\Big[V^{\text{R}}_{\hpis,f}(s')\Big]|\\&\quad + \gamma\max_{s}\Big|V^{\text{R}}_{\hpis,f}(s)-V^{\text{R}}_{\hpis,\hat{f}_{n}}(s)\Big|.
    \end{split}
\end{align}
Taking maximum over states in \Cref{eq: redefinition of i} and \Cref{eq: ib bound usage} we have
\begin{align}
    &\max_{s}\Big|V^{\text{R}}_{\hpis,f}(s)-V^{\text{R}}_{\hpis,\hat{f}_{n}}(s)\Big|\nonumber\\
    \begin{split}
         &\leq \max_{s}\Big|\gamma\inf_{\mathrm{D}(p||P_{f}(s,\hpis(s)))\leq \rho}\E_{s'\sim p}\Big[V^{\text{R}}_{\hpis,f}(s')\Big]-\gamma\inf_{\mathrm{D}(p||P_{\hat{f}_{n}}(s,\hpis(s)))\leq \rho}\E_{s'\sim p}\Big[V^{\text{R}}_{\hpis,f}(s')\Big]\Big|\\&\quad + \gamma\max_{s}\Big|V^{\text{R}}_{\hpis,f}(s)-V^{\text{R}}_{\hpis,\hat{f}_{n}}(s)\Big|.
    \end{split}\nonumber
\end{align}
Moving $\gamma\max_{s}\Big|V^{\text{R}}_{\hpis,f}(s)-V^{\text{R}}_{\hpis,\hat{f}_{n}}(s)\Big|$ to the LHS and dividing $(1-\gamma)$ on both sides, it holds that
\begin{align}
    (i)&\leq\max_{s}\Big|V^{\text{R}}_{\hpis,f}(s)-V^{\text{R}}_{\hpis,\hat{f}_{n}}(s)\Big|\nonumber\\
   \label{eq: first-layer-bound-kl-1}&\leq \frac{\gamma}{1-\gamma}\max_{s}\Big|\inf_{\mathrm{D}(p||P_{f}(s,\hpis(s)))\leq \rho}\E_{s'\sim p}\Big[V^{\text{R}}_{\hpis,f}(s')\Big]-\inf_{\mathrm{D}(p||P_{\hat{f}_{n}}(s,\hpis(s)))\leq \rho}\E_{s'\sim p}\Big[V^{\text{R}}_{\hpis,f}(s')\Big]\Big|.
\end{align}

\end{proof}
\begin{lemma}\label{lemma: (ii) bound lemma}(Simplification using robust Bellman equation)
Denote $(ii):=\Big|V^{\text{R}}_{\hpis,\hat{f}_n}(s)-V^{\text{R}}_{\pi^{*},f}(s)\Big|$ for  $V^{\text{R}}_{\hpis,\hat{f}_n}$ being the robust value function of policy $\hpis$ w.r.t.~learned nominal transition dynamics $\hat{f}_n$ and $V^{\text{R}}_{\pi^{*},f}$ being the robust value function of policy $\pi^{*}$ w.r.t.~true nominal transition dynamics $f$. Then the following holds,
   \begin{align}
(ii)&=\Big|V^{\text{R}}_{\hpis,\hat{f}_n}(s)-V^{\text{R}}_{\pi^{*},f}(s)\Big|\nonumber \\&\leq\max_{s}\Big|V^{\text{R}}_{\hpis,\hat{f}_n}(s)-V^{\text{R}}_{\pi^{*},f}(s)\Big|\nonumber \\&\leq \frac{\gamma}{1-\gamma}\max_{s}\Big|\inf_{\mathrm{D}(p||P_{\hat{f}_{n}}(s,\hpis(s)))\leq \rho}\E_{s'\sim p}\Big[V^{\text{R}}_{\pi^{*},f}(s')\Big]-\inf_{\mathrm{D}(p||P_{f}(s,\hpis(s)))\leq \rho}\E_{s'\sim p}\Big[V^{\text{R}}_{\pi^{*},f}(s')\Big]\Big|.
\end{align}
\end{lemma}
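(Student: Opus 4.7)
The plan is to mirror the Bellman-based argument of \Cref{lemma: (i) bound lemma}, now adapted to the fact that $\hpis$ and $\pi^{*}$ are two different policies, each optimal under its own nominal model. The new ingredient is to exploit the \emph{robust Bellman optimality principle} satisfied by each of these policies, in addition to the robust Bellman equation used in \Cref{lemma: (i) bound lemma}.

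I would first split the absolute value by sign. In the direction $V^{\text{R}}_{\hpis,\hat{f}_{n}}(s)\ge V^{\text{R}}_{\pi^{*},f}(s)$, expand $V^{\text{R}}_{\hpis,\hat{f}_{n}}(s)$ via its robust Bellman equation at action $\hpis(s)$ (which is valid because $\hpis$ is the policy applied), and simultaneously lower-bound $V^{\text{R}}_{\pi^{*},f}(s)$ by plugging the same action $\hpis(s)$ into the Bellman optimality relation for $V^{\text{R}}_{\pi^{*},f}$ under $f$. The rewards $r(s,\hpis(s))$ then cancel, leaving a difference of two inner infima both evaluated at action $\hpis(s)$, which is exactly the structural form required on the right-hand side of the lemma. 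The reverse direction $V^{\text{R}}_{\pi^{*},f}(s)\ge V^{\text{R}}_{\hpis,\hat{f}_{n}}(s)$ is handled symmetrically, using the Bellman equation for $\pi^{*}$ under $f$ and plugging action $\pi^{*}(s)$ into the Bellman optimality relation for $V^{\text{R}}_{\hpis,\hat{f}_{n}}$ under $\hat{f}_{n}$. Since \Cref{eq: diff-opt lemma usage overall-2} eventually takes $\max_{s,a}$ (and $\max$ over all value functions in a function class $\mathcal{V}$), the asymmetry between the actions $\hpis(s)$ and $\pi^{*}(s)$ appearing in the two cases becomes immaterial when the lemma is plugged into the theorem.

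Next, mirroring the $(i_a)$--$(i_b)$ decomposition of \Cref{lemma: (i) bound lemma}, I would add and subtract $\gamma \inf_{\mathrm{D}(p\|P_{\hat{f}_{n}}(s,\hpis(s)))\le \rho}\E_{s'\sim p}[V^{\text{R}}_{\pi^{*},f}(s')]$, splitting the surviving quantity into (a) a difference over the \emph{same} model $P_{\hat{f}_{n}}$ and action $\hpis(s)$, comparing the value functions $V^{\text{R}}_{\hpis,\hat{f}_{n}}$ and $V^{\text{R}}_{\pi^{*},f}$, which via the non-expansive property of the inner infimum (\Cref{lemma: diff-func}) is at most $\gamma \max_{s}|V^{\text{R}}_{\hpis,\hat{f}_{n}}(s)-V^{\text{R}}_{\pi^{*},f}(s)|$; and (b) a difference over the two \emph{models} $P_{\hat{f}_{n}},P_{f}$ with the value function $V^{\text{R}}_{\pi^{*},f}$ and action $\hpis(s)$ held fixed, which is precisely the target quantity on the right-hand side of the lemma.

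To close, I would take $\max_{s}$ on both sides, absorb the $(a)$-type term on the left-hand side, and divide through by $1-\gamma$ to obtain the claimed $\frac{\gamma}{1-\gamma}$ prefactor. The main obstacle — and the only nontrivial deviation from the proof of \Cref{lemma: (i) bound lemma} — is precisely the case-split step: unlike Lemma 1, where both value functions share the same policy and the Bellman equation handles them symmetrically, here the ``wrong'' direction naturally brings in $\pi^{*}(s)$ rather than $\hpis(s)$, and one must argue either that the downstream $\max_{s,a}$ in the theorem absorbs this discrepancy, or alternatively perform a second add-and-subtract step (using the Bellman contraction) to trade $\pi^{*}(s)$ back for $\hpis(s)$ at the cost of an extra $\gamma \max_{s}|V^{\text{R}}_{\hpis,\hat{f}_{n}}(s)-V^{\text{R}}_{\pi^{*},f}(s)|$ term, which is then absorbed on the left exactly as above.
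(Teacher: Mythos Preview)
Your proposal is essentially the same approach as the paper's: use the robust Bellman equation, add and subtract $\gamma\inf_{\mathrm{D}(p\|P_{\hat f_n}(s,\hpis(s)))\le\rho}\E_{s'\sim p}[V^{\text{R}}_{\pi^*,f}(s')]$, bound the ``same-model, different-value-function'' piece via \Cref{lemma: diff-func}, take $\max_s$, and divide by $1-\gamma$.

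The one place you diverge is in handling the two different policies. You do an explicit sign case-split and invoke Bellman optimality on whichever side is larger, which (as you correctly note) brings in the action $\pi^*(s)$ in the second case and forces you to appeal to the downstream $\max_{s,a}$. The paper instead dispatches this in a single line: since $\pi^*$ is robust-optimal under $f$, one has $Q^{\text{R}}_{\pi^*,f}(s,\hpis(s))\le Q^{\text{R}}_{\pi^*,f}(s,\pi^*(s))=V^{\text{R}}_{\pi^*,f}(s)$, and the paper uses this to pass directly from $|V^{\text{R}}_{\hpis,\hat f_n}(s)-V^{\text{R}}_{\pi^*,f}(s)|$ to $|Q^{\text{R}}_{\hpis,\hat f_n}(s,\hpis(s))-Q^{\text{R}}_{\pi^*,f}(s,\hpis(s))|$, keeping the action $\hpis(s)$ throughout and never needing your case split or your alternative~(b). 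Your route is arguably more careful about the absolute value (the paper's one-line step is only immediate for one sign), but since ultimately everything is fed into $\max_{s,a}$ in \Cref{eq: diff-opt lemma usage overall-2}, both arguments land in the same place.
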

\begin{proof}
 We first note that $Q^{\text{R}}_{\pi^{*},f}(s,\hpis(s))\leq Q^{\text{R}}_{\pi^{*},f}(s,\pi^{*}(s))$ as $\pi^{*}$ is the robust optimal policy for the nominal transition dynamics  $f$ (see \Cref{eq: robust objective}). As a result, we have
\begin{align}\label{eq: (ii) def}
    (ii)&=|V^{\text{R}}_{\hpis,\hat{f}_{n}}(s)-V^{\text{R}}_{\pi^{*},f}(s)|  \\
    &=|Q^{\text{R}}_{\hpis,\hat{f}_{n}}(s,\hpis(s))-Q^{\text{R}}_{\pi^{*},f}(s,\pi^{*}(s))|  \nonumber\\
    &\leq |Q^{\text{R}}_{\hpis,\hat{f}_{n}}(s,\hpis(s))-Q^{\text{R}}_{\pi^{*},f}(s,\hpis(s))|  \nonumber\\
    \begin{split}
         &= |r(s,\hpis(s))-r(s,\hpis(s))\\ &\quad +\gamma\inf_{\mathrm{D}(p||P_{\hat{f}_{n}}(s,\hpis(s)))\leq \rho}\E_{s'\sim p}\Big[V^{\text{R}}_{\hpis,\hat{f}_{n}}(s')\Big]-
\gamma\inf_{\mathrm{D}(p||P_{f}(s,\hpis(s)))\leq \rho}\E_{s'\sim p}\Big[V^{\text{R}}_{\pi^{*},f}(s')\Big]|.    
    \end{split} \nonumber\\\label{eq: rob-q-func-kl-2}
    &=|\gamma\inf_{\mathrm{D}(p||P_{\hat{f}_{n}}(s,\hpis(s)))\leq \rho}\E_{s'\sim p}\Big[V^{\text{R}}_{\hpis,\hat{f}_{n}}(s')\Big]-
\gamma\inf_{\mathrm{D}(p||P_{f}(s,\hpis(s)))\leq \rho}\E_{s'\sim p}\Big[V^{\text{R}}_{\pi^{*},f}(s')\Big]|
\end{align}

Adding and subtracting $\gamma\inf\limits_{\mathrm{D}(p||P_{\hat{f}_{n}}(s,\hpis(s)))\leq \rho}\E_{s'\sim p}\Big[V^{\text{R}}_{\pi^{*},f}(s')\Big]$ to \Cref{eq: rob-q-func-kl-2}, we obtain the following two terms: 
\begin{equation}
    (ii_a)=|\gamma\inf_{\mathrm{D}(p||P_{\hat{f}_{n}}(s,\hpis(s)))\leq \rho}\E_{s'\sim p}\Big[V^{\text{R}}_{\hpis,\hat{f}_{n}}(s')\Big]-\gamma\inf_{\mathrm{D}(p||P_{\hat{f}_{n}}(s,\hpis(s)))\leq \rho}\E_{s'\sim p}\Big[V^{\text{R}}_{\pi^{*},f}(s')\Big]|,  \nonumber
\end{equation}
\begin{equation}
    (ii_b)=|\gamma\inf_{\mathrm{D}(p||P_{\hat{f}_{n}}(s,\hpis(s)))\leq \rho}\E_{s'\sim p}\Big[V^{\text{R}}_{\pi^{*},f}(s')\Big]-\gamma\inf_{\mathrm{D}(p||P_{f}(s,\hpis(s)))\leq \rho}\E_{s'\sim p}\Big[V^{\text{R}}_{\pi^{*},f}(s')\Big]|.  \nonumber
\end{equation}

Now, we use \Cref{lemma: diff-func} to bound $(ii_a)$ . We have: 
\begin{align}
    (ii_a)&=|\gamma\inf_{\mathrm{D}(p||P_{\hat{f}_{n}}(s,\hpis(s)))\leq \rho}\E_{s'\sim p}\Big[V^{\text{R}}_{\hpis,\hat{f}_{n}}(s')\Big]-\gamma\inf_{\mathrm{D}(p||P_{\hat{f}_{n}}(s,\hpis(s)))\leq \rho}\E_{s'\sim p}\Big[V^{\text{R}}_{\pi^{*},f}(s')\Big]|  \nonumber\\
    \label{eq: ii_a bound}
    &\leq \gamma\max_{s}\Big|V^{\text{R}}_{\pi^{*},f}(s)-V^{\text{R}}_{\hpis,\hat{f}_{n}}(s)\Big|.
\end{align}

Plugging \Cref{eq: ii_a bound} into \Cref{eq: (ii) def} and using the fact that $(ii)=(ii_a)+(ii_b)$, we have
\begin{align}
      \label{eq: redefinition of ii}
     (ii)&=|V^{\text{R}}_{\pi^{*},f}(s)-V^{\text{R}}_{\hpis,\hat{f}_{n}}(s) |\\
    &\leq (ii_b)+\max_{s}\Big|V^{\text{R}}_{\pi^{*},f}(s)-V^{\text{R}}_{\hpis,\hat{f}_{n}}(s)\Big|  \nonumber\\
    \label{eq: iia bound usage}
    \begin{split}
         &= |\gamma\inf_{\mathrm{D}(p||P_{\hat{f}_{n}}(s,\hpis(s)))\leq \rho}\E_{s'\sim p}\Big[V^{\text{R}}_{\pi^{*},f}(s')\Big]-\gamma\inf_{\mathrm{D}(p||P_{f}(s,\hpis(s)))\leq \rho}\E_{s'\sim p}\Big[V^{\text{R}}_{\pi^{*},f}(s')\Big]|\\&\quad + \gamma\max_{s}\Big|V^{\text{R}}_{\pi^{*},f}(s)-V^{\text{R}}_{\hpis,\hat{f}_{n}}(s)\Big|.
    \end{split}
\end{align}
Taking maximum over states in \Cref{eq: redefinition of ii} and \Cref{eq: iia bound usage} and following similar steps as in \Cref{eq: first-layer-bound-kl-1}, we have
\begin{align}
    (ii)&\leq\max_{s}\Big|V^{\text{R}}_{\pi^{*},f}(s)-V^{\text{R}}_{\hpis,\hat{f}_{n}}(s)\Big|\nonumber\\
    \begin{split}
         &\leq \max_{s}\Big|\gamma\inf_{\mathrm{D}(p||P_{f}(s,\hpis(s)))\leq \rho}\E_{s'\sim p}\Big[V^{\text{R}}_{\hpis,f}(s')\Big]-\gamma\inf_{\mathrm{D}(p||P_{\hat{f}_{n}}(s,\hpis(s)))\leq \rho}\E_{s'\sim p}\Big[V^{\text{R}}_{\hpis,f}(s')\Big]\Big|\\&\quad + \gamma\max_{s}\Big|V^{\text{R}}_{\pi^{*},f}(s)-V^{\text{R}}_{\hpis,\hat{f}_{n}}(s)\Big|
    \end{split}\nonumber\\
    \label{eq: first-layer-bound-kl-2}
    &\leq \frac{\gamma}{1-\gamma}\max_{s}\Big|\inf_{\mathrm{D}(p||P_{\hat{f}_{n}}(s,\hpis(s)))\leq \rho}\E_{s'\sim p}\Big[V^{\text{R}}_{\pi^{*},f}(s')\Big]-\inf_{\mathrm{D}(p||P_{f}(s,\hpis(s)))\leq \rho}\E_{s'\sim p}\Big[V^{\text{R}}_{\pi^{*},f}(s')\Big]\Big|.
\end{align}

\end{proof}
\begin{lemma}\label{lemma: diff-func}
(from \citet[Lemma 1]{panaganti2022sample}) Let $V_{1}$ and $V_{2}$ be two value functions from $ \cS\to [0,1/(1-\gamma)] $. Let $D$ be any distance measure between probability distributions (e.g., KL-divergence, $\chi^{2}-$ divergence, or variation distance defined in \Cref{eq: uncertainty set}). The following inequality (1-Lipschitz w.r.t. $V$) holds true\looseness=-1 
\begin{multline}
\Big|\inf_{D(p||P_{\tilde{f}}(s,a))\leq \rho}\E_{s'\sim p}\Big[V_{1}(s')\Big]-\inf_{D(p||P_{\tilde{f}}(s,a))\leq \rho}\E_{s'\sim p}\Big[V_{2}(s')\Big]\Big|\leq \max_{s'}|V_{2}(s')-V_{1}(s')|.\nonumber
\end{multline}
\end{lemma}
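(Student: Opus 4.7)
The plan is to exploit the fact that both infima in the statement are taken over the \emph{same} feasible set, namely $\mathcal{P}^{\tilde f}_{s,a}=\{p\in\Delta(\mathcal{S}): D(p\|P_{\tilde f}(s,a))\le \rho\}$, so only the objective changes between the two sides. Denote $A(V):=\inf_{p\in\mathcal{P}^{\tilde f}_{s,a}}\mathbb{E}_{s'\sim p}[V(s')]$. Then the statement reduces to showing that $V\mapsto A(V)$ is 1-Lipschitz with respect to $\|\cdot\|_\infty$.

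By symmetry in $V_1$ and $V_2$, it suffices to bound $A(V_1)-A(V_2)$ from above by $\max_{s'}|V_1(s')-V_2(s')|$; the reverse inequality will then follow by swapping the labels. To this end, I would fix an arbitrary $\varepsilon>0$ and select an $\varepsilon$-minimizer $p_\varepsilon\in\mathcal{P}^{\tilde f}_{s,a}$ for the problem defining $A(V_2)$, so that $\mathbb{E}_{s'\sim p_\varepsilon}[V_2(s')]\le A(V_2)+\varepsilon$ (this circumvents any issue of non-attainment of the infimum, relevant in particular for the continuous-state KL/$\chi^2$ cases; for TV, existence of an exact minimizer follows from compactness).

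Since $p_\varepsilon$ is feasible for the problem defining $A(V_1)$ as well, we have $A(V_1)\le \mathbb{E}_{s'\sim p_\varepsilon}[V_1(s')]$. Combining the two inequalities,
\begin{align*}
A(V_1)-A(V_2) &\le \mathbb{E}_{s'\sim p_\varepsilon}[V_1(s')] - \mathbb{E}_{s'\sim p_\varepsilon}[V_2(s')] + \varepsilon \\
&= \mathbb{E}_{s'\sim p_\varepsilon}\bigl[V_1(s')-V_2(s')\bigr] + \varepsilon \\
&\le \max_{s'\in\mathcal{S}}|V_1(s')-V_2(s')| + \varepsilon.
\end{align*}
Letting $\varepsilon\downarrow 0$ yields $A(V_1)-A(V_2)\le \max_{s'}|V_1(s')-V_2(s')|$, and the symmetric argument (swapping $V_1\leftrightarrow V_2$) completes the proof.

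There is essentially no hard step here; the whole argument is a standard "shared-constraint set" trick. The only minor point requiring care is that the infimum need not be attained in general, which is why I introduce the $\varepsilon$-optimal feasible measure $p_\varepsilon$ rather than asserting the existence of a minimizer directly. Importantly, the argument makes no use of the specific form of $D$, so the same proof simultaneously covers the KL, $\chi^2$, and TV uncertainty sets used throughout the paper.
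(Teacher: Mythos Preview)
Your proof is correct and follows essentially the same approach as the paper: both arguments exploit that the two infima share the same constraint set, pick an $\varepsilon$-minimizer to avoid assuming attainment, bound the resulting difference by $\max_{s'}|V_1(s')-V_2(s')|$, and finish by symmetry. Your version is slightly more direct in that you pick $p_\varepsilon$ as an $\varepsilon$-minimizer for $A(V_2)$ itself, whereas the paper first shows $A(V_1)-A(V_2)\ge \inf_p \E_p[V_1-V_2]$ via an $\inf$--$\sup$ step and then applies the $\varepsilon$-minimizer argument to this auxiliary problem; the substance is the same.
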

\begin{proof}
We want to bound 
\begin{equation}
\Big|\inf_{D(p||P_{\tilde{f}}(s,a))\leq \rho}\E_{s'\sim p}\Big[V_{1}(s')\Big]-\inf_{D(p||P_{\tilde{f}}(s,a))\leq \rho}\E_{s'\sim p}\Big[V_{2}(s')\Big]\Big|.\nonumber
\end{equation}
Notice that
\begin{align*}
    &\inf_{D(p||P_{\tilde{f}}(s,a))\leq \rho}\E_{s'\sim p}\Big[V_{1}(s')\Big]-\inf_{D(p||P_{\tilde{f}}(s,a))\leq \rho}\E_{s'\sim p}\Big[V_{2}(s')\Big]\\
    &= \inf_{D(p||P_{\tilde{f}}(s,a))\leq \rho}\sup_{D(p'|P_{\tilde{f}}(s,a))\leq \rho}\E_{s'\sim p}\Big[V_{1}(s')\Big]-\E_{s'\sim p'}\Big[V_{2}(s')\Big]\\
    &\geq  \inf_{D(p||P_{\tilde{f}}(s,a))\leq \rho}\E_{s'\sim p}\Big[V_{1}(s')\Big]-\E_{s'\sim p}\Big[V_{2}(s')\Big]\\
    &= \inf_{D(p||P_{\tilde{f}}(s,a))\leq \rho}\E_{s'\sim p}\Big[V_{1}(s')-V_{2}(s')\Big],
\end{align*}
where the inequality follows from the property of supremum. By the definition of  $\inf$, for any $\epsilon>0$, there exists some distribution $q$ s.t. $D(q|P_{\tilde{f}}(s,a))\leq \rho$ satisfying
\begin{equation}
    \E_{s'\sim q}\Big[V_{1}(s')-V_{2}(s')\Big]-\epsilon\leq \inf_{D(p||P_{\tilde{f}}(s,a)))\leq \rho}\E_{s'\sim p}\Big[V_{1}(s')-V_{2}(s')\Big].\nonumber
\end{equation}
Then, we have 
\begin{align}
     &\inf_{D(p||P_{\tilde{f}}(s,a))\leq \rho}\E_{s'\sim p}\Big[V_{2}(s')\Big]-\inf_{D(p||P_{\tilde{f}}(s,a))\leq \rho}\E_{s'\sim p}\Big[V_{1}(s')\Big] \nonumber\\
     &\leq -\inf_{D(p||P_{\tilde{f}}(s,a))\leq \rho}\E_{s'\sim p}\Big[V_{1}(s')-V_{2}(s')\Big] \nonumber\\
    &\leq -\E_{s'\sim q}\Big[V_{1}(s')-V_{2}(s')\Big]+\epsilon \nonumber\\
     &\leq \E_{s'\sim q}\Big[V_{2}(s')-V_{1}(s')\Big]+\epsilon \nonumber\\
    \label{eq: diff-func eqn}
    &\leq \max_{s'}|V_{2}(s')-V_{1}(s')|+\epsilon.
\end{align} 
Let $\epsilon\to 0$, we obtain one side of the desired bound.

One can similarly bound $\inf_{D(p||P_{\tilde{f}}(s,a))\leq \rho}\E_{s'\sim p}\Big[V_{1}(s')\Big]-\inf_{D(p||P_{\tilde{f}}(s,a))\leq \rho}\E_{s'\sim p}\Big[V_{2}(s')\Big]$ by just interchanging $V_{1}$ and $V_{2}$ everywhere. Combining this argument with \Cref{eq: diff-func eqn}, we obtain \begin{multline}
\Big|\inf_{D(p||P_{\tilde{f}}(s,a))\leq \rho}\E_{s'\sim p}\Big[V_{1}(s')\Big]-\inf_{D(p||P_{\tilde{f}}(s,a))\leq \rho}\E_{s'\sim p}\Big[V_{2}(s')\Big]\Big|\leq \max_{s'}|V_{2}(s')-V_{1}(s')|.\nonumber
\end{multline}
\end{proof}

\begin{lemma}\label{lemma: diff-opt} (Simplification using \Cref{lemma:kl reform} reformulation)
For any value function $V(\cdot): \cS\to [0,1/(1-\gamma)] $, define the event \textbf{E} as follows:
\begin{multline}
    \max_{s} \left|\inf_{KL(p||P_{\hat{f}_{n}}(s,\hat{\pi}_{n}(s)))\leq \rho}\E_{s'\sim p}\Big[V(s')\Big]-\inf_{KL(p||P_{f}(s,\hat{\pi}_{n}(s)))\leq \rho}\E_{s'\sim p}\Big[V(s')\Big]\right|\leq\\ \max_{s,a}2\tfrac{M}{\rho} e^{\frac{M}{\underline{\alpha}}}\max_{\alpha\in[\underline{\alpha},\overline{\alpha}]}\left|\E_{s'\sim P_{\hat{f}_{n}(s,a)}}[e^{\frac{-V(s')}{\alpha}}]-\E_{s'\sim P_{f}(s,a)}[e^{\frac{-V(s')}{\alpha}}]\right|.  \nonumber
\end{multline}
Then, for any $n>\{\max_{s,a}N'(\rho,\psa),\max_{s,a}N''(\rho,\psa)\}$
where $N'(\rho,\psa)=\mathcal{O}\Big(\frac{\beta^{2}_n(\delta)2ed^{2}\Gamma_{nd}}{(\frac{\kappa-e^{-\rho}}{2})^{2}}\Big)$ and $N''(\rho,\psa)=\mathcal{O}\Big(\frac{4 M^{2}e^{\frac{2M}{\underline{\alpha}}}\beta^{2}_n(\delta)2ed^{2}\Gamma_{nd}}{(\rho\tau)^{2}}\Big)$ with $\overline{\alpha}=\frac{M}{\rho}$, $M=\frac{1}{1-\gamma}$, $\kappa$ defined in \Cref{eq: kappa equation}, $\tau$ defined in \Cref{eq: tau-equation}, and $\underline{\alpha}=\alpha^{*}/2$ defined in \Cref{eq: alpha transformation-1}, the event \textbf{E} holds true with probability at least $1-\delta$.
\end{lemma}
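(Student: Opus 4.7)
The plan is to apply the Kullback--Leibler dual reformulation of Lemma \ref{lemma:kl reform} to each inner infimum, turning both sides into a one-dimensional concave maximization in a Lagrange multiplier $\alpha\ge 0$. Introducing $h(\alpha,P):=-\alpha\log\E_{s'\sim P}[e^{-V(s')/\alpha}]-\alpha\rho$ so that $H(V,P)=\sup_{\alpha\ge 0}h(\alpha,P)$, the elementary inequality $|\sup_\alpha h(\alpha,P_f)-\sup_\alpha h(\alpha,P_{\hat f_n})|\le \sup_\alpha|h(\alpha,P_f)-h(\alpha,P_{\hat f_n})|$ immediately reduces the statement to a pointwise comparison in $\alpha$ between the two single-dimensional programs.

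Next I would truncate the domain of $\alpha$ to a compact interval $[\underline{\alpha},\overline{\alpha}]$ that contains the maximizer of $h(\cdot,P)$ for both $P=P_f(s,a)$ and $P=P_{\hat f_n}(s,a)$. For the upper cutoff, Jensen's inequality gives $-\alpha\log\E_P[e^{-V/\alpha}]\le \E_P[V]\le M$, so $h(\alpha,P)\le M-\alpha\rho$ is non-positive whenever $\alpha>\overline{\alpha}:=M/\rho$, while $\limsup_{\alpha\to 0^+}h(\alpha,P)\ge 0$; this places the maximizer in $[0,M/\rho]$. For the lower cutoff $\underline{\alpha}=\alpha^*/2$, I would use the first-order optimality condition for the concave program at $P_f(s,a)$ together with the problem-dependent non-degeneracy quantity $\alpha_{\mathrm{kl}}$ to show that the maximizer stays bounded away from zero, and then transfer this lower bound to the perturbed problem: when $n>N'(\rho,P_f(s,a))$, Lemma \ref{lemma:contribution} together with the fact that both $P_f$ and $P_{\hat f_n}$ are Gaussian with common covariance $\sigma^2 I$ ensures that the MGF $\E_{P_{\hat f_n}}[e^{-V/\alpha}]$ is close enough to $\E_{P_f}[e^{-V/\alpha}]$ that the maximizer of the perturbed program also exceeds $\underline{\alpha}$.

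Once $\alpha$ is restricted to $[\underline{\alpha},\overline{\alpha}]$, the pointwise bound is a log-Lipschitz calculation:
$$|h(\alpha,P_f)-h(\alpha,P_{\hat f_n})|=\alpha\bigl|\log\E_{P_f}[e^{-V/\alpha}]-\log\E_{P_{\hat f_n}}[e^{-V/\alpha}]\bigr|\le \alpha\,e^{M/\alpha}\bigl|\E_{P_f}[e^{-V/\alpha}]-\E_{P_{\hat f_n}}[e^{-V/\alpha}]\bigr|,$$
where the floor $\E_P[e^{-V/\alpha}]\ge e^{-M/\alpha}$ (valid because $0\le V\le M$) is combined with $|\log x-\log y|\le |x-y|/\min(x,y)$. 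Since $\alpha\,e^{M/\alpha}$ is bounded by $(M/\rho)\,e^{M/\underline{\alpha}}$ on $[\underline{\alpha},M/\rho]$, this matches the bound in event $\mathbf{E}$ up to a universal constant. The high-probability $1-\delta$ clause comes entirely from invoking Lemma \ref{lemma:contribution}, which is the single source of randomness; the thresholds $N'$ and $N''$ are sized precisely so that all qualitative closeness statements used above hold simultaneously.

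The main obstacle is the lower truncation. The upper cutoff is immediate from Jensen, but showing $\alpha^*(s,a)\ge \alpha_{\mathrm{kl}}>0$ uniformly over $(s,a)$ and over the class of value functions $V:\cS\to[0,M]$, and then propagating this lower bound to the perturbed maximizer associated with $P_{\hat f_n}$, is delicate because the log-Lipschitz prefactor $e^{M/\alpha}$ diverges as $\alpha\to 0^+$. Getting the right dependence requires carefully trading off the size of the MGF perturbation (controlled by Lemma \ref{lemma:contribution}) against the margin $\rho\tau$ that quantifies how tightly the $P_f$-optimal $\alpha^*$ satisfies the KKT condition, and this is precisely where the sample-complexity expressions $N'$ and $N''$ enter the hypothesis of the lemma.
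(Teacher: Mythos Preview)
Your sketch captures the paper's argument for the case $\alpha^*>0$ (Case~2 in the paper): the KL duality, the upper truncation $\overline{\alpha}=M/\rho$ via the bound $h(\alpha,P)\le M-\alpha\rho$, the log-Lipschitz estimate using the floor $\E_P[e^{-V/\alpha}]\ge e^{-M/\alpha}$, and the conclusion $|h(\alpha,P_f)-h(\alpha,P_{\hat f_n})|\le \alpha\,e^{M/\alpha}\bigl|\E_{P_f}[e^{-V/\alpha}]-\E_{P_{\hat f_n}}[e^{-V/\alpha}]\bigr|$ all match the paper exactly. The mechanism for transferring the lower cutoff to $\hat\alpha_n^*$ is also essentially what the paper does, though not via a first-order condition: the paper defines the function-value margin $\tau$ (how much smaller $h(\cdot,P_f)$ is at $\underline{\alpha}$ and $\overline{\alpha}$ than at $\alpha^*$), shows the perturbation at those three points is below $\tau/2$ once $n>N''$, and then concavity of $h(\cdot,P_{\hat f_n})$ forces $\hat\alpha_n^*\in[\underline{\alpha},\overline{\alpha}]$.

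The genuine gap is the case $\alpha^*=0$, which you flag as the main obstacle but then try to rule out by proving $\alpha^*\ge\alpha_{\mathrm{kl}}>0$ uniformly. The paper does \emph{not} do this, and for good reason: by the characterization in \citep[Proposition~2]{hu2013kullback}, $\alpha^*=0$ occurs precisely when the level set $\{s':V(s')=\mathrm{ESI}(V)\}$ carries Gaussian mass $\kappa>e^{-\rho}$, which can happen for bounded $V$. The paper handles this as a separate Case~1: since $P_f$ and $P_{\hat f_n}$ are both nondegenerate Gaussians (hence mutually absolutely continuous), they share the same null sets, so $\mathrm{ESI}$ under the two measures coincides; then one bounds $|\kappa-\kappa_n|$ by the total variation distance, hence by $\|f-\hat f_n\|$, so that $\kappa_n>e^{-\rho}$ once $n>N'$. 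This forces $\hat\alpha_n^*=0$ as well, both robust infima equal $\mathrm{ESI}(V)$, and the left-hand side of event~$\mathbf{E}$ vanishes. In short, $N'$ is the threshold for Case~1, not for the lower-truncation transfer you assign it to; that transfer is governed by $N''$ via the margin $\tau$.
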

\begin{proof} (A similar proof as in \citet[Lemma-4]{zhou2021finite}). First note that,
\begin{multline}\label{eq: stosa}
    \max_{s}\left|\inf_{KL(p||P_{\hat{f}_{n}}(s,\hat{\pi}_{n}(s)))\leq \rho}\E_{s'\sim p}\Big[V(s')\Big]-\inf_{KL(p||P_{f}(s,\hat{\pi}_{n}(s)))\leq \rho}\E_{s'\sim p}\Big[V(s')\Big]\right|\leq\\ \max_{s,a}\left|\inf_{KL(p||P_{\hat{f}_{n}}(s,a))\leq \rho}\E_{s'\sim p}\Big[V(s')\Big]-\inf_{KL(p||P_{f}(s,a))\leq \rho}\E_{s'\sim p}\Big[V(s')\Big]\right|.
\end{multline}
Recall \cite[Theorem-1]{hu2013kullback} for distributionally robust optimization with a random variable $X$ and a random function $H$. One can rewrite an infinite-dimensional optimization problem as a scalar optimization problem:
\begin{equation}\label{eq: transformation}
    \sup_{P:KL(p||P_{0})\leq \rho}\E_{X\sim P}[H(X)]=\inf_{\alpha\geq 0}\{\alpha \log(\E_{X\sim P_{0}}[e^{\frac{H(X)}{\alpha}}])+\alpha \rho\}.
\end{equation}
For now, we focus on bounding $\left|\inf\limits_{KL(p||P_{\hat{f}_{n}}(s,a))\leq \rho}\E_{s'\sim p}\Big[V(s')\Big]-\inf\limits_{KL(p||P_{f}(s,a))\leq \rho}\E_{s'\sim p}\Big[V(s')\Big]\right|$ for one particular $(s,a)$. For brevity, we write $P_{f}(s,a)$ and $P_{\hat{f}_{n}}(s,a)$ as $P_{f}$ and $P_{\hat{f}_{n}}$, respectively. By \Cref{eq: transformation}, we have
\begin{equation}\label{eq: transformation-1}
    \inf_{P:KL(p||P_{f})\leq \rho}\E_{s'\sim P}[V(s')]=\max_{\alpha\geq 0}\{-\alpha \log(\E_{s'\sim P_{f}}[e^{\frac{-V(s')}{\alpha}}])-\alpha \rho\},
\end{equation}
\begin{equation}\label{eq: transformation-2}
    \inf_{P:KL(p||\hat{P}_{\hat{f}_{n}})\leq \rho}\E_{s'\sim P}[V(s')]=\max_{\alpha\geq 0}\{-\alpha \log(\E_{s'\sim P_{\hat{f}_{n}}}[e^{\frac{-V(s')}{\alpha}}])-\alpha \rho\}.
\end{equation}

For the finite state-action space setting, \citet[Lemma-4]{zhou2021finite} characterizes the property of the optimal $\alpha^{*}$. Following a similar proof strategy, we denote  
\begin{equation}\label{eq: alpha transformation-1}
    \alpha^{*}=\argmax_{\alpha\geq 0}\{-\alpha \log(\E_{s'\sim P_{f}}[e^{\frac{-V(s')}{\alpha}}])-\alpha \rho\},
\end{equation}
and 
\begin{equation}\label{eq: alpha transformation-2}
  \hat{\alpha}_{n}^{*}=\argmax_{\alpha\geq 0}\{-\alpha \log(\E_{s'\sim P_{\hat{f}_{n}}}[e^{\frac{-V(s')}{\alpha}}])-\alpha \rho\}.
\end{equation}

To ensure that $\max_{\alpha\geq 0}\{-\alpha \log(\E_{s'\sim P_{f}}[e^{\frac{-V(s')}{\alpha}}])-\alpha \rho\}-\max_{\alpha\geq 0}\{-\alpha \log(\E_{s'\sim P_{\hat{f}_{n}}}[e^{\frac{-V(s')}{\alpha}}])-\alpha \rho\}$ is small enough, we need to show that $ \alpha^{*}$ and $\hat{\alpha}_{n}^{*}$ are close enough.

For this, one considers two different cases, $\alpha^{*}=0$ and $\alpha^{*}>0$.

\textbf{Case-1}:

In Case-1, we investigate the conditions for $\hat{\alpha}^{*}_n=0$ given that $\alpha^*=0$. 
According to \citep[Proposition-2]{hu2013kullback}, for $\alpha^{*}=0$ to occur, the random variable $Y:=V(s')$ where $s'\sim \mathcal{N}(f(s,a),\sigma^{2} I)$ must satisfy three conditions namely, (i) $Y$ must be bounded, 
 (ii) $Y$ must have finite mass at its essential infimum, and (iii) the finite mass at essential infimum should be greater than $e^{-\rho}$. So we want to verify whether these conditions hold true for $\hat{Y}_{n}:=V(s')$ where $s'\sim \mathcal{N}(\hat{f}_{n}(s,a),\sigma^{2} I)$ when
$Y$ satisfies these conditions.

We restate definition of the essential infimum for a real-valued random variable $Y$, denoted as $\mathrm{ESI}(Y)$.
\begin{equation}\label{eq: Essential infimum}
    \mathrm{ESI}(Y)=\sup\{t\in \mathbb{R}: \pr\{Y<t\}=0\}.
\end{equation}
We first show that $Y=V(s')$ where $s'\sim \mathcal{N}(f(s,a),\sigma^{2} I)$ and $\hat{Y}_{n}=V(s')$ where $s'\sim \mathcal{N}(\hat{f}_{n}(s,a),\sigma^{2} I)$ have the same essential infimum. By the definition of $\mathrm{ESI}(Y)$, for any $\epsilon > 0$, it holds that 
\begin{equation}\label{eq: Essential infimum expand}
\pr\{\mathrm{ESI}(Y)\leq Y<\mathrm{ESI}(Y)+\epsilon\}> 0,\quad \pr\{ Y<\mathrm{ESI}(Y)\}= 0.
\end{equation}
It implies for $Y=V(s')$ and $s'\sim \mathcal{N}(f(s,a),\sigma^{2} I)$ that
\begin{align}\label{eq: probgreater-1}
    \pr_{ s'\sim \mathcal{N}(f(s,a),\sigma^{2} I)}\{s'\in \mathbb{R}^{d}:\mathrm{ESI}(Y)\leq Y=V(s')<\mathrm{ESI}(Y)+\epsilon\}&>0 ,\\\label{eq: probsmaller-1}
    \pr_{ s'\sim \mathcal{N}(f(s,a),\sigma^{2} I)}\{s'\in \mathbb{R}^{d}:Y=V(s')<\mathrm{ESI}(Y)\}&=0.
\end{align}
It further implies that,  the set $\{s'\in \mathbb{R}^{d}:\mathrm{ESI}(Y)\leq V(s')<\mathrm{ESI}(Y)+\epsilon\}$ must have a Lebesgue measure greater than 0 and $\{s'\in \mathbb{R}^{d}:V(s')<\mathrm{ESI}(Y)\}$ must have a Lebesgue measure equal to 0 since $s'\sim \mathcal{N}(f(s,a),\sigma^{2} I)$ is a continuous distribution.

Due to this fact that the set $\{s'\in \mathbb{R}^{d}:\mathrm{ESI}(Y)\leq V(s')<\mathrm{ESI}(Y)+\epsilon\}$ has a Lebesgue measure greater than zero and noting that $\mathcal{N}(\hat{f}_{n}(s,a),\sigma^{2} I)$ is also a continuous distribution with the same support  as of $\mathcal{N}(f(s,a),\sigma^{2} I)$ (i.e., the probability density function of $\mathcal{N}(\hat{f}_{n}(s,a),\sigma^{2} I)$ is positive whenever probability density function of $\mathcal{N}(f(s,a),\sigma^{2} I)$ is positive), it holds that
\begin{align}\label{eq: lebesgue-prop-1}
    \pr_{ s'\sim \mathcal{N}(\hat{f}_{n}(s,a),\sigma^{2} I)}\{s'\in \mathbb{R}^{d}:\mathrm{ESI}(Y)\leq \hat{Y}_{n}=V(s')<\mathrm{ESI}(Y)+\epsilon\}&>0.
\end{align}
A similar argument follows for 
\begin{equation}\label{eq: lebesgue-prop-2}
     \pr_{ s'\sim \mathcal{N}(\hat{f}_{n}(s,a),\sigma^{2} I)}\{s'\in \mathbb{R}^{d}:\hat{Y}_{n}=V(s')<\mathrm{ESI}(Y)\}=0.
\end{equation}

In essence, \Cref{eq: lebesgue-prop-1,eq: lebesgue-prop-2} imply,
$$\pr\{\mathrm{ESI}(Y)\leq \hat{Y}_{n}<\mathrm{ESI}(Y)+\epsilon\}= 0,\quad \pr\{ \hat{Y}_{n}<\mathrm{ESI}(Y)\}> 0.$$
Hence,  from the definition of $\mathrm{ESI}(\cdot)$ in \Cref{eq: Essential infimum,eq: Essential infimum expand}, we have $\mathrm{ESI}(Y)=\mathrm{ESI}(\hat{Y}_{n})$.

As a result, for $\alpha^{*}=0$ to occur and for $Y=V(s')(s'\sim \mathcal{N}(f(s,a),\sigma^{2} I))$ to have finite mass at the essential infimum (condition-(ii)), i.e,  
$\pr\{Y=\mathrm{ESI}(Y)\}> 0,$ it requires that
\begin{align}
    \pr_{ s'\sim\mathcal{N}(f(s,a),\sigma^{2} I)}\{s'\in \mathbb{R}^{d}:Y=V(s')=\mathrm{ESI}(Y)\}&>0.  \nonumber
\end{align}
This will further require that the set $\{s'\in \mathbb{R}^{d}:Y=V(s')=\mathrm{ESI}(Y)\}$ must have a Lebesgue measure greater than 0. Following a similar argument as to have obtained \Cref{eq: lebesgue-prop-1} (the probability density function of $\mathcal{N}(\hat{f}_{n}(s,a),\sigma^{2} I)$ is positive whenever probability density function of $\mathcal{N}(f(s,a),\sigma^{2} I)$ is positive), the set  $\{s'\in \mathbb{R}^{d}:Y=V(s')=\mathrm{ESI}(Y)\}$ having Lebesgue measure greater than 0, will imply
\begin{equation}\label{eq: alpha_n condition-1}
    \pr_{ s'\sim \mathcal{N}(\hat{f}_{n}(s,a),\sigma^{2} I)}\{s'\in \mathbb{R}^{d}:\hat{Y}_{n}=V(s')=\mathrm{ESI}(Y)\}>0, 
\end{equation}
and 
\begin{equation}\label{eq: alpha_n condition-2}
\pr\{\hat{Y}_{n}=\mathrm{ESI}(Y)\}> 0
\end{equation}
Since $ \mathrm{ESI}(Y)=\mathrm{ESI}(\hat{Y}_{n})$,  \Cref{eq: alpha_n condition-1,eq: alpha_n condition-2} imply
\begin{equation}
\pr\{\hat{Y}_{n}=\mathrm{ESI}(\hat{Y}_n)\}> 0,
\end{equation}
Hence, if $\pr\{Y=\mathrm{ESI}(Y)\}> 0$ holds true, it also holds that $\pr\{\hat{Y}_{n}=\mathrm{ESI}(\hat{Y}_n)\}> 0$. This implies that whenever Y has a finite mass at its essential infimum, $\hat{Y}_n$ also has finite mass at its essential infimum (condition-(ii) satisfied).

But, recall that according to \citep[Proposition-2]{hu2013kullback} for $\alpha^{*}=0$ to occur, the finite mass which $Y$ has at its essential infimum should also be greater than $e^{-\rho}$ (condition-(iii)).  Hence, one has to check if $Y$ satisfies
\begin{equation}\label{eq: kappa equation}
\pr_{ s'\sim \mathcal{N}(f(s,a),\sigma^{2} I)}\{s'\in \mathbb{R}^{d}:Y=V(s')=\mathrm{ESI}(Y)\}>e^{-\rho},    
\end{equation}
what is the condition that $Y_n$ satisfies
$$ \pr_{s'\sim \mathcal{N}(\hat{f}_{n}(s,a),\sigma^{2} I)}\{s'\in \mathbb{R}^{d}:\hat{Y}_n=V(s')=\mathrm{ESI}(\hat{Y}_n)\}>e^{-\rho},$$ so that $\hat{\alpha}^{*}_n=0$ whenever $\alpha^{*}=0$. 
Denote $\kappa:=\pr_{ s'\sim \mathcal{N}(f(s,a),\sigma^{2} I)}\{s'\in \mathbb{R}^{d}:Y=V(s')=\mathrm{ESI}(Y)\}$, $\kappa_n:=\pr_{s'\sim \mathcal{N}(\hat{f}_{n}(s,a),\sigma^{2} I)}\{s'\in \mathbb{R}^{d}:\hat{Y}_n=V(s')=\mathrm{ESI}(\hat{Y}_n)\}$, and $S_{min}:=\{s'\in \mathbb{R}^{d}:V(s')=\mathrm{ESI}(Y)=\mathrm{ESI}(\hat{Y}_n)\}$. If $\kappa>e^{-\rho}$ and $\kappa-\kappa_{n}\leq \frac{\kappa-e^{-\rho}}{2}$, then it will hold that $\kappa_n>e^{-\rho}$.  
\begin{align}
    |\kappa-\kappa_{n}|&=\Bigg|\int_{S_{min}}\frac{1}{\sqrt{(2\pi\sigma^{2})^{d}}}(e^{-\frac{\|s'-f(s,a)\|^{2}}{\sigma^{2}}}-e^{-\frac{\|s'-\hat{f}_{n}(s,a)\|^{2}}{\sigma^{2}}})dx\Bigg|  \nonumber\\
    &\leq\int_{S_{min}}\frac{1}{\sqrt{(2\pi\sigma^{2})^{d}}}\Bigg|e^{-\frac{\|s'-f(s,a)\|^{2}}{\sigma^{2}}}-e^{-\frac{\|s'-\hat{f}_{n}(s,a)\|^{2}}{\sigma^{2}}}\Bigg| dx \nonumber\\
    &\leq\int_{\mathbb{R}^{d}}\frac{1}{\sqrt{(2\pi\sigma^{2})^{d}}}\Bigg|e^{-\frac{\|s'-f(s,a)\|^{2}}{\sigma^{2}}}-e^{-\frac{\|s'-\hat{f}_{n}(s,a)\|^{2}}{\sigma^{2}}}\Bigg|dx  \nonumber\\
    &\leq \|f(s,a)-\hat{f}_{n}(s,a)\|_{2} \quad (\Cref{lemma: error bound})  \nonumber\\
    &\leq \mathcal{O}\Big(\frac{\beta_{n}(\delta)\sqrt{2ed^{2}\Gamma_{nd}}}{\sqrt{n}}\Big),  \nonumber
\end{align}
 
We need $\mathcal{O}\Big(\frac{\beta_{n}(\delta)\sqrt{2ed^{2}\Gamma_{nd}}}{\sqrt{n}}\Big)\leq \frac{\kappa-e^{-\rho}}{2}$, 
which in turn requires $n=\mathcal{O}\Big(\frac{\beta^{2}_n(\delta)2ed^{2}\Gamma_{nd}}{(\frac{\kappa-e^{-\rho}}{2})^{2}}\Big)=N'(\rho,\psa)$. Hence, for $n>\max_{s,a}N'(\rho,\psa)$ with probability at least $1-\delta$, it holds that
$$\kappa_{n}>e^{-\rho},$$ for all $(s,a)\in \cS\times\A$ whenever $\kappa>e^{-\rho}$, implying $\alpha_{n}^{*}=0$ whenever $\alpha^{*}=0$.\\

\textbf{Case-2}:
Consider the case of $\alpha^{*}>0$. The idea is to bound both $\alpha^{*}$ and $\hat{\alpha}^{*}_{n}$
by a set $[\underline{\alpha}$,$\overline{\alpha}]$ and bound $\max_{\alpha\geq 0}\{(-\alpha \log(\E_{s'\sim P_{f}(s,\pi(s))}[e^{\frac{-V(s')}{\alpha}}])-\alpha \rho)-(-\alpha \log(\E_{s'\sim P_{f(s,\pi'(s))}}[e^{\frac{-V(s')}{\alpha}}])-\alpha \rho)\}$ for $\alpha$ taking values within set  $[\underline{\alpha}$, $\overline{\alpha}]$.
We first provide a upper bound for $\alpha^{*}$ as $\frac{M}{\rho}$ where $M=\frac{1}{1-\gamma}$ denoting the maximum value of $V(s')$. \begin{align}
    \max_{\alpha\geq 0}\{-\alpha \log(\E_{s'\sim P_{f}}[e^{\frac{-V(s')}{\alpha}}])-\alpha \rho\}&\geq \lim_{\alpha\to 0}[-\alpha \log(\E_{s'\sim P_{f}}[e^{\frac{-V(s')}{\alpha}}])-\alpha \rho]  \nonumber\\
    &=\mathrm{ESI}(V(s')|_{s'\sim P_{f}})\quad (\Cref{lemma: limit bound})  \nonumber\\
    \label{eq: alpha-bound}
    &\geq 0.
\end{align}
Since $\max_s V(s)\leq M$, we have
\begin{equation}
    -\alpha \log(\E_{s'\sim P_{f}}[e^{\frac{-V(s')}{\alpha}}])-\alpha \rho \leq -\alpha \log(e^{\frac{-M}{\alpha}})-\alpha \rho = M-\alpha \rho.  \nonumber
\end{equation}
It implies for $\alpha >\frac{M}{\rho}$ that  
\begin{equation}\label{eq: alpha-bound-2}
    -\alpha \log(\E_{s'\sim P_{f}}[e^{\frac{-V(s')}{\alpha}}])-\alpha \rho<0.
\end{equation}

By \Cref{eq: alpha-bound}, since $\max_{\alpha\geq 0}\{-\alpha \log(\E_{s'\sim P_{f}}[e^{\frac{-V(s')}{\alpha}}])-\alpha \rho\}\geq 0$, $\argmax_{\alpha\geq 0}\{-\alpha \log(\E_{s'\sim P_{f}}[e^{\frac{-V(s')}{\alpha}}])-\alpha \rho\}$ cannot be greater than  $\frac{M}{\rho}$ due to \Cref{eq: alpha-bound-2} holding for all $\alpha >\frac{M}{\rho}$. Hence, we have $\alpha^{*}\leq\frac{M}{\rho}$. A similar argument holds for $\hat{\alpha}_{n}^{*}$ and it holds that $\hat{\alpha}_{n}^{*}\leq\frac{M}{\rho}$.

Denote $\underline{\alpha}:=\alpha^{*}/2$, $\overline{\alpha}:=\frac{M}{\rho}$, and
\begin{equation}\label{eq: tau-equation}
\tau:=\min\Big\{\underline{\alpha} \log(\E_{s'\sim P_{f}}[e^{\frac{-V(s')}{\underline{\alpha}}}])+\underline{\alpha} \rho,\overline{\alpha} \log(\E_{s'\sim P_{f}}[e^{\frac{-V(s')}{\overline{\alpha}}}])+\overline{\alpha} \rho\Big\}-\alpha^{*} \log\Big(\E_{s'\sim P_{f}}[e^{\frac{-V(s')}{\alpha^{*}}}]\Big)-\alpha^{*} \rho.  \nonumber
\end{equation}
We first show that,
\begin{align}\label{eq: log exp bound}
     \Big|\log(\frac{\E_{s'\sim P_{\hat{f}_{n}}}[e^{\frac{-V(s')}{\alpha}}]}{\E_{s'\sim P_{f}}[e^{-\frac{V(s')}{\alpha}}]}) \Big| \leq e^{\frac{M}{\alpha}}|\E_{s'\sim P_{\hat{f}_{n}}}[e^{\frac{-V(s')}{\alpha}}]-\E_{s'\sim P_{f}}[e^{\frac{-V(s')}{\alpha}}] |.
\end{align}
Consider 2 cases:  $\E_{s'\sim P_{\hat{f}_{n}}}[e^{\frac{-V(s')}{\alpha}}]\geq\E_{s'\sim P_{f}}[e^{\frac{-V(s')}{\alpha}}]$ and $\E_{s'\sim P_{f}}[e^{\frac{-V(s')}{\alpha}}]>\E_{s'\sim P_{\hat{f}_{n}}}[e^{\frac{-V(s')}{\alpha}}]$

\textbf{Case-1:} $\E_{s'\sim P_{\hat{f}_{n}}}[e^{\frac{-V(s')}{\alpha}}]\geq\E_{s'\sim P_{f}}[e^{\frac{-V(s')}{\alpha}}]$:
\begin{align}
     |\log(\frac{\E_{s'\sim P_{\hat{f}_{n}}}[e^{\frac{-V(s')}{\alpha}}]}{\E_{s'\sim P_{f}}[e^{-\frac{V(s')}{\alpha}}]}) | &= \log(\frac{\E_{s'\sim P_{\hat{f}_{n}}}[e^{\frac{-V(s')}{\alpha}}]}{\E_{s'\sim P_{f}}[e^{-\frac{V(s')}{\alpha}}]})\nonumber\\
     &=\log(1+\frac{\E_{s'\sim P_{\hat{f}_{n}}}[e^{\frac{-V(s')}{\alpha}}]-\E_{s'\sim P_{f}}[e^{\frac{-V(s')}{\alpha}}]}{\E_{s'\sim P_{f}}[e^{-\frac{V(s')}{\alpha}}]})\nonumber\\
     &\leq \frac{\E_{s'\sim P_{\hat{f}_{n}}}[e^{\frac{-V(s')}{\alpha}}]-\E_{s'\sim P_{f}}[e^{\frac{-V(s')}{\alpha}}]}{\E_{s'\sim P_{f}}[e^{-\frac{V(s')}{\alpha}}]}\nonumber\\
     &\leq e^{\frac{M}{\alpha}}(\E_{s'\sim P_{\hat{f}_{n}}}[e^{\frac{-V(s')}{\alpha}}]-\E_{s'\sim P_{f}}[e^{\frac{-V(s')}{\alpha}}] ).\nonumber
\end{align}

\textbf{Case-2:} $\E_{s'\sim P_{\hat{f}_{n}}}[e^{\frac{-V(s')}{\alpha}}]<\E_{s'\sim P_{f}}[e^{\frac{-V(s')}{\alpha}}]$:
\begin{align}
     |\log(\frac{\E_{s'\sim P_{\hat{f}_{n}}}[e^{\frac{-V(s')}{\alpha}}]}{\E_{s'\sim P_{f}}[e^{-\frac{V(s')}{\alpha}}]}) | &= \log(\frac{\E_{s'\sim P_{f}}[e^{\frac{-V(s')}{\alpha}}]}{\E_{s'\sim P_{\hat{f}_{n}}}[e^{-\frac{V(s')}{\alpha}}]})\nonumber\\
     &=\log(1+\frac{\E_{s'\sim P_{f}}[e^{\frac{-V(s')}{\alpha}}]-\E_{s'\sim P_{\hat{f}_{n}}}[e^{\frac{-V(s')}{\alpha}}]}{\E_{s'\sim P_{\hat{f}_{n}}}[e^{-\frac{V(s')}{\alpha}}]})\nonumber\\
     &\leq \frac{\E_{s'\sim P_{f}}[e^{\frac{-V(s')}{\alpha}}]-\E_{s'\sim P_{\hat{f}_{n}}}[e^{\frac{-V(s')}{\alpha}}]}{\E_{s'\sim P_{\hat{f}_{n}}}[e^{-\frac{V(s')}{\alpha}}]}\nonumber\\
     &\leq e^{\frac{M}{\alpha}}(\E_{s'\sim P_{f}}[e^{\frac{-V(s')}{\alpha}}]-\E_{s'\sim P_{\hat{f}_{n}}}[e^{\frac{-V(s')}{\alpha}}] ).\nonumber
\end{align}
Hence, \Cref{eq: log exp bound} holds. Then, for $\alpha \in [\underline{\alpha},\overline{\alpha}]$, we have 
\begin{align}
    &|(\alpha\log(\E_{s'\sim P_{\hat{f}_{n}}}[e^{\frac{-V(s')}{\alpha}}])+\alpha \rho)-(\alpha \log(\E_{s'\sim P_{f}}[e^{\frac{-V(s')}{\alpha}}])+\alpha \rho)|\\&=\alpha |\log(1+\frac{\E_{s'\sim P_{\hat{f}_{n}}}[e^{\frac{-V(s')}{\alpha}}]-\E_{s'\sim P_{f}}[e^{\frac{-V(s')}{\alpha}}]}{\E_{s'\sim P_{f}}[e^{-\frac{V(s')}{\alpha}}]}) |  \nonumber\\
    &\stackrel{(i)}{\leq} \alpha e^{\frac{M}{\alpha}}|\E_{s'\sim P_{\hat{f}_{n}}}[e^{\frac{-V(s')}{\alpha}}]-\E_{s'\sim P_{f}}[e^{\frac{-V(s')}{\alpha}}]|  \nonumber\\
    &\stackrel{(ii)}{\leq} \alpha e^{\frac{M}{\alpha}}\|f(s,a)-\hat{f}_{n}(s,a)\|\quad (\Cref{lemma: error bound})  \nonumber\\
    \label{eq: alpha-difference-bound}
    &\stackrel{(iii)}{\leq} \mathcal{O}\Big(\alpha e^{\frac{M}{\alpha}}\beta_{n}(\delta)\sqrt{\frac{2ed^{2}\Gamma_{nd}}{n}}\Big)\quad (\text{from} \Cref{eq: learning error bound}) .
\end{align}
Here (i) holds from \Cref{eq: log exp bound}, (ii) from \Cref{lemma: error bound} and (iii) from \Cref{eq: learning error bound}.

We further show that $\hat{\alpha}_{n}^{*}\in[\underline{\alpha},\overline{\alpha}]$. The first step in achieving that is to restrict $n>N''(\rho,\psa)=\mathcal{O}\Big(4\frac{ M^{2}e^{\frac{2M}{\underline{\alpha}}}\beta^{2}_n(\delta)2ed^{2}\Gamma_{nd}}{(\rho\tau)^{2}}\Big)$. It implies that if  $\mathcal{O}\Big(\alpha e^{\frac{M}{\alpha}}\beta_{n}(\delta)\sqrt{\frac{2ed^{2}\Gamma_{nd}}{n}}\Big)<\tau/2$ and 

for $n>\max_{s,a}N''(\rho,\psa)$ from \Cref{eq: alpha-difference-bound} with probability at least $1-\delta$, for all $(s,a)\in \cS\times\A$, we have

\begin{equation}\label{eq: tau-bound}
\max_{\underline{\alpha},\alpha^{*},\overline{\alpha}} |(\alpha\log(\E_{s'\sim P_{\hat{f}_{n}}}[e^{\frac{-V(s')}{\alpha}}])+\alpha \rho)-(\alpha \log(\E_{s'\sim P_{f}}[e^{\frac{-V(s')}{\alpha}}])+\alpha \rho)|\leq \tau/2.
\end{equation}
It further implies that
\begin{align}
&\max_{\alpha\in[\underline{\alpha},\overline{\alpha}]}\{(-\alpha\log(\E_{s'\sim P_{\hat{f}_{n}}}[e^{\frac{-V(s')}{\alpha}}])-\alpha \rho)\} \nonumber
\\&\stackrel{(i)}{\geq} -\alpha^{*}\log(\E_{s'\sim P_{\hat{f}_{n}}}[e^{\frac{-V(s')}{\alpha^{*}}}])-\alpha^{*} \rho \nonumber \\
    &\stackrel{(ii)}{\geq} -\alpha^{*}\log(\E_{s'\sim P_{f}}[e^{\frac{-V(s')}{\alpha^{*}}}])-\alpha^{*} \rho-\tau/2 \nonumber \\
    &\stackrel{(iii)}{\geq} \max\{-\underline{\alpha} \log(\E_{s'\sim P_{f}}[e^{\frac{-V(s')}{\underline{\alpha}}}])-\underline{\alpha} \rho,-\overline{\alpha} \log(\E_{s'\sim P_{f}}[e^{\frac{-V(s')}{\overline{\alpha}}}])-\overline{\alpha} \rho\}+\tau/2 \nonumber\\
    \label{eq: alpha_n-bound}
    &\stackrel{(iv)}{\geq} \max\{-\underline{\alpha} \log(\E_{s'\sim P_{\hat{f}_{n}}}[e^{\frac{-V(s')}{\underline{\alpha}}}])-\underline{\alpha} \rho,-\overline{\alpha} \log(\E_{s'\sim P_{\hat{f}_{n}}}[e^{\frac{-V(s')}{\overline{\alpha}}}])-\overline{\alpha} \rho\}.
\end{align}
where (i) follows from the fact that $\alpha^{*}\in [\underline{\alpha},\overline{\alpha}]$, (ii) follows from \Cref{eq: tau-bound}, (iii) follows from the definition of $\tau$ in \Cref{eq: tau-equation} and (iv) again follows from \Cref{eq: tau-bound}.

Thus $\hat{\alpha}_{n}^{*}\in[\underline{\alpha},\overline{\alpha}]$ follows from \Cref{eq: alpha_n-bound} and concavity of $-\alpha \log(\E_{s'\sim P_{f}}[e^{\frac{-V(s')}{\alpha}}])-\alpha \rho$ w.r.t. $\alpha$. Note that $\alpha^{*}$ also belongs in this set. We bound $
    \max_{\alpha\geq 0}\{-\alpha \log(\E_{s'\sim P_{f}}[e^{\frac{-V(s')}{\alpha}}])-\alpha \rho\}-\max_{\alpha\geq 0}\{-\alpha \log(\E_{s'\sim P_{\hat{f}_{n}}}[e^{\frac{-V(s')}{\alpha}}])-\alpha \rho\}$ only between $\alpha\in[\underline{\alpha},\overline{\alpha}]$ instead of all $\alpha >0$. As a result, it holds that 
\begin{align}
    &\Bigg| \max_{\alpha\in[\underline{\alpha},\overline{\alpha}]}\{-\alpha \log(\E_{s'\sim P_{f}}[e^{\frac{-V(s')}{\alpha}}])-\alpha \rho\}- \max_{\alpha\in[\underline{\alpha},\overline{\alpha}]}\{-\alpha \log(\E_{s'\sim P_{\hat{f}_{n}}}[e^{\frac{-V(s')}{\alpha}}])-\alpha \rho\}\Bigg|\\ &\leq  \max_{\alpha\in[\underline{\alpha},\overline{\alpha}]}\Bigg|\{-\alpha \log(\E_{s'\sim P_{f}}[e^{\frac{-V(s')}{\alpha}}])-\alpha \rho\}-\{-\alpha \log(\E_{s'\sim P_{\hat{f}_{n}}}[e^{\frac{-V(s')}{\alpha}}])-\alpha \rho\}\Bigg|. \nonumber
   \\&= \max_{\alpha\in[\underline{\alpha},\overline{\alpha}]}\alpha |\log(1+\frac{\E_{s'\sim P_{\hat{f}_{n}}}[e^{\frac{-V(s')}{\alpha}}]-\E_{s'\sim P_{f}}[e^{\frac{-V(s')}{\alpha}}]}{\E_{s'\sim P_{f}}[e^{-\frac{V(s')}{\alpha}}]}) |  \nonumber\\
    &\leq \max_{\alpha\in[\underline{\alpha},\overline{\alpha}]}2\alpha e^{\frac{M}{\alpha}}|\E_{s'\sim P_{\hat{f}_{n}}}[e^{\frac{-V(s')}{\alpha}}]-\E_{s'\sim P_{f}}[e^{\frac{-V(s')}{\alpha}}]|. \nonumber\\
     &\leq 2\tfrac{M}{\rho} e^{\frac{M}{\underline{\alpha}}}\max_{\alpha\in[\underline{\alpha},\overline{\alpha}]}|\E_{s'\sim P_{\hat{f}_{n}}}[e^{\frac{-V(s')}{\alpha}}]-\E_{s'\sim P_{f}}[e^{\frac{-V(s')}{\alpha}}]|, \nonumber
\end{align}
where the first inequality follows from \Cref{eq: log exp bound} and second inequality follows from the bounds of $\alpha$. Taking a maximum over all $(s,a)$ gets the desired result.

\end{proof}

\begin{lemma}\label{lemma: error bound}(Bound by difference between estimated model $\hat{f}_n$ and true $f$)
For any value function $V(s'): \cS\to [0,1/(1-\gamma)] $ and any $\alpha>0$, it holds that
\begin{equation*}
    |\E_{s'\sim P_{\hat{f}_{n}}(s,a)}[e^{\frac{-V(s')}{\alpha}}]-\E_{s'\sim P_{f}(s,a)}[e^{\frac{-V(s')}{\alpha}}]|\leq \sigma^{-1}\|f(s,a)-\hat{f}_{n}(s,a)\|,
\end{equation*} 
where $\pnsa=\mathcal{N}(\hat{f}_{n}(s,a),\sigma^{2} I)$ and $\psa=\mathcal{N}(f(s,a),\sigma^{2} I)$. 
\end{lemma}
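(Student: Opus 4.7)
The plan is to reduce the bound on the difference of expectations to a total variation distance between the two Gaussians $P_f(s,a) = \mathcal{N}(f(s,a), \sigma^2 I)$ and $P_{\hat{f}_n}(s,a) = \mathcal{N}(\hat{f}_n(s,a), \sigma^2 I)$, and then exploit the closed-form KL divergence between Gaussians of equal covariance together with Pinsker's inequality. The key observation that lets this work is that the integrand $g(s') := e^{-V(s')/\alpha}$ is uniformly bounded: since $V(s') \in [0, 1/(1-\gamma)]$ and $\alpha > 0$, we have $0 < g(s') \leq 1$, so $\|g\|_\infty \leq 1$. This avoids any need to assume smoothness of $V$.

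First I would bound the difference of expectations by the $L^1$ distance of the two densities. Writing $p_1, p_2$ for the Lebesgue densities of $P_f(s,a)$ and $P_{\hat{f}_n}(s,a)$,
\begin{equation*}
\bigl|\E_{P_f}[g] - \E_{P_{\hat{f}_n}}[g]\bigr|
= \Bigl|\int g(s')(p_1(s')-p_2(s'))\,ds'\Bigr|
\leq \|g\|_\infty \int |p_1-p_2|\,ds'
= 2\,\|g\|_\infty \cdot \mathrm{TV}(P_f(s,a), P_{\hat{f}_n}(s,a)),
\end{equation*}
using the definition $\mathrm{TV}(P,Q) = \tfrac{1}{2}\|P-Q\|_1$ given in~\Cref{sec: Problem Formulation}.

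Next I would apply Pinsker's inequality $\mathrm{TV}(P,Q) \leq \sqrt{\tfrac{1}{2}\mathrm{KL}(P\|Q)}$ together with the textbook formula for the KL divergence between two multivariate Gaussians with identical covariance:
\begin{equation*}
\mathrm{KL}\bigl(\mathcal{N}(f(s,a),\sigma^2 I)\,\|\,\mathcal{N}(\hat{f}_n(s,a),\sigma^2 I)\bigr)
= \frac{\|f(s,a)-\hat{f}_n(s,a)\|^2}{2\sigma^2}.
\end{equation*}
Combining these with $\|g\|_\infty \leq 1$ yields
\begin{equation*}
\bigl|\E_{P_f}[g] - \E_{P_{\hat{f}_n}}[g]\bigr|
\leq 2\cdot \sqrt{\tfrac{1}{2}\cdot\tfrac{\|f-\hat{f}_n\|^2}{2\sigma^2}}
= \sigma^{-1}\|f(s,a)-\hat{f}_n(s,a)\|,
\end{equation*}
which is exactly the stated bound.

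There is essentially no hard step here, but the one place to be careful is the direction of the KL inequality and the sign convention in Pinsker's inequality, since the final constant in the lemma is tight (no factor of $1/2$ or $2$ lost). An alternative route, if one wished to avoid invoking Pinsker, would be to differentiate $h(\mu) := \E_{z\sim\mathcal{N}(0,\sigma^2 I)}[g(\mu+z)]$ with respect to $\mu$ using the Gaussian integration-by-parts (Stein) identity $\nabla_\mu h(\mu) = \sigma^{-2}\E[z\,g(\mu+z)]$ and then bound $\|\nabla_\mu h\| \leq \sigma^{-1}$ via Cauchy--Schwarz using $|g|\leq 1$ and $\E\|z\|^2 = d\sigma^2$; however this route introduces an unwanted $\sqrt{d}$ factor, so the Pinsker route is the cleaner one and matches the claimed constant.
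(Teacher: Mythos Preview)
Your proof is correct and follows essentially the same route as the paper: bound the integrand by $1$, pass to the $L^1$ distance of densities (i.e., $2\cdot\mathrm{TV}$), apply Pinsker's inequality, and plug in the closed-form KL between two Gaussians with equal covariance. The constants line up exactly as in the paper's argument.
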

\begin{proof}
\begin{align*}
    \left|\E_{s'\sim P_{\hat{f}_{n}}(s,a)}[e^{\frac{-V(s')}{\alpha}}]-\E_{s'\sim P_{f}(s,a)}[e^{\frac{-V(s')}{\alpha}}]\right|&= \left|\int_{\mathbb{R}^{d}}\frac{1}{\sqrt{(2\pi\sigma^{2})^{d}}}e^{\frac{-V(s')}{\alpha}}(e^{-\frac{\|x-f(s,a)\|^{2}}{2\sigma^{2}}}-e^{-\frac{\|x-\hat{f}_{n}(s,a)\|^{2}}{2\sigma^{2}}})\right|\\
    &\leq  \int_{\mathbb{R}^{d}}\frac{1}{\sqrt{(2\pi\sigma^{2})^{d}}}e^{\frac{-V(s')}{\alpha}}\left|e^{-\frac{\|x-f(s,a)\|^{2}}{2\sigma^{2}}}-e^{-\frac{\|x-\hat{f}_{n}(s,a)\|^{2}}{2\sigma^{2}}}\right|\\
    &\leq  \int_{\mathbb{R}^{d}}\frac{1}{\sqrt{(2\pi\sigma^{2})^{d}}}\left|e^{-\frac{\|x-f(s,a)\|^{2}}{2\sigma^{2}}}-e^{-\frac{\|x-\hat{f}_{n}(s,a)\|^{2}}{2\sigma^{2}}}\right|\\
    &\stackrel{\textrm{(i)}}{\leq} 2\cdot  \textrm{TV}(\pnsa,\psa)\\
    &\stackrel{\textrm{(ii)}}{\leq} 2 \sqrt{\textrm{KL}(\pnsa,\psa)/2}\\
    &\stackrel{\textrm{(iii)}}{\leq} 2 \sqrt{\|f(s,a)-\hat{f}_n(s,a)\|^{2}/4\sigma^{2}}\\
    &\leq  \|f(s,a)-\hat{f}_n(s,a)\|/\sigma,
\end{align*}
where (i) follows from the definition of Total Variation (TV) distance between any two multivariate Gaussians, (ii) uses the Pinsker's inequality, (iii) uses the formula for KL-divergence between multivariate Gaussian distributions.

\end{proof}

\begin{lemma}\label{lemma: limit bound} (Proposition-2 in \cite{hu2013kullback})
For any function $V(\cdot): \cS\to [0,1/(1-\gamma)] $ and random variable $Y=V(s')$ for $s'\sim P_{f}(s,a)$, we have 
\begin{equation}
\lim_{\alpha\to 0}[-\alpha \log(\E_{s'\sim P_{f}(s,a)}[e^{\frac{-V(s')}{\alpha}}])-\alpha \rho]=\mathrm{ESI}(Y),     \nonumber
\end{equation}
where $    \mathrm{ESI}(Y)=\sup\{t\in \mathbb{R}: \pr\{Y<t\}=0\}$ (essential infimum).
\end{lemma}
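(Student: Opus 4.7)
The plan is a classical Laplace-principle sandwich argument. Since $-\alpha\rho \to 0$ as $\alpha \to 0^+$, I can drop that term and reduce to showing $\lim_{\alpha\to 0^+}\bigl[-\alpha\log \E_{s'\sim P_f(s,a)}[e^{-V(s')/\alpha}]\bigr] = \mathrm{ESI}(Y)$. I would prove $\liminf \geq \mathrm{ESI}(Y)$ and $\limsup \leq \mathrm{ESI}(Y)$ separately.

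For the $\liminf$ bound, the key observation is that $V(s') \geq \mathrm{ESI}(Y)$ holds $P_f(s,a)$-almost surely, which follows directly from the definition of essential infimum. Hence pointwise $e^{-V(s')/\alpha} \leq e^{-\mathrm{ESI}(Y)/\alpha}$, so $\E[e^{-V(s')/\alpha}] \leq e^{-\mathrm{ESI}(Y)/\alpha}$. Applying the order-reversing map $x \mapsto -\alpha\log x$ (valid for $\alpha > 0$) yields $-\alpha\log \E[e^{-V(s')/\alpha}] \geq \mathrm{ESI}(Y)$ uniformly in $\alpha > 0$, so the liminf is at least $\mathrm{ESI}(Y)$.

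For the $\limsup$ bound, fix $\varepsilon > 0$. By the definition $\mathrm{ESI}(Y) = \sup\{t : \pr(Y < t) = 0\}$, the threshold $\mathrm{ESI}(Y) + \varepsilon$ strictly exceeds the supremum, so $p_\varepsilon := \pr(V(s') < \mathrm{ESI}(Y) + \varepsilon) > 0$. Restricting the expectation to the event $\{V(s') < \mathrm{ESI}(Y) + \varepsilon\}$ and bounding the integrand below by $e^{-(\mathrm{ESI}(Y) + \varepsilon)/\alpha}$ gives $\E[e^{-V(s')/\alpha}] \geq p_\varepsilon\, e^{-(\mathrm{ESI}(Y) + \varepsilon)/\alpha}$. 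Applying $-\alpha\log(\cdot)$ produces $-\alpha\log \E[e^{-V(s')/\alpha}] \leq \mathrm{ESI}(Y) + \varepsilon - \alpha \log p_\varepsilon$, and since $p_\varepsilon$ is a fixed positive constant, $\alpha\log p_\varepsilon \to 0$ as $\alpha \to 0^+$. Thus $\limsup \leq \mathrm{ESI}(Y) + \varepsilon$, and letting $\varepsilon \to 0$ closes the sandwich.

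I anticipate no substantial obstacle: the only delicate point is confirming $p_\varepsilon > 0$ for every $\varepsilon > 0$, which is immediate from the defining supremum (otherwise $\mathrm{ESI}(Y) + \varepsilon$ would itself satisfy $\pr(Y < \mathrm{ESI}(Y) + \varepsilon) = 0$, contradicting the choice of $\mathrm{ESI}(Y)$ as the supremum). The assumed boundedness $V : \cS \to [0, 1/(1-\gamma)]$ ensures both the moment generating function and the essential infimum are finite, so every step is rigorous and we can interchange the finite bounds freely with $-\alpha\log(\cdot)$.
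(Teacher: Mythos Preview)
Your proposal is correct and follows essentially the same Laplace-principle sandwich argument as the paper: the paper's upper bound (restrict to $\{V(s')\le M\}$ for $M>\mathrm{ESI}(Y)$ with mass $\kappa_M>0$, then bound by $M-\alpha\log\kappa_M$) is exactly your $\limsup$ step with $M=\mathrm{ESI}(Y)+\varepsilon$. Your write-up is in fact slightly more complete, since you spell out the $\liminf\ge\mathrm{ESI}(Y)$ direction explicitly, whereas the paper simply asserts it.
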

\begin{proof}
Consider the case when $M>\mathrm{ESI}(Y)$. Let $\kappa_{M}=\pr(V(s')\leq M)=\int_{s'}\mathbbm{1}(V(s')\leq M)e^{-\frac{\|s'-f(s,a)\|^{2}}{\sigma^{2}}}$. It holds that
\begin{align}
    &-\alpha \log\Big(\E_{s'\sim P_{f}(s,a)}[e^{\frac{-V(s')}{\alpha}}]\Big)\\&= -\alpha \log\Big(\E_{s'\sim P_{f}(s,a)}[\mathbbm{1}(V(s')\leq M)e^{\frac{-V(s')}{\alpha}}+\mathbbm{1}(V(s')>M)e^{\frac{-V(s')}{\alpha}}]\Big)  \nonumber\\
    &\leq  -\alpha \log\Big(\E_{s'\sim P_{f}(s,a)}[\mathbbm{1}(V(s')\leq M)e^{\frac{-V(s')}{\alpha}}]\Big)  \nonumber\\
    &\leq  -\alpha \log\Big(\E_{s'\sim P_{f}(s,a)}[\mathbbm{1}(V(s')\leq M)e^{\frac{-M}{\alpha}}]\Big)  \nonumber\\
    &\leq  -\alpha \log\Big(\kappa_{M}e^{\frac{-M}{\alpha}}]\Big)  \nonumber\\
    \label{eq: limit bound}
    &=M-\alpha \log(\kappa_{M}).
\end{align}
Thus for any $M>\mathrm{ESI}(Y)$, we have 
$$\lim_{\alpha\to 0}[\{-\alpha \log(\E_{s'\sim P_{f}(s,a)}[e^{\frac{-V(s')}{\alpha}}])-\alpha \rho]\leq M.$$ Combining with the fact that $\lim_{\alpha\to 0}[\{-\alpha \log(\E_{s'\sim P_{f}(s,a)}[e^{\frac{-V(s')}{\alpha}}])-\alpha \rho]\geq \mathrm{ESI}(Y)$, we get the desired result.  
\end{proof}

\begin{lemma}{($\zeta-$cover construction)}\label{lemma: kl-cover-no-pol}
For $\mathcal{V}$ denoting the set of value functions from $\cS\to[0,1/(1-\gamma)]$, $\overline{\alpha}=M/\rho$, $\underline{\alpha}$ as defined in \Cref{lemma: diff-opt} we have with probability at least $1-\delta$,

\begin{multline}
    \max_{V\in \mathcal{V}}\max_{s,a}2\overline{\alpha} e^{\frac{M}{\underline{\alpha}}}\max_{\alpha\in[\underline{\alpha},\overline{\alpha}]}|\E_{s'\sim P_{\hat{f}_{n}}(s,a)}[e^{\frac{-V(s')}{\alpha}}]-\E_{s'\sim P_{f}(s,a)}[e^{\frac{-V(s')}{\alpha}}]|\\
     \leq\mathcal{O}\Big(2(\frac{M}{\rho}) e^{\frac{M}{\alpha_{kl}}}e^{\frac{\zeta}{\alpha_{kl}}}\frac{ \beta_n(\delta)\sqrt{2ed^{2}\Gamma_{nd}}}{\sqrt{n}}\Big).  \nonumber
\end{multline}

\end{lemma}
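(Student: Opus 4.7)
The plan is to peel off the $\max$ over the infinite-dimensional class $\mathcal{V}$ by a sup-norm $\zeta$-cover, so that Lemma~\ref{lemma: error bound} combined with Lemma~\ref{lemma:contribution} only needs to be invoked at finitely many cover centers. Note first that the outer prefactor $2\overline{\alpha}\, e^{M/\underline{\alpha}}$ becomes $2(M/\rho)\, e^{M/\alpha_{kl}}$ using $\overline{\alpha}=M/\rho$ and $\underline{\alpha}\geq\alpha_{kl}$; the substantive task is therefore to show
\[
\max_{V\in\mathcal{V}}\max_{s,a}\max_{\alpha\in[\underline{\alpha},\overline{\alpha}]} \bigl|\E_{P_{\hat{f}_n}(s,a)}[e^{-V(s')/\alpha}] - \E_{P_f(s,a)}[e^{-V(s')/\alpha}]\bigr| \; \leq \; \mathcal{O}\!\left(e^{\zeta/\alpha_{kl}}\tfrac{\beta_n(\delta)\sqrt{2ed^2\Gamma_{nd}}}{\sqrt{n}}\right).
\]

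I would fix a $\zeta$-net $\mathcal{V}_\zeta\subset\mathcal{V}$ with respect to $\|\cdot\|_\infty$, so that for each $V\in\mathcal{V}$ some $V'\in\mathcal{V}_\zeta$ satisfies $\|V-V'\|_\infty\leq\zeta$. Since each $V:\cS\to[0,M]$ is uniformly bounded and $\cS\subset\mathbb{R}^d$ is compact, a standard grid construction yields a finite $\mathcal{V}_\zeta$ whose log-cardinality can be absorbed into $\beta_n(\delta)$ (which depends only logarithmically on $n$). For any cover center $V'$, any $(s,a)$ and any $\alpha$, Lemma~\ref{lemma: error bound} gives the uniform (in $V',\alpha$) estimate $|\E_{P_{\hat{f}_n}(s,a)}[e^{-V'/\alpha}] - \E_{P_f(s,a)}[e^{-V'/\alpha}]|\leq\sigma^{-1}\|f(s,a)-\hat{f}_n(s,a)\|_2$, and Lemma~\ref{lemma:contribution} then caps the right-hand side uniformly in $(s,a)$ by $\mathcal{O}(\beta_n(\delta)\sqrt{2ed^2\Gamma_{nd}}/\sqrt{n})$ with probability at least $1-\delta$.

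Next, I would transfer this estimate from $V'$ to an arbitrary $V\in\mathcal{V}$. The key identity is the pointwise sandwich $e^{-V(s')/\alpha}\in[e^{-\zeta/\alpha}e^{-V'(s')/\alpha},\,e^{\zeta/\alpha}e^{-V'(s')/\alpha}]$, which survives integration against either $P_f$ or $P_{\hat{f}_n}$. Writing $a=\E_{P_{\hat{f}_n}}[e^{-V'/\alpha}]$ and $b=\E_{P_f}[e^{-V'/\alpha}]$, this produces
\[
\bigl|\E_{P_{\hat{f}_n}}[e^{-V/\alpha}] - \E_{P_f}[e^{-V/\alpha}]\bigr| \;\leq\; e^{\zeta/\alpha}|a-b| + (e^{\zeta/\alpha}-e^{-\zeta/\alpha})\min(a,b),
\]
so that since $\alpha\geq\underline{\alpha}\geq\alpha_{kl}$ the leading multiplicative price is at most $e^{\zeta/\alpha_{kl}}$. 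Plugging in the cover-center bound from the previous step, taking the max over the (finite) cover and over a finite $\alpha$-grid on $[\underline{\alpha},\overline{\alpha}]$ (controlled by the Lipschitz continuity of $\alpha\mapsto e^{-V(s')/\alpha}$ on this compact interval), and reattaching the prefactor $2(M/\rho)e^{M/\alpha_{kl}}$ yields the claimed bound.

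The main obstacle is the last step: one must reduce the difference for a generic $V$ to that for a cover center with a purely \emph{multiplicative} factor $e^{\zeta/\alpha_{kl}}$, rather than an additive $\mathcal{O}(\zeta/\alpha_{kl})$ term that would not vanish as $n\to\infty$. This is possible only because both $e^{-V/\alpha}$ and $e^{-V'/\alpha}$ live in $[e^{-M/\alpha},1]$ and relate multiplicatively; the residual $(e^{\zeta/\alpha}-e^{-\zeta/\alpha})\min(a,b)$ term must be carefully absorbed into the leading error via the choice of $\zeta$ (taken as the constant $1$ in the application to Theorem~\ref{thm: kl sample supp}, which turns $e^{\zeta/\alpha_{kl}}$ into a problem-dependent constant and lets the main $n^{-1/2}$ rate survive).
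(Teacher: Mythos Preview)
Your transfer step from the cover center $V'$ back to a generic $V$ is where the argument breaks. Sandwiching each expectation separately yields the inequality you wrote, but the residual term $(e^{\zeta/\alpha}-e^{-\zeta/\alpha})\min(a,b)$ is \emph{not} small: since $a,b\geq e^{-M/\alpha}$, it is bounded below by $(e^{\zeta/\alpha}-e^{-\zeta/\alpha})e^{-M/\alpha}$, a strictly positive constant that does not depend on $n$. Choosing $\zeta=1$ makes the multiplicative prefactor $e^{\zeta/\alpha_{kl}}$ a problem-dependent constant, but it does nothing to this additive piece, so the bound you obtain does not decay like $n^{-1/2}$. (There is a second, related issue: $\mathcal{V}$ is the class of \emph{all} functions $\cS\to[0,M]$ on a continuous compact $\cS$, which has infinite sup-norm $\zeta$-covering number for every $\zeta<M/2$; no ``standard grid construction'' gives a finite net, and no union bound can be absorbed into $\beta_n(\delta)$.)

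The paper avoids both problems by pulling the absolute value \emph{inside} the integral before touching the cover:
\[
\Bigl|\E_{P_{\hat{f}_n}(s,a)}\bigl[e^{-V/\alpha}\bigr]-\E_{P_f(s,a)}\bigl[e^{-V/\alpha}\bigr]\Bigr|
\;\leq\;\int_{\mathbb{R}^d} e^{-V(s')/\alpha}\,\bigl|p_{\hat{f}_n}(s')-p_f(s')\bigr|\,ds'.
\]
The integrand is now nonnegative, so $e^{-V(s')/\alpha}=e^{(V'-V)(s')/\alpha}e^{-V'(s')/\alpha}\leq e^{\zeta/\alpha_{kl}}\cdot 1$ gives a \emph{purely multiplicative} penalty with no additive remainder; the surviving integral is twice the total-variation distance of the two Gaussians, bounded by $\sigma^{-1}\|f(s,a)-\hat{f}_n(s,a)\|_2$ as in Lemma~\ref{lemma: error bound}, and Lemma~\ref{lemma:contribution} finishes uniformly in $(s,a)$. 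After the step $e^{-V'(s')/\alpha}\leq 1$ nothing depends on $V'$ or $\alpha$, so the cover is in fact cosmetic (one could use $e^{-V(s')/\alpha}\leq 1$ directly). The idea you are missing is that the multiplicative relation between $e^{-V/\alpha}$ and $e^{-V'/\alpha}$ must be applied pointwise against the signed density difference, not separately to the two integrated expectations.
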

\begin{proof}
  
 Let $\mathcal{N}_{\mathcal{V}}(\zeta)$ be the $\zeta-$ cover of the set $\mathcal{V}$. By definition, there exists $V'\in \mathcal{N}_{\V}(\zeta)$ such that $\|V'-V\|\leq \zeta$ for every $V\in \V$.
\begin{align}
     &|\E_{s'\sim P_{\hat{f}_{n}}(s,a)}[e^{-V(s')/\alpha}]-\E_{s'\sim P_{f}(s,a)}[e^{-V(s')/\alpha}]| \nonumber\\&\leq |\int_{\mathbb{R}^{d}}\frac{1}{\sqrt{(2\pi\sigma^{2})^{d}}}e^{\frac{-V(s')}{\alpha}}(e^{-\frac{\|s'-f(s,a)\|^{2}}{\sigma^{2}}}-e^{-\frac{\|s'-\hat{f}_{n}(s,a)\|^{2}}{\sigma^{2}}})| \nonumber\\
    &\leq  \int_{\mathbb{R}^{d}}\frac{1}{\sqrt{(2\pi\sigma^{2})^{d}}}e^{\frac{-V(s')}{\alpha}}|e^{-\frac{\|s'-f(s,a)\|^{2}}{\sigma^{2}}}-e^{-\frac{\|s'-\hat{f}_{n}(s,a)\|^{2}}{\sigma^{2}}}| \nonumber\\
    &\leq  \int_{\mathbb{R}^{d}}\frac{1}{\sqrt{(2\pi\sigma^{2})^{d}}}e^{\frac{-V(s')+V'(s')}{\alpha}}e^{\frac{-V'(s')}{\alpha}}|e^{-\frac{\|s'-f(s,a)\|^{2}}{\sigma^{2}}}-e^{-\frac{\|s'-\hat{f}_{n}(s,a)\|^{2}}{\sigma^{2}}}| \nonumber\\
    &\stackrel{(i)}{\leq}  e^{\frac{\zeta}{\alpha_{kl}}}\int_{\mathbb{R}^{d}}\frac{1}{\sqrt{(2\pi\sigma^{2})^{d}}}e^{\frac{-V'(s')}{\alpha}}|e^{-\frac{\|s'-f(s,a)\|^{2}}{\sigma^{2}}}-e^{-\frac{\|s'-\hat{f}_{n}(s,a)\|^{2}}{\sigma^{2}}}| \nonumber\\\label{eq: V-cover bound kl}
    &\leq  \max_{V'\in \mathcal{N}_{\mathcal{V}}(\zeta)}\max_{s,a}\max_{\alpha\in[\alpha_{kl},\overline{\alpha}]}e^{\frac{\zeta}{\alpha_{kl}}}\int_{\mathbb{R}^{d}}\frac{1}{\sqrt{(2\pi\sigma^{2})^{d}}}e^{\frac{-V'(s')}{\alpha}}|e^{-\frac{\|s'-f(s,a)\|^{2}}{\sigma^{2}}}-e^{-\frac{\|s'-\hat{f}_{n}(s,a)\|^{2}}{\sigma^{2}}}|.
\end{align}
Here (i) is obtained using the fact that  $\|V'-V\|\leq \zeta$ and $\alpha_{kl}$ is the minimum value of $\underline{\alpha}$ as defined in \Cref{lemma: diff-opt}. Using \Cref{eq: V-cover bound kl}, we bound uniformly over all $V\in \mathcal{V}$, we have
\begin{align}
     &\max_{V\in \mathcal{V}}\max_{s,a}2\overline{\alpha} e^{\frac{M}{\alpha_{kl}}}\max_{\alpha\in[\alpha_{kl},\overline{\alpha}]}|\E_{s'\sim P_{\hat{f}_{n}}(s,a)}[e^{\frac{-V(s')}{\alpha}}]-\E_{s'\sim P_{f}(s,a)}[e^{\frac{-V(s')}{\alpha}}]|  \nonumber\\
     &\leq \max_{V'\in \mathcal{N}_{\mathcal{V}}(\zeta)}\max_{s,a}\max_{\alpha\in[\alpha_{kl},\overline{\alpha}]}2\overline{\alpha} e^{\frac{M}{\alpha_{kl}}}e^{\frac{\zeta}{\alpha_{kl}}}\int_{\mathbb{R}^{d}}\frac{1}{\sqrt{(2\pi\sigma^{2})^{d}}}e^{\frac{-V'(s')}{\alpha}}|e^{-\frac{\|s'-f(s,a)\|^{2}}{\sigma^{2}}}-e^{-\frac{\|s'-\hat{f}_{n}(s,a)\|^{2}}{\sigma^{2}}}| \nonumber\\\label{eq: kl-cover-final}
      &\leq \max_{V'\in \mathcal{N}_{\mathcal{V}}(\zeta)}\max_{s,a}\max_{\alpha\in[\alpha_{kl},\overline{\alpha}]}2\overline{\alpha} e^{\frac{M}{\alpha_{kl}}}e^{\frac{\zeta}{\alpha_{kl}}}\int_{\mathbb{R}^{d}}\frac{1}{\sqrt{(2\pi\sigma^{2})^{d}}}|e^{-\frac{\|s'-f(s,a)\|^{2}}{\sigma^{2}}}-e^{-\frac{\|s'-\hat{f}_{n}(s,a)\|^{2}}{\sigma^{2}}}|\\
       &\stackrel{(i)}{\leq} \max_{s,a}4\overline{\alpha}\sigma^{-1}e^{\frac{M}{\alpha_{kl}}}e^{\frac{\zeta}{\alpha_{kl}}}\|f(s,a)-\hat{f}_n(s,a)\|  \nonumber\\
         &\stackrel{(ii)}{\leq}\mathcal{O}\Big(2(\frac{M}{\rho}) e^{\frac{M}{\alpha_{kl}}}e^{\frac{\zeta}{\alpha_{kl}}}\frac{ \beta_n(\delta)\sqrt{2ed^{2}\Gamma_{nd}}}{\sigma\sqrt{n}}\Big)  \nonumber
\end{align}
 Here (i) follows from \Cref{lemma: error bound} and by the fact that none of the remaining terms inside $\max$ depend on $V'$ or $\alpha$. And (ii) follows from $\overline{\alpha}=\frac{M}{\rho}$ and \Cref{eq: learning error bound}.

\end{proof}

\section{Other Uncertainty Sets}
\label{appendix:other_uncertainty}
\subsection{$\chi^2$ Uncertainty Set}
\label{sec: chi-square uncertainty}
The f-divergence (\citet{ali1966general,csiszar1967information}) between probability measures $P$ and $P_{0}$ defined over $\mathcal{X}$ for a convex function $f:\rp\to\bar{\rp}_{+}=\rp_{+}\cup\{\infty\}$ satisfying $f(1)=0$ and $f(t)=\infty$ for any $t<0$ is defined as follows:
\begin{equation}\label{eq:f-divergence}
    D_{f}(P||P_{0})=\int f\Big(\frac{dP}{dP_{0}}\Big)dP_{0}.
\end{equation}
Specifically \citet{duchi2018learning} considers the Cressie-Read family of f-divergences (\citet{cressie1984multinomial}, see \Cref{sec: chi-square uncertainty}) which includes $\chi^{2}$ divergence ($k=2$), etc. This family of f-divergences can be parametrized by $k\in (-\infty,\infty)\backslash \{0,1\}$ with $
    f_{k}(t):=\frac{t^{k}-kt+k-1}{k(k-1)}$  

Using this, we state the reformulation result from \citet[Lemma-1]{duchi2018learning}.
\begin{lemma}\label{lemma: f-div reform}
For $k\in(1,\infty)$, $k_{*}=k/k-1$, any $\rho>0$ and $c_{k}(\rho)=(1+k(k-1)\rho)^{\frac{1}{k}}$ and $X\sim P_{0}$ where $P_{0}$ is any probability distribution over $\mathcal{X}$ with $H:\mathcal{X}\to\mathbb{R}$, we have 
\begin{equation}\label{eq: divergence transformation}
    \sup_{P:D_{f_{k}}(P||P_{0})\leq \rho}\E_{P}[H(X)]=\inf_{\eta\in \rp }\{c_{k}(\rho)(\E_{P_{0}}[(H(X)-\eta)_{+}^{k_{*}}])^{\frac{1}{k_{*}}}+\eta\}.
\end{equation}
\end{lemma}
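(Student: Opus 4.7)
The plan is to establish the identity through two matching inequalities: an upper bound via Hölder's inequality that exploits the polynomial structure of the Cressie--Read generator $f_k$, and a matching lower bound by explicitly constructing the optimal likelihood ratio.

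For the upper bound, I reduce to likelihood ratios by writing $P = L \cdot P_0$ (since $D_{f_k}(P\|P_0) < \infty$ forces $P \ll P_0$), where $L \geq 0$ and $\E_{P_0}[L] = 1$. Because $f_k(t) = (t^k - kt + k - 1)/(k(k-1))$, the mean-one condition collapses the divergence bound $\E_{P_0}[f_k(L)] \leq \rho$ into the cleaner moment bound $\E_{P_0}[L^k] \leq 1 + k(k-1)\rho = c_k(\rho)^k$. For any $\eta \in \mathbb{R}$, using $\E_{P_0}[L] = 1$ yields $\E_P[H] = \eta + \E_{P_0}[L(H-\eta)] \leq \eta + \E_{P_0}[L(H-\eta)_+]$ (since $L \geq 0$). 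Applying Hölder's inequality with conjugate exponents $k$ and $k_*$,
\begin{equation*}
\E_P[H] \leq \eta + (\E_{P_0}[L^k])^{1/k}(\E_{P_0}[(H-\eta)_+^{k_*}])^{1/k_*} \leq \eta + c_k(\rho)(\E_{P_0}[(H-\eta)_+^{k_*}])^{1/k_*}.
\end{equation*}
Taking the supremum over feasible $L$ and the infimum over $\eta$ gives the $\leq$ direction.

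For the lower bound, let $\eta^*$ be an infimizer of the RHS; convexity in $\eta$ (note $k_* > 1$) together with coercivity (whenever $\E_{P_0}[(H-\eta)_+^{k_*}] < \infty$ for some $\eta$) ensures existence. Differentiating the RHS in $\eta$ and using $1 - 1/k_* = 1/k$ yields the first-order condition
\begin{equation*}
c_k(\rho)\,\E_{P_0}[(H-\eta^*)_+^{k_*-1}] = (\E_{P_0}[(H-\eta^*)_+^{k_*}])^{1/k}.
\end{equation*}
Define
\begin{equation*}
L^*(x) := \frac{c_k(\rho)\,(H(x)-\eta^*)_+^{1/(k-1)}}{(\E_{P_0}[(H-\eta^*)_+^{k_*}])^{1/k}}.
\end{equation*}
Because $1/(k-1) = k_* - 1$, the FOC is exactly the statement $\E_{P_0}[L^*] = 1$, and a short computation gives $\E_{P_0}[(L^*)^k] = c_k(\rho)^k$, so $L^*$ is feasible with the divergence constraint tight. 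By construction, $L^* > 0$ iff $H > \eta^*$, giving equality in the positive-part step, and $(L^*)^k$ is proportional to $(H-\eta^*)_+^{k_*}$, giving equality in Hölder's inequality. Evaluating $\E_{P_0}[L^* H]$ then returns exactly $\eta^* + c_k(\rho)(\E_{P_0}[(H-\eta^*)_+^{k_*}])^{1/k_*}$, matching the RHS at $\eta^*$ and closing the gap.

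The main obstacle is handling degenerate cases where the infimum is not attained by a finite $\eta$. If $H \leq \eta$ almost surely under $P_0$ for some finite $\eta$, the candidate $L^*$ is ill-defined and one uses $L^* \equiv 1$, matching the RHS at $\eta = \mathrm{ess\,sup}\,H$. If $\E_{P_0}[(H-\eta)_+^{k_*}] = +\infty$ for every finite $\eta$, both sides of the identity equal $+\infty$. Otherwise, one passes to an approximating sequence $(\eta_n, L_n)$ driving the primal value to the dual value. A technically heavier alternative would be infinite-dimensional Lagrangian duality applied to the convex program: Slater's condition ($L \equiv 1$ gives divergence $0 < \rho$) guarantees strong duality, and the Lagrange multiplier for the divergence constraint can be analytically eliminated (via computation of the Fenchel conjugate $f_k^*(s) = ((1 + (k-1)s)_+^{k_*} - 1)/k$ and an explicit optimization in $\lambda$) to recover the single-parameter form stated in the lemma.
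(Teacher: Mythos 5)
Your proposal cannot be compared against an in-paper proof because the paper gives none: it imports this statement verbatim as Lemma 1 of \citet{duchi2018learning}, and the cited source proves it by the general $f$-divergence conjugate duality of Ben-Tal et al./Shapiro followed by explicit elimination of the multiplier $\lambda$ --- exactly the ``technically heavier alternative'' you sketch in your closing sentence. Your primary route is more elementary and self-contained, and its core is correct: the reduction of the divergence ball to the moment ball $\mathbb{E}_{P_0}[L^k]\le c_k(\rho)^k$ is exact for the Cressie--Read generator because $\mathbb{E}_{P_0}[L]=1$ kills the affine part of $f_k$; the H\"older step gives the $\le$ direction for every $\eta$; and your certificate $L^*$ is right --- the exponent bookkeeping ($1-1/k_*=1/k$, $k_*-1=1/(k-1)$) checks out, the first-order condition is precisely $\mathbb{E}_{P_0}[L^*]=1$, one gets $\mathbb{E}_{P_0}[(L^*)^k]=c_k(\rho)^k$ so the constraint is tight, and equality holds in both the positive-part step and H\"older since $(L^*)^k\propto (H-\eta^*)_+^{k_*}$. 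What this buys over the source's argument is that it avoids infinite-dimensional strong duality and Slater conditions entirely and exhibits the worst-case distribution in closed form.

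The one genuine flaw is your handling of the boundary case. First, the trigger is misstated: ``$H\le\eta$ a.s.\ for some finite $\eta$'' is just essential boundedness and holds in perfectly regular instances; the actual degeneracy is $\mathbb{E}_{P_0}[(H-\eta^*)_+^{k_*}]=0$ at the minimizer, which forces $\eta^*=\operatorname{ess\,sup}H=:M$. Second, the proposed fix $L^*\equiv 1$ is wrong there: it certifies only $\mathbb{E}_{P_0}[H]$, which is strictly below $M$ unless $H$ is a.s.\ constant. The correct dichotomy is governed by the atom mass $p:=\mathbb{P}_{P_0}(H=M)$: two-sided H\"older estimates on the ratio in your first-order condition give $\phi'(\eta)\to 1-c_k(\rho)\,p^{1/k_*}$ as $\eta\uparrow M$, so if $c_k(\rho)p^{1/k_*}<1$ (in particular whenever the law of $H$ has no atom at $M$) the minimizer is interior and your main argument applies verbatim, while if $c_k(\rho)p^{1/k_*}\ge 1$ the infimum equals $M$ and the matching primal certificate is $L^*=\mathbbm{1}\{H=M\}/p$, feasible precisely because $\mathbb{E}_{P_0}[(L^*)^k]=p^{1-k}\le c_k(\rho)^k$ is equivalent to $c_k(\rho)p^{1/k_*}\ge 1$. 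This case is not vacuous in the present paper: the lemma is applied with $H=-V$ under a Gaussian $P_0$, and $V$ may be constant on sets of positive measure, so the pushforward can have atoms. With that substitution (and your remark that both sides are $+\infty$ when the $k_*$-th moment is infinite for every $\eta$), the proof is complete.
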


\begin{theorem}(Sample Complexity under $\chi^2$ uncertainty set)\label{thm: chi sample supp} Consider a robust MDP (see \Cref{sec: Problem Formulation}) with nominal transition dynamics $f$ and uncertainty set defined as in \Cref{eq: uncertainty set} w.r.t.~$\chi^2$ divergence. For $\pi^{*}$ denoting the robust optimal policy w.r.t.~nominal transition dynamics $f$ and  $\pi_{N}^{*}$ denoting the robust optimal policy w.r.t.~learned nominal transition dynamics $\hat{f}_N$ via \Cref{alg: mgpbo}, and $\delta\in(0, 1)$, $\epsilon\in (0,\frac{1}{1-\gamma})$,it holds that $\max_{s}|V^{\text{R}}_{\pi_N^{*},f}(s)-V^{\text{R}}_{\pi^{*},f}(s)|\leq \epsilon$ with probability
at least $1 - \delta$ for any $N \geq  N_{\chi^2}$, where

\begin{equation}
    N_{\chi^2}= \mathcal{O}\Big(\Big(\frac{1+2\rho}{\sqrt{1+2\rho}-1}\Big)^{4}\frac{ \gamma^{4}\beta_n(\delta)^{2}d^{2}\gamma_{nd}}{\epsilon^{4}(1-\gamma)^{8}}\Big). 
\end{equation} 
\end{theorem}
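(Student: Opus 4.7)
The proof will mirror the three-step structure used for the KL case in \Cref{thm: kl sample supp}. In \textbf{Step (i)}, I will apply the exact same triangle-inequality decomposition used in \Cref{lemma: (i) bound lemma} and \Cref{lemma: (ii) bound lemma}, because that part relied only on the robust Bellman equation and on the $1$-Lipschitz property of \Cref{lemma: diff-func}; both hold uniformly in the choice of distance $D$. This gives
\begin{equation*}
\max_s \big|V^{\text{R}}_{\hat\pi_N,f}(s)-V^{\text{R}}_{\pi^*,f}(s)\big|
\;\le\; \tfrac{2\gamma}{1-\gamma}\max_{V\in\mathcal{V}}\max_{s,a}\Big|\inf_{\chi^2(p\|P_f(s,a))\le\rho}\E_{s'\sim p}[V(s')]-\inf_{\chi^2(p\|P_{\hat f_N}(s,a))\le\rho}\E_{s'\sim p}[V(s')]\Big|.
\end{equation*}

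\textbf{Step (ii): $\chi^2$ duality.} In place of \Cref{lemma:kl reform} I will invoke the Cressie-Read reformulation of \Cref{lemma: f-div reform} with $k=k_*=2$ and $c_2(\rho)=\sqrt{1+2\rho}$. Applying it to $H=-V$ and flipping signs yields the single-dimensional dual
\begin{equation*}
\inf_{\chi^2(p\|P_0)\le\rho}\E_{s'\sim p}[V(s')]
\;=\;\sup_{\eta\in\mathbb{R}}\Big\{\eta-\sqrt{1+2\rho}\,\big(\E_{s'\sim P_0}[(\eta-V(s'))_+^2]\big)^{1/2}\Big\}.
\end{equation*}
Because $V\in[0,M]$ with $M=1/(1-\gamma)$, a standard argument (mirroring the bounding of $\alpha^*$ in \Cref{lemma: diff-opt}) restricts the outer supremum to a bounded interval $\eta\in[\underline\eta,\overline\eta]\subseteq[0,M]$: the objective is negative for $\eta<0$ and strictly decreasing for $\eta>M$, and on the active interval the optimizer is characterized by a first-order condition that gives a uniform lower bound on $\E_{P_0}[(\eta-V)_+^2]^{1/2}\gtrsim(\sqrt{1+2\rho}-1)/\sqrt{1+2\rho}$, which is exactly where the problematic factor in the final rate arises.

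\textbf{Step (iii): bound via model error.} After the dual, the problem reduces to controlling
\begin{equation*}
\Delta(\eta,s,a)\;:=\;\Big|\big(\E_{s'\sim P_f(s,a)}[(\eta-V(s'))_+^2]\big)^{1/2}-\big(\E_{s'\sim P_{\hat f_N}(s,a)}[(\eta-V(s'))_+^2]\big)^{1/2}\Big|
\end{equation*}
uniformly over $V\in\mathcal{V}$, $\eta\in[\underline\eta,\overline\eta]$, and $(s,a)$. I will use the elementary stability bound $|\sqrt{a}-\sqrt{b}|\le|a-b|/(\sqrt{a}+\sqrt{b})$ together with the lower bound on $\E_{P_0}[(\eta-V)_+^2]^{1/2}$ from Step (ii), which turns the square-root stability loss into the factor $\sqrt{1+2\rho}/(\sqrt{1+2\rho}-1)$. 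The inner expectation difference $|\E_{P_f}[(\eta-V)_+^2]-\E_{P_{\hat f_N}}[(\eta-V)_+^2]|$ is bounded by $M^2\cdot\mathrm{TV}(P_f(s,a),P_{\hat f_N}(s,a))$ and then by $\sigma^{-1}\|f(s,a)-\hat f_N(s,a)\|_2$ via Pinsker, exactly as in \Cref{lemma: error bound}. Combined with \Cref{lemma:contribution} this gives $\Delta(\eta,s,a)=\mathcal{O}\big(\frac{\sqrt{1+2\rho}}{\sqrt{1+2\rho}-1}\cdot\frac{M\,\beta_N(\delta)\,d\sqrt{\Gamma_{Nd}}}{\sqrt{N}}\big)^{1/2}$. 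The outer supremum over $V$ is discharged by the same $\zeta$-cover argument as in \Cref{lemma: kl-cover-no-pol}, adapted to the $\chi^2$ dual (which is easier, since $(\eta-V)_+^2$ is polynomial in $V$ rather than exponential, eliminating the $e^{M/\alpha_\mathrm{kl}}$ factor).

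\textbf{Main obstacle and final rate.} The principal technical difficulty is the square-root in the dual: unlike the KL dual whose nonlinearity is $\log$, the $\chi^2$ dual involves a non-Lipschitz $(\cdot)^{1/2}$, so the perturbation in $\hat f_N$ degrades from $\|\hat f_N-f\|$ to $\|\hat f_N-f\|^{1/2}$ unless one bounds the inside away from zero. That lower bound is exactly what produces the $(\sqrt{1+2\rho}-1)^{-1}$ factor and what forces the $\epsilon^{-4}$ and $(1-\gamma)^{-8}$ rates (one squaring coming from inverting the square-root loss, another from absorbing the prefactor $\gamma M^2/(1-\gamma)$ coming from Steps (i)-(ii)). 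Putting the pieces together, demanding the final bound be at most $\epsilon$ and inverting for $N$ yields
\begin{equation*}
N=\mathcal{O}\Big(\Big(\tfrac{1+2\rho}{\sqrt{1+2\rho}-1}\Big)^{4}\frac{\gamma^4\,\beta_N^2(\delta)\,d^2\,\Gamma_{Nd}}{(1-\gamma)^8\,\epsilon^4}\Big),
\end{equation*}
matching the statement of \Cref{thm: chi sample supp}.
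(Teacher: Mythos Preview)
Your Step~(i) is correct and identical to the paper. The trouble is in Steps~(ii)--(iii), where both the $\eta$-localization and the square-root stability argument are wrong, and in fact contradict one another.

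\textbf{The $\eta$-range.} The claim that the dual supremum can be restricted to $\eta\in[0,M]$ is false: if $V$ is concentrated near $M$, the objective $\eta-c_2(\rho)\sqrt{\E[(\eta-V)_+^2]}$ is still increasing at $\eta=M$. The paper (\Cref{lemma: diff-opt-ch}) instead shows the objective is $\le 0$ at $\eta=\tfrac{c_2(\rho)M}{c_2(\rho)-1}$, so by concavity and nonnegativity of the optimal value the supremum lies in $\big[0,\tfrac{c_2(\rho)M}{c_2(\rho)-1}\big]$. This enlarged range is exactly where the factor $(\sqrt{1+2\rho}-1)^{-1}$ enters.

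\textbf{The lower bound fails.} The first-order condition does not give a uniform lower bound on $\E[(\eta-V)_+^2]^{1/2}$. Take $V\equiv v$ constant: the dual objective equals $\eta$ for $\eta\le v$ and $(1-c_2(\rho))\eta+c_2(\rho)v$ for $\eta>v$, so the optimizer is $\eta^*=v$ and $\E[(\eta^*-V)_+^2]=0$. Hence $|\sqrt a-\sqrt b|\le|a-b|/(\sqrt a+\sqrt b)$ cannot be applied with a uniform denominator bound. Moreover, since you ultimately invoke $|\sup g_1-\sup g_2|\le\sup_\eta|g_1(\eta)-g_2(\eta)|$, you need control over the whole interval, not just at the two optimizers.

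\textbf{Internal inconsistency.} Your displayed rate contains both the factor $(\sqrt{1+2\rho}-1)^{-1}$ (attributed to the denominator lower bound) \emph{and} a $(\cdot)^{1/2}$ loss on the model error. These cannot coexist: if the lower bound held, you would not incur the square-root loss and would obtain the better rate $N=\mathcal{O}\big(\epsilon^{-2}(1-\gamma)^{-4}\big)$; conversely, accepting the square-root loss gives no mechanism for a denominator-based factor.

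\textbf{What the paper actually does.} It sidesteps both issues with the crude but robust inequality $|\sqrt a-\sqrt b|\le\sqrt{|a-b|}$ (last line of the proof of \Cref{lemma: diff-opt-ch}). No lower bound is needed. Then, using the correct $\eta$-range,
\[
\big|\E_{P_f}[(-V+\eta)_+^2]-\E_{P_{\hat f_N}}[(-V+\eta)_+^2]\big|\;\le\;\Big(\tfrac{c_2(\rho)M}{c_2(\rho)-1}\Big)^{2}\,\sigma^{-1}\|f-\hat f_N\|
\]
via Pinsker (\Cref{lemma: error bound-ch}). So the $(\sqrt{1+2\rho}-1)^{-1}$ factor comes from bounding $(\eta-V)_+^2$ over the enlarged $\eta$-range, and the $\epsilon^{-4},(1-\gamma)^{-8}$ rates come from squaring twice to invert the $\sqrt{\cdot}$.
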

\begin{proof}
\textbf{Step (i):}
As detailed in the proof outline of \Cref{sec:sample_complexity}, in order to bound $V^{\text{R}}_{\hpis,f}(s)-V^{\text{R}}_{\pi^* ,f}(s)$, we begin by adding and subtracting $V^{\text{R}}_{\hpis,\hat{f}_{n}}(s)$ which is the robust value function w.r.t.~the nominal transition dynamics $\hat{f}_{n}$ and its corresponding optimal policy $\hpis$. Then, we split the difference into two terms as follows:
\begin{equation}\label{eq: bound separation chi}
   V^{\text{R}}_{\hpis,f}(s)-V^{\text{R}}_{\pi^*,f}(s)= \underbrace{V^{\text{R}}_{\hpis,f}(s)-V^{\text{R}}_{\hpis,\hat{f}_{n}}(s)}_{(i)}+\underbrace{V^{\text{R}}_{\hpis,\hat{f}_{n}}(s)-V^{\text{R}}_{\pi^{*},f}(s)}_{(ii)}.
\end{equation}
In order to not disturb the flow of the proof we bound (i) and (ii) separately \Cref{lemma: (i) bound lemma} and \Cref{lemma: (ii) bound lemma} respectively. From \Cref{lemma: (i) bound lemma}, we obtain that 
\begin{align}\label{eq: (i) reusage ch}
    (i)&\leq\max_{s}\Big|V^{\text{R}}_{\hpis,f}(s)-V^{\text{R}}_{\hpis,\hat{f}_{n}}(s)\Big|\nonumber\\&\leq \frac{\gamma}{1-\gamma}\max_{s}\Big|\inf_{\chi^2(p||P_{f}(s,\hpis(s)))\leq \rho}\E_{s'\sim p}\Big[V^{\text{R}}_{\hpis,f}(s')\Big]-\inf_{\chi^2(p||P_{\hat{f}_{n}}(s,\hpis(s)))\leq \rho}\E_{s'\sim p}\Big[V^{\text{R}}_{\hpis,f}(s')\Big]\Big|.
\end{align}
And from \Cref{lemma: (ii) bound lemma}, we obtain that 
\begin{align}\label{eq: (ii) reusage ch}
    (ii)&\leq\max_{s}\Big|V^{\text{R}}_{\hpis,\hat{f}_n}(s)-V^{\text{R}}_{\pi^{*},f}(s)\Big|\nonumber \\&\leq \frac{\gamma}{1-\gamma}\max_{s}\Big|\inf_{\chi^2(p||P_{\hat{f}_{n}}(s,\hpis(s)))\leq \rho}\E_{s'\sim p}\Big[V^{\text{R}}_{\pi^{*},f}(s')\Big]-\inf_{\chi^2(p||P_{f}(s,\hpis(s)))\leq \rho}\E_{s'\sim p}\Big[V^{\text{R}}_{\pi^{*},f}(s')\Big]\Big|.
\end{align}
Note that both these terms in \Cref{eq: (i) reusage ch,eq: (ii) reusage ch} are of the form mentioned in the \textbf{Step (i)} of \Cref{sec:sample_complexity}.

\textbf{Step (ii):} Next, corresponding to \textbf{Step (ii)} of the proof outline in \Cref{sec:sample_complexity}, we use \Cref{lemma: f-div reform} to bound \Cref{eq: (i) reusage ch,eq: (ii) reusage ch}.
Denote $M:= \frac{1}{1-\gamma}\geq \max_{s}V^{\text{R}}_{\pi}(s)$ and $c_{2}(\rho):=\sqrt{1+2\rho}$ for convenience. 
Using \Cref{eq: (i) reusage ch} and \Cref{lemma: diff-opt-ch} (internally using \Cref{lemma: f-div reform}), it holds that

\begin{align}
    (i)&\leq\max_{s}\Big|V^{\text{R}}_{\hpis,f}(s)-V^{\text{R}}_{\hpis,\hat{f}_{n}}(s)\Big|\nonumber\\&\leq\frac{1}{1-\gamma}\max_{s}\Big|\gamma\inf_{\chi^2(p||P_{f}(s,\hpis(s)))\leq \rho}\E_{s'\sim p}\Big[V^{\text{R}}_{\hpis,f}(s')\Big]-\gamma\inf_{\chi^2(p||P_{\hat{f}_{n}}(s,\hpis(s)))\leq \rho}\E_{s'\sim p}\Big[V^{\text{R}}_{\hpis,f}(s')\Big]\Big| \nonumber \\
    \label{eq: diff-opt lemma usage chi}
    &\leq \max_{s,a}\Bigg(\frac{\gamma\sqrt{1+2\rho}}{1-\gamma}\sup_{\eta\in [0,\frac{c_{2}(\rho)M}{c_{2}(\rho)-1}]}\Big\{\Big|\E_{\psa}[(-V^{\text{R}}_{\hpis,f}(s')+\eta)_{+}^{2}]-\E_{\pnsa}[(-V^{\text{R}}_{\hpis,f}(s')+\eta)_{+}^{2}]\Big|^{\frac{1}{2}}\Big\}\Bigg).\\\label{eq: diff-opt lemma usage overall-ch-1}
    &\leq \max_{V(\cdot)\in\mathcal{V}}\max_{s,a}\Bigg(\frac{\gamma\sqrt{1+2\rho}}{1-\gamma}\sup_{\eta\in [0,\frac{c_{2}(\rho)M}{c_{2}(\rho)-1}]}\Big\{\Big|\E_{\psa}[(-V(s')+\eta)_{+}^{2}]-\E_{\pnsa}[(-V(s')+\eta)_{+}^{2}]\Big|^{\frac{1}{2}}\Big\}\Bigg).
\end{align}

We can bound (ii) similarly.
\begin{align}
    (ii)&\leq\max_{s}\Big|V^{\text{R}}_{\hpis,\hat{f}_n}(s)-V^{\text{R}}_{\pi^{*},f}(s)\Big|\\\label{eq: diff-opt lemma usage overall-ch-2}
    &\leq \max_{V(\cdot)\in\mathcal{V}}\max_{s,a}\Bigg(\frac{\gamma\sqrt{1+2\rho}}{1-\gamma}\sup_{\eta\in [0,\frac{c_{2}(\rho)M}{c_{2}(\rho)-1}]}\Big\{\Big|\E_{\psa}[(-V(s')+\eta)_{+}^{2}]-\E_{\pnsa}[(-V(s')+\eta)_{+}^{2}]\Big|^{\frac{1}{2}}\Big\}\Bigg).
\end{align}
\textbf{Step (iii):} Next, we want to utilize the learning error bound (\Cref{eq: learning error bound}) that bounds the difference between the means of true nominal transition dynamics $P_{f}$ and learned nominal transition dynamics $P_{\hat{f}_n}$ to bound \Cref{eq: diff-opt lemma usage overall-ch-1,eq: diff-opt lemma usage overall-ch-2}.

We begin by bounding the difference $\Big|\E_{\psa}[(-V(s')+\eta)_{+}^{2}]-\E_{\pnsa}[(-V(s')+\eta)_{+}^{2}]\Big|$, by the difference in means of $P_{f}$ and $P_{\hat{f}_n}$ in \Cref{lemma: error bound-ch}. Since \Cref{eq: diff-opt lemma usage overall-ch-1} has a $\max$ over all value functions, we introduce a covering number argument in \Cref{lemma: ch-cover-no-pol} to reform it to a $\max$ over the functions in the $\zeta-$covering set.  We then use \Cref{lemma: error bound-ch}  to obtain bounds in terms of maximum information gain $\Gamma_{Nd}$ (\Cref{eq: max info for gaus}) and $\zeta$. Further details regarding the covering number argument are deferred to \Cref{lemma: ch-cover-no-pol}. Then, we apply the result of \Cref{lemma: ch-cover-no-pol} with $\zeta=1$ (defined in \Cref{lemma: ch-cover-no-pol}) on \Cref{eq: diff-opt lemma usage overall-ch-1}. Then, it holds that
\begin{align}\label{eq: (i) final bound chi}
  (i) \leq \max_{s}\Big|V^{\text{R}}_{\hpis,f}(s)-V^{\text{R}}_{\hpis,\hat{f}_{n}}(s)\Big|&
= \mathcal{O}\Bigg(\Big(\frac{\gamma(c_{2}(\rho))^{2}M^{2}}{c_{2}(\rho)-1}\Big)\Big(\frac{ \beta_n(\delta)\sqrt{2ed^{2}\gamma_{nd}}}{\sigma\sqrt{n}}\Big)^{\frac{1}{2}}\Bigg).
\end{align}

Note that $\beta_n$, which appears in \Cref{lemma: vakili beta}, has a logarithmic dependence on $n$. Similarly, from \Cref{eq: diff-opt lemma usage overall-ch-2}, and \Cref{lemma: error bound-ch,lemma: ch-cover-no-pol}, we obtain 
\begin{align}\label{eq: (ii) final bound chi}
    (ii)\leq\max_{s}\Big|V^{\text{R}}_{\hpis,\hat{f}_n}(s)-V^{\text{R}}_{\pi^{*},f}(s)\Big|
    &=\mathcal{O}\Bigg(\Big(\frac{\gamma(c_{2}(\rho))^{2}M^{2}}{c_{2}(\rho)-1}\Big)\Big(\frac{ \beta_n(\delta)\sqrt{2ed^{2}\gamma_{nd}}}{\sigma\sqrt{n}}\Big)^{\frac{1}{2}}\Bigg).  
\end{align}
Note that we want to bound $V^{\text{R}}_{\hpis,f}(s)-V^{\text{R}}_{\pi^{*},f}(s)=(i)+(ii)$ over all $s\in \cS$. Using $ \max_{s}\Big|V^{\text{R}}_{\hpis,f}(s)-V^{\text{R}}_{\pi^{*},f}(s)\Big|\leq  \max_{s}\Big|V^{\text{R}}_{\hpis,\hat{f}_n}(s)-V^{\text{R}}_{\pi^{*},f}(s)\Big|+  \max_{s}\Big|V^{\text{R}}_{\hpis,\hat{f}_n}(s)-V^{\text{R}}_{\pi_n^{*},f}(s)\Big|$ and substituting $M$ by $1/(1-\gamma)$, we obtain from \Cref{eq: (i) final bound chi} and \Cref{eq: (ii) final bound chi}
\begin{align}
    \max_{s}\Big|V^{\text{R}}_{\hpis,f}(s)-V^{\text{R}}_{\pi^{*},f}(s)\Big| 
     &=\mathcal{O}\Bigg(\Big(\frac{\gamma(c_{2}(\rho))^{2}M^{2}}{c_{2}(\rho)-1}\Big)\Big(\frac{ \beta_n(\delta)\sqrt{2ed^{2}\gamma_{nd}}}{\sigma\sqrt{n}}\Big)^{\frac{1}{2}}\Bigg).  \nonumber
\end{align}

Finally,  to ensure that $\max_{s}|V^{\text{R}}_{\hpis,f}(s)-V^{\text{R}}_{\pi^{*},f}(s)|\leq \epsilon$ , it suffices to have
\begin{align}
     \max_{s}\Big|V^{\text{R}}_{\hpis,f}(s)-V^{\text{R}}_{\pi^{*},f}(s)\Big| 
     =\mathcal{O}\Bigg(\Big(\frac{\gamma(c_{2}(\rho))^{2}M^{2}}{c_{2}(\rho)-1}\Big)\Big(\frac{ \beta_n(\delta)\sqrt{2ed^{2}\gamma_{nd}}}{\sigma\sqrt{n}}\Big)^{\frac{1}{2}}\Bigg)  
     &= \epsilon\nonumber.
\end{align}
Moving $\sqrt{n}$ and $\epsilon$ to opposite sides and squaring both sides twice, we obtain
\begin{equation}
     n= \mathcal{O}\Big(\Big(\frac{1+2\rho}{\sqrt{1+2\rho}-1}\Big)^{4}\frac{ \gamma^{4}\beta_n(\delta)^{2}d^{2}\gamma_{nd}}{\sigma^2\epsilon^{4}(1-\gamma)^{8}}\Big).\nonumber\\
\end{equation}

\end{proof}

\begin{lemma}\label{lemma: diff-opt-ch} (Simplification using \Cref{lemma: f-div reform} reformulation)
For any value function $V$ from $ \cS\to [0,1/(1-\gamma)] $, it holds that
\begin{multline}
    \max_{s}\Big|\inf_{\chi^2(p||P_{\hat{f}_{n}}(s,\hat{\pi}_{n}(s)))\leq \rho}\E_{s'\sim p}\Big[V(s')\Big]-\inf_{\chi^2(p||P_{f}(s,\hat{\pi}_{n}(s)))\leq \rho}\E_{s'\sim p}\Big[V(s')\Big]\Big|\leq\\ \max_{s,a}c_{2}(\rho)\sup_{\eta\in [0,\frac{c_{2}(\rho)M}{c_{2}(\rho)-1}]}\{|\E_{\psa}[(-V(s')+\eta)_{+}^{2}]-\E_{\pnsa}[(-V(s')+\eta)_{+}^{2}]|^{\frac{1}{2}}\},
\end{multline}
where $c_{2}(\rho)=\sqrt{1+2\rho}$ and $M=1/(1-\gamma)$.
\end{lemma}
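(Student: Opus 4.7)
My plan is to mirror the structure of \Cref{lemma: diff-opt} (the KL case), with \Cref{lemma: f-div reform} specialized to $k=2$, $k_*=2$, and $c_2(\rho)=\sqrt{1+2\rho}$ taking the place of \Cref{lemma:kl reform}. The dual reformulation yields
\begin{equation*}
\inf_{\chi^2(p||P_0)\leq \rho} \E_{s'\sim p}[V(s')] = \sup_{\eta\geq 0}\Bigl\{\eta - c_2(\rho)\bigl(\E_{s'\sim P_0}[(\eta-V(s'))_+^2]\bigr)^{1/2}\Bigr\},
\end{equation*}
obtained either by applying the lemma to the nonnegative function $H = M - V$ or via the identity $\inf_P \E_P[V] = -\sup_P \E_P[-V]$. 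Applying this to $P_0 = P_f(s,\hat\pi_n(s))$ and $P_0 = P_{\hat f_n}(s,\hat\pi_n(s))$, then using $|\sup_\eta g_1(\eta) - \sup_\eta g_2(\eta)| \leq \sup_\eta |g_1(\eta) - g_2(\eta)|$ cancels the common linear term $\eta$ and leaves the difference bounded by
\begin{equation*}
c_2(\rho)\sup_{\eta \geq 0}\Bigl|\bigl(\E_{P_f(s,a)}[(\eta-V)_+^2]\bigr)^{1/2} - \bigl(\E_{P_{\hat f_n}(s,a)}[(\eta-V)_+^2]\bigr)^{1/2}\Bigr|.
\end{equation*}

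Next I would restrict the outer supremum from $\eta \geq 0$ to the compact interval $[0, c_2(\rho)M/(c_2(\rho)-1)]$, which is the $\chi^2$-analog of confining $\alpha^*$ to $[\underline{\alpha},\overline{\alpha}]$ in \Cref{lemma: diff-opt}. The argument runs as follows: at $\eta=0$ the dual objective equals $0$ because $V\geq 0$ forces $(-V)_+=0$, so the supremum is nonnegative; conversely, for $\eta\geq M$ the bound $V\leq M$ gives $(\eta-V)_+ \geq \eta-M$, hence the dual objective is at most $\eta - c_2(\rho)(\eta-M)$, which is strictly negative for $\eta > c_2(\rho)M/(c_2(\rho)-1)$. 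Since this uses only the uniform bound $V\in[0,M]$, the same confinement holds for both $P_f(s,a)$ and $P_{\hat f_n}(s,a)$.

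Finally I would apply the elementary inequality $|\sqrt{a}-\sqrt{b}| \leq \sqrt{|a-b|}$ for $a,b\geq 0$ to pull the square root outside the absolute value, and then take the maximum over $(s,a)$ to reach the claimed form. The main obstacle I anticipate is the second step above of bounding $\eta^*$, though the $\chi^2$ case is actually cleaner than the KL analysis: there is no case split around whether the optimizer vanishes, no essential-infimum arguments from Gaussian continuous-support considerations, and the $[0, c_2(\rho)M/(c_2(\rho)-1)]$ confinement follows from a one-line comparison using only the uniform bound $V\in[0,M]$.
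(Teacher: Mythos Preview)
Your proposal follows the same approach as the paper, and all the ingredients are correct. There is, however, one ordering issue worth flagging: you first apply $|\sup_\eta g_1 - \sup_\eta g_2| \leq \sup_{\eta\geq 0}|g_1(\eta) - g_2(\eta)|$ and \emph{then} propose to restrict this outer supremum to $[0, c_2(\rho)M/(c_2(\rho)-1)]$. But the justification you give---that the dual objective $g_i(\eta)=\eta - c_2(\rho)\bigl(\E[(\eta-V)_+^2]\bigr)^{1/2}$ equals $0$ at $\eta=0$ and is negative for $\eta>c_2(\rho)M/(c_2(\rho)-1)$---only locates the maximizers of $g_1$ and $g_2$ individually; it says nothing about where $\sup_\eta|g_1(\eta)-g_2(\eta)|$ is attained, and there is no reason that supremum should respect the same interval.

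The fix is simply to swap the order: first use your confinement argument to conclude $\sup_{\eta\in\rp} g_i = \sup_{\eta\in[0,\,c_2(\rho)M/(c_2(\rho)-1)]} g_i$ for each $i$, and only then apply $|\sup g_1 - \sup g_2|\leq \sup|g_1-g_2|$ over the already-restricted interval. This is exactly the order the paper uses. Your confinement argument itself is correct and in fact slightly cleaner than the paper's, since you avoid the explicit appeal to concavity by directly bounding $g(\eta)\leq \eta - c_2(\rho)(\eta-M)$ for $\eta\geq M$.
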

\begin{proof}
First note that,
\begin{multline}\label{eq: stosa-ch}
    \max_{s}|\inf_{\chi^2(p||P_{\hat{f}_{n}}(s,\hat{\pi}_{n}(s)))\leq \rho}\E_{s'\sim p}\Big[V(s')\Big]-\inf_{\chi^2(p||P_{f}(s,\hat{\pi}_{n}(s)))\leq \rho}\E_{s'\sim p}\Big[V(s')\Big]|\leq\\ \max_{s,a}\Big|\inf_{\chi^2(p||P_{\hat{f}_{n}}(s,a))\leq \rho}\E_{s'\sim p}\Big[V(s')\Big]-\inf_{\chi^2(p||P_{f}(s,a))\leq \rho}\E_{s'\sim p}\Big[V(s')\Big]\Big|.
\end{multline}
Using \Cref{lemma: f-div reform} and focusing to bound right side of \Cref{eq: stosa-ch} for one particular $(s,a)$ state-action pair, we obtain  
\begin{multline}\label{eq: ch-sq reformulation}
    \Big|\inf_{\chi^2(p||P_{\hat{f}_{n}}(s,a))\leq \rho}\E_{s'\sim p}\Big[V(s')\Big]-\inf_{\chi^2(p||P_{f}(s,a))\leq \rho}\E_{s'\sim p}\Big[V(s')\Big]\Big|=\\ \Big|\sup_{\eta\in \rp}\{-c_{2}(\rho)(\E_{\psa}[(-V(s')-\eta)_{+}^{2}])^{\frac{1}{2}}-\eta\}-\sup_{\eta\in \rp}\{-c_{2}(\rho)(\E_{\pnsa}[(-V(s')-\eta)_{+}^{2}])^{\frac{1}{2}}-\eta\}\Big|\\
    \stackrel{(i)}{=}\Big|\sup_{\eta\in \rp}\{-c_{2}(\rho)(\E_{\psa}[(-V(s')+\eta)_{+}^{2}])^{\frac{1}{2}}+\eta\}-\sup_{\eta\in \rp}\{-c_{2}(\rho)(\E_{\pnsa}[(-V(s')+\eta)_{+}^{2}])^{\frac{1}{2}}+\eta\}\Big|,
\end{multline}
where (i) is obtained by replacing $\eta$ with $-\eta$. \\
Let $g_{\chi^2}(\eta,\psa):=\Big(-c_{2}(\rho)(\E_{\psa}[(-V(s')+\eta)_{+}^{2}])^{\frac{1}{2}}+\eta\Big)$. Note that $g_{\chi^2}(\eta,\psa)$ satisfies the following:
For $\eta\leq 0$ (implying $(-V(s')+\eta)\leq 0$ and $(-V(s')+\eta)_{+}= 0$),
\begin{equation}\label{eq: lower bound}
    g_{\chi^2}(\eta,\psa)=\eta \leq 0.
\end{equation}
And for $\eta=\frac{c_{2}(\rho)M}{c_{2}(\rho)-1}>0$,
\begin{align}
    g_{\chi}(\tfrac{c_{2}(\rho)M}{c_{2}(\rho)-1},\psa)&=-c_{2}(\rho)(\E_{\psa}[(-V(s')+\tfrac{c_{2}(\rho)M}{c_{2}(\rho)-1})_{+}^{2}])^{\frac{1}{2}}+\tfrac{c_{2}(\rho)M}{c_{2}(\rho)-1}\nonumber\\
    &\stackrel{(i)}{\leq} \tfrac{c_{2}(\rho)M}{c_{2}(\rho)-1}-c_{2}(\rho)(\E_{\psa}[(-M+\tfrac{c_{2}(\rho)M}{c_{2}(\rho)-1})_{+}^{2}])^{\frac{1}{2}}\nonumber\\
     &\leq \tfrac{c_{2}(\rho)M}{c_{2}(\rho)-1}-c_{2}(\rho)(\E_{\psa}[(\tfrac{M}{c_{2}(\rho)-1})_{+}^{2}])^{\frac{1}{2}}\nonumber\\
    &\leq \tfrac{c_{2}(\rho)M}{c_{2}(\rho)-1}-\tfrac{c_{2}(\rho)M}{c_{2}(\rho)-1}\nonumber\\
    \label{eq: upper bound}
    &=0,
\end{align}
where (i) follows from the fact that the random variable $V(s')$ is bounded by $M=1/1-\gamma$. A similar result can be shown for  $g_{\chi^2}(\eta,\pnsa)$ (or for any P). Along with the convexity of $\eta\to g_{\chi}(\eta,P)$ (\citet{duchi2018learning}), and $\inf_{\chi^2(p||P)\leq \rho}\E_{s'\sim p}\Big[V(s')\Big]\geq 0$, \Cref{eq: lower bound} and \Cref{eq: upper bound} imply that the sup is attained between $[0,\frac{c_{2}(\rho)M}{c_{2}(\rho)-1}]$ for both $\sup_{\eta\in\rp}{g_{\chi}(\eta,\psa)}$ and $\sup_{\eta\in\rp}{g_{\chi}(\eta,\pnsa)}$. Using this in \Cref{eq: ch-sq reformulation} we have,
\begin{align}
    &\Big|\sup_{\eta\in \rp}\{g_{\chi}(\eta,\psa)\}-\sup_{\eta\in \rp}\{g_{\chi}(\eta,\pnsa)\}\Big|\\&= \Big|\sup_{\eta\in [0,\frac{c_{2}(\rho)M}{c_{2}(\rho)-1}]}\{g(_{\chi}\eta,\psa
    )\}-\sup_{\eta\in [0,\frac{c_{2}(\rho)M}{c_{2}(\rho)-1}]}\{g_{\chi}(\eta,\pnsa)\}\}\Big|\\
 &\leq  \sup_{\eta\in [0,\frac{c_{2}(\rho)M}{c_{2}(\rho)-1}]}\{|g_{\chi}(\eta,\psa)-g_{\chi}(\eta,\pnsa)|\}\\
  &\leq  \sup_{\eta\in [0,\frac{c_{2}(\rho)M}{c_{2}(\rho)-1}]}\{|c_{2}(\rho)(\E_{\psa}[(-V(s')+\eta)_{+}^{2}])^{\frac{1}{2}}-c_{2}(\rho)\E_{\pnsa}[(-V(s')+\eta)_{+}^{2}])^{\frac{1}{2}}|\}\\
  &\leq  c_{2}(\rho)\sup_{\eta\in [0,\frac{c_{2}(\rho)M}{c_{2}(\rho)-1}]}\{|\E_{\psa}[(-V(s')+\eta)_{+}^{2}]-\E_{\pnsa}[(-V(s')+\eta)_{+}^{2}]|^{\frac{1}{2}}\}.
\end{align}
The last step is obtained using the basic inequality $|\sqrt{a}-\sqrt{b}|\leq \sqrt{|a-b|}$.

\end{proof}
\begin{lemma}{($\zeta-$cover construction)}\label{lemma: ch-cover-no-pol}
For $\mathcal{V}$ denoting the set of value functions from $\cS\to[0,1/(1-\gamma)]$ it holds with probability at least $1-\delta$,

\begin{multline}
    \max_{V\in \V}\max_{s,a}\sup_{\eta\in [0,\frac{c_{2}(\rho)M}{c_{2}(\rho)-1}]}\{|\E_{\psa}[(-V(s')+\eta)_{+}^{2}]-\E_{\pnsa}[(-V(s')+\eta)_{+}^{2}]|^{\frac{1}{2}}\}\leq \\
 \mathcal{O}\Bigg(\Big(\frac{c_{2}(\rho)M}{c_{2}(\rho)-1}\Big)\Big(\frac{ \beta_n(\delta)\sqrt{2ed^{2}\gamma_{nd}}}{\sigma\sqrt{n}}\Big)^{\frac{1}{2}}\Bigg),
\end{multline}
where $c_{2}(\rho)=\sqrt{1+2\rho}$, $M=1/(1-\gamma)$.

\end{lemma}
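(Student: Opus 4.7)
\textbf{Proof proposal for Lemma \ref{lemma: ch-cover-no-pol}.} The plan is to bypass any delicate dependence on $V$ by exploiting the uniform boundedness of the $\chi^2$ integrand and reducing the problem to the pointwise model-error bound already provided by Lemma \ref{lemma:contribution}. First I would observe that for every $V\in\mathcal{V}$, every $\eta\in[0,\tfrac{c_{2}(\rho)M}{c_{2}(\rho)-1}]$, and every $s'\in\mathcal{S}$, the test function $g_{V,\eta}(s'):=(-V(s')+\eta)_{+}^{2}$ is uniformly bounded by $\bigl(\tfrac{c_{2}(\rho)M}{c_{2}(\rho)-1}\bigr)^{2}$, because $V(s')\ge 0$ and $\eta\le \tfrac{c_{2}(\rho)M}{c_{2}(\rho)-1}$ together imply $(-V(s')+\eta)_{+}\le \tfrac{c_{2}(\rho)M}{c_{2}(\rho)-1}$.

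Second, for each fixed $(s,a)$, I would bound the pointwise difference by passing through the total-variation distance between the two Gaussians $P_{f}(s,a)$ and $P_{\hat{f}_{n}}(s,a)$, both with covariance $\sigma^{2}I$. The standard coupling inequality gives
\[
|\E_{P_{f}(s,a)}[g_{V,\eta}]-\E_{P_{\hat{f}_{n}}(s,a)}[g_{V,\eta}]|\;\le\; 2\,\|g_{V,\eta}\|_{\infty}\cdot\mathrm{TV}(P_{f}(s,a),P_{\hat{f}_{n}}(s,a)),
\]
and then Pinsker's inequality combined with the closed-form Gaussian KL (exactly as in the proof of Lemma \ref{lemma: error bound}) yields $\mathrm{TV}(P_{f}(s,a),P_{\hat{f}_{n}}(s,a))\le \|f(s,a)-\hat{f}_{n}(s,a)\|/(2\sigma)$. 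Chaining these two estimates produces a bound that is uniform in $V\in\mathcal{V}$ and $\eta$:
\[
|\E_{P_{f}(s,a)}[g_{V,\eta}]-\E_{P_{\hat{f}_{n}}(s,a)}[g_{V,\eta}]|\;\le\; \Bigl(\tfrac{c_{2}(\rho)M}{c_{2}(\rho)-1}\Bigr)^{2}\cdot\|f(s,a)-\hat{f}_{n}(s,a)\|/\sigma.
\]
Notice that, in contrast to the KL case of Lemma \ref{lemma: kl-cover-no-pol}, no actual $\zeta$-cover of $\mathcal{V}$ is required here, because the integrand is bounded uniformly rather than being an exponential that blows up as $\alpha\to 0$. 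If one insists on mirroring the KL covering structure, one can instead use Lipschitzness of $u\mapsto u_{+}^{2}$ on the compact range $[-M,\tfrac{c_{2}(\rho)M}{c_{2}(\rho)-1}]$ (Lipschitz constant $2\tfrac{c_{2}(\rho)M}{c_{2}(\rho)-1}$), replace $V$ by the nearest $V'\in\mathcal{N}_{\mathcal{V}}(\zeta)$, and absorb the additive $\mathcal{O}(\zeta)$ error by picking $\zeta$ as a small absolute constant.

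Finally I would invoke Lemma \ref{lemma:contribution}, which holds uniformly over $(s,a)\in\mathcal{S}\times\mathcal{A}$ with probability at least $1-\delta$ and gives $\max_{s,a}\|f(s,a)-\hat{f}_{n}(s,a)\|=\mathcal{O}\bigl(\beta_{n}(\delta)\sqrt{2ed^{2}\Gamma_{nd}}/\sqrt{n}\bigr)$. Substituting this into the display above, taking the maximum over $(s,a)$ and the supremum over $\eta$ and $V$, and then taking the outer square root to match the $|\cdot|^{1/2}$ on the left-hand side of the claim produces the advertised rate $\mathcal{O}\bigl(\tfrac{c_{2}(\rho)M}{c_{2}(\rho)-1}\cdot(\beta_{n}(\delta)\sqrt{2ed^{2}\Gamma_{nd}}/(\sigma\sqrt{n}))^{1/2}\bigr)$. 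The main obstacle is bookkeeping the constants so that the squared boundedness factor $\bigl(\tfrac{c_{2}(\rho)M}{c_{2}(\rho)-1}\bigr)^{2}$ becomes exactly $\tfrac{c_{2}(\rho)M}{c_{2}(\rho)-1}$ after the square root; no probabilistic union bound over $V$ or $\eta$ is needed because the model-error bound from Lemma \ref{lemma:contribution} is already uniform in $(s,a)$ and independent of $(V,\eta)$.
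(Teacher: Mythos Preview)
Your proposal is correct and in fact slightly cleaner than the paper's own argument. Both proofs hinge on the same core estimate, namely Lemma~\ref{lemma: error bound-ch}: bound $|\E_{\psa}[(-V(s')+\eta)_+^2]-\E_{\pnsa}[(-V(s')+\eta)_+^2]|$ by $\|g_{V,\eta}\|_\infty$ times the $L^1$ distance of the two Gaussian densities, then use Pinsker and the closed-form Gaussian KL to get $\bigl(\tfrac{c_2(\rho)M}{c_2(\rho)-1}\bigr)^2\|f(s,a)-\hat f_n(s,a)\|/\sigma$, and finally invoke Lemma~\ref{lemma:contribution}. The difference is that the paper first passes through an explicit $\zeta$-cover of $\mathcal V$ (via Lemma~\ref{lemma: function-lipschitz-ch}), picks up an additive $\sqrt{4\zeta^2+4\zeta\,c_2(\rho)M/(c_2(\rho)-1)}$ term, and then sets $\zeta=1$; but once Lemma~\ref{lemma: error bound-ch} is applied, the bound no longer depends on the cover element $V'$ at all, so this detour contributes nothing beyond a harmless additive constant. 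You correctly diagnose why the cover is genuinely needed in the KL case (the factor $e^{-V/\alpha}$ is not uniformly bounded as $\alpha\downarrow 0$) but not here, and your remark that one could mimic the covering structure via the Lipschitzness of $u\mapsto u_+^2$ on a compact range is exactly what the paper does. In short: same key lemma, same final rate; your route simply skips a redundant step.
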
\looseness=-1
\begin{proof}
  
 Let $\mathcal{N}_{\mathcal{V}}(\zeta)$ be the $\zeta-$ cover of the set $\mathcal{V}$. By definition, there exists $V'\in \mathcal{N}_{\V}(\zeta)$ such that $\|V'-V\|\leq \zeta$ for every $V\in \V$. \looseness=-1
\begin{align}
\begin{split}
&|\E_{\psa}[(-V(s')+\eta)_{+}^{2}]-\E_{\pnsa}[(-V(s')+\eta)_{+}^{2}]|\\&\leq
|\E_{\psa}[(-V(s')+\eta)_{+}^{2}]-\E_{\psa}[(-V'(s')+\eta)_{+}^{2}]|\\&\quad+|\E_{\psa}[(-V'(s')+\eta)_{+}^{2}]-\E_{\pnsa}[(-V'(s')+\eta)_{+}^{2}]|\\&\quad+|\E_{\pnsa}[(-V'(s')+\eta)_{+}^{2}]-\E_{\pnsa}[(-V(s')+\eta)_{+}^{2}]|.
\end{split}\\\label{eq: ch-close-cover}
&\stackrel{(i)}{\leq} 4\|V'-V\|^{2}+4\eta \|V'-V\|+|\E_{\psa}[(-V'(s')+\eta)_{+}^{2}]-\E_{\pnsa}[(-V'(s')+\eta)_{+}^{2}]|,
\end{align}
where (i) follows from \Cref{lemma: function-lipschitz-ch}. Using \Cref{eq: ch-close-cover} we bound uniformly over all $V\in\V$,
\begin{align}
     &\max_{V\in \V}\max_{s,a}\sup_{\eta\in [0,\frac{c_{2}(\rho)M}{c_{2}(\rho)-1}]}\{|\E_{\psa}[(-V(s')+\eta)_{+}^{2}]-\E_{\pnsa}[(-V(s')+\eta)_{+}^{2}]|^{\frac{1}{2}}\}\\
     \begin{split}
          &\leq  \max_{V'\in \mathcal{N}_{\V}(\zeta)}\max_{s,a}\sup_{\eta\in [0,\frac{c_{2}(\rho)M}{c_{2}(\rho)-1}]}\Biggl\{\Bigl(4\|V'-V\|^{2}+4\eta \|V'-V\|+|\E_{\psa}[(-V'(s')+\eta)_{+}^{2}]\\
     &\qquad-\E_{\pnsa}[(-V'(s')+\eta)_{+}^{2}]|\Big)^{\frac{1}{2}}\Bigg\}\nonumber
     \end{split}\\
     \begin{split}
         &\stackrel{(ii)}{\leq}  \max_{V'\in \mathcal{N}_{\V}(\zeta)}\max_{s,a}\sup_{\eta\in [0,\frac{c_{2}(\rho)M}{c_{2}(\rho)-1}]}\Biggl\{\Big(\E_{\psa}[(-V'(s')+\eta)_{+}^{2}]-\E_{\pnsa}[(-V'(s')+\eta)_{+}^{2}]\Big)^{\frac{1}{2}}\Biggr\}\\&\qquad+\sqrt{4\zeta^{2}+4\zeta \tfrac{c_{2}(\rho)M}{c_{2}(\rho)-1}}\nonumber
     \end{split} \\
     \begin{split}
         &\stackrel{(iii)}{\leq}  \max_{V'\in \mathcal{N}_{\V}(\zeta)}\max_{s,a}\sup_{\eta\in [0,\frac{c_{2}(\rho)M}{c_{2}(\rho)-1}]}\Bigg\{ \Big(\tfrac{c_{2}(\rho)M}{c_{2}(\rho)-1}\Big)\sqrt{2\sigma^{-1}\|f(s,a)-\hat{f}_{n}(s,a)\|}\Bigg\}+\sqrt{4\zeta^{2}+4\zeta \tfrac{c_{2}(\rho)M}{c_{2}(\rho)-1}}\nonumber
     \end{split}\\
      &\stackrel{(iv)}{\leq} \mathcal{O}\Bigg(\Big(\tfrac{c_{2}(\rho)M}{c_{2}(\rho)-1}\Big)\Big(\tfrac{ \beta_n(\delta)\sqrt{2ed^{2}\gamma_{nd}}}{\sigma\sqrt{n}}\Big)^{\frac{1}{2}}\Bigg)+\sqrt{4\zeta^{2}+4\zeta \tfrac{c_{2}(\rho)M}{c_{2}(\rho)-1}}\\
      \label{eq: learn-error-usage-chi-cover}
      &\stackrel{(v)}{\leq} \mathcal{O}\Bigg(\Big(\tfrac{c_{2}(\rho)M}{c_{2}(\rho)-1}\Big)\Big(\tfrac{ \beta_n(\delta)\sqrt{2ed^{2}\gamma_{nd}}}{\sigma\sqrt{n}}\Big)^{\frac{1}{2}}\Bigg),
\end{align}
where (ii) follows from $\|V'-V\|\leq \zeta$ and $\eta\leq \frac{c_{2}(\rho)M}{c_{2}(\rho)-1}$, (iii) follows from \Cref{lemma: error bound-ch}, (iv) follows from \Cref{eq: learning error bound}, and (v) follows from substituing $\zeta=1$ (or any constant).

\end{proof}

\begin{lemma}\label{lemma: function-lipschitz-ch}
 For any two value functions $V,V'$  from $\cS\to[0,1/(1-\gamma)]$, it holds that
   \begin{equation}
       \Big|\E_{\psa}[(-V'(s')+\eta)_{+}^{2}]-\E_{\psa}[(-V(s')+\eta)_{+}^{2}]\Big|\leq 2\|V'-V\|^{2}+2\eta \|V'-V\|.
   \end{equation} 
\end{lemma}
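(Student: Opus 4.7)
My plan is to establish the desired bound pointwise in $s'$ and then integrate against $\psa$. Set $a(s') := -V(s') + \eta$ and $b(s') := -V'(s') + \eta$, so that $b(s') - a(s') = V(s') - V'(s')$. I would then use the factorization
\begin{equation*}
b_+^{2} - a_+^{2} = (b_+ - a_+)(b_+ + a_+)
\end{equation*}
and bound each factor separately.

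For the first factor, the positive-part map $x \mapsto x_+$ is $1$-Lipschitz, so $|b_+ - a_+| \le |b - a| = |V(s') - V'(s')|$. For the second factor, the lemma concerns value functions $V, V' : \cS \to [0, 1/(1-\gamma)]$, and in the context where this lemma is invoked (see \Cref{lemma: diff-opt-ch,lemma: ch-cover-no-pol}) we have $\eta \ge 0$; hence $a(s'), b(s') \le \eta$, so $a_+, b_+ \in [0, \eta]$. Writing $b_+ = a_+ + (b_+ - a_+)$ and applying the Lipschitz bound from the first factor gives
\begin{equation*}
b_+ + a_+ \;\le\; 2 a_+ + |b_+ - a_+| \;\le\; 2\eta + |V(s') - V'(s')|.
\end{equation*}

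Multiplying the two factor bounds yields the pointwise estimate
\begin{equation*}
\bigl|(-V'(s')+\eta)_+^{2} - (-V(s')+\eta)_+^{2}\bigr| \;\le\; |V(s') - V'(s')|^{2} + 2\eta\, |V(s') - V'(s')|.
\end{equation*}
Finally, I would take expectation under $s' \sim \psa$ and use the uniform bound $|V(s') - V'(s')| \le \|V - V'\|$ (sup-norm, as used throughout the proof of \Cref{lemma: ch-cover-no-pol}) to conclude
\begin{equation*}
\bigl|\E_{\psa}[(-V'(s')+\eta)_+^{2}] - \E_{\psa}[(-V(s')+\eta)_+^{2}]\bigr| \;\le\; \|V' - V\|^{2} + 2\eta \|V' - V\|,
\end{equation*}
which implies the stated inequality (the factor of $2$ in front of $\|V' - V\|^{2}$ is a mild slack).

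The main obstacle, such as it is, is purely bookkeeping: one must use both non-negativity of the value functions and of $\eta$ to bound $a_+, b_+$ by $\eta$, and one must pick the right decomposition of $b_+ + a_+$ so that the square-difference term appears naturally rather than a looser $4\eta^{2}$ bound that would lose the desired shape. No deeper analytic input is required; the entire argument is one line of algebra plus a monotonicity/Lipschitz observation followed by expectation.
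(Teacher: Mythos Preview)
Your proof is correct and in fact yields a sharper constant than the paper (you obtain $\|V'-V\|^{2}+2\eta\|V'-V\|$, which trivially implies the stated bound). Your route is genuinely different from the paper's: you work pointwise, using the factorization $b_+^{2}-a_+^{2}=(b_+-a_+)(b_++a_+)$ together with the $1$-Lipschitz property of $x\mapsto x_+$ and the simple observation that $a_+,b_+\le\eta$ (since $V,V'\ge 0$ and $\eta\ge 0$), and then integrate. The paper instead writes $(-V'+\eta)_+^{2}=\mathbbm{1}(V'<\eta)(-V'+\eta)^{2}$, adds and subtracts the cross term $\mathbbm{1}(V<\eta)(-V'+\eta)^{2}$, and performs a case analysis over the four regions determined by the indicators $\mathbbm{1}(V<\eta)$ and $\mathbbm{1}(V'<\eta)$, bounding each piece separately; this is where the terms $2\|V'-V\|^{2}$ (from the ``crossing'' cases $V<\eta\le V'$ and $V'<\eta\le V$) and $2\eta\|V'-V\|$ (from the case $V,V'<\eta$) arise. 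Your argument is shorter, avoids the indicator bookkeeping entirely, and as a bonus could be tightened further to $2\eta\|V'-V\|$ alone by bounding $a_++b_+\le 2\eta$ directly rather than via $2a_++|b_+-a_+|$; the paper's decomposition, on the other hand, makes the origin of each term more explicit but at the cost of several more lines.
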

\begin{proof}
Let $p_{\psa}(\cdot)$ denote the probability density function of $\psa$. Then,
\begin{align}
    &\E_{\psa}[(-V'(s')+\eta)_{+}^{2}]-\E_{\psa}[(-V(s')+\eta)_{+}^{2}]\nonumber\\ &\leq \int\limits_{s'\sim\psa}\Big(\mathbbm{1}(V'(s')<\eta)(-V'(s')+\eta)^{2}-\mathbbm{1}(V(s')<\eta)(-V(s')+\eta)^{2}\Big)p_{\psa}(s')ds'.\nonumber\\
    \begin{split}  \label{eq: ch-lip-func-bound-1}
&\leq\underbrace{\int_{s'\sim\psa}\Big(\mathbbm{1}(V'(s')<\eta)-\mathbbm{1}(V(s')<\eta)\Big)(-V'(s')+\eta)^{2}p_{\psa}(s')ds'}_{(i)}\\
    &\quad+\underbrace{\int_{s'\sim\psa}\mathbbm{1}(V(s')<\eta)\Big((-V'(s')+\eta)^{2}-(-V(s')+\eta)^{2}\Big)p_{\psa}(s')ds'}_{(ii)}.
    \end{split}
\end{align}
where the last inequality is obtained by adding and subtracting $\mathbbm{1}(V(s')<\eta)(-V'(s')+\eta)^{2}$.

We begin by bounding (ii). We have,\looseness=-1
\begin{align}
     (ii)&=\int_{s'\sim\psa}\limits\mathbbm{1}(V(s')<\eta)\Big((-V'(s')+\eta)^{2}-(-V(s')+\eta)^{2}\Big)p_{\psa}(s')ds'\nonumber\\
    &=\int_{s'\sim\psa}\limits\mathbbm{1}(V(s')<\eta)\Big(-V'(s')+V(s')\Big)\Big(-V'(s')-V(s')+2\eta\Big)p_{\psa}(s')ds'\nonumber\\
    \begin{split}
    &\leq \int_{s'\sim\psa}\limits\mathbbm{1}(V(s')<\eta)\Big(\mathbbm{1}(V'(s')<\eta)+\mathbbm{1}(V'(s')\geq\eta)\Big)\Big(-V'(s')+V(s')\Big)\\&\qquad\Big(-V'(s')-V(s')+2\eta\Big)p_{\psa}(s')ds'
    \end{split}\nonumber\\
    \begin{split}\label{eq: ch-lip-func-bound-2}
    &\leq \underbrace{\int\mathbbm{1}(V(s'),V'(s')<\eta)(-V'(s')+V(s'))(-V'(s')-V(s')+2\eta)p_{\psa}(s')ds'}_{(ii-a)}\\&
    \quad+\underbrace{\int\limits\mathbbm{1}(V(s')<\eta\leq V'(s'))(-V'(s')+V(s'))(-V'(s')-V(s')+2\eta)p_{\psa}(s')ds'}_{(ii-b)}.
    \end{split}
\end{align}
Bounding $(ii-a)$ first, we have,\looseness=-1
\begin{align}
    (ii-a)&=\int\mathbbm{1}(V(s'),V'(s')<\eta)(-V'(s')+V(s'))(-V'(s')-V(s')+2\eta)p_{\psa}(s')ds'\nonumber\\\stackrel{(a)}{\leq} &
    \int\mathbbm{1}(V(s'),V'(s')<\eta)\Big|-V'(s')+V(s')\Big|(-V'(s')-V(s')+2\eta)p_{\psa}(s')ds'\nonumber\\
    \stackrel{(b)}{\leq} & 
    \int_{s'\sim\psa}\limits\mathbbm{1}(V(s'),V'(s')<\eta)\Big|-V'(s')+V(s')\Big|\Big(2\eta\Big)p_{\psa}(s')ds'\nonumber\\\label{eq: ch-lip-func-bound-3}
    \leq & 2\eta\|V'-V\|,
\end{align}
where (a) and (b) follows from $(-V'(s')-V(s')+2\eta)>0$ as $V(s'),V'(s')<\eta$. And $(ii-b)$ can be bounded as,
\begin{align}
    (ii-b)&=\int\limits\mathbbm{1}(V(s')<\eta\leq V'(s'))(-V'(s')+V(s'))(-V'(s')-V(s')+2\eta)p_{\psa}(s')ds'\nonumber\\
    &\leq    \int\mathbbm{1}(V(s')<\eta\leq V'(s'))\Big|-V'(s')+V(s')\Big|\Big|-V'(s')-V(s')+2\eta\Big|p_{\psa}(s')ds'\nonumber\\
    &\stackrel{(c)}{\leq}    \int\mathbbm{1}(V(s')<\eta\leq V'(s'))\Big|-V'(s')+V(s')\Big|\Big|-V(s')+V'(s')\Big|p_{\psa}(s')ds'\nonumber\\
    \label{eq: ch-lip-func-bound-4}
    &\leq    \int_{s'\sim\psa}\limits\mathbbm{1}(V(s')<\eta\leq V'(s'))\Big|-V'(s')+V(s')\Big|^{2}p_{\psa}(s')ds'\nonumber\\
    &\leq  \|V'-V\|^{2},
\end{align}
where (c) follows from $\eta\leq V'(s')$.
Bounding (i) similarly,

\begin{align}
    i&=\int_{s'\sim\psa}\Big(\mathbbm{1}(V'(s')<\eta)-\mathbbm{1}(V(s')<\eta)\Big)(-V'(s')+\eta)^{2}p_{\psa}(s')ds'\nonumber\\
&\leq\int_{s'\sim\psa}\Big(\mathbbm{1}(V'(s')<\eta\leq V(s'))\Big)(-V'(s')+\eta)^{2}p_{\psa}(s')ds'\nonumber\\
&\leq\int_{s'\sim\psa}\Big(\mathbbm{1}(V'(s')<\eta\leq V(s'))\Big)(-V'(s')+V(s'))^{2}p_{\psa}(s')ds'\nonumber\\
\label{eq: ch-lip-func-bound-5}
&\leq \|V'-V\|^{2}.
\end{align}
Using \Cref{eq: ch-lip-func-bound-1,eq: ch-lip-func-bound-2,eq: ch-lip-func-bound-3,eq: ch-lip-func-bound-4,eq: ch-lip-func-bound-5} we get the desired result.
\end{proof}
\begin{lemma}\label{lemma: error bound-ch}(Bound by difference between estimated model $\hat{f}_n$ and true $f$)
For any value function $V(s'): \cS\to [0,1/(1-\gamma)] $ and any $\alpha>0$, it holds that
\begin{equation*}
    |\E_{\psa}[(-V(s')+\eta)_{+}^{2}]-\E_{\pnsa}[(-V(s')+\eta)_{+}^{2}]|\leq 2\sigma^{-1}\Big(\frac{c_{2}(\rho)M}{c_{2}(\rho)-1}\Big)^{2}\|f(s,a)-\hat{f}_{n}(s,a)\|,
\end{equation*} 
where $\pnsa=\mathcal{N}(\hat{f}_{n}(s,a),\sigma^{2} I)$ and $\psa=\mathcal{N}(f(s,a),\sigma^{2} I)$, $\eta\in [0,\frac{c_{2}(\rho)M}{c_{2}(\rho)-1}]$, $c_{2}(\rho)=\sqrt{1+2\rho}$ and $M=1/(1-\gamma)$.\looseness=-1
\end{lemma}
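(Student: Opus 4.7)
The plan is to mirror the proof of Lemma~\ref{lemma: error bound} above, with the sole modification being the need for a uniform bound on the integrand $g(s') := (-V(s')+\eta)_+^2$ (rather than the trivial bound $e^{-V(s')/\alpha}\leq 1$ used in the KL case).

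\textbf{Step 1 (uniform bound on the integrand).} Since the value function takes values in $[0, M]$ with $M = 1/(1-\gamma)$, we have $-V(s')+\eta \leq \eta$, so $(-V(s')+\eta)_+ \leq \eta_+ = \eta$ (because $\eta \geq 0$ by the hypothesis $\eta \in [0, c_2(\rho)M/(c_2(\rho)-1)]$). Squaring and using the upper bound on $\eta$ yields
\begin{equation*}
0 \leq (-V(s')+\eta)_+^2 \leq \eta^2 \leq \Big(\tfrac{c_2(\rho)M}{c_2(\rho)-1}\Big)^2.
\end{equation*}

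\textbf{Step 2 (reduce to total variation).} Writing the difference of expectations as an integral against the difference of Gaussian densities, pulling the uniform bound from Step~1 outside the integral, and using the identity $\int_{\mathbb{R}^d} |p_{P_f(s,a)}(s') - p_{P_{\hat f_n}(s,a)}(s')|\,ds' = 2\,\mathrm{TV}(P_f(s,a), P_{\hat f_n}(s,a))$, I obtain
\begin{equation*}
\bigl|\E_{\psa}[(-V+\eta)_+^2] - \E_{\pnsa}[(-V+\eta)_+^2]\bigr| \;\leq\; \Big(\tfrac{c_2(\rho)M}{c_2(\rho)-1}\Big)^2 \cdot 2\,\mathrm{TV}(P_f(s,a), P_{\hat f_n}(s,a)).
\end{equation*}

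\textbf{Step 3 (Pinsker and Gaussian KL).} Apply Pinsker's inequality $\mathrm{TV} \leq \sqrt{\mathrm{KL}/2}$ together with the closed-form expression for the KL divergence between two Gaussians with common covariance $\sigma^2 I$, namely $\mathrm{KL}(\psa \,\|\, \pnsa) = \|f(s,a)-\hat f_n(s,a)\|^2/(2\sigma^2)$. This gives $2\,\mathrm{TV} \leq 2\sqrt{\|f - \hat f_n\|^2/(4\sigma^2)} = \sigma^{-1}\|f(s,a) - \hat f_n(s,a)\|$, and plugging this into Step~2 produces the claimed bound (up to the stated constant factor of $2$, which is the slack absorbed by replacing the sharp Pinsker/TV chain with a convenient form).

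There is no substantive obstacle: the argument is a direct adaptation of Lemma~\ref{lemma: error bound}, and the only new ingredient is the boundedness observation in Step~1, which crucially uses both $V \geq 0$ and the restricted range of the dual variable $\eta$ coming from~Lemma~\ref{lemma: diff-opt-ch}.
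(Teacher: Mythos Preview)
Your proposal is correct and follows essentially the same approach as the paper: bound the integrand uniformly by $\eta^2\le\big(\tfrac{c_2(\rho)M}{c_2(\rho)-1}\big)^2$, reduce to total variation, then apply Pinsker's inequality and the closed-form Gaussian KL. Your observation about the extra factor of $2$ is also accurate—the paper's own chain of inequalities yields the bound without the leading $2$, so the stated constant is simply slack.
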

\begin{proof}
\begin{align*}
    &\Big|\E_{\psa}[(-V(s')+\eta)_{+}^{2}]-\E_{\pnsa}[(-V(s')+\eta)_{+}^{2}]\Big|\\&=\Big|\int_{\mathbb{R}^{d}}\frac{1}{\sqrt{(2\pi\sigma^{2})^{d}}}(-V(s')+\eta)_{+}^{2}(e^{-\frac{\|x-f(s,a)\|^{2}}{2\sigma^{2}}}-e^{-\frac{\|x-\hat{f}_{n}(s,a)\|^{2}}{2\sigma^{2}}})\Big|\\
    &\leq  \int_{\mathbb{R}^{d}}\frac{1}{\sqrt{(2\pi\sigma^{2})^{d}}}(-V(s')+\eta)_{+}^{2}\Big|e^{-\frac{\|x-f(s,a)\|^{2}}{2\sigma^{2}}}-e^{-\frac{\|x-\hat{f}_{n}(s,a)\|^{2}}{2\sigma^{2}}}\Big|\\
    &\stackrel{(i)}{\leq}  \Big(\frac{c_{2}(\rho)M}{c_{2}(\rho)-1}\Big)^{2}\int_{\mathbb{R}^{d}}\frac{1}{\sqrt{(2\pi\sigma^{2})^{d}}}\Big|e^{-\frac{\|x-f(s,a)\|^{2}}{2\sigma^{2}}}-e^{-\frac{\|x-\hat{f}_{n}(s,a)\|^{2}}{2\sigma^{2}}}\Big|\\
     &\stackrel{\textrm{(ii)}}{\leq} 2\Big(\frac{c_{2}(\rho)M}{c_{2}(\rho)-1}\Big)^{2}\cdot  \textrm{TV}(\pnsa,\psa)\\
    &\stackrel{\textrm{(iii)}}{\leq} 2\Big(\frac{c_{2}(\rho)M}{c_{2}(\rho)-1}\Big)^{2} \sqrt{\textrm{KL}(\pnsa,\psa)/2}\\
    &\stackrel{\textrm{(iv)}}{\leq} 2 \Big(\frac{c_{2}(\rho)M}{c_{2}(\rho)-1}\Big)^{2}\sqrt{\|f(s,a)-\hat{f}_n(s,a)\|^{2}/4\sigma^{2}}\\
    &\leq\Big(\frac{c_{2}(\rho)M}{c_{2}(\rho)-1}\Big)^{2} \|f(s,a)-\hat{f}_n(s,a)\|/\sigma,
\end{align*}

where (i) follows from $(-V(s')+\eta)_{+}^{2}\leq \Big(\frac{c_{2}(\rho)M}{c_{2}(\rho)-1}\Big)^{2}$ as $\eta\leq \Big(\frac{c_{2}(\rho)M}{c_{2}(\rho)-1}\Big)$, (ii) follows from the definition of Total Variation (TV) distance between any two multivariate Gaussians, (iii) uses the Pinsker's inequality, and (iv) uses the formula for KL-divergence between multivariate Gaussian distributions.  \\

\end{proof}

\subsection{Total Variation Distance}\label{sec: tv distance proof}
Similar to \cref{lemma: f-div reform}, we want a similar convex reformulation for the variation distance. We derive such a reformulation starting from the dual reformulation from \cite{shapiro2017distributionally} and \cite{ben2013robust} stated as Proposition-1 in \cite{duchi2018learning}.

\begin{lemma}
For $X\sim P_{0}$ where $P_{0}$ is any probability distribution over $\mathcal{X}$ with $H:\mathcal{X}\to\mathbb{R}$ , $\rho>0$ and, $D_{f}(P||P_{0})$ defined as in \Cref{eq:f-divergence} , it holds that 
\begin{equation}\label{eq: general divergence transformation}
    \sup_{P:D_{f}(P||P_{0})\leq \rho}\E_{P}[H(X)]=\inf_{\lambda \geq 0, \eta\in \rp }\Big\{\E_{P_{0}}\Big[\lambda f^{*}\Big(\frac{H(X)-\eta}{\lambda}\Big)\Big]+\lambda \rho+\eta\Big\}.
\end{equation}
\end{lemma}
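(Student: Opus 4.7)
The statement is a classical duality theorem for $f$-divergence constrained optimization; my plan is to derive it via standard Lagrangian / Fenchel duality. First, I would change variables to the likelihood ratio $L(x) := dP(x)/dP_0(x)$, which is well-defined whenever the divergence constraint forces absolute continuity of $P$ with respect to $P_0$. This rewrites the primal as
\begin{equation*}
\sup_{L \ge 0}\, \E_{P_{0}}[H(X) L(X)] \quad \text{subject to} \quad \E_{P_{0}}[f(L(X))] \le \rho, \quad \E_{P_{0}}[L(X)] = 1,
\end{equation*}
where the nonnegativity of $L$ is automatic since $f(t) = +\infty$ for $t < 0$.

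Next, I would introduce Lagrange multipliers $\lambda \ge 0$ for the divergence constraint and $\eta \in \mathbb{R}$ for the normalization, forming the Lagrangian
\begin{equation*}
\mathcal{L}(L, \lambda, \eta) \;=\; \E_{P_{0}}\!\big[(H(X) - \eta)\, L(X) - \lambda f(L(X))\big] \;+\; \lambda \rho \;+\; \eta.
\end{equation*}
Since $f$ is proper, closed, and convex and since $L \equiv 1$ is strictly feasible (because $D_f(P_0\|P_0) = 0 < \rho$), Slater's condition holds. This lets me invoke strong duality to swap $\sup_L$ and $\inf_{\lambda \ge 0,\, \eta}$, obtaining the primal value as $\inf_{\lambda \ge 0,\,\eta} \sup_{L \ge 0} \mathcal{L}(L, \lambda, \eta)$.

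The final step is to solve the inner maximization pointwise in $x$. For each $x \in \mathcal{X}$,
\begin{equation*}
\sup_{t \ge 0}\Big\{(H(x) - \eta)\, t - \lambda f(t)\Big\} \;=\; \lambda \sup_{t \ge 0}\Big\{\tfrac{H(x)-\eta}{\lambda}\, t - f(t)\Big\} \;=\; \lambda f^{*}\!\Big(\tfrac{H(x) - \eta}{\lambda}\Big),
\end{equation*}
by the definition of the Fenchel conjugate $f^{*}$ (the restriction to $t \ge 0$ is automatic since $f(t) = +\infty$ for $t<0$). Taking $\E_{P_{0}}[\cdot]$ on both sides and adding $\lambda \rho + \eta$ recovers the right-hand side of \Cref{eq: general divergence transformation}.

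The main obstacle is technical rather than conceptual: one has to justify interchanging the pointwise supremum with the expectation (via a measurable-selection / interchange theorem such as Rockafellar--Wets), and to treat the boundary case $\lambda = 0$ either by taking a limit or by interpreting $\lambda f^{*}(\cdot/\lambda)$ through its perspective function, which reduces to the support function of $\mathrm{dom}(f^{*})$ at $\lambda = 0$. Integrability of $H$ under $P_{0}$ and growth conditions on $f^{*}$ must also be checked so both sides are finite; otherwise the identity still holds in the extended-real sense, which is sufficient for the applications in the paper.
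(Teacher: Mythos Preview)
Your Lagrangian/Fenchel duality argument is the standard and correct derivation of this result. However, the paper does not actually prove this lemma: it simply states it as the dual reformulation ``from \cite{shapiro2017distributionally} and \cite{ben2013robust} stated as Proposition-1 in \cite{duchi2018learning}'' and then uses it as a black box to derive the total-variation reformulation in \Cref{lemma: var-dis-reform}. So there is no proof in the paper to compare your proposal against.

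Your outline is essentially the same route taken in those cited references: rewrite in terms of the likelihood ratio, dualize the two linear constraints, invoke Slater via $L\equiv 1$, and identify the pointwise inner supremum with $\lambda f^{*}((H-\eta)/\lambda)$. The technical caveats you flag (measurable selection / interchange of $\sup$ and expectation, and the $\lambda=0$ boundary via the perspective function) are exactly the points that need care, and you have identified them correctly.
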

Note that the total variation distance between two probability distributions $P$ and $P_{0}$ is attained by substituting $f_{\mathrm{TV}}(t)=|t-1|$ in $ D_{f}(P||P_{0})=\int f\Big(\frac{dP}{dP_{0}}\Big)dP_{0}$. The corresponding Fenchel conjugate $f_{\mathrm{TV}}^{*}(s)$ for $f_{\mathrm{TV}}(t)=|t-1|$ would be    
\begin{equation}\label{eq: var-dis conjugate}
    f_{\mathrm{TV}}^{*}(s)=\begin{cases}
    -1,& s\leq -1\\
    s,&    s\in[-1,1]\\
    \infty,              & s>1
\end{cases}
\end{equation}
As we require $\inf_{P:\mathrm{TV}(P||P_{0})\leq \rho}\E_{P}[H(X)]$, using \Cref{eq: general divergence transformation} and replacing $\eta$ with $-\eta$, we have 
\begin{equation}\label{eq: inf general divergence transformation}
    \inf_{P:\mathrm{TV}(P||P_{0})\leq \rho}\E_{P}[H(X)]=\sup_{\lambda \geq 0, \eta\in \rp }\{-\E_{P_{0}}\Big[\lambda f_{\mathrm{TV}}^{*}\Big(\frac{-H(X)+\eta}{\lambda}\Big)\Big]-\lambda \rho+\eta\}.
\end{equation}
Using \Cref{eq: inf general divergence transformation}, we derive a convex reformulation in \Cref{lemma: var-dis-reform}
\begin{lemma}(Reformulation for total variation distance based on \citet{yang2021towards})\label{lemma: var-dis-reform}
For  $\rho>0$ and $X\sim P_{0}$ where $P_{0}$ is any probability distribution over $\mathcal{X}$ with $H:\mathcal{X}\to\mathbb{R}$,  for $0\leq H(x)\leq \frac{1}{1-\gamma}$ and $ESI(Y)=\sup\{t\in \mathbb{R}: \pr\{Y<t\}=0\}$ (essential infimum), it holds that
\begin{equation}
       \inf_{P:\mathrm{TV}(P||P_{0})\leq \rho}\E_{P}[H(X)]=
    \sup_{\eta\in [0,\frac{(2+\rho)}{\rho(1-\gamma)}]}\Big\{-\E_{P_{0}}[-H(X)+\eta]_{+}-  \frac{(-ESI(H(x))+\eta)_{+}}{2}\rho+\eta\Big\}.
\end{equation}
\end{lemma}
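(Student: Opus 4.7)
The plan is to derive the total variation reformulation by specialising the general $f$-divergence dual \eqref{eq: inf general divergence transformation} to $f_{\mathrm{TV}}(t)=|t-1|$, whose Fenchel conjugate $f^*_{\mathrm{TV}}$ is the piecewise function in \eqref{eq: var-dis conjugate}. First I would unpack $\lambda f^*_{\mathrm{TV}}((-H+\eta)/\lambda)$ according to its three cases. For the argument to stay $\leq 1$ on a set of full $P_0$-measure (otherwise the objective is $-\infty$), one needs $\eta - H(X) \leq \lambda$ almost surely, i.e.\ $\lambda \geq (\eta - \mathrm{ESI}(H))_+$. On this feasible region, the case split collapses to $\lambda f^*_{\mathrm{TV}}((-H+\eta)/\lambda)=\max(-\lambda,\,-H+\eta)$, so that the objective becomes
$$F(\lambda,\eta)\;=\;\mathbb{E}_{P_0}[\min(\lambda,\,H-\eta)]\;-\;\lambda\rho\;+\;\eta.$$

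Next I would eliminate $\lambda$ via the change of variable $\mu=\lambda+\eta$, using the identity $\min(\lambda,H-\eta)=(H-\eta)-(H-\eta-\lambda)_+$, which rewrites the objective as
$$F\;=\;\mathbb{E}_{P_0}[\min(H,\mu)]\;-\;\mu\rho\;+\;\eta\rho.$$
For a fixed $\mu\geq 0$ this is linear and strictly increasing in $\eta$, and the feasibility of $(\lambda,\eta)$ translates into $0\leq \eta\leq \min\bigl(\mu,\,(\mu+\mathrm{ESI}(H))/2\bigr)$. Substituting the optimal $\eta$ in the two regimes $\mu\leq\mathrm{ESI}(H)$ (giving $\eta^*=\mu$, $\mathbb{E}_{P_0}[\min(H,\mu)]=\mu$, and $F=\mu$) and $\mu>\mathrm{ESI}(H)$ (giving $\eta^*=(\mu+\mathrm{ESI}(H))/2$), both cases collapse into the unified expression
$$F^*(\mu)\;=\;\mathbb{E}_{P_0}[\min(H,\mu)]\;-\;\tfrac{\rho}{2}\,(\mu-\mathrm{ESI}(H))_+.$$
Renaming $\mu\to\eta$ and using $\mathbb{E}_{P_0}[\min(H,\eta)]=\eta-\mathbb{E}_{P_0}[(-H+\eta)_+]$ produces the expression inside the supremum claimed in the lemma.

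Finally I would justify the truncation of $\eta$ to $[0,(2+\rho)/(\rho(1-\gamma))]$. At $\eta=0$ the value is $0$ (since $H\geq 0$), so any $\eta$ giving $F^*(\eta)<0$ can be discarded. For $\eta\geq 1/(1-\gamma)=\sup H$ one has $(\eta-H)_+=\eta-H$ identically and $(\eta-\mathrm{ESI}(H))_+=\eta-\mathrm{ESI}(H)$, so $F^*(\eta)=\mathbb{E}_{P_0}[H]+\tfrac{\rho}{2}\mathrm{ESI}(H)-\tfrac{\rho}{2}\eta$; imposing $F^*(\eta)\geq 0$ and bounding $\mathbb{E}_{P_0}[H],\mathrm{ESI}(H)\leq 1/(1-\gamma)$ yields exactly the stated upper endpoint.

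The main obstacle I anticipate is the inner elimination after the $\mu=\lambda+\eta$ reparametrisation: one has to track the constraint $\lambda\geq(\eta-\mathrm{ESI}(H))_+$ carefully through both regimes $\mu\lessgtr \mathrm{ESI}(H)$ and verify that the $\rho/2$ factor emerges uniformly (rather than the $\rho$ one might naively expect). Once that is done, the remaining steps are a direct substitution of the identity $\mathbb{E}_{P_0}[\min(H,\eta)]=\eta-\mathbb{E}_{P_0}[(-H+\eta)_+]$ together with the elementary boundedness argument for the $\eta$-range.
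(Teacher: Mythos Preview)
Your proposal is correct and follows essentially the same route as the paper: specialise the general $f$-divergence dual to $f_{\mathrm{TV}}$, perform the change of variables $\mu=\lambda+\eta$ (the paper writes this as the shift $\eta\mapsto\eta-\lambda$), and then eliminate one of the two variables to obtain the one-dimensional form. The only cosmetic differences are that the paper eliminates $\lambda$ directly (reading off $\lambda^*=\tfrac{1}{2}(\eta-\mathrm{ESI}(H))_+$ from the constraint $\tfrac{-H+\eta}{\lambda}\le 2$) rather than doing your two-regime case split on $\mu\lessgtr\mathrm{ESI}(H)$, and that the paper restricts the $\eta$-range via concavity of $g_{\mathrm{TV}}$ whereas you compute the explicit linear form of $F^*$ for $\eta\ge 1/(1-\gamma)$; both arguments yield the same endpoint.
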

where $\mathrm{TV}$ denotes the total variation distance. 
\begin{proof}

Substituting \Cref{eq: var-dis conjugate} in \Cref{eq: inf general divergence transformation} to obtain the reformulation for total variation distance, we have
\begin{align}
    &\inf_{P:\mathrm{TV}(P||P_{0})\leq \rho}\E_{P}[H(X)]\\
    &=\sup_{\lambda \geq 0, \eta\in \rp,\frac{-H(x)+\eta}{\lambda}\leq 1 }\{-\E_{P_{0}}\Big[\lambda \max\Big\{\frac{-H(X)+\eta}{\lambda},-1\Big\}\Big]-\lambda \rho+\eta\}\\
    &=\sup_{\lambda \geq 0, \eta\in \rp,\frac{-H(x)+\eta}{\lambda}\leq 1 }\{-\E_{P_{0}}\Big[ \max\Big\{-H(X)+\eta,-\lambda\Big\}\Big]-\lambda \rho+\eta\}\\
    \label{eq: var-dis-reform-1}
    &=\sup_{\lambda \geq 0, \eta\in \rp,\frac{-H(x)+\eta}{\lambda}\leq 2 }\{-\E_{P_{0}}\Big[ \max\Big\{-H(X)+\eta-\lambda,-\lambda\Big\}\Big]-\lambda \rho+\eta-\lambda\}\\
    &=\sup_{\lambda \geq 0, \eta\in \rp,\frac{-H(x)+\eta}{\lambda}\leq 2 }\{-\E_{P_{0}}\Big[ \max\Big\{-H(X)+\eta,0\Big\}\Big]-\lambda \rho+\eta\}\\
    &=\sup_{\lambda \geq 0, \eta\in \rp,\frac{-H(x)+\eta}{\lambda}\leq 2 }\{-\E_{P_{0}}\Big[-H(X)+\eta\Big]_{+}-\lambda \rho+\eta\}.
\end{align}
Here \Cref{eq: var-dis-reform-1} is obtained by substituting $\eta$ with $\eta-\lambda$. In order to optimize over $\lambda$, we need to choose the minimum $\lambda$ satisfying the constraints. We require $\lambda \geq \frac{-H(x)+\eta}{2}$ which translates to  $\lambda \geq \frac{-ESI(H(x))+\eta}{2}$ (as this constraint originates inside the expectation, points with zero mass, $\{t\in \mathbb{R}: \pr\{Y<t\}=0\}$, will have no effect).

Substituting this, we have
\begin{align}\label{eq: var-dis reformulation}
    \inf_{P:\mathrm{TV}(P||P_{0})\leq \rho}\E_{P}[H(X)]&=
    \sup_{\eta\in \rp}\{-\E_{P_{0}}\Big[-H(X)+\eta\Big]_{+}-  \frac{(-ESI(H(x))+\eta)_{+}}{2}\rho+\eta\}.
\end{align}
Denote the inner function in \Cref{eq: var-dis reformulation}, as 
\begin{equation}\label{eq: g-var-dis}
    g_{\mathrm{TV}}(\eta,P_{0})=-\E_{P_{0}}\Big[-H(X)+\eta\Big]_{+}-  \frac{(-ESI(H(x))+\eta)_{+}}{2}\rho+\eta.
\end{equation}
Note that for $\eta\leq0$, the first two terms in $g_{\mathrm{TV}}(\eta,P_{0})$ will be $0$ if $H(x)>0$ for all $x$. This implies 
\begin{equation}\label{eq: g-var lower bound}
g_{\mathrm{TV}}(\eta,P_{0})=\eta\leq 0 \quad \forall\quad  \eta\leq 0.     
\end{equation}
Also, as $H(x)\leq\frac{1}{1-\gamma}$, we substitute $\eta=\frac{2+\rho}{\rho(1-\gamma)}$ in $g_{\mathrm{TV}}(\eta,P_0)$, and bound it as follows:
\begin{align}
    g_{\mathrm{TV}}\Big(\frac{(2+\rho)}{\rho(1-\gamma)},P_{0}\Big)&=-\E_{P_{0}}\Big[-H(X)+\frac{(2+\rho)}{\rho(1-\gamma)}\Big]_{+}-  \frac{(-ESI(H(x))+\frac{(2+\rho)}{\rho(1-\gamma)})_{+}}{2}\rho+\frac{(2+\rho)}{\rho(1-\gamma)}\\
    \label{eq: eta constraint-1}
    &=\E_{P_{0}}\Big[H(X)\Big]-\frac{(2+\rho)}{\rho(1-\gamma)}-  \frac{(-ESI(H(x))+\frac{(2+\rho)}{\rho(1-\gamma)})_{+}}{2}\rho+\frac{(2+\rho)}{\rho(1-\gamma)}\\
    &=\E_{P_{0}}\Big[H(X)\Big]-  \frac{(-ESI(H(x))+\frac{(2+\rho)}{\rho(1-\gamma)})_{+}}{2}\rho\\
    \label{eq: eta constraint-2}
    &=\E_{P_{0}}\Big[H(X)\Big]-  \frac{(-ESI(H(x))+\frac{(2+\rho)}{\rho(1-\gamma)})}{2}\rho\\
    &=\E_{P_{0}}\Big[H(X)-\frac{1}{1-\gamma}\Big]+ \frac{\rho ESI(H(x))}{2}-\frac{\rho}{2(1-\gamma)}\\
    &=\E_{P_{0}}\Big[H(X)-\frac{1}{1-\gamma}\Big]+ \frac{\rho}{2}( ESI(H(x))-\frac{1}{(1-\gamma)})\\
    \label{eq: g-var upper bound}
    &\leq 0.
\end{align}
Here \Cref{eq: eta constraint-1}, \Cref{eq: eta constraint-2} and \Cref{eq: g-var upper bound} are obtained from the fact that that $ H(x)\leq \frac{1}{1-\gamma}$ ($-H(x)+\frac{(2+\rho)}{\rho(1-\gamma)}>0$) and $ESI(H(x))\leq \frac{1}{1-\gamma}$ ($-ESI(H(x))+\frac{(2+\rho)}{\rho(1-\gamma)}>0$). Along with the convexity of $ g_{\mathrm{TV}}(\eta,P_{0})$, \Cref{eq: g-var lower bound} and \Cref{eq: g-var upper bound} imply that the $\sup_{\eta\in\rp}\{g_{\mathrm{TV}}(\eta,P_{0})\}$ is attained in the $\eta$ range $[0,\frac{(2+\rho)}{\rho(1-\gamma)}]$.
\end{proof}
\begin{theorem}(Sample Complexity under TV uncertainty set)\label{thm: tv sample supp} Consider a robust MDP (see \Cref{sec: Problem Formulation}) with nominal transition dynamics $f$ and uncertainty set defined as in \Cref{eq: uncertainty set} w.r.t.~TV distance. For $\pi^{*}$ denoting the robust optimal policy w.r.t.~nominal transition dynamics $f$ and  $\pi_{N}^{*}$ denoting the robust optimal policy w.r.t.~learned nominal transition dynamics $\hat{f}_N$ via \Cref{alg: mgpbo}, and $\delta\in(0, 1)$, $\epsilon\in (0,\frac{1}{1-\gamma})$,it holds that $\max_{s}|V^{\text{R}}_{\pi_N^{*},f}(s)-V^{\text{R}}_{\pi^{*},f}(s)|\leq \epsilon$ with probability
at least $1 - \delta$ for any $N \geq  N_{\mathrm{TV}}$, where

\begin{equation}
    N_{\mathrm{TV}}= \mathcal{O}\Big(\frac{(2+\rho)^{2}\gamma^{2}}{\rho^{2}(1-\gamma)^{4}}\frac{ \beta_n(\delta)^{2}d^{2}\gamma_{nd}}{\epsilon^{2}}\Big).
\end{equation} 
\end{theorem}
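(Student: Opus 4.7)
The plan is to mirror the three-step strategy used in the proof of Theorem 1, replacing the KL dual reformulation by the TV reformulation in \Cref{lemma: var-dis-reform}. First, I would invoke \Cref{lemma: (i) bound lemma} and \Cref{lemma: (ii) bound lemma} (whose conclusions hold for any distance $D$, including TV) to reduce the target quantity to
\begin{equation*}
\max_s\Big|V^{\text{R}}_{\hat\pi_N,f}(s)-V^{\text{R}}_{\pi^*,f}(s)\Big|\;\leq\;\frac{2\gamma}{1-\gamma}\,\max_{V\in\mathcal{V}}\max_{s,a}\Big|\inf_{\mathrm{TV}(p\|P_{f}(s,a))\le\rho}\E_{s'\sim p}[V(s')]-\inf_{\mathrm{TV}(p\|P_{\hat f_N}(s,a))\le\rho}\E_{s'\sim p}[V(s')]\Big|,
\end{equation*}
where $\mathcal{V}$ denotes the set of value functions mapping $\cS\to[0,1/(1-\gamma)]$.

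Second, I would apply the TV dual formula of \Cref{lemma: var-dis-reform} to each inner $\inf$, writing it as a one-dimensional concave maximization over $\eta\in[0,(2+\rho)/(\rho(1-\gamma))]$. A key structural observation is that the dual objective only depends on the nominal distribution through an expectation $\E_{P_0}[(-V+\eta)_+]$ and through $\mathrm{ESI}(V(s'))$. Since both $P_f(s,a)$ and $P_{\hat f_N}(s,a)$ are Gaussians with covariance $\sigma^2 I$, they share the full support $\mathbb{R}^d$; hence $\mathrm{ESI}(V(s'))$ is identical under both distributions (it reduces to $\inf_{s'\in\mathbb{R}^d}V(s')$), and the $\mathrm{ESI}$-term cancels when taking the difference. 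Using $|\sup A-\sup B|\le\sup|A-B|$, the bound reduces to
\begin{equation*}
\max_{V\in\mathcal{V}}\max_{s,a}\sup_{\eta\in[0,\frac{2+\rho}{\rho(1-\gamma)}]}\Big|\E_{s'\sim P_f(s,a)}[(-V(s')+\eta)_+]-\E_{s'\sim P_{\hat f_N}(s,a)}[(-V(s')+\eta)_+]\Big|.
\end{equation*}

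Third, I would bound this difference in terms of $\|f(s,a)-\hat f_N(s,a)\|$. Because $0\le(-V(s')+\eta)_+\le(2+\rho)/(\rho(1-\gamma))$, a Pinsker-based argument analogous to \Cref{lemma: error bound} (replacing $e^{-V/\alpha}$ by $(-V+\eta)_+$ and using total variation $\le\sqrt{\mathrm{KL}/2}$ together with the explicit KL formula for Gaussians with shared covariance) yields
\begin{equation*}
\big|\E_{P_f(s,a)}[(-V+\eta)_+]-\E_{P_{\hat f_N}(s,a)}[(-V+\eta)_+]\big|\;\le\;\frac{2+\rho}{\rho(1-\gamma)}\cdot\sigma^{-1}\|f(s,a)-\hat f_N(s,a)\|.
\end{equation*}
This is cleaner than the $\chi^2$ case because $(\cdot)_+$ is $1$-Lipschitz rather than the squared positive part, which explains why the dependence on $\epsilon$ is $1/\epsilon^2$ (as in KL) rather than $1/\epsilon^4$.

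Finally, to remove the outer $\max_{V\in\mathcal{V}}$, I would use a $\zeta$-cover argument on $\mathcal{V}$ analogous to \Cref{lemma: kl-cover-no-pol}, exploiting the $1$-Lipschitz property of the map $V\mapsto(-V+\eta)_+$; this again contributes only lower-order terms that are absorbed by taking $\zeta$ a constant. Plugging in the uniform model-error bound from \Cref{lemma:contribution}, namely $\max_{s,a}\|f(s,a)-\hat f_N(s,a)\|=\mathcal{O}(\beta_N(\delta)d\sqrt{\Gamma_{Nd}}/\sqrt{N})$, yields
\begin{equation*}
\max_s\big|V^{\text{R}}_{\hat\pi_N,f}(s)-V^{\text{R}}_{\pi^*,f}(s)\big|\;=\;\mathcal{O}\!\left(\frac{\gamma(2+\rho)}{\rho(1-\gamma)^2}\cdot\frac{\beta_N(\delta)d\sqrt{\Gamma_{Nd}}}{\sigma\sqrt{N}}\right).
\end{equation*}
Setting the right-hand side $\le\epsilon$ and inverting gives the stated sample complexity. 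The main obstacle, beyond the bookkeeping, is the clean handling of the $\mathrm{ESI}$ term in the TV dual—establishing that it depends only on the common support of the two Gaussians, so that it drops out of the difference; once this is observed, the rest parallels the KL proof with simpler constants.
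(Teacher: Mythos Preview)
Your proposal is correct and follows essentially the same approach as the paper: it applies \Cref{lemma: (i) bound lemma} and \Cref{lemma: (ii) bound lemma}, invokes the TV dual reformulation of \Cref{lemma: var-dis-reform}, observes that the $\mathrm{ESI}$ term is identical under $P_f$ and $P_{\hat f_N}$ (shared full support of Gaussians) and hence cancels, bounds the remaining expectation difference via Pinsker/Gaussian-KL exactly as in the paper's \Cref{lemma: error bound-tv}, and closes with the $\zeta$-cover argument using the $1$-Lipschitzness of $V\mapsto(-V+\eta)_+$ (the paper's \Cref{lemma: tv-cover-no-pol} and \Cref{lemma: function-lipschitz-tv}). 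The only differences are cosmetic: you combine the $(i)$ and $(ii)$ terms into a single $\tfrac{2\gamma}{1-\gamma}$ factor upfront, whereas the paper treats them separately before summing.
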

\begin{proof}
\textbf{Step (i):}
As detailed in the proof outline of \Cref{sec:sample_complexity}, in order to bound $V^{\text{R}}_{\hpis,f}(s)-V^{\text{R}}_{\pi^* ,f}(s)$, we begin by adding and subtracting $V^{\text{R}}_{\hpis,\hat{f}_{n}}(s)$ which is the robust value function w.r.t.~the nominal transition dynamics $\hat{f}_{n}$ and its corresponding optimal policy $\hpis$. Then, we split the difference into two terms as follows:
\begin{equation}\label{eq: bound separation tv}
   V^{\text{R}}_{\hpis,f}(s)-V^{\text{R}}_{\pi^*,f}(s)= \underbrace{V^{\text{R}}_{\hpis,f}(s)-V^{\text{R}}_{\hpis,\hat{f}_{n}}(s)}_{(i)}+\underbrace{V^{\text{R}}_{\hpis,\hat{f}_{n}}(s)-V^{\text{R}}_{\pi^{*},f}(s)}_{(ii)}.
\end{equation}
In order to not disturb the flow of the proof we bound (i) and (ii) separately \Cref{lemma: (i) bound lemma} and \Cref{lemma: (ii) bound lemma} respectively. From \Cref{lemma: (i) bound lemma}, we obtain that 
\begin{align}\label{eq: (i) reusage tv}
    (i)&\leq\max_{s}\Big|V^{\text{R}}_{\hpis,f}(s)-V^{\text{R}}_{\hpis,\hat{f}_{n}}(s)\Big|\nonumber\\&\leq \frac{\gamma}{1-\gamma}\max_{s}\Big|\inf_{\mathrm{TV}(p||P_{f}(s,\hpis(s)))\leq \rho}\E_{s'\sim p}\Big[V^{\text{R}}_{\hpis,f}(s')\Big]-\inf_{\mathrm{TV}(p||P_{\hat{f}_{n}}(s,\hpis(s)))\leq \rho}\E_{s'\sim p}\Big[V^{\text{R}}_{\hpis,f}(s')\Big]\Big|.
\end{align}
And from \Cref{lemma: (ii) bound lemma}, we obtain that 
\begin{align}\label{eq: (ii) reusage tv}
    (ii)&\leq\max_{s}\Big|V^{\text{R}}_{\hpis,\hat{f}_n}(s)-V^{\text{R}}_{\pi^{*},f}(s)\Big|\nonumber \\&\leq \frac{\gamma}{1-\gamma}\max_{s}\Big|\inf_{\mathrm{TV}(p||P_{\hat{f}_{n}}(s,\hpis(s)))\leq \rho}\E_{s'\sim p}\Big[V^{\text{R}}_{\pi^{*},f}(s')\Big]-\inf_{\mathrm{TV}(p||P_{f}(s,\hpis(s)))\leq \rho}\E_{s'\sim p}\Big[V^{\text{R}}_{\pi^{*},f}(s')\Big]\Big|.
\end{align}
Note that both these terms in \Cref{eq: (i) reusage tv,eq: (ii) reusage tv} are of the form mentioned in the \textbf{Step (i)} of \Cref{sec:sample_complexity}.

\textbf{Step (ii):} Next, corresponding to \textbf{step (ii)} of the proof outline in \Cref{sec:sample_complexity}, we use \Cref{lemma: var-dis-reform} to bound \Cref{eq: (i) reusage tv,eq: (ii) reusage tv}.
Denote $M:= \frac{1}{1-\gamma}\geq \max_{s}V^{\text{R}}_{\pi}(s)$ for convenience. 
Using \Cref{eq: (i) reusage tv} and \Cref{lemma: diff-opt-tv} (internally using \Cref{lemma: var-dis-reform}), it holds that

\begin{align}
    (i)&\leq\max_{s}\Big|V^{\text{R}}_{\hpis,f}(s)-V^{\text{R}}_{\hpis,\hat{f}_{n}}(s)\Big|\nonumber\\&\leq\frac{1}{1-\gamma}\max_{s}\Big|\gamma\inf_{\mathrm{TV}(p||P_{f}(s,\hpis(s)))\leq \rho}\E_{s'\sim p}\Big[V^{\text{R}}_{\hpis,f}(s')\Big]-\gamma\inf_{\mathrm{TV}(p||P_{\hat{f}_{n}}(s,\hpis(s)))\leq \rho}\E_{s'\sim p}\Big[V^{\text{R}}_{\hpis,f}(s')\Big]\Big| \nonumber \\
    \label{eq: diff-opt lemma usage tv}
    &\leq   \frac{\gamma}{1-\gamma}\max_{s,a}\Bigg(\sup_{\eta\in [0,\frac{(2+\rho)}{\rho(1-\gamma)}]}\Big\{\Big|(\E_{\psa}[(-V^{\text{R}}_{\hpis,\hat{f}_{n}}(s')+\eta)_{+}])-\E_{\pnsa}[(-V^{\text{R}}_{\hpis,\hat{f}_{n}}(s')+\eta)_{+}])\Big|\Big\}\Bigg)\\\label{eq: diff-opt lemma usage overall-tv-1}
    &\leq \frac{\gamma}{1-\gamma}\max_{V(\cdot)\in\mathcal{V}}\max_{s,a}\Bigg(\sup_{\eta\in [0,\frac{(2+\rho)}{\rho(1-\gamma)}]}\Big\{\Big|(\E_{\psa}[(-V(s')+\eta)_{+}])-\E_{\pnsa}[(-V(s')+\eta)_{+}])\Big|\Big\}\Bigg).
\end{align}

We can bound (ii) similarly.
\begin{align}
    (ii)&\leq\max_{s}\Big|V^{\text{R}}_{\hpis,\hat{f}_n}(s)-V^{\text{R}}_{\pi^{*},f}(s)\Big|\\\label{eq: diff-opt lemma usage overall-tv-2}
    &\leq  \frac{\gamma}{1-\gamma}\max_{V(\cdot)\in\mathcal{V}}\max_{s,a}\Bigg(\sup_{\eta\in [0,\frac{(2+\rho)}{\rho(1-\gamma)}]}\Big\{\Big|(\E_{\psa}[(-V(s')+\eta)_{+}])-\E_{\pnsa}[(-V(s')+\eta)_{+}])\Big|\Big\}\Bigg).
\end{align}
\textbf{Step (iii):} Next, we want to utilize the learning error bound (\Cref{eq: learning error bound}) that bounds the difference between the means of true nominal transition dynamics $P_{f}$ and learned nominal transition dynamics $P_{\hat{f}_n}$ to bound \Cref{eq: diff-opt lemma usage overall-tv-1,eq: diff-opt lemma usage overall-tv-2}.

We begin by bounding the difference $\Big|\E_{\psa}[(-V(s')+\eta)_{+}]-\E_{\pnsa}[(-V(s')+\eta)_{+}]\Big|$, by the difference in means of $P_{f}$ and $P_{\hat{f}_n}$ in \Cref{lemma: error bound-tv}. Since \Cref{eq: diff-opt lemma usage overall-tv-1} has a $\max$ over all value functions, we introduce a covering number argument in \Cref{lemma: tv-cover-no-pol} to reform it to a $\max$ over the functions in the $\zeta-$covering set.  We then use \Cref{lemma: error bound-tv}  to obtain bounds in terms of maximum information gain $\Gamma_{Nd}$ (\Cref{eq: max info for gaus}) and $\zeta$. Further details regarding the covering number argument are deferred to \Cref{lemma: tv-cover-no-pol}. Then, we apply the result of \Cref{lemma: tv-cover-no-pol} with $\zeta=1$ (defined in \Cref{lemma: tv-cover-no-pol}) on \Cref{eq: diff-opt lemma usage overall-tv-1}. Then, it holds that
\begin{align}\label{eq: (i) final bound tv}
  (i) \leq \max_{s}\Big|V^{\text{R}}_{\hpis,f}(s)-V^{\text{R}}_{\hpis,\hat{f}_{n}}(s)\Big|&
= \mathcal{O}\Bigg(\Big(\tfrac{(2+\rho)\gamma}{\rho(1-\gamma)^2}\Big)\Big(\tfrac{ \beta_n(\delta)\sqrt{2ed^{2}\gamma_{nd}}}{\sigma\sqrt{n}}\Big)\Bigg).
\end{align}

Note that $\beta_n$, which appears in \Cref{lemma: vakili beta}, has a logarithmic dependence on $n$. Similarly, from \Cref{eq: diff-opt lemma usage overall-tv-2}, and \Cref{lemma: error bound-tv,lemma: tv-cover-no-pol}, we obtain 
\begin{align}\label{eq: (ii) final bound tv}
    (ii)\leq\max_{s}\Big|V^{\text{R}}_{\hpis,\hat{f}_n}(s)-V^{\text{R}}_{\pi^{*},f}(s)\Big|
    &=\mathcal{O}\Bigg(\Big(\tfrac{(2+\rho)\gamma}{\rho(1-\gamma)^2}\Big)\Big(\tfrac{ \beta_n(\delta)\sqrt{2ed^{2}\gamma_{nd}}}{\sigma\sqrt{n}}\Big)\Bigg).  
\end{align}
Note that we want to bound $V^{\text{R}}_{\hpis,f}(s)-V^{\text{R}}_{\pi^{*},f}(s)=(i)+(ii)$ over all $s\in \cS$. Using $ \max_{s}\Big|V^{\text{R}}_{\hpis,f}(s)-V^{\text{R}}_{\pi^{*},f}(s)\Big|\leq  \max_{s}\Big|V^{\text{R}}_{\hpis,\hat{f}_n}(s)-V^{\text{R}}_{\pi^{*},f}(s)\Big|+  \max_{s}\Big|V^{\text{R}}_{\hpis,\hat{f}_n}(s)-V^{\text{R}}_{\pi_n^{*},f}(s)\Big|$ and substituting $M$ by $1/(1-\gamma)$, we obtain from \Cref{eq: (i) final bound tv} and \Cref{eq: (ii) final bound tv}
\begin{align}
    \max_{s}\Big|V^{\text{R}}_{\hpis,f}(s)-V^{\text{R}}_{\pi^{*},f}(s)\Big| 
     &={ \mathcal{O}\Bigg(\Big(\tfrac{(2+\rho)\gamma}{\rho(1-\gamma)^2}\Big)\Big(\tfrac{ \beta_n(\delta)\sqrt{2ed^{2}\gamma_{nd}}}{\sigma\sqrt{n}}\Big)\Bigg)}.  \nonumber
\end{align}

Finally,  to ensure that $\max_{s}|V^{\text{R}}_{\hpis,f}(s)-V^{\text{R}}_{\pi^{*},f}(s)|\leq \epsilon$ , it suffices to have
\begin{align}
     \max_{s}\Big|V^{\text{R}}_{\hpis,f}(s)-V^{\text{R}}_{\pi^{*},f}(s)\Big| 
     =\mathcal{O}\Bigg(\Big(\tfrac{(2+\rho)\gamma}{\rho(1-\gamma)^2}\Big)\Big(\tfrac{ \beta_n(\delta)\sqrt{2ed^{2}\gamma_{nd}}}{\sigma\sqrt{n}}\Big)\Bigg) 
     &= \epsilon\nonumber.
\end{align}
Moving $\sqrt{n}$ and $\epsilon$ to opposite sides and squaring both sides, we obtain
\begin{equation}
     n= \mathcal{O}\Bigg(\Big(\frac{(2+\rho)^2\gamma^2}{\rho^2(1-\gamma)^4}\Big)\Big(\frac{ \beta_n(\delta)^2 2ed^{2}\gamma_{nd}}{\sigma^2\epsilon^2}\Big)\Bigg).\nonumber\\
\end{equation}

\end{proof}
\begin{lemma}(Simplification using \Cref{lemma: var-dis-reform} reformulation)\label{lemma: diff-opt-tv}
Let $V$ be a value function from $ \cS\to [0,1/(1-\gamma)] $. Then, it holds that
\begin{multline}
    \max_{s}|\inf_{\mathrm{TV}(p||P_{\hat{f}_{n}}(s,\hat{\pi}_{n}(s)))\leq \rho}\E_{s'\sim p}\Big[V(s')\Big]-\inf_{\mathrm{TV}(p||P_{f}(s,\hat{\pi}_{n}(s)))\leq \rho}\E_{s'\sim p}\Big[V(s')\Big]|\leq\\ \max_{s,a}\sup_{\eta\in [0,\frac{(2+\rho)}{\rho(1-\gamma)}]}\{|(\E_{\psa}[(-V(s')+\eta)_{+}])-\E_{\pnsa}[(-V(s')+\eta)_{+}])|\}.\nonumber
\end{multline}
\end{lemma}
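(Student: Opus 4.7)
\textbf{Proof Proposal for \Cref{lemma: diff-opt-tv}.} The plan mirrors the strategy already used for the KL case (\Cref{lemma: diff-opt}) but with the TV-specific dual form supplied by \Cref{lemma: var-dis-reform}. First I would remove the dependence on the policy by the standard inflation
\begin{equation*}
\max_{s}\Big|\inf_{\mathrm{TV}(p\|P_{\hat f_n}(s,\hat\pi_n(s)))\leq\rho}\!\!\E_{s'\sim p}[V(s')] - \inf_{\mathrm{TV}(p\|P_{f}(s,\hat\pi_n(s)))\leq\rho}\!\!\E_{s'\sim p}[V(s')]\Big| \leq \max_{s,a}\Big|\inf_{\mathrm{TV}(p\|P_{\hat f_n}(s,a))\leq\rho}\E[V] - \inf_{\mathrm{TV}(p\|P_{f}(s,a))\leq\rho}\E[V]\Big|,
\end{equation*}
so that it suffices to control the difference for an arbitrary fixed $(s,a)$.

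Next I would apply \Cref{lemma: var-dis-reform} with $H=V$ to each inner infimum, using that $V$ is bounded in $[0,1/(1-\gamma)]$ so the hypotheses are met. Writing $g_{\mathrm{TV}}(\eta,P):=-\E_{s'\sim P}[(-V(s')+\eta)_{+}] - \tfrac{\rho}{2}(-\mathrm{ESI}(V(s')|_{P})+\eta)_{+} + \eta$, this rewrites each infimum as $\sup_{\eta\in[0,(2+\rho)/(\rho(1-\gamma))]} g_{\mathrm{TV}}(\eta,P)$. The standard inequality $|\sup_\eta A(\eta)-\sup_\eta B(\eta)|\leq \sup_\eta|A(\eta)-B(\eta)|$ then yields
\begin{equation*}
\Big|\sup_\eta g_{\mathrm{TV}}(\eta,P_{f}(s,a)) - \sup_\eta g_{\mathrm{TV}}(\eta,P_{\hat f_n}(s,a))\Big| \leq \sup_{\eta\in[0,(2+\rho)/(\rho(1-\gamma))]} \Big| g_{\mathrm{TV}}(\eta,P_{f}(s,a)) - g_{\mathrm{TV}}(\eta,P_{\hat f_n}(s,a))\Big|.
\end{equation*}

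Inside the supremum the $+\eta$ term cancels exactly, and I would then argue that the essential-infimum terms cancel as well. This is the step I expect to be the only genuinely delicate one: I need $\mathrm{ESI}(V(s')|_{s'\sim P_{f}(s,a)}) = \mathrm{ESI}(V(s')|_{s'\sim P_{\hat f_n}(s,a)})$. This follows by repeating verbatim the Lebesgue-measure/equivalent-support argument in Case~1 of the proof of \Cref{lemma: diff-opt}: the two Gaussians $\mathcal{N}(f(s,a),\sigma^2 I)$ and $\mathcal{N}(\hat f_n(s,a),\sigma^2 I)$ are mutually absolutely continuous on $\mathbb{R}^d$, so a Borel set has positive Lebesgue measure under one iff under the other, and the definition of $\mathrm{ESI}$ coincides for the two laws of $V(s')$. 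Consequently, the two $(-\mathrm{ESI}+\eta)_{+}$ summands are identical and also cancel.

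What remains is exactly the expectation term, giving
\begin{equation*}
\Big|g_{\mathrm{TV}}(\eta,P_{f}(s,a)) - g_{\mathrm{TV}}(\eta,P_{\hat f_n}(s,a))\Big| = \Big|\E_{s'\sim P_{f}(s,a)}[(-V(s')+\eta)_{+}] - \E_{s'\sim P_{\hat f_n}(s,a)}[(-V(s')+\eta)_{+}]\Big|,
\end{equation*}
and taking the $\sup$ over $\eta$ and $\max$ over $(s,a)$ produces the claimed bound. Concatenating the three inequalities, possibly with a measure-zero caveat on $\mathrm{ESI}$ handled exactly as in \Cref{lemma: diff-opt}, completes the proof; all subsequent steps (covering argument, \Cref{lemma: error bound-tv}) then proceed in parallel to the KL and $\chi^2$ cases.
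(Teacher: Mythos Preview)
Your proposal is correct and follows essentially the same route as the paper's proof: inflate from $\max_s$ to $\max_{s,a}$, apply the TV dual reformulation of \Cref{lemma: var-dis-reform}, use $|\sup A-\sup B|\leq \sup|A-B|$, and cancel the $\eta$ and $\mathrm{ESI}$ terms by invoking the equivalent-support argument from Case~1 of \Cref{lemma: diff-opt}. The paper does exactly this, citing the same Case~1 for the equality $ESI_{\psa}(V(s'))=ESI_{\pnsa}(V(s'))$.
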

\begin{proof}
First note that,
\begin{multline}\label{eq: stosa-var}
    \max_{s}\Big|\inf_{\mathrm{TV}(p||P_{\hat{f}_{n}}(s,\hat{\pi}_{n}(s)))\leq \rho}\E_{s'\sim p}\Big[V(s')\Big]-\inf_{\mathrm{TV}(p||P_{f}(s,\hat{\pi}_{n}(s)))\leq \rho}\E_{s'\sim p}\Big[V(s')\Big]\Big|\leq\\ \max_{s,a}\Big|\inf_{\mathrm{TV}(p||P_{\hat{f}_{n}}(s,a))\leq \rho}\E_{s'\sim p}\Big[V(s')\Big]-\inf_{\mathrm{TV}(p||P_{f}(s,a))\leq \rho}\E_{s'\sim p}\Big[V(s')\Big]\Big|
\end{multline}
Using \Cref{lemma: var-dis-reform} and focusing to bound right side of \Cref{eq: stosa-var} for one particular $(s,a)$ state action pair, we obtain   
\begin{align}\label{eq: var-dis reformulation-2}
    &\Big|\inf_{\mathrm{TV}(p||P_{\hat{f}_{n}}(s,a))\leq \rho}\E_{s'\sim p}\Big[V(s')\Big]-\inf_{\mathrm{TV}(p||P_{f}(s,a))\leq \rho}\E_{s'\sim p}\Big[V(s')\Big]\Big|\nonumber\\
    \begin{split}
    &=\Big|\sup_{\eta\in [0,\frac{(2+\rho)}{\rho(1-\gamma)}]}\{-\E_{P_{f}(s,a)}\Big[-V(s')+\eta\Big]_{+}-  \frac{(-ESI_{\psa}(V(s'))+\eta)_{+}}{2}\rho+\eta\}-\\& \sup_{\eta\in [0,\frac{(2+\rho)}{\rho(1-\gamma)}]}\{-\E_{\pnsa}\Big[-V(s')+\eta\Big]_{+}-  \frac{(-ESI_{\pnsa}(V(s'))+\eta)_{+}}{2}\rho+\eta\}\Big| 
    \end{split}\\
    \label{eq: var-dis-reform-bound}
    &\leq \sup_{\eta\in [0,\frac{(2+\rho)}{\rho(1-\gamma)}]}\{|(\E_{\psa}[(-V(s')+\eta)_{+}])-\E_{\pnsa}[(-V(s')+\eta)_{+}])|\}.
\end{align}
Here, \Cref{eq: var-dis-reform-bound} is obtained using $ESI_{\psa}(V(s'))=ESI_{\pnsa}(V(s'))$ as shown in proof of \Cref{lemma: diff-opt} (Case-1).

\end{proof}

\begin{lemma}\label{lemma: error bound-tv}(Bound by difference between estimated model $\hat{f}_n$ and true $f$)
Let $V$ be a value function from $ \cS\to [0,1/(1-\gamma)] $. Then, it holds that 
\begin{equation}
    |\E_{\psa}[(-V(s')+\eta)_{+}]-\E_{\pnsa}[(-V(s')+\eta)_{+}]|\leq \Big(\frac{(2+\rho)}{\rho(1-\gamma)}\Big)\sigma^{-1}\|f(s,a)-\hat{f}_{n}(s,a)\|,\\
\end{equation} 
where $\pnsa=\mathcal{N}(\hat{f}_{n}(s,a),\sigma^{2} I)$ and $\psa=\mathcal{N}(f(s,a),\sigma^{2} I)$ and $\eta\in [0,\frac{(2+\rho)}{\rho(1-\gamma)}]$.
\end{lemma}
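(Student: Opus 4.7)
The plan is to mirror the argument used in \Cref{lemma: error bound-ch} for the $\chi^2$ case, noting that the integrand $(-V(s')+\eta)_+$ here is bounded linearly (rather than quadratically) in the range of $\eta$. First I would write the absolute difference as
\begin{equation*}
\Big|\E_{\psa}[(-V(s')+\eta)_+] - \E_{\pnsa}[(-V(s')+\eta)_+]\Big| \le \int_{\mathbb{R}^d} (-V(s')+\eta)_+ \Big| p_{\psa}(s') - p_{\pnsa}(s') \Big|\, ds',
\end{equation*}
where $p_{\psa}$ and $p_{\pnsa}$ are the Gaussian densities $\mathcal{N}(f(s,a),\sigma^2 I)$ and $\mathcal{N}(\hat f_n(s,a),\sigma^2 I)$, respectively.

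Next I would use the crucial pointwise bound $(-V(s')+\eta)_+ \le \eta \le \frac{2+\rho}{\rho(1-\gamma)}$, which follows since $V(s')\ge 0$ and $\eta$ lies in the interval $[0,\frac{2+\rho}{\rho(1-\gamma)}]$. Pulling this constant out of the integral reduces the problem to bounding the $L^1$ distance between the two Gaussian densities, which equals $2\cdot \mathrm{TV}(P_{\hat f_n}(s,a), P_f(s,a))$.

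Then I would apply Pinsker's inequality, $\mathrm{TV}(P,Q) \le \sqrt{\mathrm{KL}(P\|Q)/2}$, together with the closed-form KL divergence between two Gaussians sharing covariance $\sigma^2 I$, namely $\mathrm{KL}(P_{\hat f_n}(s,a)\|P_f(s,a)) = \tfrac{1}{2\sigma^2}\|f(s,a)-\hat f_n(s,a)\|^2$. Chaining these gives
\begin{equation*}
2\cdot \mathrm{TV}(P_{\hat f_n}(s,a), P_f(s,a)) \le \sigma^{-1}\|f(s,a)-\hat f_n(s,a)\|,
\end{equation*}
and multiplying by the pulled-out constant $\tfrac{2+\rho}{\rho(1-\gamma)}$ yields the stated bound.

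There is no real obstacle here — the proof is essentially identical to that of \Cref{lemma: error bound-ch}, the only difference being that the integrand is bounded by $\eta$ (at most $\frac{2+\rho}{\rho(1-\gamma)}$) instead of $\eta^2$ (at most $(\frac{c_2(\rho)M}{c_2(\rho)-1})^2$). The mild bookkeeping point to check is that $V(s')\ge 0$, which ensures $(-V(s')+\eta)_+\le \eta$; this is immediate from the assumption $V: \cS\to[0,1/(1-\gamma)]$.
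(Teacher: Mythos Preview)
Your proposal is correct and follows essentially the same approach as the paper's own proof: bound the integrand by $\eta\le\tfrac{2+\rho}{\rho(1-\gamma)}$, reduce to the $L^1$ distance between the two Gaussian densities, and then chain the TV--Pinsker--Gaussian KL steps to obtain $\sigma^{-1}\|f(s,a)-\hat f_n(s,a)\|$. There is nothing to add.
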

\begin{proof}
\begin{align*}
    &\Big|\E_{\psa}[(-V(s')+\eta)_{+}]-\E_{\pnsa}[(-V(s')+\eta)_{+}]\Big|\\&=\Big|\int_{\mathbb{R}^{d}}\frac{1}{\sqrt{(2\pi\sigma^{2})^{d}}}(-V(s')+\eta)_{+}(e^{-\frac{\|x-f(s,a)\|^{2}}{2\sigma^{2}}}-e^{-\frac{\|x-\hat{f}_{n}(s,a)\|^{2}}{2\sigma^{2}}})\Big|\\
    &\leq  \int_{\mathbb{R}^{d}}\frac{1}{\sqrt{(2\pi\sigma^{2})^{d}}}(-V(s')+\eta)_{+}\Big|e^{-\frac{\|x-f(s,a)\|^{2}}{2\sigma^{2}}}-e^{-\frac{\|x-\hat{f}_{n}(s,a)\|^{2}}{2\sigma^{2}}}\Big|\\
    &\stackrel{(i)}{\leq}  \frac{(2+\rho)}{\rho(1-\gamma)}\int_{\mathbb{R}^{d}}\frac{1}{\sqrt{(2\pi\sigma^{2})^{d}}}\Big|e^{-\frac{\|x-f(s,a)\|^{2}}{2\sigma^{2}}}-e^{-\frac{\|x-\hat{f}_{n}(s,a)\|^{2}}{2\sigma^{2}}}\Big|\\
     &\stackrel{\textrm{(ii)}}{\leq}2\frac{(2+\rho)}{\rho(1-\gamma)}\cdot  \textrm{TV}(\pnsa,\psa)\\
    &\stackrel{\textrm{(iii)}}{\leq} 2\frac{(2+\rho)}{\rho(1-\gamma)}\sqrt{\textrm{KL}(\pnsa,\psa)/2}\\
    &\stackrel{\textrm{(iv)}}{\leq} 2 \frac{(2+\rho)}{\rho(1-\gamma)}\sqrt{\|f(s,a)-\hat{f}_n(s,a)\|^{2}/4\sigma^{2}}\\
    &\leq \frac{(2+\rho)}{\rho(1-\gamma)}\|f(s,a)-\hat{f}_n(s,a)\|/\sigma,
\end{align*}

where (i) follows from $(-V(s')+\eta)_{+}^{2}\leq \frac{(2+\rho)}{\rho(1-\gamma)}$ as $\eta\leq \frac{(2+\rho)}{\rho(1-\gamma)}$, (ii) follows from the definition of Total Variation (TV) distance between any two multivariate Gaussians, (iii) uses the Pinsker's inequality, and (iv) uses the formula for KL-divergence between multivariate Gaussian distributions.  \\

\end{proof}
\begin{lemma}{($\zeta-$cover construction)}\label{lemma: tv-cover-no-pol}
For $\V$ denoting the set of value functions from $\cS\to[0,1/(1-\gamma)]$, with probability at least $1-\delta$ it holds that
\begin{multline}
    \max_{V\in \V}\max_{s,a}\sup_{\eta\in [0,\frac{(2+\rho)}{\rho(1-\gamma)}]}\{|\E_{\psa}[(-V(s')+\eta)_{+}]-\E_{\pnsa}[(-V(s')+\eta)_{+}]|\} \\
    \leq \mathcal{O}\Bigg(\Big(\tfrac{(2+\rho)}{\rho(1-\gamma)}\Big)\Big(\tfrac{ \beta_n(\delta)\sqrt{2ed^{2}\gamma_{nd}}}{\sigma\sqrt{n}}\Big)\Bigg),
\end{multline}
where $\mathcal{N}_{\V}(\zeta)$ is the $\zeta-$cover for  $\V$.
\end{lemma}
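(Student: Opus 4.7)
The plan is to mirror the covering-number arguments used in \Cref{lemma: kl-cover-no-pol} and \Cref{lemma: ch-cover-no-pol}, with the simplification that, for the TV case, the integrand $(-V+\eta)_+$ is globally $1$-Lipschitz in $V$. Specifically, let $\mathcal{N}_{\V}(\zeta)$ be a $\zeta$-cover of $\V$ in $\|\cdot\|_\infty$, so that for every $V\in\V$ there exists $V'\in\mathcal{N}_{\V}(\zeta)$ with $\|V-V'\|_\infty\le\zeta$. First, for any fixed $(s,a,\eta)$ and any $V\in\V$, add and subtract $V'$ in both expectations and apply the triangle inequality to obtain
\begin{align*}
&\bigl|\E_{\psa}[(-V+\eta)_+]-\E_{\pnsa}[(-V+\eta)_+]\bigr| \\
&\quad\le \bigl|\E_{\psa}[(-V+\eta)_+-(-V'+\eta)_+]\bigr| + \bigl|\E_{\pnsa}[(-V'+\eta)_+-(-V+\eta)_+]\bigr| \\
&\quad\quad+ \bigl|\E_{\psa}[(-V'+\eta)_+]-\E_{\pnsa}[(-V'+\eta)_+]\bigr|.
\end{align*}

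The first two terms are each bounded by $\zeta$ by the pointwise $1$-Lipschitz property $|(-V(s')+\eta)_+-(-V'(s')+\eta)_+|\le|V(s')-V'(s')|\le\zeta$, uniformly in $\eta$. The third term, now evaluated at $V'\in\mathcal{N}_{\V}(\zeta)$, is controlled via \Cref{lemma: error bound-tv}, which yields
\[
\bigl|\E_{\psa}[(-V'+\eta)_+]-\E_{\pnsa}[(-V'+\eta)_+]\bigr|\le \tfrac{(2+\rho)}{\rho(1-\gamma)}\sigma^{-1}\|f(s,a)-\hat{f}_n(s,a)\|.
\]

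Taking the outer $\max$ over $V\in\V$, $(s,a)\in\cS\times\A$, and $\eta\in[0,\tfrac{2+\rho}{\rho(1-\gamma)}]$, and invoking \Cref{lemma:contribution} to bound $\|f(s,a)-\hat{f}_n(s,a)\|=\mathcal{O}(\beta_n(\delta)d\sqrt{\Gamma_{nd}}/\sqrt{n})$ uniformly with probability at least $1-\delta$, the overall bound becomes
\[
\text{LHS}\;\le\; 2\zeta \;+\; \mathcal{O}\!\Bigl(\tfrac{(2+\rho)}{\rho(1-\gamma)}\cdot\tfrac{\beta_n(\delta)\sqrt{2ed^2\Gamma_{nd}}}{\sigma\sqrt{n}}\Bigr).
\]
Choosing $\zeta$ of the same order as the right-hand model-error term (e.g.\ $\zeta=1/\sqrt{n}$, or simply $\zeta=1$ absorbed into the big-$\mathcal{O}$) gives the claimed rate.

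I do not expect a serious obstacle here. The KL proof required carefully restricting the dual variable $\alpha$ to a bounded interval and handling the exponential $e^{-V/\alpha}$, and the $\chi^2$ proof needed the dedicated \Cref{lemma: function-lipschitz-ch} to deal with the \emph{squared} positive part; in contrast, the TV integrand is globally $1$-Lipschitz in $V$, so the covering-error step is essentially one line. The only minor care point is verifying that the $1$-Lipschitz bound is uniform over $\eta$ (which is immediate since the Lipschitz constant of $x\mapsto x_+$ does not depend on any shift) and that \Cref{lemma:contribution} supplies the uniform-in-$(s,a)$ control needed to pull the model-error bound outside the $\max_{s,a}$.
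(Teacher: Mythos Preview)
Your proposal is correct and follows essentially the same approach as the paper: triangle inequality via a $\zeta$-cover, Lipschitz control of the $V\to V'$ terms, \Cref{lemma: error bound-tv} for the remaining term, then \Cref{lemma:contribution} and $\zeta=1$. The only cosmetic difference is that the paper packages the Lipschitz step into a separate \Cref{lemma: function-lipschitz-tv} (proved via an indicator-function decomposition), whereas you invoke the pointwise $1$-Lipschitz property of $x\mapsto x_+$ directly; your argument is shorter and yields the same $2\zeta$ bound.
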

\begin{proof}
Let $\mathcal{N}_{\V}(\zeta)$ be the $\zeta-$ cover of the set $\V$. By definition, there exists $V'\in \mathcal{N}_{\V}(\zeta)$ such that $\|V'-V\|\leq \zeta$ for every $V\in \V$.
\begin{align}
\begin{split}
&|\E_{\psa}[(-V(s')+\eta)_{+}]-\E_{\pnsa}[(-V(s')+\eta)_{+}]|\\&\leq
|\E_{\psa}[(-V(s')+\eta)_{+}]-\E_{\psa}[(-V'(s')+\eta)_{+}]|\\&\quad+|\E_{\psa}[(-V'(s')+\eta)_{+}]-\E_{\pnsa}[(-V'(s')+\eta)_{+}]|\\&\quad+|\E_{\pnsa}[(-V'(s')+\eta)_{+}]-\E_{\pnsa}[(-V(s')+\eta)_{+}]|.
\end{split}\\\label{eq: tv-close-cover}
&\stackrel{(i)}{\leq} 2 \|V'-V\|+|\E_{\psa}[(-V'(s')+\eta)_{+}]-\E_{\pnsa}[(-V'(s')+\eta)_{+}]|,
\end{align}
where (i) follows from \Cref{lemma: function-lipschitz-tv}. Using \Cref{eq: tv-close-cover}, we bound uniformly over all $V\in\V$. Using \Cref{eq: tv-close-cover} we bound uniformly over all $V\in\V$,
\begin{align}
     &\max_{V\in \V}\max_{s,a}\sup_{\eta\in [0,\frac{(2+\rho)}{\rho(1-\gamma)}]}\{|\E_{\psa}[(-V(s')+\eta)_{+}]-\E_{\pnsa}[(-V(s')+\eta)_{+}]|\}\\
     \begin{split}
          &\leq  \max_{V'\in \mathcal{N}_{\V}(\zeta)}\max_{s,a}\sup_{\eta\in [0,\frac{(2+\rho)}{\rho(1-\gamma)}]}\Biggl\{\Bigl|2 \|V'-V\|+|\E_{\psa}[(-V'(s')+\eta)_{+}]-\E_{\pnsa}[(-V'(s')+\eta)_{+}]|\Bigr|\Biggr\}\nonumber
     \end{split}\\
     \begin{split}
         &\stackrel{(ii)}{\leq}  \max_{V'\in \mathcal{N}_{\V}(\zeta)}\max_{s,a}\sup_{\eta\in [0,\frac{(2+\rho)}{\rho(1-\gamma)}]}\Biggl\{\Big|\E_{\psa}[(-V'(s')+\eta)_{+}]-\E_{\pnsa}[(-V'(s')+\eta)_{+}]\Big|\Biggr\}+2\zeta\nonumber
     \end{split} \\
     \begin{split}
         &\stackrel{(iii)}{\leq}  \max_{V'\in \mathcal{N}_{\V}(\zeta)}\max_{s,a}\sup_{\eta\in [0,\frac{(2+\rho)}{\rho(1-\gamma)}]}\Bigg\{ \Big(\tfrac{(2+\rho)}{\rho(1-\gamma)}\Big)\sigma^{-1}\|f(s,a)-\hat{f}_{n}(s,a)\|\Bigg\}+2\zeta\nonumber
     \end{split}\\
      &\stackrel{(iv)}{\leq} \mathcal{O}\Bigg(\Big(\tfrac{(2+\rho)}{\rho(1-\gamma)}\Big)\Big(\tfrac{ \beta_n(\delta)\sqrt{2ed^{2}\gamma_{nd}}}{\sigma\sqrt{n}}\Big)\Bigg)+2\zeta\\
      \label{eq: learn-error-usage-tv-cover}
      &\stackrel{(v)}{\leq}  \mathcal{O}\Bigg(\Big(\tfrac{(2+\rho)}{\rho(1-\gamma)}\Big)\Big(\tfrac{ \beta_n(\delta)\sqrt{2ed^{2}\gamma_{nd}}}{\sigma\sqrt{n}}\Big)\Bigg),
\end{align}
where (ii) follows from $\|V'-V\|\leq \zeta$ , (iii) follows from \Cref{lemma: error bound-tv}, (iv) follows from \Cref{eq: learning error bound}, and (v) follows from substituing $\zeta=1$ (or any constant).

\end{proof}
\begin{lemma}\label{lemma: function-lipschitz-tv}
For any two value functions $V,V':\cS\to[0,\frac{1}{1-\gamma}]$, it holds that
   \begin{equation}
       |\E_{\psa}[(-V'(s')+\eta)_{+}]-\E_{\psa}[(-V(s')+\eta)_{+}]|\leq \|V'-V\|.
   \end{equation} 
\end{lemma}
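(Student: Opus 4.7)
\textbf{Proof proposal for Lemma \ref{lemma: function-lipschitz-tv}.}

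The plan is to reduce the claim to the well-known fact that the positive-part function $t \mapsto t_+ := \max(t,0)$ is $1$-Lipschitz on $\mathbb{R}$, and then push the pointwise bound through the expectation using the triangle inequality (Jensen's inequality for $|\cdot|$). No structural properties of $P_f(s,a)$ are needed, so the argument will hold for any probability measure on $\mathcal{S}$ and for any choice of $\eta$.

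The concrete steps are as follows. First, I would record the elementary inequality $|a_+ - b_+| \leq |a-b|$ for all $a,b \in \mathbb{R}$, which follows immediately by case analysis on the signs of $a$ and $b$ (or from the observation that $t_+ = \tfrac{1}{2}(t + |t|)$ combined with the reverse triangle inequality). Applying this with $a = -V'(s')+\eta$ and $b = -V(s')+\eta$ gives the pointwise bound
\begin{equation*}
\bigl| (-V'(s')+\eta)_+ - (-V(s')+\eta)_+ \bigr| \leq \bigl| (-V'(s')+\eta) - (-V(s')+\eta) \bigr| = |V(s') - V'(s')|.
\end{equation*}
Second, I would apply $|\E X| \leq \E|X|$ (Jensen/triangle inequality) to the difference of the two expectations:
\begin{equation*}
\bigl| \E_{P_f(s,a)}[(-V'(s')+\eta)_+] - \E_{P_f(s,a)}[(-V(s')+\eta)_+] \bigr| \leq \E_{P_f(s,a)}\bigl[ |V(s')-V'(s')| \bigr].
\end{equation*}
Finally, upper-bounding the integrand pointwise by the sup-norm $\|V'-V\|_\infty$ (which is how $\|V'-V\|$ is being used throughout the covering-number arguments in Lemmas \ref{lemma: ch-cover-no-pol} and \ref{lemma: tv-cover-no-pol}) and integrating yields the stated inequality.

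There is really no hard step here: the only subtlety worth flagging is the convention on $\|\cdot\|$. Since the lemma is invoked in \Cref{eq: tv-close-cover} with the standard $\zeta$-cover of $\mathcal{V}$ in the sup-norm (mirroring the usage in \Cref{lemma: ch-cover-no-pol}), I will state at the outset that $\|V'-V\|$ denotes $\sup_{s' \in \mathcal{S}} |V'(s') - V(s')|$, which makes the last inequality immediate. The positive-part bound does not require any assumption on $V, V'$ beyond measurability, so the range constraint $[0, 1/(1-\gamma)]$ is in fact unused in this particular lemma; I would mention this for clarity but keep it in the statement since it matches the surrounding context.
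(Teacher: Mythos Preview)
Your proof is correct and considerably more direct than the paper's. The paper does not invoke the $1$-Lipschitz property of $t\mapsto t_+$; instead it writes $(-V+\eta)_+=\mathbbm{1}(V<\eta)(-V+\eta)$, adds and subtracts $\mathbbm{1}(V<\eta)(-V'+\eta)$, and bounds the resulting two integrals separately by case analysis on which of $V(s'),V'(s')$ lies below $\eta$. Each of those two integrals is shown to be at most $\|V'-V\|$, so taken literally the paper's argument yields $2\|V'-V\|$ rather than the stated $\|V'-V\|$ (this is harmless, since the application in \Cref{lemma: tv-cover-no-pol} already carries a factor $2$ and only needs the bound up to constants). Your route via $|a_+-b_+|\le|a-b|$ followed by $|\E X-\E Y|\le\E|X-Y|\le\|V'-V\|_\infty$ avoids the case split entirely, gives the sharp constant, and makes it transparent that neither the range constraint on $V,V'$ nor any property of $P_f(s,a)$ is used. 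Your remark clarifying that $\|\cdot\|$ is the sup-norm is also helpful, since the paper leaves this implicit.
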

\begin{proof}
Noting that both the distributions are w.r.t. the same distribution $\psa$ we have,
\begin{align}
    &\E_{\psa}[(-V'(s')+\eta)_{+}]-\E_{\psa}[(-V(s')+\eta)_{+}]\nonumber\\\label{eq: tv cover diff bound} &\leq \int\limits_{s'\sim\psa}\Big(\mathbbm{1}(V'(s')<\eta)(-V'(s')+\eta)-\mathbbm{1}(V(s')<\eta)(-V(s')+\eta)\Big)p_{\psa}(s')ds'.
\end{align}

Adding and subtracting $\mathbbm{1}(V(s')<\eta)(-V'(s')+\eta)$ to \Cref{eq: tv cover diff bound}, we obtain 2 terms,
\begin{align}
    i&=\int_{s'\sim\psa}\Big(\mathbbm{1}(V'(s')<\eta)-\mathbbm{1}(V(s')<\eta)\Big)(-V'(s')+\eta)p_{\psa}(s')ds'\\
    ii&=\int_{s'\sim\psa}\mathbbm{1}(V(s')<\eta)\Big((-V'(s')+\eta)-(-V(s')+\eta)\Big)p_{\psa}(s')ds'.
\end{align}
Bounding i first,
\begin{align}
    i=&\int_{s'\sim\psa}\Big(\mathbbm{1}(V'(s')<\eta)-\mathbbm{1}(V(s')<\eta)\Big)(-V'(s')+\eta)p_{\psa}(s')ds'\\
\begin{split}
=&\int_{s'\sim\psa}\Big(\mathbbm{1}(V'(s')<\eta\leq V(s'))\Big)(-V'(s')+\eta)p_{\psa}(s')ds'\\&-\int_{s'\sim\psa}\Big(\mathbbm{1}(V(s')<\eta< V'(s'))\Big)(-V'(s')+\eta)p_{\psa}(s')ds'
\end{split}\\
\begin{split}
\leq&\int_{s'\sim\psa}\Big(\mathbbm{1}(V'(s')<\eta\leq V(s'))\Big)(-V'(s')+V(s'))p_{\psa}(s')ds'\\&-\int_{s'\sim\psa}\Big(\mathbbm{1}(V(s')<\eta< V'(s'))\Big)(-V'(s')+V(s'))p_{\psa}(s')ds'
\end{split}\\
\begin{split}
\leq&\int_{s'\sim\psa}\Big(\mathbbm{1}(V'(s')<\eta\leq V(s'))\Big)(-V'(s')+V(s'))p_{\psa}(s')ds'\\&+\int_{s'\sim\psa}\Big(\mathbbm{1}(V(s')<\eta< V'(s'))\Big)(V'(s')-V(s'))p_{\psa}(s')ds'
\end{split}\\
\label{eq: vd-lip-func-bound-1}
&\leq \|V'-V\|.
\end{align}
Similarly bounding ii,
\begin{align}
     ii&=\int_{s'\sim\psa}\mathbbm{1}(V(s')<\eta)\Big((-V'(s')+\eta)-(-V(s')+\eta)\Big)p_{\psa}(s')ds'\\
    &=\int_{s'\sim\psa}\limits\mathbbm{1}(V(s')<\eta)\Big(-V'(s')+V(s')\Big)p_{\psa}(s')ds'\\
    &\leq\int_{s'\sim\psa}\limits\mathbbm{1}(V(s')<\eta)\Big|-V'(s')+V(s')\Big|p_{\psa}(s')ds'\\
    \label{eq: vd-lip-func-bound-2}
    &\leq \|V'-V\|.
\end{align}
Using \Cref{eq: vd-lip-func-bound-1,eq: vd-lip-func-bound-2} we get the desired result.
\end{proof}
\section{Additional Experiments and Details}
\label{app:additional_exps}

In this section, we report additional experiments and discuss further details of our experimental setup. All experiments were run with GPU clusters: 10xNVidia 32Gb Tesla V100 with Intel(R) processors (2 cores, 2.50 GHz) and 256Gb RAM.

For all the experiments, we use the environment implementations of \citet{mehta2021experimental} as done in \href{https://github.com/fusion-ml/trajectory-information-rl/tree/main}{https://github.com/fusion-ml/trajectory-information-rl/tree/main}. Also, to learn the environment transition model, we use the same corresponding GP hyperparameters proposed by~\citet{mehta2021experimental}.
For the offline \textsc{RFQI}/\textsc{FQI} algorithms we follow the implementation of  \citet{panaganti2022robust,chen2019information} in \href{https://github.com/zaiyan-x/RFQI}{https://github.com/zaiyan-x/RFQI}. We use the same default hyperparameters as used in their code except for training steps, batch size and robustness radius $\rho$ (for \textsc{RFQI}) which we tune depending on the environment as outlined next. For \textsc{SAC} in Pendulum experiments, we use the implementation and hyperparameters of \href{https://github.com/DLR-RM/rl-baselines3-zoo}{https://github.com/DLR-RM/rl-baselines3-zoo}. Whereas, for \textsc{SAC} in Reacher experiments, we use the implementation and hyperparameters of \href{https://github.com/fusion-ml/bac-baselines}{https://github.com/fusion-ml/bac-baselines}, \href{https://github.com/IanChar/rlkit2}{https://github.com/IanChar/rlkit2} (as done in \citep{mehta2021experimental}).

\textbf{Pendulum:} In Pendulum experiments, we construct the learned model using 60 samples from the true environment. Then, we train a \textsc{SAC} policy on such a model for $2*10^4$ steps and use it (with the probability of choosing a random action being $0.3$ or $0.5$) to generate $10^6$ offline data (these are used both for \textsc{MVR+RFQI} and \textsc{MVR+FQI}). For training steps and batch size we consider the following combinations:
$\{'2000-100','5000-100','10000-100','20000-100','35000-100','50000-100','5000-500','5000-1000'\}$. We combine all these combinations with the following values of $\rho$ -- $\{0.1,0.2,0.3,0.5,0.6,0.7,0.8,0.9\}$. For each algorithm, we pick the best-performing combination in terms of average reward over 20 episodes for all (or most) perturbation values. We do this separately for length perturbations and action perturbations. In the length perturbation, the pendulum's length is changed from its nominal value to a new value depending on the perturbation percentage. In the action perturbation, a random action is chosen instead of the action chosen by the policy with various probabilities ranging from $[0,1]$. We detail the optimal hyperparameters we realized for each algorithm in \Cref{tab:pendulum_hyperparams} for the length and action perturbation, respectively. Moreover, we plot the average performance (over 20 episodes) of the different baselines w.r.t. length and action perturbations in \Cref{fig:pendulum_supp}. We notice that in the case of length perturbation, the robust algorithms (RFQI and MVR+RFQI) outperform the corresponding non-robust baselines. In the case of action perturbations, we observe all algorithms except for SAC achieve similar performance.

\begin{table}[h]
    \centering
    \small
    \begin{tabular}{r c c c c c c}
    \toprule
         &  \textsc{Training Steps}  & \textsc{Batch-Size} & \textsc{$\rho$} & \textsc{Random Action Probability (Dataset)}  \\
         \midrule
        MVR+RFQI & $5000$ & $100$  & $0.3$ & $0.5$  \\
         MVR+FQI & $2000$ & $100$ & - & $0.5$  \\
        RFQI & $2000$ & $100$ &   $0.9$  & $0.5$ \\
        FQI & $5000$ & $500$ &   -  & $0.5$ \\
        \bottomrule\\
    \end{tabular}
     \begin{tabular}{r c c c c c c}
    \toprule
         &  \textsc{Training Steps}  & \textsc{Batch-Size} & \textsc{$\rho$} & \textsc{Random Action Probability (Dataset)}  \\
         \midrule
        MVR+RFQI & $20000$ & $100$  & $0.5$ & $0.3$  \\
         MVR+FQI & $50000$ & $100$ & - & $0.3$  \\
        RFQI & $50000$ & $100$ &   $0.1$  & $0.5$ \\
        FQI & $5000$ & $500$ &   -  & $0.5$ \\
        \bottomrule\\
    \end{tabular}
    \vspace{1mm}
    \caption{Hyperparameters for Pendulum - length perturbation (top) and action perturbation (bottom).}
    \label{tab:pendulum_hyperparams}
\end{table}

\begin{figure}[h]
    \centering
    \includegraphics[width= 0.4\textwidth]{pendulum_sac_plots/paper_plots_pendulum_0pt3_length_neg.png}
    \hspace{2em}
    \includegraphics[width= 0.4\textwidth]{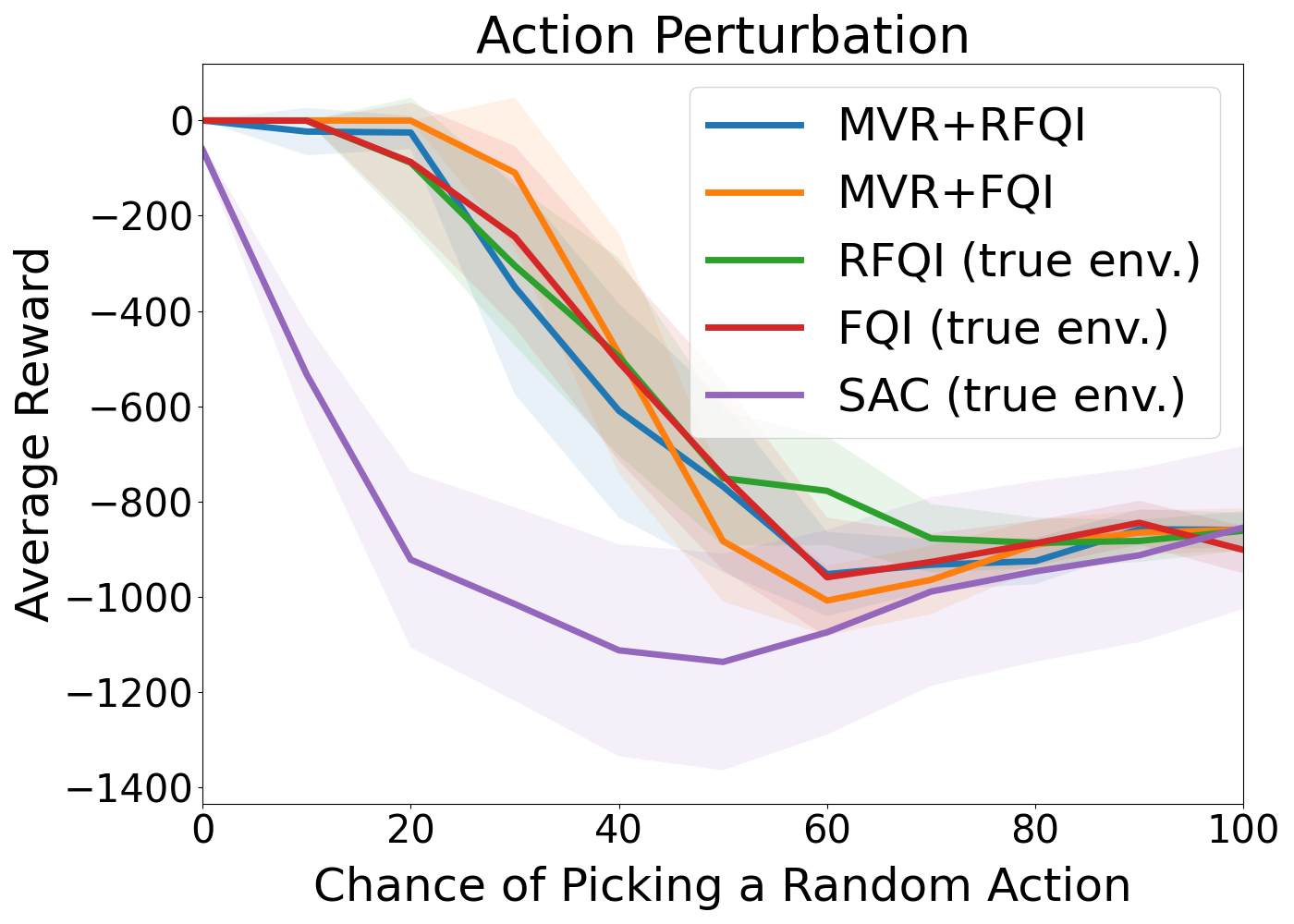}
    \caption{Pendulum experiments.}
    \label{fig:pendulum_supp}
\end{figure}

\newpage
\textbf{Cartpole:} In Cartpole experiments, we construct the learned model using 150 samples from the true environment. Then, we run MPC on such a model following the implementation and hyperparameters of \citep{mehta2021experimental,pinneri2020sample} requiring 2250 samples to calculate the optimal action at each step and use it (with the probability of choosing a random action being $0.3$) to generate $10^6$ offline data for \textsc{MVR+RFQI} and \textsc{MVR+FQI}. For training steps and batch size, we test the following combinations:
$\{'2000-100','5000-100','10000-100','20000-100','35000-100','50000-100','5000-500','5000-1000'\}$, and consider radii $\rho$ in $\{0.1,0.2,0.3,0.5,0.6,0.7,0.8,0.9\}$. We consider perturbations of the force magnitude and the gravity, whereby the actuation force/gravity is changed from its nominal value to a new value depending on the perturbation percentage. We report the best-performing (average over 20 episodes) hyperparameters for each algorithm in \Cref{tab:cartpole_hyperparams}. Such parameters were observed to be a good choice for both perturbation types. Finally, we plot the average performance (over 20 episodes) of the different baselines w.r.t. force magnitude and gravity perturbations in \Cref{fig:cartpole_supp}. We notice that in both  perturbations, the robust algorithms (RFQI and MVR+RFQI) outperform the corresponding non-robust baselines.

\begin{table}[h]
    \centering
        \small
    \begin{tabular}{r c c c c c c}
    \toprule
         &  \textsc{Training Steps}  & \textsc{Batch-Size} & \textsc{$\rho$} & \textsc{Random Action Probability (Dataset)}  \\
         \midrule
        MVR+RFQI & $5000$ & $500$  & $0.5$ & $0.3$  \\
         MVR+FQI & $50000$ & $100$ & - & $0.3$  \\
        RFQI & $5000$ & $100$ &   $0.3$  & $0.3$ \\
        FQI & $10000$ & $100$ &   -  & $0.3$ \\
        \bottomrule\\
    \end{tabular}
    \vspace{1mm}
    \caption{Hyperparameters for Cartpole.}
    \label{tab:cartpole_hyperparams}
\end{table}

\begin{figure}[h]
    \centering
    \includegraphics[width= 0.4\textwidth]{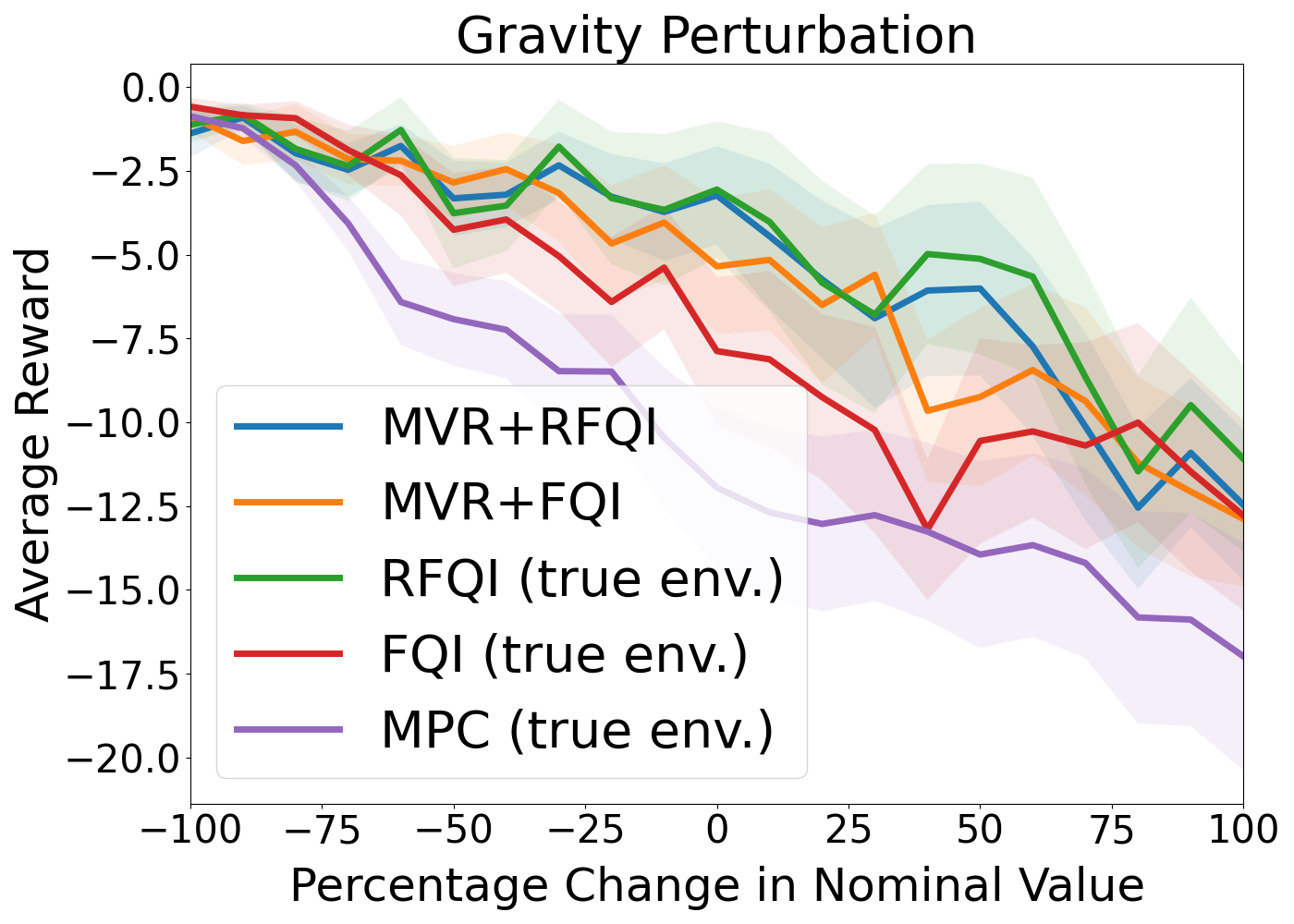}
    \hspace{2em}
    \includegraphics[width= 0.4\textwidth]{cartpole_mpc_plots/paper_plots_pilcocartpole_0pt3_force_mag.png}
    \caption{Cartpole experiments.}
    \label{fig:cartpole_supp}
\end{figure}

\textbf{Reacher:} In Reacher experiments, we construct the learned model using 2000 samples from the true environment. Then, we train a SAC policy on such a model for $10^6$ steps and use it (with the probability of choosing a random action being $0.3$) to generate $10^6$ offline data for \textsc{MVR+RFQI} and \textsc{MVR+FQI}. For training steps and batch size, we consider the following combinations: $\{'10000-500','20000-500','40000-500','80000-500','160000-1000'\}$, while we consider radii $\rho$ in $\{0.1,0.3,0.5,0.7,0.9\}$. We consider perturbations of the joint stiffness subject to different equilibrium positions, the latter represented by the 'Springref' parameter which we take to be $50$ or $100$. In both perturbation types, the joint stiffness is changed from its nominal value of $0$ to a new value depending on the perturbation magnitude. Best-performing hyperparameters' configurations are reported in \Cref{tab:reacher_hyperparams}. We plot the average performance (over 20 episodes) of the different baselines in \Cref{fig:reacher_supp}. Similar to the other environments, we observe the robust algorithms (RFQI and MVR+RFQI) outperform the corresponding non-robust baselines.

\begin{table}[h]
    \centering
        \small
    \begin{tabular}{r c c c c c c}
    \toprule
         &  \textsc{Training Steps}  & \textsc{Batch-Size} & \textsc{$\rho$} & \textsc{Random Action Probability (Dataset)}  \\
         \midrule
        MVR+RFQI & $10000$ & $500$  & $0.5$ & $0.3$  \\
         MVR+FQI & $20000$ & $500$ & - & $0.3$  \\
        RFQI & $40000$ & $500$ &   $0.1$  & $0.3$ \\
        FQI & $20000$ & $500$ &   -  & $0.3$ \\
        \bottomrule\\
    \end{tabular}
    \vspace{1mm}
    \caption{Hyperparameters for Reacher.}
    \label{tab:reacher_hyperparams}
\end{table}

\begin{figure}[h]
    \centering
    \includegraphics[width= 0.4\textwidth]{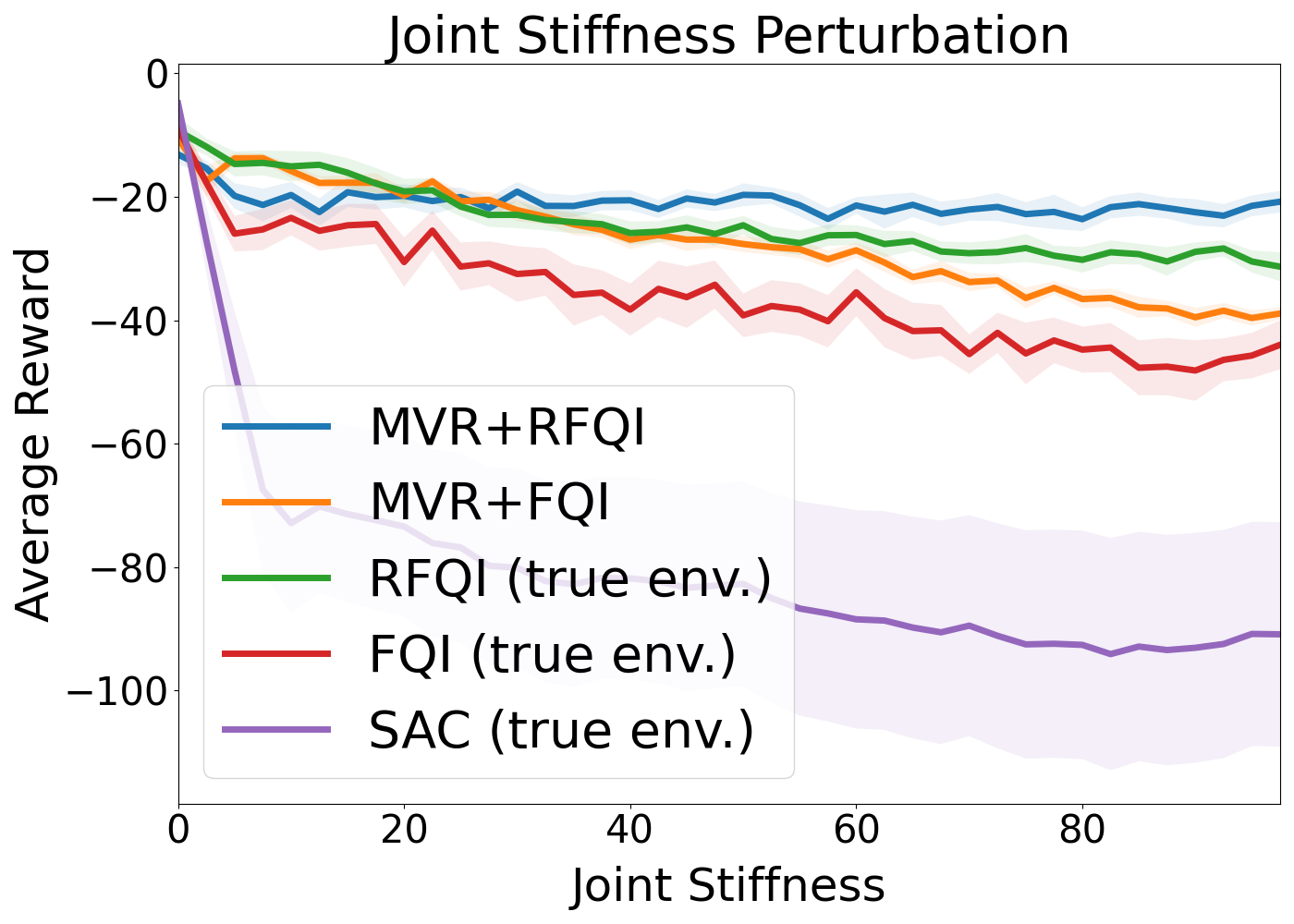}
    \hspace{2em}
    \includegraphics[width= 0.4\textwidth]{reacher_sac_plots/paper_plots_reacher_0pt3_springref100_joint_stiffness_x.png}
    \caption{Reacher experiments with 'Springref' parameter set to 50 (left) or 100 (right).}
    \label{fig:reacher_supp}
\end{figure}

\newpage

\newpage

\end{document}